\definecolor{myblue}{rgb}{0.21, 0.34, 0.74}
\definecolor{mygrey}{rgb}{0.55, 0.57, 0.67}
\definecolor{myred}{rgb}{0.79, 0.0, 0.09}
\definecolor{mygreen}{rgb}{0.05, 0.5, 0.06}
\DeclareMathAlphabet{\mathscrbf}{OMS}{mdugm}{b}{n}
\renewcommand{\S}{\textsection}
\newcommand{\baV}{\overline V}
\newcommand{\deV}{\cY}
\newcommand{\mbV}
{\cX}
\newcommand{\mbW}{\mathcal{X}_0}
\newcommand{\de}{\mathrm{d}}
\newcommand{\epa}{\varepsilon_{\phi}}
\newcommand{\deB}{\widetilde B}
\newcommand{\Id}{\mathrm{Id}}
\newcommand{\Cx}{\mathrm{Conv}}
\newcommand{\supp}{\mathrm{supp}}
\newcommand{\J}
{\phi}
\newcommand{\proj}{\mathbf{P}}
\DeclareMathOperator{\dive}{\mathring{\mathrm{d{\imath}v}}}
\DeclareMathOperator{\grade}{\mathring{\nabla}}
\renewcommand{\S}{\mathbb{S}^{d-1}}
\newcommand{\gradW}{\nabla\mkern-10mu\nabla}
\newcommand{\cX}{\mathcal{X}}
\newcommand{\cY}{\mathcal{Y}}
\newcommand{\sE}{\mathsf{E}}
\newcommand*\diff{\mathop{}\!\mathrm{d}}
\newcommand{\R}{{\rm I}\kern-0.18em{\rm R}}
\newcommand{\h}{{\rm I}\kern-0.18em{\rm H}}
\newcommand{\K}{{\rm I}\kern-0.18em{\rm K}}
\newcommand{\p}{{\rm I}\kern-0.18em{\rm P}}
\newcommand{\E}{{\rm I}\kern-0.18em{\rm E}}
\newcommand{\Z}{\mathbb{Z}}
\newcommand{\1}{{\rm 1}\kern-0.24em{\rm I}}
\newcommand{\N}{{\rm I}\kern-0.18em{\rm N}}
\newcommand{\ud}{\mathrm{d}}
\newcommand{\eps}{\varepsilon}
\numberwithin{equation}{section}
\theoremstyle{plain}
\newtheorem{theorem}{Theorem}[section]
\newtheorem{proposition}[theorem]{Proposition}
\newtheorem{lemma}[theorem]{Lemma}
\newtheorem{remark}[theorem]{Remark}
\newtheorem{example}[theorem]{Example}
\begin{document}

\title{Quantitative Clustering in Mean-Field Transformer Models}

\author[S. Chen]{Shi Chen}
\address{(SC) Department of Mathematics, Massachusetts Institute of Technology, 77 Massachusetts Ave, 02139 Cambridge MA, USA} 
\email{schen636@mit.edu}

\author[Z. Lin]{Zhengjiang Lin}
\address{(ZL) Department of Mathematics, Massachusetts Institute of Technology, 77 Massachusetts Ave, 02139 Cambridge MA, USA} 
\email{linzj@mit.edu}

\author[Y. Polyanskiy]{Yury Polyanskiy}
\address{(YP) Department of Electrical Engineering and Computer Science, Massachusetts Institute of Technology, 77 Massachusetts Ave, 02139 Cambridge MA, USA}
\email{yp@mit.edu}

\author[P. Rigollet]{Philippe Rigollet}
\address{(PR) Department of Mathematics, Massachusetts Institute of Technology, 77 Massachusetts Ave, 02139 Cambridge MA, USA} 
\email{rigollet@math.mit.edu}

\date{}

\keywords{}

\begin{abstract}
The evolution of tokens through deep transformer models can be modeled as an interacting particle system that has been shown to exhibit an asymptotic clustering behavior akin to the synchronization phenomenon in Kuramoto models. In this work, we investigate the long-time clustering of mean-field transformer models. More precisely, under suitable assumptions on the transformer model parameters, we establish that any suitably regular mean-field initialization synchronizes exponentially fast to a Dirac point mass, with explicit quantitative convergence rates.
\end{abstract}
\maketitle

\setcounter{tocdepth}{2}
\makeatletter
\def\l@subsection{\@tocline{2}{0pt}{2.8pc}{5pc}{}}

\tableofcontents


\section{Introduction}

The (self-)attention mechanism, initially introduced by~\cite{BahChoBen15}, forms the foundation of the transformer architecture developed in~\cite{vaswani2017attention}. This revolutionary architecture has become fundamental for large language models (LLMs), catalyzing remarkable advances in artificial intelligence.

Recently,~\cite{geshkovski2023mathematical} proposed to study how a deep stack of attention layers processes information as a mean-field interacting particle system on the sphere $\S$ that exhibits long-time clustering properties; see also~\cite{sander2022sinkformers,geshkovski2023emergence,karagodin2024clustering,koubbi2024impact, shalova2024solutions, criscitiello2024synchronization,geshkovski2024dynamic,bruno2024emergence,abella2024asymptotic,burger2025analysis,castin2025unified}.

This model---called \emph{attention dynamics}---captures the representation of \emph{tokens} as they evolve through the successive layers of a transformer.  In particular, the clustering phenomenon put forward in~\cite{geshkovski2023emergence,geshkovski2023mathematical} is critical to understanding the structure of internal representations for these pervasive models. More specifically, in attention dynamics, $n$ tokens (particles) $x_1, \ldots, x_n \in \S$ evolve as

\begin{equation}
    \label{eq:token_dymamics}
    \dot x_i(t) = \proj_{x_i(t)}\big[\frac1n \sum_{j=1}^n x_j(t) e^{\beta \langle x_i(t), x_j(t)\rangle}\big]\ \quad t\ge 0,\ i=1, \ldots, n\,,
\end{equation}
where $\beta \geq 0$, $\proj_x[ y]:= y -\langle x,y\rangle\, x$ denotes the projection of $y \in \R^d$ onto the hyperplane $T_x\S$ tangent to the sphere at $x \in \S$. We refer to~\cite{geshkovski2023mathematical} for a derivation of this model and its relationship to the attention mechanism and layer normalization. 

Equation~\eqref{eq:token_dymamics} may be regarded as a Gibbsian
deformation of the classical Kuramoto dynamics---The Kuramoto model is widely regarded as the canonical and most extensively studied mean-field model of spontaneous synchronization in systems of coupled oscillators. Indeed, when
$d=2$ and $\beta=0$, all pairwise interactions are weighted equally, and
the system reduces to the well-known identical-frequency Kuramoto
model~\cite{kuramoto1975self,kuramoto1984chemical,acebron2005kuramoto,benedetto2014complete}.
 Varying $\beta$  changes not only the strength of the
coupling but also the geometry of the phase portrait: at high temperature
small differences in alignment are averaged out, while at low temperature
the dynamics becomes increasingly selective, favoring strongly aligned
groups and allowing more intricate energy landscapes with competing
basins of attraction.  The familiar Kuramoto synchronization mechanism is
therefore only the first, high-temperature member of a broader family of
attention-driven alignment models.

The most prominent dynamical behavior is synchronization, or clustering:
under suitable conditions the tokens collapse to a common direction,
\[
    x_i(t)\to x_\infty
    \qquad \text{as } t\to\infty ,
    \qquad i=1,\ldots,n .
\]
We use the terms \emph{synchronization} and \emph{clustering}
interchangeably for this phenomenon.

Note that the system of ODEs~\eqref{eq:token_dymamics} is of the mean-field type. Indeed, token $i$ interacts with all tokens only through their empirical distribution at time $t$. We denote this distribution by $\mu_t$ and recall that
$$
\mu_t\coloneq \frac1n \sum_{i=1}^n \delta_{x_i(t)}\,.
$$
In turn, the evolution of $\mu_t$ is governed by the continuity equation 
    \begin{align}\label{eq:mfad}
            \partial_t \mu_t + \dive\left(\mu_t  \mathcal{X}_{\mu_t,\beta}\right) = 0\,, \qquad \mathcal{X}_{\mu_t,\beta}(x) \coloneq \int_{\S} \proj_x[ y] e^{\beta \langle  x , y\rangle} \diff \mu_t(y) \,,
    \end{align}
    where here and throughout the paper $\dive=\mathrm{div}_{\S}$ denotes the divergence operator on the sphere. 
    
As pointed out in~\cite{geshkovski2023mathematical}, equation~\eqref{eq:mfad} describes a Wasserstein gradient flow that aims to maximize the functional
    \begin{equation} \label{eqn:interaction energy Lyapunov}
        \mu \mapsto \mathsf{E}_{\beta}[\mu] \coloneq \frac{1}{2 \beta}\iint e^{\beta \langle  x ,  y\rangle}  \diff \mu(x) \diff \mu(y)\,,
    \end{equation}
    where both integrals are over $\S$. In particular, we let $\mathsf{E}_{0} = \frac{1}{2 }\iint \langle  x ,  y\rangle  \diff \mu(x) \diff \mu(y)$. It is easy to see that $\mathsf{E}_{\beta}$ is maximized, for all $\beta \geq 0$, at Dirac point masses $\delta_{x_0}$ for some $x_0 \in \S$. This maximum energy state corresponds to a clustering of the tokens into a single point. Thanks to these observations, clustering of $n$ tokens hinges on three classical tools from finite dimensional dynamical systems theory: the dynamics for the $n$-tuple $(x_1(t), \dots, x_n(t)) \in (\S)^n$ can be shown to (i) converge by the \L{}ojasiewicz inequality, and (ii) avoid saddle points from almost every initialization by the center-stable manifold theorem. Moreover, all stationary points are saddle points except for the global maximizers where $x_1=\cdots=x_n$;~\cite{geshkovski2023mathematical,karagodin2024clustering,criscitiello2024synchronization, markdahl2017almost}.

In this work, we investigate clustering properties for a \emph{continuum} of tokens corresponding to $n=\infty$. The mean-field dynamics of the measure $\mu_t$ of tokens is governed by the continuity equation~\eqref{eq:mfad} but we focus on the case where it is initialized at a measure $\mu_0$ that admits a density with respect to the uniform measure on the sphere $\S$. We call\footnote{While the term ``mean-field" technically applies to the Vlasov PDE~\eqref{eq:mfad} with any initialization, including a discrete one, it is common in the literature to use this term to denote such an evolution initialized at the measure that is absolutely continuous with respect to the uniform measure. To facilitate reading, we adopt the same abuse of language and use "mean-field" to indicate such an initialization.}  this setup \emph{mean-field attention dynamics}. Despite recent efforts~\cite{castin2025unified} to study convergence of the finite-particle system as $n \to \infty$, existing results do not imply asymptotic clustering for the mean-field attention dynamics for lack of a convergence that is uniform in time. Our results overcome this limitation by developing the infinite-dimensional tools necessary to studying directly the mean-field dynamics.

More precisely, our contributions are as follows. First, we show that, echoing the finite-dimensional case, stationary points for~\eqref{eq:mfad} are all saddle points for the interaction energy $\sE$ except for global maxima given by point masses. In particular, our proof extends the approach of~\cite{criscitiello2024synchronization} by exhibiting escape directions for continuous measures. However, in the absence of a counterpart to the center-stable manifold theorem in infinite dimensions, this result is not sufficient to conclude to clustering. In fact, while infinite-dimensional versions of the \L{}ojasiewicz inequality have been developed \cite{simon1983asymptotics,colding2014lojasiewicz}, we show in \Cref{example:critical value not discrete} that such inequalities cannot hold in general at critical points of the interaction energy $\sE$.

Nevertheless, we demonstrate that a stronger version of the \L{}ojasiewicz inequality, known as the Polyak-\L{}ojasiewicz (PL) inequality, holds around point masses for measures supported on a spherical cap. From such PL inequalities, it follows readily that the Wasserstein gradient flow~\eqref{eq:mfad} converges exponentially fast to a global maximizer of $\sE$ when initialized on these measures with constrained support.

This PL inequality is employed in our main contribution, Theorem~\ref{thm:simple global convergence}, which establishes exponential rates of convergence for the mean-field attention dynamics~\eqref{eq:mfad} initialized at \emph{any} density $f_0 \in L^2(\S)$ for sufficiently small parameter $\beta<\beta_0$, where $\beta_0>0$ depends on $f_0$. Note that global convergence to point masses cannot hold at arbitrary $\beta$. Indeed, for $\beta=100$ or larger, we exhibit an equilibrium for mean-field attention dynamics that does \emph{not} correspond to a single cluster in Example~\ref{example:large epa not synchronize}. This qualitative behavior is in sharp contrast with the Kuramoto model where $d=2$ and $\beta=0$ and for which it can be proved that any regular initialization converges to to a point mass exponentially fast; see~\cite{morales2022trend}.

Our main results for mean-field attention dynamics are stated in the next section. In fact, these results are corollaries for our general results stated in Section~\ref{section:general J}. These convergence results cover more general dynamics that correspond to less simplified versions of transformer models; see~\cite{geshkovski2023mathematical}.

\section{Clustering in mean-field attention dynamics}

In this section, we present our main clustering results on mean-field attention dynamics~\eqref{eq:mfad}.

Recall from~\cite{geshkovski2023mathematical} that the mean-field attention dynamics form a reverse Wasserstein gradient flow of the interaction energy $\sE_\beta$ defined in~\eqref{eqn:interaction energy Lyapunov}: $\cX_{\mu, \beta}=\gradW \sE_\beta [\mu]$---see~\cite{chewi2024statistical,ambrosio2005gradient} for an introduction to Wasserstein gradient flows. Indeed, along~\eqref{eq:mfad} we have
    \begin{align}\label{eqn:monotone energy}
        \frac{\de}{\de t} \mathsf{E}_{\beta}[\mu_t] = \int_{\S} \left\| \mathcal{X}_{\mu_t,\beta} (x) \right\|_2 ^2 \diff \mu_t(x) \geq 0\,,
    \end{align}
with equality if and only if $\mathcal{X}_{\mu_t,\beta} (x) =0$ for $\mu_t$ almost every $x \in \S$. This equality case characterizes critical points of the energy $\sE_\beta$. The next result shows that the only critical points that are local maxima for $\sE_\beta$ are in fact single point masses. 

  \begin{proposition}\label{thm:simple local max}
     Let $d \geq 3$. For any $\beta > 0$, any local maxima of the interaction energy $\mathsf{E}_{\beta}$ is a global maxima of the form $\mu=\delta_{x_0}$ for some $x_0\in\S$.
 \end{proposition}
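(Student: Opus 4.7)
My plan is to argue by contradiction. Suppose $\mu$ is a local maximum of $\mathsf{E}_\beta$ (in the Wasserstein $W_2$ topology) that is not a single Dirac mass; then $\supp\mu$ contains two distinct points $x_0 \ne y_0$, and one can choose disjoint open neighborhoods $A \ni x_0$, $B \ni y_0$ with $\mu(A), \mu(B) > 0$. Define the signed, zero-mean perturbation
\[
\nu \coloneq \frac{\mu|_A}{\mu(A)} - \frac{\mu|_B}{\mu(B)} \;\ne\; 0.
\]
Because $\nu \ll \mu$ with bounded Radon-Nikodym derivative, for every $\epsilon \in (0, \min(\mu(A),\mu(B)))$ both $\mu \pm \epsilon\nu$ are probability measures. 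They are TV-close to $\mu$, hence $W_2$-close on the compact $\S$ via the elementary bound $W_2 \le \mathrm{diam}(\S)\sqrt{\mathrm{TV}}$, so both qualify as competitors in a Wasserstein neighborhood of $\mu$.

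The second step exploits the fact that $\mathsf{E}_\beta$ is \emph{exactly} quadratic in its measure argument, not merely Taylor-expanded. Setting $K(x,y) \coloneq e^{\beta\langle x,y\rangle}$ and $U(y) \coloneq \int K(x,y)\,d\mu(x)$, one has the identity
\[
\mathsf{E}_\beta[\mu \pm \epsilon\nu] - \mathsf{E}_\beta[\mu] = \pm\frac{\epsilon}{\beta}\int U\,d\nu + \frac{\epsilon^2}{2\beta}\iint K(x,y)\,d\nu(x)\,d\nu(y).
\]
Local maximality forces both left-hand sides to be $\le 0$ for small $\epsilon > 0$; summing the two inequalities cancels the first-order term and yields $\iint K\,d\nu\,d\nu \le 0$. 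But expanding $K(x,y) = \sum_{k\ge 0}\tfrac{\beta^k}{k!}\langle x^{\otimes k}, y^{\otimes k}\rangle$ and integrating gives
\[
\iint K(x,y)\,d\nu(x)\,d\nu(y) \;=\; \sum_{k\ge 0}\frac{\beta^k}{k!}\Big\|\int x^{\otimes k}\,d\nu(x)\Big\|^2 \;>\; 0,
\]
the strict inequality following because $\nu \ne 0$ and density of polynomials in $C(\S)$ prevents all tensor moments from vanishing simultaneously. This contradicts the preceding display, so $\mu$ must be a Dirac mass, which is automatically a global maximum of $\mathsf{E}_\beta$.

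The point requiring most care is the topological setup: one must verify that linear TV-perturbations $\mu \pm \epsilon\nu$ genuinely probe local maximality in whichever topology is intended in the statement, which is handled by the TV-$W_2$ comparison on the compact sphere. The single analytic input is strict positive-definiteness of the exponential inner-product kernel on $\S$---equivalently, positivity of all Gegenbauer coefficients of $t \mapsto e^{\beta t}$ for $\beta > 0$ via Funk-Hecke---and this is what drives the contradiction. I note that the paper's announced route instead extends the escape-direction construction of~\cite{criscitiello2024synchronization}, constructing an explicit Wasserstein tangent vector field on $\S$; the plan above trades that explicit transport-level escape direction for a one-line positive-definiteness argument on linear (non-tangent) perturbations.
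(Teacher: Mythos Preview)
Your proof is correct and takes a genuinely different route from the paper's. The paper deduces Proposition~\ref{thm:simple local max} from the more general Theorem~\ref{thm:global_max}, whose proof computes the Wasserstein second variation of $\mathsf{E}_\phi$ along transport directions $\mbW(x)=\proj_x(w)$, then sums the resulting expression over $w\in\{e_1,e_2,e_3\}$ (the top three eigenvectors of $A$) and establishes a pointwise inequality~\eqref{eqn: partial sum positive} that forces strict positivity unless $\mu$ is a Dirac. Your argument bypasses all of this by exploiting two features specific to Proposition~\ref{thm:simple local max}: (i) $\mathsf{E}_\beta$ is exactly quadratic in $\mu$, so vertical perturbations $\mu\pm\varepsilon\nu$ yield an exact second-order identity with no remainder; and (ii) the kernel $e^{\beta\langle x,y\rangle}$ is strictly positive definite on signed measures, which you verify via the tensor-power expansion and Stone--Weierstrass.

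The tradeoffs are as follows. Your argument is shorter, requires no Riemannian calculus on $\S$, and---notably---never uses $d\ge 3$: it works verbatim for $d=2$, whereas the paper's summation over three orthogonal directions genuinely needs three dimensions. On the other hand, the paper's approach extends to the general setting of Theorem~\ref{thm:global_max} (arbitrary convex increasing $\phi$ and symmetric $A$ with $\lambda_1=\lambda_2=\lambda_3$), where the kernel $\phi(\langle Ax,y\rangle)$ need not be positive definite and your strict-PD step would fail. The paper also produces an explicit \emph{transport} escape direction (a vector field on $\S$), which is the natural object for the Wasserstein gradient-flow analysis in the rest of the paper; your vertical perturbations $\mu\pm\varepsilon\nu$ are legitimate $W_2$-competitors via the TV--$W_2$ bound you cite, but they do not live in the Wasserstein tangent space at $\mu$.
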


\Cref{thm:simple local max} follows from \Cref{thm:global_max}, which applies to more general transformer models, including those with learned parameters; see Section~\ref{section:general J}. Our proof of \Cref{thm:simple local max} rests on a detailed analysis of the second Wasserstein gradient variation of the interaction energy $\mathsf{E}_{\beta}$, as carried out in \Cref{lem:second derivative of J}. This approach is inspired by the ideas developed in \cite{geshkovski2023mathematical,criscitiello2024synchronization}, building on \cite{markdahl2017almost}, which address the case where $\mu\in\mathcal{P}(\S)$ consists of only finitely many tokens. In its simplest form, the second variation at a critical point \eqref{eqn:second variation at critical point} is not manifestly positive in every direction even when $d=2$. We therefore average over all directions and show that this average is strictly positive whenever $\mu$ is not a Dirac mass. The underlying strategy is analogous to the observation that, for a smooth function $f$, even if $\nabla^2f$ fails to be positive definite, the condition $\Delta f>0$ already precludes local maxima. This is why we require $d\geq 3$: higher-dimensional spheres furnish enough directions for the averaging argument to succeed. When $d=2$, the proof in \cite{andrew25} instead refines the strategy of \cite{geshkovski2023mathematical} and relies heavily on the specific geometry of $\mathbb{S}^1$. It is somewhat surprising that, although the $d=2$ case appears simplest at first glance, its technical demands differ markedly from those in higher dimensions. The same contrast reappears in the proof of \Cref{thm:classical Lojasiewicz}, where the $d=2$ setting admits substantial simplifications thanks to the circle’s geometry.

However, as mentioned in the introduction, \Cref{thm:simple local max} is not sufficient to establish global convergence of the mean-field attention dynamics~\eqref{eq:mfad} to a point mass because of the infinite-dimensional nature of the problem. Nevertheless, using the \L{}ojasiewicz structure theorem, one can see that critical points for $\mathsf{E}_{\beta}$ can only be supported on a finite union of submanifolds of $\S$ of dimension at most $d-2$; see for example Lemma E.5 of \cite{bruno2024emergence}. In particular, no stationary points of the mean-field attention dynamics admits a density with respect to the uniform measure other than the uniform measure itself. 
 
 Additionally, even convergence of the mean-field attention dynamics to a single limiting stationary point is unclear because of the infinite-dimensional nature of the problem. Indeed, while it is a Wasserstein gradient flow, $\sE_\beta$ lacks the Wasserstein geodesic convexity/concavity properties to ensure convergence. In finite dimensions, this limitation may be overcome using the {\L}ojasiewicz inequality whenever the objective function, say $f$ on a compact manifold is analytic. Indeed, in this case,  \cite{lojasiewicz1963propriete} proved that for any critical point $x_{\rm crit}$ of $f$, there exists a neighborhood $U$ of $x_{\rm crit}$ and constants $c_1 \in (1,\infty)$, $c_2 >0$, such that for all $x \in U$, 
 \begin{equation}
 \label{eq:lojademo}
|f(x)-f(x_{\rm crit})|\le c_2  \|\nabla f(x)\|_2 ^{c_1} \,.
 \end{equation}
 As a direct corollary, we see that the critical values of $f$ are locally discrete because if $x \in U$ and $\nabla f(x) = 0$, then \eqref{eq:lojademo} implies $f(x)=f(x_{\rm crit})$. This last observation is instrumental in establishing convergence of gradient flows of analytic functions. Unfortunately, this property does not hold in general for the energy functional $\sE_\beta$ as illustrated by the following example.

 \begin{example}[No {\L}ojasiewicz inequality for $\mathsf{E}_{\beta}$]\label{example:critical value not discrete}
   Let $d=2$ and consider the energy function $\sE_\beta$ for measures defined on the unit circle $\mathbb{S}^1 \subseteq \R^2$ identified to $\R/2\pi\Z$. Take the sequence of measures $\mu_{\varepsilon} = (1-\varepsilon)\delta_{\frac{\pi}{2}}+ \varepsilon\delta_{-\frac{\pi}{2}}, \ \eps \in (0,1)$. Observe that $\mu_{\varepsilon}$ forms a sequence of critical points for   $\mathsf{E}_{\beta}$ because $\gradW \sE_\beta [\mu_\eps](\cdot)=0$ $\mu_\eps$ almost everywhere. But  $\mathsf{E}_{\beta}[\mu_\eps ] \neq \mathsf{E}_{\beta}[\mu_0] $ and $W_2(\mu_\eps,\mu_0) \to 0 $ as $\varepsilon \to 0$,  where $W_2$ denotes the $2$-Wasserstein distance. This implies that the critical values of $\mathsf{E}_{\beta}$ are not necessarily locally discrete.  Hence, a Wasserstein version of~\eqref{eq:lojademo} cannot hold for $\mathsf{E}_{\beta}$ on $\mathcal{P}(\S)$ as argued above.
\end{example}

Example~\ref{example:critical value not discrete} reveals a striking discrepancy between the mean-field dynamics studied here and the ones for a finite number of tokens. Indeed, the map:
$$
(x_1, \ldots, x_n) \mapsto \frac1{n^2} \sum_{i,j=1}^n e^{\beta \langle x_i, x_j \rangle}
$$
is analytic on the compact manifold $(\S)^n$ so the \L{}ojasiewicz inequality holds for a finite number $n$ of tokens. This discrepancy stems from the infinite-dimensional nature of the space of probability measures $\mathcal{P}(\S)$. Consequently, to establish global convergence in $\mathcal{P}(\S)$, one must either adapt the classical form of the \L{}ojasiewicz inequality or exploit specific geometric properties of the Wasserstein gradient flow on $\S$.

The following result shows that if we rule out sequences that place mass outside of a spherical cap around $x_0$ then a strong version of the \L{}ojasiewicz inequality, called Polyak-\L{}ojasiewicz (PL) holds.

 \begin{theorem}[Polyak-{\L}ojasiewicz inequality on a spherical cap]\label{thm:classical Lojasiewicz}
    Fix $d \geq 2, \beta \geq 0, \alpha \in [0,\pi/2), u \in \S$ and let  $S_{\alpha} ^+ (u)\subseteq \S$ denote the spherical cap of angle $\alpha$ around $u$ defined by
            \begin{align}\label{eqn:simple spherical cap}
            S_{\alpha} ^+ (u) \coloneq \left\{ x \in \mathbb{S}^{d-1} \ | \ \left\langle x , u \right\rangle \geq \cos \alpha \right\}.
        \end{align}
Let $\mu$ be a probability measure supported on $S_{\alpha} ^+ (u)$. Then if $10(1+\sqrt\beta)\tan \alpha \le 1$, the following PL inequality holds
        \begin{align*}
          \mathsf{E}_{\beta}[\delta_{u}] - \mathsf{E}_{\beta}[\mu] \le 10e^{-\beta}   \int_{\S} \| \mathcal{X}_{\mu,\beta}(x)\|_2 ^2 \diff \mu(x) \,.
        \end{align*}
Also, the sequence of measures $\mu_t, t\ge 0$ initialized at $\mu_0=\mu$ supported on $S_{\alpha} ^+ (u)$ and evolving according to~\eqref{eq:mfad}  converges to a single point mass $\delta_{x_{\infty}}$ with $\langle x_{\infty}, u \rangle \ge \cos \alpha$ at an exponential rate given by
    \begin{align*}
        W_2(\mu_t,\delta_{x_{\infty}}) \leq 20e^{-\beta} e^{-\frac{e^{\beta}}{20} t}  \left( \int_{\S} \| \mathcal{X}_{\mu,\beta}(x)\|_2 ^2 \diff \mu(x) \right)^{\frac{1}{2}}\,.
    \end{align*}
\end{theorem}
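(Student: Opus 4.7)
The plan is to sandwich a single scalar quantity---the Euclidean variance $V(\mu):=1-\|b\|^2$, with $b:=\int y\,\de\mu(y)\in\R^d$ the Euclidean barycenter---between the energy gap $\sE_{\beta}[\delta_u]-\sE_{\beta}[\mu]$ from above and $\int\|\cX_{\mu,\beta}\|^2\de\mu$ from below, and then deduce convergence from the resulting Polyak-{\L}ojasiewicz inequality via a standard arc-length argument. The upper bound is elementary: the inequality $e^\beta-e^{\beta t}\le \beta e^\beta(1-t)$ for $t\le 1$, applied to
\begin{align*}
\sE_{\beta}[\delta_u]-\sE_{\beta}[\mu] \ = \ \frac{1}{2\beta}\iint\bigl(e^\beta-e^{\beta\langle x,y\rangle}\bigr)\de\mu(x)\de\mu(y),
\end{align*}
yields $\sE_{\beta}[\delta_u]-\sE_{\beta}[\mu]\le (e^\beta/2)V(\mu)$.

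For the matching lower bound on $\int\|\cX_{\mu,\beta}\|^2\de\mu$, I would decompose $\cX_{\mu,\beta}(x)=e^\beta\proj_x[b]+E(x)$, where the error term $E(x):=e^\beta\int\proj_x[y]\bigl(e^{-\beta(1-\langle x,y\rangle)}-1\bigr)\de\mu(y)$ encodes the approximation $e^{\beta\langle x,y\rangle}\approx e^\beta$ valid on the cap. On $S_\alpha^+(u)$ one has $1-\langle x,y\rangle\le 2\sin^2\alpha$, $\|\proj_x[y]\|\le\sqrt{2(1-\langle x,y\rangle)}$, and $|e^{-t}-1|\le t$, producing a pointwise bound $\|E(x)\|\lesssim \beta e^\beta\sin\alpha\cdot(1-\langle x,b\rangle)$. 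For the main term, writing $b=\|b\|m$ with $m:=b/\|b\|\in S_\alpha^+(u)$ gives $\|\proj_x[b]\|=\|b\|\|\proj_x[m]\|$, and since $\theta_x:=\arccos\langle x,m\rangle\le 2\alpha$, the pointwise bound $\sin^2\theta_x\ge (1+\cos 2\alpha)(1-\cos\theta_x)$ integrates (using $1-\|b\|^2\le 2(1-\|b\|)$) to the Poincaré-type inequality $\int\|\proj_x[b]\|^2\de\mu \ge \cos^4\alpha\cdot V(\mu)$. Combining main term and error through $(a+b)^2\ge a^2/2-b^2$, one reaches $\int\|\cX_{\mu,\beta}\|^2\de\mu \ge c(\alpha,\beta)\,e^{2\beta}V(\mu)$. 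Under the hypothesis $10(1+\sqrt\beta)\tan\alpha\le 1$ the error fraction stays uniformly small and careful bookkeeping produces the PL constant $10 e^{-\beta}$.

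To pass to the convergence statement, three ingredients are needed. (i) \emph{Cap invariance}: at $x\in\partial S_\alpha^+(u)$ with $\langle x,u\rangle=\cos\alpha$, the radial speed
\begin{align*}
\langle\cX_{\mu,\beta}(x),u\rangle \ =\ \int\bigl(\langle y,u\rangle-\cos\alpha\cdot\langle x,y\rangle\bigr)e^{\beta\langle x,y\rangle}\de\mu(y) \ \ge \ \cos\alpha\int\bigl(1-\langle x,y\rangle\bigr)e^{\beta\langle x,y\rangle}\de\mu(y)\ge 0,
\end{align*}
so the flow cannot exit $S_\alpha^+(u)$ and PL is applicable along the whole trajectory. (ii) \emph{Exponential energy decay}: setting $F(t):=\sE_{\beta}[\delta_u]-\sE_{\beta}[\mu_t]$, identity~\eqref{eqn:monotone energy} combined with PL gives $F'(t)\le -F(t)/(10 e^{-\beta})$, hence $F(t)\le F(0)e^{-e^\beta t/10}$. (iii) \emph{Arc-length bound}: since $|\dot\mu_t|_W=\sqrt{-F'(t)}$, the standard Polyak trick
\begin{align*}
W_2(\mu_t,\mu_\infty) \ \le \ \int_t^\infty\sqrt{-F'(s)}\de s \ \le \ \sqrt{10 e^{-\beta}}\int_t^\infty\frac{-F'(s)}{\sqrt{F(s)}}\de s \ = \ 2\sqrt{10 e^{-\beta}F(t)}
\end{align*}
controls the Wasserstein arc-length. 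Substituting $F(0)\le 10 e^{-\beta}\int\|\cX_{\mu_0,\beta}\|^2\de\mu_0$ (from PL at $t=0$) together with the exponential decay of $F$ produces the advertised quantitative rate. The limit $\mu_\infty$ satisfies $\sE_{\beta}[\mu_\infty]=\sE_{\beta}[\delta_u]$, hence is a global maximizer of $\sE_{\beta}$ and therefore a Dirac mass $\delta_{x_\infty}$ by \Cref{thm:simple local max}, with $x_\infty\in S_\alpha^+(u)$ by cap invariance.

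The main obstacle is the quantitative calibration in the PL step: the squared error $\int\|E(x)\|^2\de\mu$ must be strictly dominated by the Poincaré-type main term $(e^{2\beta}/2)\int\|\proj_x[b]\|^2\de\mu$, and this is precisely what the explicit smallness assumption $10(1+\sqrt\beta)\tan\alpha\le 1$ enables. Tracking Taylor remainders, the Poincaré constant on the cap, and the Young-type split $(a+b)^2\ge a^2/2-b^2$ so that the PL constant comes out as the clean value $10 e^{-\beta}$ (rather than a merely qualitative $C(\alpha,\beta)e^{-\beta}$) is where the real work lies; everything else is an application of the PL inequality along the flow.
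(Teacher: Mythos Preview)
Your approach is correct and takes a genuinely different route from the paper. The paper does \emph{not} prove the PL inequality directly; instead it establishes the entropy-production inequality $\partial_t I_t \le -\tfrac{e^\beta}{10} I_t$ (where $I_t=\int\|\cX_{\mu_t,\beta}\|^2\de\mu_t$) via a second-variation computation, and the key technical device is the \emph{gnomonic projection} $G:S_\alpha^+(u)\to B_{\tan\alpha}\subset\R^{d-1}$. Under this chart $\proj_{F(u)}[F(v)]$ becomes a scalar multiple of $\diff F_u(v-u)$, which produces the cancellation identity $\int X(u)/(1+\|u\|^2)\,\de G_\#\mu(u)=0$; this is what makes the quadratic form $\iint Q_\mu$ negative. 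PL then falls out \emph{a posteriori} by integrating $\partial_t I_t\le -cI_t$ from $0$ to $\infty$ along the flow (so in the paper, convergence logically precedes PL, whereas you reverse the order).

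Your variance-sandwich argument is more elementary: it stays at the first-order level, needs no change of coordinates, and in fact delivers a sharper constant (your bookkeeping gives roughly $e^{-\beta}$ rather than $10e^{-\beta}$). What the paper's approach buys is reusability: the same gnomonic computation underlies \Cref{thm:almost exponential decay I_t}, the differential inequality $\partial_t I_t\le -I_t+100\,\mu_t(\S\setminus S_\alpha^+)$ for measures \emph{not} supported on a cap, which is the engine of the global convergence \Cref{thm:simple global convergence}. Your direct PL argument does not obviously extend to that setting, since once mass lies outside the cap the variance $V(\mu)$ no longer controls the geometry.

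One minor point: you invoke \Cref{thm:simple local max} to identify $\mu_\infty$ as a Dirac, but that result is stated for $d\ge 3$ while \Cref{thm:classical Lojasiewicz} covers $d\ge 2$. This is easy to fix with your own tools: from $F(\infty)=0$ and your upper bound $F\le\tfrac{e^\beta}{2}V$ you cannot conclude $V(\mu_\infty)=0$ directly, but the elementary observation that $\sE_\beta[\mu]=\tfrac{e^\beta}{2\beta}$ forces $\langle x,y\rangle=1$ for $\mu\otimes\mu$-a.e.\ $(x,y)$ does the job in any dimension. (The paper instead uses \Cref{lem:hemisphere_critical_point}, valid for all $d\ge 2$.)
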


Note that $\mathsf{E}_\beta[\delta_u] = \max_{\mu \in \mathcal{P}(\S)} \mathsf{E}_\beta[\mu] = e^\beta$ for any $u \in \S$. When $\mu$ is a discrete measure supported on a hemisphere of $\S$, i.e., $\supp(\mu) \subseteq S^+_{\pi/2}(u)$ for some $u \in \S$, \cite[Lemma~6.4]{geshkovski2023mathematical} establishes an exponential synchronization result whose convergence rate depends implicitly on the initial configuration of the tokens and deteriorates as the number of tokens increases. This hemisphere-support assumption is classical in the Kuramoto model ($d=2$, $\beta=0$); see, e.g., \cite{ha2010complete,choi2012asymptotic,frouvelle2019long,ha2020emergence,abdalla2022expander}. Indeed, such a geometric restriction on $\supp(\mu)$ stems primarily from the following proposition, which is a crucial ingredient in the proof of \Cref{thm:classical Lojasiewicz}.

\begin{proposition}\label{prop:hemisphere_critical_point}
Let $  \mu \in \mathcal{P}(\S)  $ be a critical point of the energy functional $  \mathsf{E}_\beta  $ such that $  \supp(\mu) \subseteq S_\alpha^+(U)  $ for some $  U \in \S  $ and $  \alpha \in (0, \pi/2)  $. Then there exists an $  x_0 \in S_\alpha^+(U)  $ such that $  \mu = \delta_{x_0}  $.
    
\end{proposition}

In \Cref{thm:classical Lojasiewicz}, the condition $10(1+\sqrt{\beta})\tan\alpha\le 1$ is imposed purely for technical reasons. Nevertheless, when $\beta$ is large, the scaling $\alpha\sim\beta^{-1/2}$ accurately captures the \emph{effective range} of the self-attention mechanism. For any two points $x,y\in\S$ separated by a geodesic angle of this order, one has $1-\langle x,y\rangle\sim\alpha^2\sim\beta^{-1}$. Geometrically, in the expression \eqref{eq:mfad} for the vector field $\mathcal{X}_{\mu_t,\beta}(x)$, the exponential weight $e^{\beta\langle x,y\rangle}$ implies that the force at $x$ is dominated by contributions from those $y$ lying within a geodesic angle of order $\alpha\sim\beta^{-1/2}$. This mechanism is further illustrated in our later \Cref{example:large epa not synchronize}. We remark that this notion of effective interaction range is widely used and discussed in the self-attention literature; see, e.g., \cite{bruno2024emergence,geshkovski2024dynamic,chen2025critical,bruno2025multiscale,koubbi2026homogenized}.

In our setting, \Cref{thm:classical Lojasiewicz} serves as the key tool for establishing global convergence to a point mass in \Cref{thm:simple global convergence}, for arbitrary regular initial measures $  \mu_0  $ not necessarily supported on a spherical cap. The main technical difficulty, analogous to that in the proof of \Cref{thm:simple local max}, is that the second variation of $  \mathsf{E}_{\beta}  $ along the Wasserstein gradient flow (see \eqref{eqn:second variation at critical point} or \eqref{eqn:derivative I_t 2}) is not manifestly strictly negative. In particular, when $\beta>0$, the first term in \eqref{eqn:second variation at critical point} or \eqref{eq:defQ} is strictly positive and thus makes the second variation even less negative. For the Kuramoto model ($d=2$ and $\beta = 0$), a much simpler proof of \Cref{thm:classical Lojasiewicz} is possible, since on $  \mathbb{S}^1 \subseteq \R^2 $, the angle between the tangent lines at any two points $  x,y  \in \mathbb{S}^1$ coincides exactly with the angle between $  x  $ and $  y  $ themselves. This property greatly simplifies the expression of the second variation in \eqref{eq:defQ}. In contrast, when $  d\geq 3  $, the richer geometry of $  \S  $ together with the infinite-dimensional nature of $  \mathcal{P}(\S)  $ makes obtaining the desired uniform convergence rate considerably more challenging than in the finite-token case or the Kuramoto model. The detailed proof of \Cref{thm:classical Lojasiewicz}, which covers all cases $  d\geq 2  $ and $\beta \geq 0$, is presented in \Cref{sec: Lojasiewicz}.

We are now in a position to state our main result \Cref{thm:simple global convergence} for initializations that need not be supported on a spherical cap. As observed in~\cite{morales2022trend} for the Kuramoto model ($  d=2  $, $  \beta=0  $), $\mathsf{E}_{0}$ does not satisfy a PL inequality. Instead, the energy $\mathsf{E}_{0}$ satisfies a second-order differential inequality along the flow $\mu_t$ defined in \eqref{eq:mfad} with a vanishing remainder term. Building on this, we derive the following modified inequality for $\mathsf{E}_{\beta}$ for $d \geq 2$ and $\beta \geq 0$: if $\mu_0$ has a density $f_0 \in L^2(\S)$ with respect to the uniform measure,
    \begin{align}\label{eq:modified PL with error}
         \frac{\de^2}{\de t^2} \mathsf{E}_{\beta}[\mu_t] \leq - \frac{\de}{\de t} \mathsf{E}_{\beta}[\mu_t] + C_0 e^{-d C_1 t}, \quad \text{ for}\quad t> T_0
    \end{align}
where $C_0, T_0$ are constants depending on $\mu_0$, $C_1$ is a universal constant. In turn, this inequality enables us to establish the following result for any initial measure that admits a density $f_0 \in L^2(\S)$ with respect to the uniform measure.

 \begin{theorem}\label{thm:simple global convergence}
     Fix $d \geq 2$. Let $\mu_t$ evolve according to the mean-field attention dynamics~\eqref{eq:mfad} initialized at $\mu_0$ with mean such that
     \begin{align}\label{eqn:R0 def}
     R_0\coloneq \big\| \int_{\S} x \ud \mu_0(x) \big\|_2>0.
     \end{align}
     Assume that $\mu_0$ admits a density  $f_0 \in L^2(\S)$ with respect to the uniform measure, then $\mu_t$ also admits a density $f_t \in L^2(\S)$ for all $t>0$. Moreover, there exist $\beta_0, C_0, T_0 >0$, all depending on $\mu_0$, such that if $|\beta| <\beta_0$, there exists an $x_{\infty} \in \S$ for which
        \begin{align}\label{eqn:simple exponential rate}
            W_2 ( \mu_t , \delta_{x_{\infty}} ) \leq C_0 e^{-\frac{t}{100}} \,, \quad \text{ for}\quad t > T_0\,.
        \end{align}
\end{theorem}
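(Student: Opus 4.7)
The plan is to track two scalar quantities along the flow: the energy $F(t) \coloneq \sE_\beta[\mu_t]$ and the squared barycenter norm $\|m(t)\|_2^2$, where $m(t) \coloneq \int_\S x \diff \mu_t(x)$. I would show that both converge exponentially fast (the former to the maximum value of $\sE_\beta$ on $\mathcal{P}(\S)$, namely $\sE_\beta[\delta_u]$ for any $u\in\S$; the latter to $1$), and then upgrade this scalar decay to Wasserstein convergence to a specific point mass $\delta_{x_\infty}$. The proof proceeds in four steps: (i) propagate the $L^2$ density; (ii) establish the second-order differential inequality for $F$ announced in the discussion preceding the theorem; (iii) Grönwall-integrate to obtain exponential decay of the ``speed'' $F'(t) = \int_\S \|\cX_{\mu_t,\beta}\|_2^2 \diff \mu_t$ and of the energy gap; and (iv) convert this scalar decay into the final Wasserstein bound. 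The PL bound of \Cref{thm:classical Lojasiewicz} enters in step (iii) as the end-game mechanism once $\mu_t$ has concentrated enough about $m(t)/\|m(t)\|_2$.

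Step (i) is routine: since $\cX_{\mu_t,\beta}$ is uniformly smooth and bounded on $\S$, the continuity equation~\eqref{eq:mfad} propagates densities with $\|f_t\|_{L^2(\S)} \leq e^{C(\beta) t} \|f_0\|_{L^2(\S)}$ on every bounded time window, via a Grönwall argument applied to $\frac{d}{dt} \|f_t\|_{L^2}^2$. Step (ii) is the technical heart: expanding $f_t$ in spherical harmonics and diagonalizing the interaction kernel $(x,y) \mapsto e^{\beta\langle x,y\rangle}$ via the Funk--Hecke formula, one shows that the linearization of~\eqref{eq:mfad} around the uniform measure has a spectral gap of order $d$ on modes of degree $\geq 2$, while the degree-$1$ mode is driven by $\|m(t)\|_2$. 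Using the $L^2$ bound from (i), the $O(\beta)$ nonlinear corrections can then be absorbed into this gap, yielding $F''(t) \leq -F'(t) + C_0 e^{-d C_1 t}$ for $t > T_0$. Step (iii) follows by Grönwall from (ii) together with $F'(t)\geq 0$ and the a-priori bound $F(t)\leq \sE_\beta[\delta_u]$. Finally, step (iv) uses the pointwise inequality $e^\beta - e^{\beta s}\geq \beta e^{-\beta}(1-s)$ for $s\in[-1,1]$ to obtain
\begin{equation*}
\sE_\beta[\delta_u] - F(t) \;\geq\; \tfrac{e^{-\beta}}{2}\bigl(1 - \|m(t)\|_2^2\bigr),
\end{equation*}
which transfers the scalar decay into $\|m(t)\|_2\to 1$ at an exponential rate; a Cauchy argument using $\|\dot m(t)\|_2 \leq \sqrt{F'(t)}$ then produces the limit $x_\infty \coloneq \lim_{t\to\infty} m(t)/\|m(t)\|_2 \in \S$, and the final bound follows from $W_2^2(\mu_t,\delta_{x_\infty}) \leq (\pi^2/4)\cdot 2\bigl(1-\langle m(t),x_\infty\rangle\bigr)$.

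The main obstacle is step (ii): establishing the second-order inequality with its specific remainder rate $d C_1$ requires carefully treating the nonlinear attention kernel as a perturbation of the $\beta=0$ (Kuramoto-like) dynamics. The smallness assumption $|\beta|<\beta_0(\mu_0)$ is precisely what allows the nonlinear corrections, paired with the $L^2$ control on $f_t$, to be absorbed by the Laplacian spectral gap on $\S$ acting on harmonics of degree $\geq 2$. A secondary subtlety is ensuring $F(\infty) = \sE_\beta[\delta_u]$ rather than some lower critical value: this uses the hypothesis $R_0>0$, the monotonicity of $F$, and (implicitly) \Cref{thm:simple local max} to rule out convergence to the uniform measure. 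Once these are in place, tracking constants through the Grönwall in step (iii) and the Wasserstein bound in (iv) delivers the prefactor $\tfrac{1}{100}$ appearing in~\eqref{eqn:simple exponential rate}.
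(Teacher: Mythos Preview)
Your overall scaffold (the four-step outline, tracking $F'(t)=I_t$, and the Wasserstein Cauchy argument in step (iv)) matches the paper's, but step (ii) has a genuine gap, and it is the whole difficulty of the theorem.

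You propose to obtain the second-order inequality $F''(t)\le -F'(t)+C_0e^{-dC_1 t}$ by expanding $f_t$ in spherical harmonics, invoking Funk--Hecke, and absorbing the $O(\beta)$ nonlinearity using the $L^2$ bound from step (i). This cannot work as stated. The linearization around the uniform measure controls the dynamics only while $f_t$ stays in an $L^2$ neighborhood of uniform; but the flow is heading toward a point mass, and $\|f_t\|_{L^2}$ \emph{grows} exponentially along the way (your own step (i) gives only $\|f_t\|_{L^2}\le e^{C(\beta)t}\|f_0\|_{L^2}$). So the nonlinear corrections, which you want to bound via $\|f_t\|_{L^2}$, grow rather than decay, and there is no mechanism in your argument that produces the exponentially decaying remainder $C_0e^{-dC_1 t}$. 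You also never explain where the rate $dC_1$ comes from, nor how $R_0>0$ enters---in the paper it is used quantitatively throughout, not just to ``rule out the uniform measure''.

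The paper's route to the same inequality is entirely different and is Lagrangian rather than spectral. It writes $\partial_t I_t \le -I_t + 100\,\mu_t\bigl(\S\setminus S_\alpha^+(U_t)\bigr)$ (this is the PL-on-a-cap inequality, valid for \emph{all} $t$, not just after concentration), and then devotes an entire section to showing that the mass outside the cap $S_\alpha^+(U_t)$ decays like $e^{-(d-1)c_1 t}$. That decay is obtained by tracking the characteristic flow $\phi_{t_1\to t}$: first proving $R_t\ge \lambda R_0$ for all $t$ (so $U_t$ stays well-defined and slowly varying), then showing that on time windows where $\partial_t R_t$ is small the flow pushes $\S\setminus S_\alpha^-(U_t)$ into $S_\alpha^+(U_t)$ in bounded time, while $f_t^2(S_\alpha^-(U_t))$ decays exponentially. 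The smallness of $\beta$ enters as a perturbation bound $\|\cY_t-V_t\|\le\epa$ with $\epa\lesssim R_0^6$, not as a spectral-gap absorption. Your proposal does not contain any analogue of this geometric mass-transport argument, and without it step (ii) is unsubstantiated.
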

Note that the convergence in Wasserstein distance to a point mass means that (i) the variance of $\mu_t$ converges to zero exponentially fast, and (ii) its mean also converges to $x_{\infty}$. 

\Cref{thm:simple global convergence} is a special case of the more general \Cref{thm:main thm}, which accommodates broader attention mechanisms. The proof relies on the approximation $  e^{\beta \langle x, y \rangle} \approx 1  $ when $  \beta  $ is small. Consequently, the result extends to more realistic settings in which $  e^{\beta \langle x, y \rangle}  $ is replaced by $  e^{\langle Q_t x, K_t y \rangle}  $ in the definition of the vector field $  \mathcal{X}_{\mu_t,\beta}(x)  $, provided that $  \|Q_t\|_2  $, $  \|\partial_t Q_t\|_2  $, $  \|K_t\|_2  $, and $  \|\partial_t K_t\|_2  $ remain bounded above by a sufficiently small constant uniformly in time. We refer the reader to \cite{geshkovski2023emergence,geshkovski2023mathematical} for the introduction of the time-dependent parameter matrices $  Q_t  $ and $  K_t  $ arising in real-world transformer; this extension is omitted from the present paper.

 When $\beta=0$, a qualitative mean-field convergence result was proved in~\cite{frouvelle2019long}, and under the finite particle setting and additional symmetric assumptions on the entire flow $\{f_t(x)\}_{t}$, an exponential convergence implicitly depending on the initial condition of the system was also derived when $\beta=0$. For Kuramoto models ($d=2$ and $\beta = 0$), \cite{ha2020emergence,morales2022trend} proved similar mean-field exponential convergence results for small frequency terms. Long-time behaviors of Kuramoto models have also been extensively studied by \cite{ha2010complete,ha2016collective,benedetto2014complete}.

When $\beta >0$, even in the finite-particle case, existing convergence results for general initial conditions rely on soft qualitative arguments that do not provide explicit convergence rates~\cite{markdahl2017almost,andrew25}. To the best of our knowledge,~\Cref{thm:simple global convergence} is the first result to provide quantitative rates of convergence for attention dynamics (mean-field or finite-particle) under a general initial condition like $R_0>0$.

There are also several fundamental differences between the Kuramoto models ($  \beta=0  $, $  d=2  $) and transformer models that render the proof for \Cref{thm:simple global convergence} substantially more involved. A key Lyapunov functional in the Kuramoto setting is the mean-vector length $  R_t=\bigl\|\int_{\S}x\,\mathrm{d}\mu_t(x)\bigr\|_2  $, which is monotone increasing if $  \beta=0  $. In contrast, as illustrated in \Cref{Fig: Rt not monotone}, $  R_t  $ exhibits no monotonicity in the transformer model.
Moreover, when $  \beta  $ is small, the effective-range phenomenon exploited in \Cref{thm:classical Lojasiewicz} is absent: because the support of $  \mu_t  $ is no longer localized to a hemisphere, every pair of points in $\S$ interacts globally. Indeed, in the proof of \Cref{thm:simple global convergence}, we must construct a dynamically evolving effective range and carefully pull almost all of the mass of $  \mu_t  $ into it. We then combine \Cref{thm:classical Lojasiewicz} with the higher-dimensional analogue of \eqref{eq:modified PL with error} to establish the explicit exponential convergence in \Cref{thm:simple global convergence}. This global coupling is considerably more intricate than on the circle $  \mathbb{S}^1  $, owing to the richer geometry of the higher-dimensional sphere $\S$.

The assumption that $  f_0\in L^2(\S)  $ in \Cref{thm:simple global convergence} is imposed for technical reasons. We believe that the techniques developed in this paper can be extended to yield similar mean-field synchronization results for $  f_0\in L^p(\S)  $ with $  p>1  $, as well as for densities $  f_0  $ supported on a connected smooth submanifold of $  \S  $ with sufficiently regular density. However, as illustrated in \Cref{example:delta mass not synchronize}, the long-time behavior of transformer models can differ substantially when initialized from Dirac delta masses versus mean-field densities.

\begin{example}[No synchronization for finite particles]\label{example:delta mass not synchronize}
Fix $d=2$.  We construct an example on the unit circle $\mathbb{S}^1 \subseteq \R^2$ identified to $\R/2\pi\Z$ where particles do not converge to a single cluster despite being initialized at $\mu_0$ that satisfies $R_0>0$; see Figure~\ref{fig:delta_mass_not_sync}. To that end, define
$\mu_0 = \frac{1}{50}\delta_{\frac{\pi}{2}}+ \frac{49}{100}\delta_{-\frac{\pi}{2}-\xi}+ \frac{49}{100}\delta_{-\frac{\pi}{2}+\xi}$ for an $\xi \in (0,\frac{1}{100})$.  

With this initialization, the initial velocity field $\cX_{\mu_0, \beta}$ is given for any $\theta \in [0,2\pi)$ by\footnote{This expression follows from a simple change of variables; see~\cite[Section~7.1]{geshkovski2023mathematical}}
    \begin{align}\label{eqn:simple transformer circle}
       \cX_{\mu_0, \beta}(\theta)= -\int_{0}^{2\pi}  \sin(\theta-\omega) e^{\beta \cos(\theta - \omega)} \diff \mu_0(\omega)\,.
    \end{align}

It is easy to see that $\cX_{\mu_0, \beta}(\pi/2) = 0$ by symmetry. Moreover, 
$\cX_{\mu_0, \beta}(-\frac{\pi}{2}-\xi)=-\cX_{\mu_0, \beta}(-\frac{\pi}{2}+\xi)>0$ so long as $\beta >0$, hence the two particles in the south hemisphere get closer and $\mu_t$ initialized at $\mu_0$ eventually converges to $\mu_{\infty} \coloneq \frac{1}{50}\delta_{\frac{\pi}{2}}+ \frac{49}{50}\delta_{-\frac{\pi}{2}}$ as $t\to \infty$. The point of this example is that, although $\mu_t$ initialized at $\mu_0$ does not converge to a single point mass, \Cref{thm:simple global convergence} implies that when $\beta$ is small, any initial measure with an $L^2(\S)$-density, which may be arbitrary close to $\mu_0$, contracts to a point mass at an exponential rate. In particular, we see that the contraction speed, i.e. $C_0$ and $T_0$ in \Cref{thm:simple global convergence}, must depend on $f_0$.
\end{example}
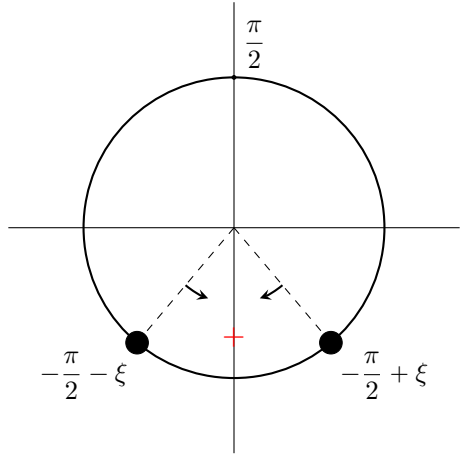
\begin{figure}
  \centering
  \begin{tikzpicture}
  \def\xii{0.7} 
  \def\r{2} 
  \def\s{.5} 
  \pgfmathsetmacro{\yplus}{1/50-49*cos(deg(\xii))/50}
  \pgfmathsetmacro{\yplusscaled}{\yplus*\r} 
  \draw[thick] (0,0) circle (\r);
  \draw[thin, -] (-\r-1,0) -- (\r+1,0) ;
  \draw[thin, -] (0,-\r-1) -- (0,\r+1) ;
  \coordinate (M1) at ({0},{{\r}});
  \coordinate (M2) at ({{\r*cos(deg(3*pi/2-\xii))}},{{\r*sin(deg(3*pi/2-\xii))}});
  \coordinate (M3) at ({{\r*cos(deg(3*pi/2+\xii))}},{{\r*sin(deg(3*pi/2+\xii))}});
  
  \draw[thin, dashed] (0,0) -- (M2);
  \draw[thin, dashed] (0,0) -- (M3);
  
  \coordinate (Mid2) at ({{\r/2*cos(deg(3*pi/2-\xii))}},{{\r/2*sin(deg(3*pi/2-\xii))}});
  \coordinate (Mid3) at ({{\r/2*cos(deg(3*pi/2+\xii))}},{{\r/2*sin(deg(3*pi/2+\xii))}});
  
  \pgfmathsetmacro{\midangleL}{(deg(3*pi/2-\xii)+270)/2}
  \pgfmathsetmacro{\midangleR}{(deg(3*pi/2+\xii)+270)/2}
  \coordinate (ArrowL) at ({{\r/2*cos(\midangleL)}},{{\r/2*sin(\midangleL)}});
  \coordinate (ArrowR) at ({{\r/2*cos(\midangleR)}},{{\r/2*sin(\midangleR)}});
  
  \draw[->, thick, >=stealth] (Mid2) arc (deg(3*pi/2-\xii):\midangleL:\r/2);
  \draw[->, thick, >=stealth] (Mid3) arc (deg(3*pi/2+\xii):\midangleR:\r/2);
  
  \filldraw (M1) circle ({0.05*\s}) node[above right ] {$\displaystyle \frac{\pi}{2}$};
  \filldraw (M2) circle ({0.3*\s}) node[below left] {$\displaystyle -\frac{\pi}{2}-\xi$};
  \filldraw (M3) circle ({0.3*\s}) node[below right] {$\displaystyle -\frac{\pi}{2}+\xi$};
  
  \node[red, scale=1.2] at (0,\yplusscaled) {$+$};
  \end{tikzpicture}
  \caption{Illustration of $\mu_0$ in \Cref{example:delta mass not synchronize} with $\xi=.7$. Circle radii are proportional to mass at each point. The cross indicates the mean of $\mu_0$ with $R_0>.7$ and the arrows indicate velocity fields for initial angles.}
  \label{fig:delta_mass_not_sync}
\end{figure}

We conclude this section by discussing an important limitation of \Cref{thm:simple global convergence}, namely that $\beta$ is required to be small enough. It turns out that this assumption is necessary, as there exists initializations $\mu_0$ for which $R_0>0$ and that admit a density $f_0 \in L^2(\S)$ for which the mean-field attention dynamics do not converge to a single point mass. We describe such an initialization on the circle in the following \Cref{example:large epa not synchronize} and \Cref{fig:large_eps_not_sync}. This example is consistent with our earlier discussion of the effective interaction range in \Cref{thm:classical Lojasiewicz}. When $  \beta  $ is relatively large, each point interacts strongly only with nearby points and effectively neglects distant ones.

\begin{example}[No mean-field synchronization for large $\beta$]\label{example:large epa not synchronize}
    Fix $d=2$.  We construct an example on the unit circle $\mathbb{S}^1 \subseteq \R^2$ identified to $\R/2\pi\Z$ where particles do not converge to a single cluster when $\beta$ is sufficiently large. To that end, let $\beta=100$, and consider the flow \eqref{eq:mfad}  started at $\mu_0$ that admits a density $f_0 \in L^2(\R/2\pi\Z)$ with respect to the Lebesgue measure.  
    
   We construct $f_0$ as follows.  Fix $\eta, \xi \in (0, \frac{1}{100})$ and let  $h_1$ be a positive, even, and smooth function supported on $[-\eta, \eta]$ such that   $h_1$ is strictly increasing on $[- \eta, 0]$. We normalize $h_1$ such that $\int h_1 =1/3$. Similarly, let  $h_2$ be a positive, even, and smooth function supported on $[-\xi,\xi]$ such that   $h_2$ is strictly increasing on $[- \xi, 0]$ and normalized as $\int h_2=2/3$. Finally, let $f_0$ be defined as $f_0(x)= h_1(x)+ h_2(\pi+x)$; see Figure~\ref{fig:large_eps_not_sync} for an illustration.

   With this initialization, akin to \Cref{example:delta mass not synchronize}, the initial velocity field $\cX_{\mu_0, 100}$ is given for any $\theta \in [0,2\pi)$ by
    \begin{align}\label{eqn:simple transformer circle b100}
       \cX_{\mu_0, 100}(\theta)= -\int_{0}^{2\pi}  \sin(\theta-\omega) e^{100 \cos(\theta - \omega)} f_0(\omega)
       \diff \omega\,.
    \end{align}
    Clearly, $\cX_{\mu_0, 100}(0)=\cX_{\mu_0, 100}(\pi) =0 $ by symmetry. Moreover, one can easily see that  $\cX_{\mu_0, 100}$ pulls points $\theta \in [\pi-\xi, \pi+\xi]\setminus \{\pi\}$ towards $\pi$ because the main contribution in $\cX_{\mu_0, 100}$ at those points comes from the integral on $[\pi-\xi , \pi+\xi]$ in~\eqref{eqn:simple transformer circle b100}. Similarly, even though $\int  h_1 =1/3< 2/3=\int h_2$ and points in $[\pi-\xi, \pi + \xi]$ are trying to pull points in $[-\eta, \eta]$ towards $\pi$, their contribution is negligible compared to the pull from antipodal points. Indeed, when $\xi \leq \eta$,
\begin{align*}
   &\cX_{\mu_0, 100}(\eta)=\\
   & = \int_{-\xi}^{\xi}  \sin(\eta-\omega) e^{-100 \cos(\eta - \omega)} h_2(\omega) \diff \omega -\int_{-\eta}^{\eta}  \sin(\eta-\omega) e^{100 \cos(\eta - \omega)} h_1(\omega) \diff \omega\\
   & \le \frac23 \sin(2\eta) e^{-100\cos(2\eta)}  - \int_{-\eta}^{\eta/2}  \sin(\eta-\omega) e^{100 \cos(\eta - \omega)} h_1(\omega) \diff \omega\\
   &  \le \frac23 \sin(2\eta) e^{-100\cos(2\eta)}  - \frac13 \sin \big( \frac{\eta}{2}\big)e^{100 \cos(2\eta)} \lesssim  -\eta \cdot 10^{38} .
\end{align*}
By symmetry $\cX_{\mu_0, 100}(-\eta)=-\cX_{\mu_0, 100}(\eta)$ and we see that the edge of the interval $[-\eta,\eta]$ gets pulled towards $0$. Since trajectories of ODEs cannot cross, all the points in $[-\eta, \eta]$ get pulled towards 0. We can similarly discuss the case when $\xi \geq \eta$. Using a bootstrap argument, it can be shown that $\mu_t$ converges to $\mu_\infty = \frac13 \delta_0 + \frac23\delta_\pi$.
\end{example}

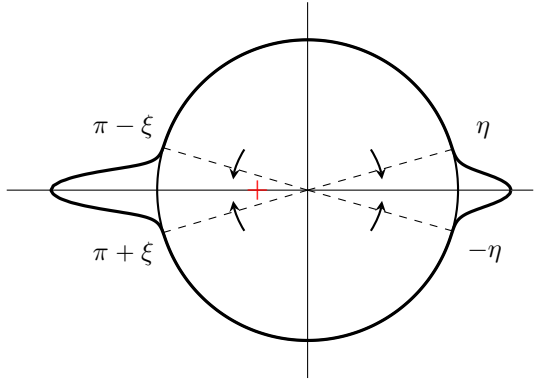
\begin{figure}
      \centering
      \begin{tikzpicture}
    \def\kappa{150} 
    \def\w{0.333} 
    \def\r{2} 
    \def\rb{\r}
    \pgfmathsetmacro{\mass}{\r*(2*\w-1)}
    \pgfmathsetmacro{\xii}{asin(0.75/\kappa)}
    \pgfmathsetmacro{\etaa}{asin(0.75/\kappa)}
    \draw[thick] (0,0) circle (\r);
    \draw[thin, -] (-\r-2,0) -- (\r+1,0) ;
    \draw[thin, -] (0,-\r-.5) -- (0,\r+.5) ;
      \draw[domain=0:2*pi, samples=1000, variable=\t, smooth, very thick, black]
        plot ({(\w*exp(\kappa*cos(deg(\t))-0.995*\kappa) + (1-\w)*exp(-\kappa*cos(deg(\t))-0.995*\kappa) + \rb)*cos(deg(\t))},
              {(\w*exp(\kappa*cos(deg(\t))-0.995*\kappa) + (1-\w)*exp(-\kappa*cos(deg(\t))-0.995*\kappa) + \rb)*sin(deg(\t))});
      \coordinate (M11) at ({{\r*cos(deg(0)-\xii)}},{{\r*sin(deg(0-\xii))}});
      \coordinate (M12) at ({{\r*cos(deg(0)+\xii)}},{{\r*sin(deg(0+\xii))}});
      \coordinate (M21) at ({{\r*cos(deg(pi-\etaa))}},{{\r*sin(deg(pi-\etaa))}});
      \coordinate (M22) at ({{\r*cos(deg(pi+\etaa))}},{{\r*sin(deg(pi+\etaa))}});
      \filldraw (M11) circle ({0}) node[below right] {$-\eta$};
      \filldraw (M12) circle ({0}) node[above right] {$\ \eta$};
      \filldraw (M21) circle ({0}) node[above left] {$\pi-\xi$};
      \filldraw (M22) circle ({0}) node[below left] {$\pi+\xi$};
      \coordinate (Mid1) at ({{\r/2*cos(deg(0-2*\xii))}},{{\r/2*sin(deg(0-2*\xii))}});
      \coordinate (Mid4) at ({{\r/2*cos(deg(0+2*\xii))}},{{\r/2*sin(deg(0+2*\xii))}});
      
      \coordinate (Mid2) at ({{\r/2*cos(deg(pi-2*\etaa))}},{{\r/2*sin(deg(pi-2*\etaa))}});
      \coordinate (Mid3) at ({{\r/2*cos(deg(pi+2*\etaa))}},{{\r/2*sin(deg(pi+2*\etaa))}});
      \pgfmathsetmacro{\midangleL}{(deg(0-\xii)+0)/2}
      \pgfmathsetmacro{\midangleR}{(deg(0+\xii)+0)/2}
      \coordinate (ArrowL) at ({{\r/2*cos(\midangleL)}},{{\r/2*sin(\midangleL)}});
      \coordinate (ArrowR) at ({{\r/2*cos(\midangleR)}},{{\r/2*sin(\midangleR)}});
      \pgfmathsetmacro{\midangleLL}{(deg(pi-\etaa)+180)/2}
      \pgfmathsetmacro{\midangleRR}{(deg(pi+\etaa)+180)/2}
      \coordinate (ArrowLL) at ({{\r/2*cos(\midangleLL)}},{{\r/2*sin(\midangleLL)}});
      \coordinate (ArrowRR) at ({{\r/2*cos(\midangleRR)}},{{\r/2*sin(\midangleRR)}});
      
      \draw[->, thick, >=stealth] (Mid1) arc (deg(0-2*\xii):\midangleL:\r/2);
      \draw[->, thick, >=stealth] (Mid4) arc (deg(0+2*\xii):\midangleR:\r/2);
      \draw[->, thick, >=stealth] (Mid2) arc (deg(pi-2*\etaa):\midangleLL:\r/2);
      \draw[->, thick, >=stealth] (Mid3) arc (deg(pi+2*\etaa):\midangleRR:\r/2);
      \draw[thin, dashed] (0,0) -- (M11);
      \draw[thin, dashed] (0,0) -- (M12);
      \draw[thin, dashed] (0,0) -- (M21);
      \draw[thin, dashed] (0,0) -- (M22);
  \node[red, scale=1.2] at (\mass,0) {$+$};
\end{tikzpicture}
      \caption{Illustration of $f_0$ in \Cref{example:large epa not synchronize}. The cross indicates the mean of $f_0$ with $R_0>0$, and the arrows indicate velocity fields at the boundaries of the support of $f_0$.}
      \label{fig:large_eps_not_sync}
    \end{figure}


\section{Clustering in general transformer models}\label{section:general J}

Despite its simplicity, the previous section shows that mean-field attention dynamics~\eqref{eq:mfad} captures the clustering phenomenon observed in practice.  In practice, the attention mechanism is parameterized by matrices that are learned from data during the training process. In general, the vector field $\cX_{\mu_t, \beta}$ in~\eqref{eq:mfad} becomes 
\begin{equation}
\label{eq:general_vector_field}
    \widetilde \cX_{\mu_t}(x)\coloneq \int_{\S} \proj_x[V_ty]\phi(\langle A_t x,  y\rangle) \diff \mu_t (y)\,,
\end{equation}
where $\{V_t,A_t\}_t$ are learned matrices and $\phi$ is a known nonlinear function; see~\cite{geshkovski2023mathematical}. Such a time inhomogeneous system is difficult to study in full generality but we make progress in this direction by considering the case where\footnote{Employing the same weights across layers has been used to reduce the complexity of transformer models~\cite{lanalbert} and it has been shows that they demonstrate better reasoning properties in certain tasks~\cite{lego}. This is the model initially studied in~\cite{sander2022sinkformers}} $V_t=V$ and $A_t=A$ for all $t$.

\subsection{Critical Points}
\label{sec:crit}

In the case where $V=A$, the vector field $\widetilde \cX_{\mu}$ is a Wasserstein gradient flow for the energy functional
    \begin{equation} \label{eqn:interaction energy}
        \mathsf{E}_{\J}[\mu] \coloneq \frac{1}{2 } \iint \J \left(\langle Ax, y\rangle \right)  \diff \mu(x) \diff \mu(y).
    \end{equation}
{Here and throughout this paper, unless explicitly stated otherwise, integrals are assumed to be taken over the set $\S$.} We leverage this property to characterize the stationary points of the mean-field attention dynamics~\eqref{eq:mfad} with the more general vector field~$\widetilde \cX_{\mu}$ in \Cref{sec:crit} under additional assumptions on the matrix $A$.

Hereafter, we assume that $\J$ is twice differentiable and that $A$ is a $d \times d$ real symmetric matrix with eigenvalues $\lambda_1\geq \lambda_2 \geq \dots \geq \lambda_d$---we allow for some eigenvalues to be negative.  The Wasserstein gradient of the interaction energy $ \mathsf{E}_{\J}$ is given by
    \begin{equation}\label{eqn:wss gradient}
        \widetilde  \cX[\mu](x) \coloneq \gradW \sE_{\J}[\mu](x) = \int_{\S} \proj_{x} [Ay]\J'\left(\langle Ax, y\rangle \right)  \diff \mu(y), \quad x \in \S\,.
    \end{equation}
Consider the general mean-field attention dynamics
\begin{equation}
    \label{eq:general_mfad}
    \partial_t \mu_t + \dive (\mu_t\widetilde  \cX[\mu_t]) =0\,,
\end{equation}
and observe that they collapse to~\eqref{eq:mfad} when $A=I_d$ and $\phi(z)=e^{\beta z}$ up to a time speed up.

The following theorem provides a partial resolution of Conjecture 2 in~\cite{karagodin2024clustering} when adapted to non-causal attention dynamics.

\begin{theorem}\label{thm:global_max}
Fix $d\ge 3$. Assume that the top three eigenvalues of $A$ satisfy $\lambda_1 = \lambda_2 = \lambda_3=\lambda > 0$, and $|\lambda_d|\leq \lambda $. Assume further that $\phi$ is twice differentiable, increasing and convex: $\J'>0$, $\J '' \geq 0$. Then any local maxima of $\mathsf{E}_{\J}[\mu]$ must be a global maximum, that is, a point mass $\delta_{x_0}$ for some $x_0\in\S$ such that $A x_0 = \lambda x_0$.
\end{theorem}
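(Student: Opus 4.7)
Let $V_{\mathrm{top}}:=\ker(A-\lambda\,\mathrm{Id})$, which has dimension at least $3$ by hypothesis. My plan is to test local-maximality against pushforward perturbations along tangential vector fields on $\S$ generated by top eigenvectors. For each unit $u\in V_{\mathrm{top}}$ I would introduce $\xi_u(x):=u-\langle u,x\rangle x$ (tangent to $\S$) and let $T_\eps^u$ denote its flow on $\S$. Because $\mu$ is a local---hence critical---maximum, the first variation of $\sE_\phi$ along $(T_\eps^u)_*\mu$ vanishes automatically, and local maximality reduces to the inequality $\sE''_u:=\tfrac{\de^2}{\de\eps^2}\big|_{\eps=0}\sE_\phi[(T_\eps^u)_*\mu]\le 0$ for every such $u$.

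The next step is an explicit second-order expansion. Using $Au=\lambda u$ and $\|u\|=1$ throughout should produce the identity
\[
\sE''_u=\tfrac12\iint\phi''(r)(s_x+s_y)^2(\lambda-r)^2\,\de\mu^{\otimes 2}+\iint\phi'(r)(\lambda-r)(1-s_x^2-s_xs_y-s_y^2)\,\de\mu^{\otimes 2},
\]
with $r:=\langle Ax,y\rangle$, $s_x:=\langle u,x\rangle$, and $s_y:=\langle u,y\rangle$. I would then pick an orthonormal triple $u_1,u_2,u_3\in V_{\mathrm{top}}$ (possible because $\dim V_{\mathrm{top}}\ge 3$), let $P_3$ be the orthogonal projector onto $V_3:=\spann\{u_1,u_2,u_3\}$, and sum the three inequalities $\sE''_{u_i}\le 0$; the scalar factors then aggregate into projector invariants via $\sum_i(s_x^{(i)}+s_y^{(i)})^2=\|P_3(x+y)\|^2$ and $\sum_i\bigl(1-(s_x^{(i)})^2-s_x^{(i)}s_y^{(i)}-(s_y^{(i)})^2\bigr)=3-\|P_3x\|^2-\|P_3y\|^2-\langle P_3x,P_3y\rangle$. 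Both of the resulting integrands are pointwise non-negative: $\phi''\ge 0$; $\phi'>0$; $\lambda-r\ge 0$ because $r\le\|Ax\|\le\lambda$ (using $|\lambda_d|\le\lambda$); and $3-\|P_3x\|^2-\|P_3y\|^2-\langle P_3x,P_3y\rangle\ge 0$ since each of the three subtracted quantities is $\le 1$ (the last by Cauchy--Schwarz). Since the sum of two non-negative integrals must also be $\le 0$, both must vanish.

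Examining the second integral (strictly positive weight $\phi'$) then yields, for $\mu^{\otimes 2}$-a.e.\ $(x,y)$, the dichotomy
\[
\langle Ax,y\rangle=\lambda\qquad\text{or}\qquad\|P_3 x\|^2=\|P_3 y\|^2=\langle P_3 x,P_3 y\rangle=1.
\]
To close the argument I would analyse this dichotomy. The second alternative forces $x=y\in V_3\subseteq V_{\mathrm{top}}$; the first, by equality in Cauchy--Schwarz, forces $x\in V_+\oplus V_-$ (the $\pm\lambda$-eigenspaces) with $y=P_{V_+}x-P_{V_-}x$, the reflection of $x$ across $V_+$. Applied on the $\mu^{\otimes 2}$-diagonal, the dichotomy shows that every atom of $\mu$ lies in $V_+=V_{\mathrm{top}}$; applied off-diagonal to two distinct atoms in $V_{\mathrm{top}}$, it forces their inner product to be $1$, hence coincidence; and a non-atomic part of $\mu$ is excluded because the reflection of a generic $x$ is a single point that carries no continuous mass. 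One concludes $\mu=\delta_{x_0}$ with $Ax_0=\lambda x_0$.

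The main obstacle will be the algebraic bookkeeping producing the clean formula for $\sE''_u$. The hypothesis $\lambda_1=\lambda_2=\lambda_3$ enters specifically at the summation step: the bracket $3-\|P_3x\|^2-\|P_3y\|^2-\langle P_3x,P_3y\rangle$ is non-negative only because $P_3$ projects onto a $3$-dimensional subspace, so with fewer summed directions the analogous bracket can become negative and the sign argument breaks. The closing case analysis mostly handles the extra freedom afforded by a possible $-\lambda$ eigenspace.
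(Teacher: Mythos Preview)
Your overall strategy coincides with the paper's: test second-order optimality along the tangential fields $\xi_u(x)=\proj_x[u]$ with $u$ ranging over an orthonormal triple in the top eigenspace, sum the three resulting inequalities, and show the summed integrand is pointwise non-negative. The gap is the displayed formula for the $\phi'$-part of $\sE''_u$. With $Au=\lambda u$, $s_x=\langle u,x\rangle$, $s_y=\langle u,y\rangle$, $r=\langle Ax,y\rangle$, one has
\[
\langle A\xi_u(x),\xi_u(y)\rangle=\lambda(1-s_x^2-s_y^2)+rs_xs_y,\qquad \|\xi_u(x)\|_2^2=1-s_x^2,
\]
so the $\phi'$-integrand in the second variation at a critical point is
\[
\tfrac12\,\phi'(r)\bigl[\,2\lambda(1-s_x^2-s_y^2)-r(2-s_x^2-s_y^2-2s_xs_y)\,\bigr],
\]
which does \emph{not} factor as $(\lambda-r)(1-s_x^2-s_xs_y-s_y^2)$ (test $A=I_d$, $u=x=e_1$, $y=(e_1+e_2)/\sqrt2$). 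After summing over the three directions the correct $\phi'$-integrand becomes
\[
\phi'(r)\Bigl[(\lambda-r)\bigl(3-\|P_3x\|_2^2-\|P_3y\|_2^2\bigr)\;-\;\tfrac{r}{2}\,\|P_3(x-y)\|_2^2\Bigr],
\]
and the second term is negative whenever $r>0$, so your ``each of the three subtracted quantities is $\le 1$'' argument does not apply.

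The missing step is the spectral inequality
\[
2\bigl(\lambda-\langle Ax,y\rangle\bigr)\;\ge\;\lambda\,\|P_3(x-y)\|_2^2,
\]
obtained by writing $2(\lambda-\langle Ax,y\rangle)=\sum_i\bigl(\lambda(x_i^2+y_i^2)-2\lambda_i x_iy_i\bigr)$ in an $A$-eigenbasis and using $|\lambda_i|\le\lambda$ termwise for $i\ge 4$; this is where the hypothesis $|\lambda_d|\le\lambda$ does real work, not merely to guarantee $r\le\lambda$. Combined with $3-\|P_3x\|_2^2-\|P_3y\|_2^2\ge 1$ and $r\le\lambda$, it yields pointwise non-negativity and the correct equality characterization: $x_i=y_i$ on the $\lambda$-eigenspace, $x_i=-y_i$ on any $(-\lambda)$-eigenspace, and $x_i=y_i=0$ on the rest. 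Your dichotomy ``$r=\lambda$ or $x=y\in V_3$'' and the subsequent support analysis are artifacts of the incorrect factorization and have to be redone from these equality conditions.
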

In the rest of \Cref{sec:crit}, we prove \Cref{thm:global_max}. It relies on the first and second variation formulas for $\sE_\phi [\mu]$, which are of independent interest in the study of transformer models. 
We also note that if $\lambda_d$ is significantly smaller than $-\lambda$, then a global maximizer of $\mathsf{E}_{\J}[\cdot]$ need not be a Dirac measure, as shown by a counterexample in Remark 3.5 of \cite{burger2025analysis}.

\subsubsection{First and Second Variation Formulas for $\sE_\phi$}

Let $\mathcal{P} (\S)$ denote the space of probability measures on $\S$ and let $\{\mbV_t(x)\,,\ t \geq 0,\ x \in \S\}$ be a family of vector fields on $\S$, continuously differentiable in $(t, x)$ and such that $\mbV_t(x) \in T_x\S$ for all $(t,x) \in \R_{\ge 0} \times \S$. Note that $\{\partial_t \mbV_t(x)\,,\ t \geq 0,\ x \in \S\}$ is also a family of continuous vector fields in the tangent bundle of $\S$. Let $\{\mu_t\}_{t \geq 0}$ be a curve in $\mathcal{P} (\S)$ starting from $\mu_0 \in \mathcal{P}(\S)$ and evolving according the continuity equation driven by $\{\mbV_t\}_t$:
\begin{align}\label{eqn:measure continuity}
    \partial_t \mu_t + \dive\left(\mu_t \mbV_t \right) = 0, \quad t \geq 0.
\end{align}
The PDE \eqref{eqn:measure continuity} is understood in the distribution sense: for any smooth function $h(x)$ on $\S$, we have
    \begin{equation*}
        \frac{\de}{\de t} \int_{\S} h(x) \diff \mu_t(x) = \int_{\S} \langle \mathring{\nabla}_x h(x) , \mbV_t(x) \rangle \diff \mu_t(x)\,.
    \end{equation*}
 Here and throughout this paper, $\grade_x h(x)$ denotes the Riemannian gradient of $h$ on $\S$. Note that viewing $\S$ as an embedded manifold in $\R^d$ considerably simplifies the Riemannian calculus on the sphere. Indeed, if $H$ is a smooth extension of $h$ to a neighborhood of $\S$ in $\R^d$, we have that $\grade_x h(x)= \proj_x \nabla_x H(x)$. In particular, since $\cX_t(x) \in T_x\S$, we have $\langle \mathring{\nabla}_x h(x) , \mbV_t(x) \rangle = \langle \nabla_x H(x) , \mbV_t(x) \rangle$. Also, we use the notation $\nabla_{\mbV_t(x)} H(x)$ to denote the covariant derivative (directional derivative) along $\mbV_t(x)$.

\begin{lemma}[First Variation Formula for $\mathsf{E}_{\J}$]\label{lem:first derivative of J}
    \begin{align*}
        \frac{\de}{\de t} \mathsf{E}_{\J}[\mu_t] = \iint  \J'(\langle Ax, y \rangle) \langle  Ay, \mbV_t(x) \rangle \diff \mu_t(x) \diff \mu_t(y) .
    \end{align*}
\end{lemma}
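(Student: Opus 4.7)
The plan is to differentiate $\mathsf{E}_\phi[\mu_t]=\frac{1}{2}\iint \phi(\langle Ax, y\rangle)\diff\mu_t(x)\diff\mu_t(y)$ using the weak form of the continuity equation \eqref{eqn:measure continuity}, exploiting the fact that the kernel $K(x,y):=\phi(\langle Ax, y\rangle)$ is symmetric in $x,y$ because $A^\top = A$. Concretely, the product measure $\mu_t\otimes\mu_t$ on $\S\times\S$ satisfies the continuity equation with the time-dependent vector field $(x,y)\mapsto(\mbV_t(x),\mbV_t(y))$, so testing against $K$ (equivalently, applying \eqref{eqn:measure continuity} to $G_t(x):=\int K(x,y)\diff\mu_t(y)$ and separately computing $\partial_t G_t$ by a second application of the continuity equation in the $y$ variable) gives
\begin{align*}
\frac{\de}{\de t}\mathsf{E}_\phi[\mu_t] = \frac{1}{2}\iint\bigl(\langle \grade_x K(x,y),\mbV_t(x)\rangle + \langle \grade_y K(x,y),\mbV_t(y)\rangle\bigr)\diff\mu_t(x)\diff\mu_t(y).
\end{align*}
Relabeling $x\leftrightarrow y$ in the second summand and invoking $K(x,y)=K(y,x)$ shows it equals the first, which cancels the prefactor $\frac{1}{2}$.

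It then remains to compute $\grade_x K(x,y)$ explicitly. Extending $K(\cdot,y)$ smoothly to a neighborhood of $\S$ in $\R^d$ and using the symmetry of $A$, the Euclidean gradient is $\nabla_x \phi(\langle Ax, y\rangle) = \phi'(\langle Ax, y\rangle)\,A^\top y = \phi'(\langle Ax, y\rangle)\,Ay$. Projecting onto $T_x\S$ via $\grade_x = \proj_x\nabla_x$ yields $\grade_x K(x,y) = \phi'(\langle Ax, y\rangle)\,\proj_x[Ay]$. Since $\mbV_t(x)\in T_x\S$ and $\proj_x$ is the orthogonal projection onto $T_x\S$, the projection can be dropped when pairing with $\mbV_t(x)$, producing the stated formula.

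I do not anticipate any serious obstacle. The only routine technicality is justifying the interchange of $\partial_t$ with the double integral (and of $\grade_x$ with the integral defining $G_t$), which follows from dominated convergence since $\phi,\phi'$ are continuous on the compact range $[-\|A\|_{\op},\|A\|_{\op}]$ and $\mbV_t$ is jointly continuous on $\R_{\ge 0}\times\S$. Morally, the proof is just the Wasserstein chain rule combined with symmetry of $K$, and the symmetry is what makes the $\tfrac{1}{2}$ in the definition of $\mathsf{E}_\phi$ vanish.
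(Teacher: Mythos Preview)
The proposal is correct and follows essentially the same approach as the paper: apply the continuity equation to the double integral, use the symmetry of $A$ (hence of the kernel $K(x,y)=\phi(\langle Ax,y\rangle)$) to reduce to a single gradient term, and compute $\nabla_x\phi(\langle Ax,y\rangle)=\phi'(\langle Ax,y\rangle)Ay$ while noting that the tangency of $\mbV_t(x)$ makes the projection onto $T_x\S$ superfluous. Your write-up is simply more explicit about how the factor $\tfrac12$ disappears.
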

\begin{proof}
    Because $A$ is a symmetric matrix, using \eqref{eqn:measure continuity}, we have that
        \begin{align*}
            \frac{\de}{\de t} \mathsf{E}_{\J}[\mu_t] &= \iint \langle \nabla_x \left[ \J(\langle Ax, y \rangle) \right], \mbV_t(x) \rangle \diff \mu_t(x) \diff \mu_t(y)\,,
        \end{align*}
        where we used the fact that $\mbV_t(x)\in T_x\S$. The proof  follows readily by computing the gradient above.
\end{proof}

\begin{lemma}[Second Variation Formula for $\mathsf{E}_{\J}{[}\cdot{]}$]\label{lem:second derivative of J}
    \begin{align}\label{eqn:second derivative of J}
            \frac{\de ^2}{\de t ^2} \mathsf{E}_{\J}[\mu_t]  &= \frac12\iint\J '' \left(\langle Ax, y \rangle \right) \left\| \langle Ay ,\mbV_t(x)\rangle + \langle  Ax, \mbV_t(y) \rangle \right\|_2 ^2 \diff \mu_t(x) \diff \mu_t(y) \\
            &+\iint  \J '(\langle Ax, y \rangle)  \langle A \mbV_t(x), \mbV_t(y) \rangle 
 \diff \mu_t(x) \diff \mu_t(y) \\
    &- \frac12\iint \J '(\langle Ax, y \rangle)  \langle Ax, y \rangle ( \|\mbV_t(x)\|_2 ^2 + \|\mbV_t(y)\|_2 ^2 ) \diff \mu_t(x) \diff \mu_t(y) \\
    &+ \iint \J '(\langle Ax, y \rangle) \left[ 
    \left\langle A y, \partial_t \mbV_t(x) + \mathring{\nabla} _{\mbV_t(x)} \mbV_t(x) \right \rangle \right] \diff \mu_t(x) \diff \mu_t(y) .
    \end{align}

\end{lemma}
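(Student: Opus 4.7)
The plan is to differentiate the first variation identity of \Cref{lem:first derivative of J} once more in $t$. Setting $F(t,x,y) \coloneq \J'(\langle Ax, y\rangle)\langle Ay, \mbV_t(x)\rangle$, I view the double integral $\iint F\,d\mu_t(x)\,d\mu_t(y)$ as an integral against the product measure $\mu_t\otimes \mu_t$ on $\S\times\S$, which satisfies a continuity equation with velocity field $(x,y)\mapsto(\mbV_t(x),\mbV_t(y))$. Its time derivative therefore splits into three pieces: the explicit $\partial_t F$ contribution, the $x$-gradient $\mathring\nabla_x F$ paired with $\mbV_t(x)$, and the $y$-gradient $\mathring\nabla_y F$ paired with $\mbV_t(y)$. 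I will compute each separately and then symmetrize in $x\leftrightarrow y$ to match the claimed expression.

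\textbf{Routine pieces.} The $\partial_t F$ term immediately produces $\iint \J'(\langle Ax,y\rangle)\langle Ay,\partial_t\mbV_t(x)\rangle\, d\mu_t\,d\mu_t$, which accounts for the $\partial_t\mbV_t$ half of the fourth line of \eqref{eqn:second derivative of J}. The $y$-gradient is routine because $\mbV_t(x)$ does not depend on $y$: the ambient-space chain rule, together with symmetry of $A$ and the fact that $\mbV_t(y)$ is tangent (so only the tangential part of $\nabla_y F$ survives pairing), yields $\iint \J''(\langle Ax,y\rangle)\langle Ax,\mbV_t(y)\rangle\langle Ay,\mbV_t(x)\rangle + \J'(\langle Ax,y\rangle)\langle A\mbV_t(x),\mbV_t(y)\rangle\, d\mu_t\,d\mu_t$.

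\textbf{Main step: the $x$-gradient and symmetrization.} The delicate computation is $\mathring\nabla_x F$, because it requires differentiating the vector field $\mbV_t$ itself along $\S$. Using the embedding $\S\hookrightarrow\R^d$, for any tangent vector field $V$ the identity $\langle V(x),x\rangle=0$ differentiated along $V$ produces the standard Gauss relation $D_V V=\mathring\nabla_V V-\|V\|_2^2\,x$, where $D$ is the Euclidean directional derivative. Applying this to $\langle Ay,\mbV_t(\cdot)\rangle$ and using symmetry of $A$ gives, after pairing with $\mbV_t(x)$, the covariant term $\J'(\langle Ax,y\rangle)\langle Ay,\mathring\nabla_{\mbV_t(x)}\mbV_t(x)\rangle$ \emph{together with} a curvature correction $-\J'(\langle Ax,y\rangle)\|\mbV_t(x)\|_2^2\langle Ax,y\rangle$; the $\J''$ chain-rule piece contributes the extra $\J''(\langle Ax,y\rangle)\langle Ay,\mbV_t(x)\rangle^2$. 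Finally, since $\mu_t\otimes\mu_t$ and $A$ are both symmetric under $x\leftrightarrow y$, the two $\J''$ contributions combine via the identity $\tfrac12(a^2+b^2)+ab=\tfrac12(a+b)^2$ with $a=\langle Ay,\mbV_t(x)\rangle$ and $b=\langle Ax,\mbV_t(y)\rangle$ to give the squared-sum in the first line of \eqref{eqn:second derivative of J}; the curvature term symmetrizes into $-\tfrac12\J'(\langle Ax,y\rangle)\langle Ax,y\rangle(\|\mbV_t(x)\|_2^2+\|\mbV_t(y)\|_2^2)$ (third line); and the remaining $\J'$ terms assemble into the $\langle A\mbV_t(x),\mbV_t(y)\rangle$ piece (second line) and the combined $\partial_t\mbV_t+\mathring\nabla_{\mbV_t}\mbV_t$ piece (fourth line), completing the formula. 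The main obstacle is precisely the Gauss relation step: one must keep track of the normal component of $D_V V$ on the sphere, which is exactly what generates the otherwise surprising curvature term $-\tfrac12\langle Ax,y\rangle(\|\mbV_t(x)\|_2^2+\|\mbV_t(y)\|_2^2)$ and is easy to miss if one confuses the Euclidean and Riemannian connections on $\S$.
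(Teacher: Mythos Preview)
Your proposal is correct and follows essentially the same approach as the paper: differentiate the first variation once more, split into the $\partial_t$, $x$-gradient, and $y$-gradient contributions, and symmetrize. The only cosmetic difference is that where the paper computes the $x$-gradient term via $\nabla_x[\proj_x Ay]=-(Ay)\otimes x-\langle Ay,x\rangle I_d$ applied to $\langle Ay,\mbV_t(x)\rangle=\langle\proj_x Ay,\mbV_t(x)\rangle$, you instead invoke the Gauss relation $D_V V=\mathring\nabla_V V-\|V\|_2^2\,x$; these are equivalent bookkeeping for the same second-fundamental-form correction, and both yield the curvature term $-\langle Ax,y\rangle\|\mbV_t(x)\|_2^2$.
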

\begin{proof}
   Taking the time derivative of the first variation formula, we get
        \begin{align*}
            \begin{split}
                \frac{\de ^2}{\de t ^2} \mathsf{E}_{\J}[\mu_t] 
    &=\underbrace{\iint \left \langle {\nabla} _x \left[ \J '(\langle Ax, y \rangle) \langle  Ay, \mbV_t(x) \rangle \right], \mbV_t(x) \right\rangle \diff \mu_t(x) \diff \mu_t(y)}_{\vartriangleleft} \\
    &+\underbrace{\iint \left \langle {\nabla} _y \left[ \J '(\langle Ax, y \rangle) \langle  Ay, \mbV_t(x) \rangle \right], \mbV_t(y) \right \rangle \diff \mu_t(x) \diff \mu_t(y)}_{\vartriangleright}\\
    &+\iint \J '(\langle Ax, y \rangle) \langle  Ay, \partial_t \mbV_t(x) \rangle \diff \mu_t(x) \diff \mu_t(y).
            \end{split}
        \end{align*}
The rest of the proof follows by direct computations, and we only highlight the key points. 

For $\vartriangleleft$, as $\mbV_t(x) \in T_x \S$, we get that 
    \begin{align*}
        \begin{split}
            &\langle \mathring{\nabla} _x \langle  Ay, \mbV_t(x) \rangle , \mbV_t(x) \rangle = \langle \mathring{\nabla} _{\mbV_t(x)} [\proj_x Ay], \mbV_t(x) \rangle + \langle \proj_x Ay, \mathring{\nabla} _{\mbV_t(x)} \mbV_t(x) \rangle
            \\  &= \langle \nabla_{\mbV_t(x)} [\proj_x Ay], \mbV_t(x) \rangle + \langle  Ay, \mathring{\nabla} _{\mbV_t(x)} \mbV_t(x) \rangle
            \\  &= -\langle Ay ,x \rangle \|\mbV_t(x)\|_2 ^2 + \langle  Ay, \mathring{\nabla} _{\mbV_t(x)} \mbV_t(x) \rangle,
        \end{split}
    \end{align*}
where in the last equality, we used the fact that 
$$
\nabla_x [\proj_x Ay] = -\nabla_x [\langle Ay,x\rangle x]= - (Ay) \otimes x -\langle Ay,x\rangle I_d,
$$
and $\langle x, \mbV_t(x)\rangle = 0$. 

Similarly, for $\vartriangleright$, as $\mbV_t(y) \in T_y \S$, we get that
    \begin{align*}
        \langle \mathring{\nabla} _y \langle  Ay, \mbV_t(x) \rangle , \mbV_t(y) \rangle = \langle \nabla  _y \langle  Ay, \mbV_t(x) \rangle , \mbV_t(y) \rangle = \langle A \mbV_t(x) , \mbV_t(y) \rangle.
    \end{align*}
The final form of the second variation formula can be obtained from the symmetric role of $x$ and $y$.
\end{proof}

Equipped with the first and second variation formulas, we are now in a position to prove~\Cref{thm:global_max}.

\subsubsection{Proof of \Cref{thm:global_max}}
    Let $\mu_0$ be a critical point of $\sE_{\J}$. We show that unless $\mu_0$ is a point mass, there exists an escape direction, that is, a velocity field $\cX_0$ such that if $\mu_t$ evolves according to~\eqref{eqn:measure continuity}, then the value of $\sE_{\J}$ increases.  Since $\mu_0$ is a stationary point, it is sufficient to check that 
        \begin{align*}
            \frac{\de ^2}{\de t ^2} \bigg|_{t=0} \mathsf{E}_{\J}[\mu_t] > 0.
        \end{align*}

Since $\mu_0$ is a critical point, it follows from the first variation formula that
    \begin{align}\label{eqn:critical point condition 2}
             \iint  \J'(\langle Ax, y \rangle) \langle  Ay, \mbV(x) \rangle \diff \mu_0(x) \diff \mu_0(y) = 0, \quad \forall \ \mbV\in C(T\S).
\end{align}
At such critical points, taking $\mbV(x)=\grade_{\mbW(x)}\mbW(x)$ in~\eqref{eqn:critical point condition 2}, the second variation formula simplifies to
        \begin{align}\label{eqn:second variation at critical point}
        \begin{split}
            &\frac{\de ^2}{\de t ^2} \bigg|_{t=0} \mathsf{E}_{\J}[\mu_t]  = \iint \frac{1}{2}\J '' \left(\langle Ax, y \rangle \right) \left\| \langle Ay ,\mbW(x)\rangle + \langle  Ax, \mbW(y) \rangle \right\|_2 ^2 \diff \mu_0(x) \diff \mu_0(y) \\
    &+ \iint \frac{1}{2} \J '(\langle Ax, y \rangle) \left[ 
    2 \langle A \mbW(x), \mbW(y) \rangle - \langle Ax, y \rangle ( \|\mbW(x)\|_2 ^2 + \|\mbW(y)\|_2 ^2 )
    \right] \diff \mu_0(x) \diff \mu_0(y) .
        \end{split}
    \end{align}
Recall that we assumed $\J'' \geq 0$ so we focus on establishing the positivity of the second line in \eqref{eqn:second variation at critical point}. To that end, following~\cite{markdahl2017almost, criscitiello2024synchronization}, define $\mbW(x) = \proj_{x}(w) = w- \langle w , x \rangle x$ where $w \in \S$.  The second line in \eqref{eqn:second variation at critical point} becomes
    \begin{align}\label{eqn:second variation along direction w}
        \begin{split}
            &\iint \frac{1}{2} \J '(\langle Ax, y \rangle) \left[ 
    2 \langle A \mbW(x), \mbW(y) \rangle - \langle Ax, y \rangle ( \|\mbW(x)\|_2 ^2 + \|\mbW(y)\|_2 ^2 )
    \right] \diff \mu_0(x) \diff \mu_0(y)  
    \\  &= \iint \frac{1}{2} \J '(\langle Ax, y \rangle) \big[ 
    2 \left( \langle Aw,w \rangle  - \langle w,x\rangle \langle  x , Aw \rangle- \langle w,y\rangle \langle  y  , Aw \rangle + \langle w , x \rangle \langle w ,y \rangle \langle Ax,y \rangle \right) 
    \\  & \quad - \langle Ax, y \rangle \left( 2- \langle w, x \rangle^2 - \langle w, y \rangle^2 \right)
    \big] \diff \mu_0(x) \diff \mu_0(y) .
        \end{split} 
    \end{align}
Pick $\{e_i\}_{i=1} ^d$ as an orthonormal basis of $\R^d$ such that $Ae_i = \lambda_i e_i$ for  $i =1,\ldots, d$. We also write $x,y$ in the coordinates of $\{e_i\}_{i=1} ^d$, that is,  $x = \sum_{i=1} ^d x_i e_i$ and $y = \sum_{i=1} ^d y_i e_i$. Choosing $w =e_i$ in  \eqref{eqn:second variation along direction w} yields
    \begin{align}\label{eqn: second variation w simplified}
        \begin{split}
            &\iint \frac{1}{2} \J '(\langle Ax, y \rangle) \left[ 
      2\lambda_i(1-x_i ^2 - y_i ^2) 
     - \langle Ax, y \rangle \left( 2- x_i ^2 - y_i^2 - 2x_i y_i\right)
    \right] \diff \mu_0(x) \diff \mu_0(y) .
        \end{split}
    \end{align}
We aim to prove the following inequality:
    \begin{align}\label{eqn: second variation w sum inequality}
        \begin{split}
            &\sum_{i=1} ^3 \iint  \J '(\langle Ax, y \rangle) \left[ 
      2\lambda_i(1-x_i ^2 - y_i ^2) 
     - \langle Ax, y \rangle \left( 2- x_i ^2 - y_i^2 - 2x_i y_i\right)
    \right] \diff \mu_0(x) \diff \mu_0(y) \geq 0,
        \end{split}
    \end{align}
with equality if and only if $\mu_0 = \delta_u$ for some point $u \in \S$ satisfying $Au = \lambda u$. This inequality directly yields the desired conclusion, since \eqref{eqn: second variation w sum inequality} implies that there is an $i \in \{1,2,3\}$ such that \eqref{eqn:second variation along direction w} with $w=e_i$ is strictly positive unless $\mu_0 = \delta_u$ for some point $u \in \S$ with $Au = \lambda u$. 

To prove \eqref{eqn: second variation w sum inequality}, we build up the  pointwise inequality: for any $x,y \in \S$,
    \begin{align}\label{eqn: partial sum positive}
        \sum_{i=1} ^3 2\lambda_i(1-x_i ^2 - y_i ^2) 
     - \langle Ax, y \rangle \left( 2- x_i ^2 - y_i^2 - 2x_i y_i\right) \geq 0,
    \end{align}
with equality if and only if the following conditions hold: for all $j$ with $|\lambda_j| < \lambda$, we have $x_j=y_j=0$; for $j$ with $\lambda_j = \lambda$, we have $x_j=y_j$; for $j$ with $\lambda_j = -\lambda$, we have $x_j=-y_j$. 

We now use \eqref{eqn: partial sum positive} to prove \eqref{eqn: second variation w sum inequality}. Suppose there exists $u \in \supp (\mu_0)$ such that $Au \neq \lambda u$. 
Writing $u = \sum_{i=1} ^d u_i e_i$, this implies that there exists some $i$ such that $\lambda_i < \lambda$ and $u_i \neq 0$. 
Choose a small neighborhood $\mathcal{N}\subset\S$ around $u$, such that $|\langle x, e_i \rangle - u_i | < |u_i|/2$ for any $x \in \mathcal{N}$. Then, for any $x,y \in \mathcal{N}$, the inequality \eqref{eqn: partial sum positive} is strictly positive because $x,y$ don't satisfy the equality condition. Therefore, the integrand in \eqref{eqn: second variation w sum inequality} is strictly positive on $\mathcal{N} \times \mathcal{N}$ (since $\phi' >0$), and nonnegative elsewhere by \eqref{eqn: partial sum positive}. This implies that \eqref{eqn: second variation w sum inequality} is strictly positive. 

In the remaining case, suppose $Au = \lambda u$ for every $u \in \supp (\mu_0)$. Then, the equality conditions in \eqref{eqn: partial sum positive} imply that $\mu_0$ must be a point mass. 
Otherwise, 
there exists $x \neq y$ in the support of $\mu_0$, and \eqref{eqn: partial sum positive} becomes strictly positive, yielding a strictly positive value in \eqref{eqn: second variation w sum inequality}.
Hence, \eqref{eqn: second variation w sum inequality} is strictly positive unless $\mu_0 = \delta_u$ for some $u \in \S $ satisfying $Au = \lambda u$.

The rest of this proof is devoted to the proof of \eqref{eqn: partial sum positive} and its equality conditions. Since $\lambda_1 = \lambda_2 = \lambda_3 =:\lambda >0$, the left hand side of \eqref{eqn: partial sum positive} becomes
    \begin{align}\label{eqn: partial sum positive 2}
        \begin{split}
            &2\lambda\left(3- \sum_{i=1} ^3 (x_i^2+y_i^2) \right) - \sum_{i=1} ^3 
      \langle Ax, y \rangle \left( 2- x_i ^2 - y_i^2 - 2x_i y_i  \right) 
     \\ &=2\left(\lambda - \langle Ax,y \rangle \right)\left(3- \sum_{i=1} ^3 (x_i^2+y_i^2) \right) + \langle Ax, y \rangle \sum_{i=1} ^3( 2x_i y_i -x_i ^2 - y_i ^2).
        \end{split}
    \end{align}
We claim that
    \begin{align}\label{eqn:partial sum claim}
        2\left(\lambda - \langle Ax,y \rangle \right) \geq \lambda \sum_{i=1} ^3 (x_i ^2 + y_i ^2 - 2x_i y_i).
    \end{align}
Indeed $\|x\|_2 ^2=\|y\|_2 ^2=1$ and $\lambda_1 = \lambda_2 = \lambda_3 =\lambda$ so that
\begin{align*}
\begin{split}
     2(\lambda - \langle Ax,y \rangle )&= \lambda\|x\|^2+\lambda \|y\|^2 - 2\sum_{i=1}^d \lambda_i x_iy_i\nonumber\\
     &= \lambda \sum_{i=1} ^3 (x_i ^2 + y_i ^2 - 2x_i y_i) +  \lambda \sum_{i=4} ^d (x_i ^2 + y_i ^2) -  2 \sum_{i=4} ^d \lambda_i x_i y_i\,.
\end{split}
\end{align*}
Hence, \eqref{eqn:partial sum claim} is equivalent to
    \begin{align}\label{eqn:partial sum claim 2}
       \sum_{i=4} ^d   \lambda(x_i ^2 + y_i ^2) \geq 2 \sum_{i=4} ^d \lambda_i x_i y_i,
    \end{align}
which holds since $\lambda \geq |\lambda_i|$ for all $i$. Moreover, note that the equality holds if and only if: $x_i = y_i$ for all $i \geq 4$ with $\lambda_i = \lambda$; $x_i = -y_i$ for all $i \geq 4$ with $\lambda_i = -\lambda$; $x_i=y_i=0$ for all $i \geq 4$ with $|\lambda_i| < \lambda$. Hence, by \eqref{eqn: partial sum positive 2} and \eqref{eqn:partial sum claim}, we have that
    \begin{align}\label{eqn: partial sum positive 3}
         \begin{split}
             &\sum_{i=1} ^3 2\lambda_i(1-x_i ^2 - y_i ^2) 
     - \langle Ax, y \rangle \left( 2- x_i ^2 - y_i^2 - 2x_i y_i \right) 
     \\ &\geq \left( \sum_{i=1} ^3 x_i ^2 + y_i ^2 - 2x_i y_i \right)\left(3\lambda- \lambda\sum_{i=1} ^3 (x_i^2+y_i^2) -\langle Ax, y \rangle \right).
         \end{split}
    \end{align}
Because $|\langle Ax, y \rangle| \leq \lambda \|x\|_2\|y\|_2 = \lambda$, and $\sum_{i=1} ^3 (x_i^2+y_i^2) \leq \sum_{i=1} ^d (x_i^2+y_i^2)=2$, we see that the right hand side of \eqref{eqn: partial sum positive 3} is nonnegative, and it can only be $0$ when $x_1=y_1$, $x_2=y_2$, $x_3=y_3$. Hence, we complete the proof for \eqref{eqn: partial sum positive}. 
By examining the equality conditions in \eqref{eqn: partial sum positive 3} and \eqref{eqn:partial sum claim 2}, we conclude that equality in \eqref{eqn: partial sum positive} holds if and only if the following is satisfied: $x_i = y_i = 0$ for all $i$ with $|\lambda_i| < \lambda$; $x_i = y_i$ for all $i$ with $\lambda_i = \lambda$; and $x_i = -y_i$ for all $i$ with $\lambda_i = -\lambda$.

\subsection{Long Time Behavior}\label{sec:long time synchronization}

 In this section, we consider a transformer model with vector field~\eqref{eq:general_vector_field} where $V_t=I_d$ and $A_t=A$ for all $t\ge 0$. Note that in absence of the preconditioner $V_t= A$, these dynamics may not be a Wasserstein gradient flow. Moreover, we only consider  measures that admit a density with respect to the uniform measure on the sphere. To reflect this, it is convenient to consider the evolution of a density rather than evolution of the measure \eqref{eq:general_mfad}.

 Let $\{\mu_t(x)\}_{t \geq 0}$ be a curve of probability measures on $\S$ satisfying the continuity equation 
    \begin{align}\label{eqn:modified gradient flow}
        \partial_t \mu_t  + \dive \left(\mu_t \deV[\mu_t] \right) = 0,
    \end{align}
where the vector field $\deV[\cdot]$ is defined for any positive measure $\nu$ on $\S$ by
    \begin{align}\label{eqn:modified wss gradient}
        \deV[\nu](x) \coloneq  \int_{\S} \proj_{x} [y] \phi'\left(\langle Ax, y\rangle \right) \diff \nu(y), \quad x \in \S,
    \end{align}
where $A$ is a $d\times d$ real symmetric matrix. 

Here and in the rest of this section, $\phi'$ is a smooth positive function on the interval $[-\|A\|_2,\|A\|_2]$. In particular, we do not require monotonicity for $\phi'$ as in \Cref{sec:crit}.

We often abuse notation and write $\deV_t=\deV[f_t]=\deV[\mu_t]$, and more generally, we liberally switch between $f_t$ and  $\mu_t$ if $\mu_t$ has density $f_t$. Define the $C^1$-norm of a continuously differentiable function $h$ on an interval $S \subseteq \R$ as $\|h\|_{C^1(S)} \coloneq \|h\|_{L^{\infty}(S)} + \|h'\|_{L^{\infty}(S)}$. The following theorem shows that, if $\phi'$ is close to the constant function $1$ in $C^1$-norm on $S \coloneq [-\|A\|_2,\|A\|_2]$, then the flow \eqref{eqn:modified gradient flow} converges to a delta mass exponentially fast. To that end, define
    \begin{align}\label{eq:epa}
        \epa \coloneq (\|A\|_2 +2) \cdot \|\phi'-1\|_{C^1(S)}.
    \end{align}
Note that when $\epa=0$ and $A=I_d$, that is when $\phi'\equiv 1$, one recovers the Kuramoto dynamics on the sphere.
Recall that $R_0$ measures the asymmetry of $f_0$ and is defined in \eqref{eqn:R0 def} and also in \eqref{def:Rt Ut}.
\begin{theorem}\label{thm:main thm}
    Let $f_0 \in L^2(\S)$ be a probability density on $\S$ and let $\{\mu_t(x)\}_{t \geq 0}$ denote the flow of probability measures where $\mu_t$ has density $f_t$ evolving according to \eqref{eqn:modified gradient flow}. There exist universal constants $c_0,c_u>0$, and two computable constants $C_0,T_0$ depending on $R_0, \|f_0\|_{L^2(\S)}$ such that if $\epa \leq c_u R_0 ^6$, then there exists an $x_{\infty} \in \S$ for which
        \begin{align*}
            W_2 \left(\mu_t , \delta_{x_{\infty}} \right) \leq C_0 e^{-c_0 t} , \quad \forall t \geq T_0.
        \end{align*}
\end{theorem}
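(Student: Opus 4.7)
The strategy is a multi-stage bootstrap coupling quantitative control of the first moment $m_t \coloneq \int_{\S} x \diff\mu_t(x)$, propagation of the $L^2$ regularity of $f_t$, and ultimately a localized contraction estimate. The guiding heuristic is that when $\epa$ is small, the vector field $\deV_t$ in \eqref{eqn:modified wss gradient} is a small perturbation of the pure mean-alignment field $x \mapsto \proj_x[m_t]$, under which densities with $R_t \coloneq \|m_t\| > 0$ concentrate exponentially toward a Dirac mass at $\lim_{t \to \infty} m_t/R_t$.

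The plan is divided into four steps. First, I would establish propagation of the $L^2$ density: computing $\frac{\de}{\de t}\|f_t\|_{L^2(\S)}^2 = -\int_{\S} f_t^2 \,\dive \deV_t\, \diff\sigma$ with $\sigma$ the uniform measure, and using the explicit pointwise bound $\|\dive \deV_t\|_{L^\infty(\S)} \lesssim (1+\|A\|_2)\|\phi'\|_{C^1(S)}$, Gronwall gives an exponential-in-time bound on $\|f_t\|_{L^2(\S)}$ depending only on $\|f_0\|_{L^2(\S)}$, $\|A\|_2$, and $\|\phi'\|_{C^1(S)}$. Second, I would track $R_t$: letting $u_t \coloneq m_t/R_t$ and $\theta_t(x)$ denote the angle between $x$ and $u_t$, a direct computation from \eqref{eqn:modified gradient flow} combined with the splitting $\phi' = 1 + (\phi'-1)$ yields
\begin{align*}
    \frac{\de}{\de t} R_t^2 \geq 2 R_t^2 \int_{\S} \sin^2 \theta_t(x) \diff\mu_t(x) - C \epa ,
\end{align*}
so the assumption $\epa \leq c_u R_0^6$ together with a continuity argument propagates $R_t \geq R_0/2$ for all $t \geq 0$.

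Third, I would upgrade the lower bound into a quantitative growth estimate. Combining the $L^2$ control from Step~1 with a spherical Poincaré-type inequality shows that, as long as $R_t \leq 1-\delta$, the dispersion $\int_{\S} \sin^2\theta_t \diff\mu_t$ is bounded below by a strictly positive function of $\|f_t\|_{L^2(\S)}$ and $R_t$; integrating the resulting ODE for $R_t^2$ yields $R_t \geq 1 - \delta_0$ by some time $T_0$ computable from $R_0$ and $\|f_0\|_{L^2(\S)}$. Fourth, once $R_t \geq 1-\delta_0$, Markov's inequality gives $\mu_t(\S \setminus S_\alpha^+(u_t)) \leq (1-R_t)/(1-\cos\alpha)$, so all but an exponentially small fraction of mass sits inside a spherical cap of angle $\alpha$ with $(1+\|A\|_2)\tan\alpha \ll 1$. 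A short bootstrap removes the residual tail (the dominant component of $\deV_t$ pushes boundary points back toward $u_t$), after which an adaptation of the proof of \Cref{thm:classical Lojasiewicz} to the preconditioner $A$ and the nonlinearity $\phi$---the smallness $\epa \leq c_u R_0^6$ ensures the resulting PL constant retains the desired order of magnitude---delivers the advertised exponential contraction $W_2(\mu_t, \delta_{x_\infty}) \leq C_0 e^{-c_0 t}$ with $c_0$ universal.

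The main obstacle is Step~3. The dynamics \eqref{eqn:modified gradient flow} is \emph{not} a Wasserstein gradient flow when the value weight and the attention weight disagree, so the energy monotonicity \eqref{eqn:monotone energy} is unavailable, and the second-order differential inequality sketched in Section~2 for \Cref{thm:simple global convergence} does not transfer directly. The dispersion lower bound therefore has to be extracted solely from the $L^2$ density control; it must degrade gracefully in $\|f_t\|_{L^2(\S)}$ while remaining robust against the $O(\epa)$ drift coming from $\phi' - 1$ and against the non-gradient correction induced by $V_t = I_d \neq A$. Tracking this interplay is what forces the high power $R_0^6$ in the threshold $\epa \leq c_u R_0^6$, and requires simultaneously ensuring that $u_t$ stabilizes quickly enough to fix the ultimate limit $x_\infty \in \S$.
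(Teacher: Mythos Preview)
Your Step~3 has a genuine gap. Any Poincar\'e-type lower bound on the dispersion $\int\sin^2\theta_t\,\diff\mu_t$ coming solely from $\|f_t\|_{L^2}$ must degenerate at least like $\|f_t\|_{L^2}^{-4/(d-1)}$: the extremizer is a bipolar density with mass concentrated on caps of angle $\eta$ around $\pm U_t$, which has dispersion $\sim\eta^2$ and $L^2$ norm $\sim\eta^{-(d-1)/2}$. Since Step~1 only gives $\|f_t\|_{L^2}\le\|f_0\|_{L^2}e^{C(d-1)t}$, your dispersion lower bound decays like $e^{-ct}$, so the right-hand side of your ODE $\frac{\de}{\de t}R_t^2\ge 2R_t^2\int\sin^2\theta_t\,\diff\mu_t-C\epa$ is integrable on $[0,\infty)$ even at $\epa=0$. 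Integrating yields only a bounded multiplicative gain in $R_t^2$, never $R_t\to 1-\delta_0$. The bipolar obstruction is exactly the configuration that dominates the long-time analysis, and under the flow it concentrates further (so $\|f_t\|_{L^2}$ grows), making your bound decay precisely when you need it most. Step~4 inherits a related problem: Markov from $R_t\ge 1-\delta_0$ gives $\mu_t(\S\setminus S_\alpha^+)$ small but nonzero, so \Cref{thm:classical Lojasiewicz} does not apply, and the ``short bootstrap'' you invoke to kill the tail is essentially the whole content of the paper's \Cref{thm:exponential small outside positive cap}.

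The paper never tries to push $R_t\to 1$. Instead it couples two quantities: the kinetic energy $I_t=\int\|\deV_t\|^2\,\diff\mu_t$ and the residual mass $\mu_t(\S\setminus S_\alpha^+(U_t))$. The cap PL inequality is upgraded to a PL-with-remainder, $\partial_t I_t\le -I_t+100\,\mu_t(\S\setminus S_\alpha^+(U_t))$, valid with no support restriction. The residual mass is then shown to decay exponentially by exploiting the sign of $\dive\deV_t$ on the \emph{negative} cap: there $-\dive\deV_t\approx -(d-1)R_t\cos\alpha<0$, so $\int_{S_\alpha^-(U_t)}f_t^2$ \emph{decreases} exponentially (opposite to the global $L^2$ norm), and the characteristic flow carries $\S\setminus S_\alpha^+$ back into $S_\alpha^-$ in bounded time. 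The $R_0^6$ threshold arises from a time-splitting argument alternating between intervals where $\partial_t R_t$ is large (at most $O(R_0^{-6})$ such intervals, since each increments $R_t^2$ by a definite amount) and intervals where $\partial_t R_t$ is small (so $U_t$ is nearly frozen and the negative-cap decay runs cleanly). This localized $L^2$ monotonicity on $S_\alpha^-$ is the missing idea in your outline.
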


\subsubsection{Main tools}

We adapt a technique developed in~ \cite{desvillettes2005trend} to obtain quantitative convergence rates for non-convex (and non-concave) gradient flows. For Kuramoto models, that is, when $d=2$ and $\phi' \equiv 1$, this technique was employed to derive a mean-field convergence result in \cite{ha2020emergence,morales2022trend}. Note, however, that this  technique heavily depends on the form of the vector field driving the probability flow as already noted in \cite{desvillettes2005trend}. In particular, the choice~\eqref{eqn:modified wss gradient}---which is not a gradient flow---together with the complexity of dynamics on high-dimensional spheres brings substantial technical difficulties compared to the Kuramoto model on the circle. These difficulties manifest themselves most prominently in the proofs of Theorems~\ref{thm:almost exponential decay I_t} and~\ref{thm:exponential small outside positive cap}.

For any $t \geq 0$, define
    \begin{align}\label{eqn:Kuramoto direction}
        M_t \coloneq \int_{\mathbb{S}^{d-1}} y  \diff \mu_t(y)  \quad \text{and} \quad V_t(x) \coloneq \proj_{x} [M_t] = \int_{\mathbb{S}^{d-1}} \proj_{x} [y] \diff \mu_t(y).
    \end{align}
    Interestingly, $M_t$ has a practical meaning: in encoder-only transformers such as BERT~\cite{devlin2019bert} the average token position $M_t$ corresponds to the vector embedding called \emph{mean-pooled embedding} of an input prompt that is often employed in further downstream tasks (classification, clustering, retrieval, etc.)
    
Moreover, define
\begin{align}\label{def:Rt Ut}
    R_t \coloneq \|M_t\|_2, \quad U_t \coloneq \frac{M_t}{R_t} \in \mathbb{S}^{d-1}\,,
\end{align}
and the following spherical caps with centers $\pm U \in \S$ for $\alpha \in (0, \pi /2)$,
\begin{align}
    S_{\alpha} ^+ (U) \coloneq \left\{ x \in \mathbb{S}^{d-1} \ | \ \left\langle x , U \right\rangle \geq \cos \alpha \right\}, \quad S_{\alpha} ^- (U) \coloneq \left\{ x \in \mathbb{S}^{d-1} \ | \ \left\langle x , -U \right\rangle \geq \cos \alpha \right\} . 
\end{align}
The spherical caps become smaller as $\alpha \to 0$. 
    For simplicity, we also write $S_{\alpha} ^+(t) = S_{\alpha} ^+ (U_t)$ and $S_{\alpha} ^-(t) = S_{\alpha} ^- (U_t)$.

Define $I_t$ as
\begin{align}\label{eqn:definition of I_t}
    I_t \coloneq \int \| \deV_t(y) \|_2 ^2 \diff \mu_t(y).
\end{align}
Direct calculation similar to \Cref{lem:second derivative of J}  gives
    \begin{equation}\label{eqn:derivative I_t 2}
        \partial_t I_t = \iint  Q_{\mu_t}(x,y) \diff \mu_t(x)\diff \mu_t(y)\,,
\end{equation}
where for any positive measure $\nu$ on $\S$,
\begin{align}\label{eq:defQ}
        \begin{split}
    Q_{\nu}(x,y)&=2\left[ \langle \deV[\nu](x),Ay \rangle + \langle \deV[\nu](y), Ax \rangle \right] \langle \deV[\nu](x),y \rangle \cdot \phi'' \left(\langle Ax, y\rangle \right)\\
        & +  \left[ 
            2\langle \deV[\nu](x), \deV[\nu](y) \rangle -   \langle x, y \rangle \left( \|\deV[\nu](x)\|_2 ^2 + 
            \|\deV[\nu](y)\|_2 ^2 \right)
            \right] \cdot \phi'\left(\langle Ax, y\rangle \right) .
                \end{split}
    \end{align}
We now state our two main tools.
\begin{theorem}\label{thm:almost exponential decay I_t}
If $\phi, A$ are such that $\epa \leq 1/100$, where $\epa$ is defined in~\eqref{eq:epa}, then for any $\alpha \in (0, \frac{\pi}{20})$, we have that
    \begin{align}\label{eqn:almost exponential decay I_t}
        \partial_t I_t \leq - I_t + 100\mu_t \left( \S \backslash S_{\alpha} ^+ (U_t) \right).
    \end{align}
\end{theorem}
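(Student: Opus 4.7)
The strategy is to perturb around the Kuramoto baseline obtained by formally setting $\phi'\equiv 1$ and $\phi''\equiv 0$, in which case $\deV_t$ reduces to $V_t$. Starting from $\partial_t I_t = \iint Q_{\mu_t}(x,y)\,d\mu_t(x)\,d\mu_t(y)$ from \eqref{eqn:derivative I_t 2}, I would write $\phi' = 1+\psi$ with $\|\psi\|_{C^1}\leq \epa/(\|A\|_2+2)$, and decompose $\deV_t = V_t + E_t$ with $E_t(x)=\int \proj_x[y]\,\psi(\langle Ax,y\rangle)\,d\mu_t(y)$. Substituting into \eqref{eq:defQ} and expanding yields $Q_{\mu_t} = Q^{(0)} + \mathcal{R}$, where
\begin{equation*}
Q^{(0)}(x,y) \coloneq 2\langle V_t(x),V_t(y)\rangle - \langle x,y\rangle\bigl(\|V_t(x)\|_2^2+\|V_t(y)\|_2^2\bigr),
\end{equation*}
and $\mathcal{R}$ gathers every term carrying a factor of $\psi$, $\psi'$, or $E_t$. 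Using the pointwise bounds $\|V_t\|_2\leq 1$, $\|\deV_t\|_2\leq 1+\epa$, $\|E_t\|_2\leq\|\psi\|_\infty$, together with $\|Ay\|_2\leq\|A\|_2$ and Cauchy--Schwarz after integration (in particular $\bigl\|\int V_t\,d\mu_t\bigr\|_2 \leq \sqrt{I_t^{(0)}}$ where $I_t^{(0)}\coloneq \int\|V_t\|_2^2\,d\mu_t$), I would show that each piece of $\iint \mathcal{R}\,d\mu_t^{\otimes 2}$ is bounded by $C\epa\,(I_t + \mu_t(\S\setminus S_\alpha^+(U_t)))$, safely absorbable into the target bound.

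The main term admits the convenient identity
\begin{equation*}
Q^{(0)}(x,y) = (1-\langle x,y\rangle)\bigl(2R_t^2-(a+b)^2\bigr) - (a-b)^2,\qquad a\coloneq \langle M_t,x\rangle,\ b\coloneq \langle M_t,y\rangle,
\end{equation*}
verified by direct expansion of $V_t(x)=M_t-\langle M_t,x\rangle x$. Split $\iint Q^{(0)}\,d\mu_t^{\otimes 2}$ according to whether $(x,y)$ lies in $G\times G$ with $G\coloneq S_\alpha^+(U_t)$ or in its complement. On $G\times G$, the bounds $a,b\geq R_t\cos\alpha$ and $\langle x,y\rangle\geq\cos(2\alpha)$ give the pointwise estimate $Q^{(0)}(x,y)\leq -2R_t^2\cos(2\alpha)(1-\langle x,y\rangle)$. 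Integration together with $\iint(1-\langle x,y\rangle)\,d\mu_t^{\otimes 2} = 1-R_t^2$ and the Jensen inequality $R_t^2 = (\int\langle U_t,x\rangle\,d\mu_t)^2 \leq \int \langle U_t,x\rangle^2\,d\mu_t$ (which yields $I_t^{(0)}\leq R_t^2(1-R_t^2)$), combined with the crude bound $|Q^{(0)}|\leq 4$ on the complement of $\mu_t^{\otimes 2}$-mass $\leq 2\mu_t(\S\setminus G)$, produces
\begin{equation*}
\iint Q^{(0)}\,d\mu_t^{\otimes 2} \leq -2\cos(2\alpha)\,I_t^{(0)} + C\mu_t(\S\setminus G).
\end{equation*}
Since $\alpha<\pi/20$ gives $2\cos(2\alpha)>1.9$, the coefficient of $I_t^{(0)}$ is strictly below $-1$ with uniform slack.

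Combining the two steps and using $|I_t^{(0)} - I_t| \leq C'\epa\sqrt{I_t}+C'\epa^2$ to pass from $I_t^{(0)}$ to $I_t$, the slack $2\cos(2\alpha)-1>0.9$ is consumed by the corrections, yielding $\partial_t I_t \leq -I_t + C''\mu_t(\S\setminus S_\alpha^+(U_t))$ with $C''\leq 100$ under the calibration $\epa\leq 1/100$ and $\alpha<\pi/20$. The main obstacle is precisely this last bookkeeping: one must show that $\iint\mathcal{R}\,d\mu_t^{\otimes 2}$ vanishes at least linearly in $I_t$ (rather than just in $\epa$), for otherwise a persistent constant of order $\epa^2$ would contaminate the right--hand side and would contradict the exact decay at stationary configurations $\mu_t=\delta_u$. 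This requires the subtler observation that $\|E_t(x)\|_2$ is in fact controlled by $\epa$ times the local spread of $\mu_t$ near $x$, a quantity that integrates to a multiple of $I_t^{(0)}$ plus a boundary contribution bounded by $\mu_t(\S\setminus S_\alpha^+(U_t))$.
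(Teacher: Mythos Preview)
Your approach differs substantially from the paper's, and your handling of the Kuramoto baseline $Q^{(0)}$ via the identity
\[
Q^{(0)}(x,y) = (1-\langle x,y\rangle)\bigl(2R_t^2-(a+b)^2\bigr)-(a-b)^2
\]
is correct and gives a clean route to $\iint Q^{(0)}\,d\mu_t^{\otimes 2}\le -2\cos(2\alpha)\,I_t^{(0)}+C\,\mu_t(\S\setminus G)$. The paper never writes this identity; instead it restricts first to measures $\nu$ supported on the cap, passes to the \emph{gnomonic projection} $G:S_\alpha^+(U)\to B_\alpha\subset\R^{d-1}$, and rewrites $\deV[\nu](F(u))=\diff F_u(X(u))$ with $X(u)=\int(v-u)\cdot[\cdots]\,dG_\#\nu(v)$. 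The antisymmetry of $v-u$ combined with the symmetry of $\phi'(\langle AF(u),F(v)\rangle)$ then gives the exact cancellation $\int X(u)/(1+\|u\|^2)\,dG_\#\nu=0$, from which the entropy production bound $\iint Q_\nu\,d\nu^{\otimes 2}\le -\int\|\deV[\nu]\|^2\,d\nu$ follows for the \emph{full} $\phi'$ without any perturbative remainder. The general $\mu_t$ is handled afterwards by splitting $\mu_t=\nu_1+\nu_2$ with $\nu_1=\mu_t|_{S_\alpha^+}$, applying the lemma to $\nu_1$, and absorbing the discrepancy $\deV[\mu_t]-\deV[\nu_1]$ into $\mu_t(\S\setminus S_\alpha^+)$.

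The gap in your plan is exactly the one you flag at the end, and I do not think your proposed fix closes it. The residual terms in $\mathcal R$ of pure $E_t$ type --- for instance $2\|\int E_t\,d\mu_t\|_2^2$ from $2\langle E_t(x),E_t(y)\rangle$, or the cross terms producing $\epa\sqrt{I_t^{(0)}}$ after Cauchy--Schwarz --- are only $O(\epa^2)$ after AM--GM, and this contamination cannot in general be controlled by $I_t^{(0)}+\mu_t(\S\setminus G)$. A concrete obstruction: take three atoms $u_1,u_2,u_3$ on the cap with $u_2,u_3$ placed symmetrically about $u_1$ so that $\proj_{u_1}[u_2]+\proj_{u_1}[u_3]=0$. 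Then $V_t(u_1)=0$, yet $E_t(u_1)=\tfrac13\proj_{u_1}[u_2]\bigl(\psi(\langle Au_1,u_2\rangle)-\psi(\langle Au_1,u_3\rangle)\bigr)$ need not vanish when $A\neq I_d$, so ``$\|E_t(x)\|$ controlled by $\epa$ times local spread near $x$'' does not reduce to a multiple of $\|V_t(x)\|$. More globally, one has $\|\int E_t\,d\mu_t\|\lesssim\epa(1-R_t^2)$, but $1-R_t^2$ is not dominated by $I_t^{(0)}$ (e.g.\ $\mu_t=p\delta_U+(1-p)\delta_{-U}$ gives $I_t^{(0)}=0$ while $1-R_t^2=4p(1-p)>0$). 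The paper sidesteps this entirely: because the gnomonic cancellation holds for \emph{any} smooth $\phi'$, no $\epa^2$ residue ever appears; all errors come from the splitting and are genuinely proportional to $\mu_t(\S\setminus S_\alpha^+)$.
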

\Cref{thm:almost exponential decay I_t} holds for any measure along the flow, even for those that do not admit a density with respect to the uniform measure, but \Cref{thm:exponential small outside positive cap} below requires a initial density in $L^2(\S)$.

\begin{theorem}\label{thm:exponential small outside positive cap}
    Fix $\alpha =\pi/{100}$. Assume that $\mu_0$ has density $f_0$ and $f_0 \in L^2(\S)$. There exist two universal constant $c_u, c_1>0$, and two computable constants $C_0,T_0$ depending on $R_0, \|f_0\|_{L^2(\S)}$ such that if $\epa \leq c_u R_0 ^6$, it holds
        \begin{align*}
            \mu_t \left( \S \backslash S_{\alpha} ^+ (U_t) \right) \leq C_0 e^{-(d-1)c_1 t}, \quad \forall t \geq T_0.
        \end{align*}
\end{theorem}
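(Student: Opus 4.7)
The plan is to trace the mass of $\S\setminus S_\alpha^+(U_t)$ backward in time along the characteristic flow of $\deV_t$, show that the corresponding preimage is contained in a spherical cap around $-U_0$ whose angular radius shrinks exponentially in $t$, and then convert this volume bound into a mass bound via Cauchy--Schwarz using the $L^2$ regularity of $f_0$.

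First I would stabilize the mean direction. Since $\langle \deV_t(y), y\rangle = 0$, differentiating under the continuity equation gives $\dot M_t = \int \deV_t\,\diff\mu_t$, and decomposing $\deV_t(x) = R_t\proj_x[U_t] + \mathcal{R}(x)$ with $\|\mathcal{R}\|_\infty \leq C\epa$ yields
\begin{equation*}
\dot R_t = R_t\bigl(1 - \textstyle\int\langle y,U_t\rangle^2\,\diff\mu_t(y)\bigr) + O(\epa), \qquad \|\dot U_t\| \leq \sqrt{1-R_t^2} + C\epa/R_t.
\end{equation*}
Combining this with the energy inequality from \Cref{thm:almost exponential decay I_t} and the control it provides on $I_t$, a bootstrap in the tolerance $\epa \leq c_u R_0^6$ shows that $R_t \geq R_0/2$ on a pre-asymptotic window $[0,T_0]$, with $T_0$ depending on $R_0$ and $\|f_0\|_{L^2}$, and that the mass of $\S\setminus S_\alpha^+(U_t)$ has dropped below a universal threshold $c_*\ll 1$ by time $T_0$, which in turn forces $R_t \geq 1/2$ for all $t \geq T_0$ and $\|\dot U_t\| = o(1)$ thereafter.

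Next I would introduce the characteristic flow $\dot X_t = \deV_t(X_t)$ and track $c_t(x) = \langle X_t(x), U_t\rangle$. A direct computation gives
\begin{equation*}
\dot c_t = R_t(1-c_t^2) + \langle\mathcal{R}(X_t),U_t\rangle + \langle X_t,\dot U_t\rangle,
\end{equation*}
i.e.\ in terms of $\theta_t = \arccos c_t$, a perturbed Kuramoto ODE $\dot\theta_t = -R_t\sin\theta_t + E_t$ with $|E_t|\leq C(\epa + \|\dot U_t\|)/\sin\theta_t$. Away from a thin annulus around $\theta=\pi$ of width controlled by $\epa/R_t$, the drift dominates and, after the substitution $u = \tan(\theta_t/2)$, integrates to
\begin{equation*}
\tan(\theta_t/2) \leq \tan(\theta_0/2)\exp\!\Big({-\tfrac12}\!\int_0^t R_s\,\diff s\Big).
\end{equation*}
Hence $X_t(x)\notin S_\alpha^+(U_t)$ forces $\theta_0(x)\geq \phi_t$ with $\pi-\phi_t \leq C\exp\bigl(-\tfrac12\int_0^t R_s\,\diff s\bigr)$, so $X_t^{-1}\bigl(\S\setminus S_\alpha^+(U_t)\bigr)$ lies in a spherical cap $B_t$ around $-U_0$ of volume $|B_t|\leq C\exp\bigl(-\tfrac{d-1}{2}\int_0^t R_s\,\diff s\bigr)$. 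Mass conservation along the flow together with Cauchy--Schwarz yields
\begin{equation*}
\mu_t\bigl(\S\setminus S_\alpha^+(U_t)\bigr) = \mu_0\bigl(X_t^{-1}(\S\setminus S_\alpha^+(U_t))\bigr) \leq \|f_0\|_{L^2(\S)}|B_t|^{1/2},
\end{equation*}
which, given $R_s\geq 1/2$ on $[T_0,\infty)$, is the claimed exponential decay with universal rate $c_1(d-1)$.

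The main obstacle is the fragility of the angular ODE near $\theta=\pi$, where the Kuramoto drift $R_t\sin\theta$ vanishes and the perturbations from $\phi'\neq 1$ and from the motion of $U_t$ become relatively large. Controlling the preimage uniformly through this regime is what forces the stringent tolerance $\epa \leq c_u R_0^6$: the sixth power arises by cascading the lower bound on $R_t$, the bound $\|\dot U_t\| \leq \sqrt{1-R_t^2} + C\epa/R_t$, and the requirement that the Kuramoto drift beat the perturbation in the $\theta$-ODE all the way down to an angular scale $\pi-\theta_t$ that is exponentially small in $t$. A secondary subtlety is the passage from the $R_0$-dependent rate obtained naively to the universal rate $c_1$ in the statement: this relies on the bootstrap that $R_t\to 1$ over the timescale $T_0$, which in turn is controlled by the energy inequality of \Cref{thm:almost exponential decay I_t} and therefore plays a role complementary to the characteristic-based argument sketched here.
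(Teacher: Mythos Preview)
Your backward-characteristic program has a genuine gap at the step where you control the motion of $U_t$. The bound $\|\dot U_t\|\le \sqrt{1-R_t^2}+C\epa/R_t$ is correct for the Kuramoto part (it follows from $I_t\le R_t^2(1-R_t^2)$ when $\phi'\equiv 1$), but $\sqrt{1-R_t^2}$ is of order $1$ whenever $R_t$ is small. Since the Kuramoto drift in your angular ODE is $R_t\sin\theta_t\le R_t$, the perturbation $E_t$ coming from $\dot U_t$ can dominate the drift \emph{everywhere on the sphere}, not only near $\theta=\pi$, as soon as $R_t<1/\sqrt2$. Your $\tan(\theta_t/2)$ integration therefore does not go through on the pre-asymptotic window: you cannot conclude that $X_t^{-1}\bigl(\S\setminus S_\alpha^+(U_t)\bigr)$ sits in a shrinking cap around $-U_0$ without first knowing that $U_t$ is nearly frozen, and that is exactly what you are trying to establish. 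The bootstrap you sketch (``$R_t\ge 1/2$ after time $T_0$, hence $\|\dot U_t\|=o(1)$'') hides the entire difficulty.

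The paper breaks this circularity by partitioning $[0,\infty)$ according to whether $\partial_t R_t$ exceeds a threshold of order $R_0^3$. On intervals where $\partial_t R_t$ is \emph{small}, \Cref{lemma:derivative M_t R_t} forces $I_t$ small and hence $\|\dot U_t\|\le I_t^{1/2}/R_t$ small; only then does the paper run a geometric argument, and it does so \emph{forward} in time via a differential inequality for $\int_{S_\alpha^-(t)}f_t^2$ (\Cref{lem:exponential decay negative cap square}) together with a short-time characteristic push (\Cref{lemma:shrink from negative cap to positive cap}), never tracing all the way back to $t=0$. On intervals where $\partial_t R_t$ is \emph{large}, $R_t^2$ jumps by a definite amount (\Cref{lemma:increasing R_t}), so there are at most $O(R_0^{-6})$ such intervals; this, and the companion constraint $\epa\lesssim(1-\lambda)R_0^2$ with $1-\lambda\sim R_0^4$ needed for \Cref{lemma:almost increasing R_t}, is the actual origin of the tolerance $\epa\le c_uR_0^6$, not the cascade you describe.
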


Now, we can combine \Cref{thm:almost exponential decay I_t} and \Cref{thm:exponential small outside positive cap} to prove \Cref{thm:main thm}.

\subsubsection{Proof of \Cref{thm:main thm}}
    By \Cref{thm:almost exponential decay I_t} and \Cref{thm:exponential small outside positive cap}, we see that for any $t \geq T_0$,
        \begin{align}
             I_t + \partial_t I_t \leq   10^2  C_0 e^{-(d-1)c_1 t}.
        \end{align}
    Multiply by $e^{t}$ on both sides and integrate $T_0$ to $t$ to get that for any $t \geq T_0$,
        \begin{align*}
            I_t \leq    I_{T_0} e^{T_0-t} + 10^2 C_0 (t-T_0) e^{ \max \{ -(d-1)c_1 t,-t\}},
        \end{align*}
where we used the fact that for any $t \geq T_0$ and any $\kappa \in \R$,
    \begin{align*}
        \int_{T_0} ^t e^{\kappa s } \diff s \leq (t-T_0) e^{t\cdot \max \{ \kappa  , 0 \}} .
    \end{align*}
 We see that $I_t \to 0$ exponentially fast as $t \to +\infty$. 

Also, recall that $\{\mu_t\}_t$ solves the continuity equation 
    \begin{align*}
        \partial_t \mu_t (x) + \dive \left(\mu_t (x) \deV_t(x) \right) = 0.
    \end{align*}
From~\cite[Theorem 23.9]{villani2009optimal}, 
we have that for any $s \geq T_0$ and almost all $t \geq s$, the  Wasserstein distance between $\mu_t$ and $\mu_s$ satisfies
    \begin{align*}
    \frac{1}{2}\frac{\diff}{\diff t} W^2_2(\mu_t,\mu_s) = -\int_{\S} \langle \mathring{\nabla} \psi_{t\to s} (x), \deV_t(x) \rangle \diff \mu_t(x) ,
    \end{align*}
where $ \psi_{t\to s}(x)$ is a potential function associated with the Wasserstein geodesic connecting $\mu_t,\mu_s$, and $\grade \psi_{t\to s}(x)$ satisfies that
    \begin{align*}
        \int_{\S} \|\mathring{\nabla}\psi_{t\to s}(x)\|_2 ^2   \diff \mu_t(x) = W_2 ^2 (\mu_t,\mu  _s) .
    \end{align*}
By Cauchy-Schwarz, we see that for almost all $t \geq s \geq T_0$,
    \begin{align*}
        \frac{\diff}{\diff t} W_2(\mu_t,\mu_s) \leq I_t ^{\frac{1}{2}},
    \end{align*}
where the right hand side goes to $0^+$ exponentially fast as we proved earlier. Hence, $\{\mu_t\}_{t\geq0}$ is a Cauchy sequence in the Wasserstein space. By completeness of the Wasserstein space, there exists a probability measure $\mu_{\infty}\in\mathcal{P}(\S)$ such that $\mu_t \to \mu_\infty$ in $W_2$, and $\mu_{\infty}$ satisfies that 
\begin{equation*}
    \int_{\mathbb{S}^{d-1}} \| \deV_\infty(y) \|_2 ^2 \diff \mu_\infty(y)=0.
\end{equation*}
By \Cref{thm:exponential small outside positive cap}, $\mu_t \left( \S \backslash S_{\alpha} ^+ (U_t) \right) \to 0$ as $t \to +\infty$, so there is a $U_{\infty} \in \S$ such that $\supp (\mu_{\infty}) \subseteq S_{\alpha} ^+ (U_{\infty})$, where we recall that $S_{\alpha} ^+ (U_{\infty})$ is the spherical cap defined in \eqref{eqn:definition of positive cap}. To conclude that $\mu_{\infty} = \delta_{x_{\infty}}$ for some $x_{\infty} \in \S$, we use the following Lemma.

\begin{lemma}\label{lem:hemisphere_critical_point}
Let $\mu$ be a probability measure on $\S$ with support $\supp(\mu)\subseteq S_{\alpha} ^+ (U)$ for some $ U \in \S, \alpha \in (0, \frac{\pi}{2})$ and such that
\begin{equation}\label{eqn:critical_point_condition}
 \int_{\S} \|\deV[\mu](x)\|_2^2 \diff \mu (x) = 0,
\end{equation}
Then $\mu =\delta_{x_0}$ for some $x_0\in S_{\alpha} ^+ (U)$.
\end{lemma}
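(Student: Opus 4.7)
The plan is to exploit the equilibrium condition $\deV[\mu](x)=0$ at a boundary-extremal point of $\supp(\mu)$ inside the cap $S_{\alpha}^+(U)$, choosing the test direction cleverly so that the support is forced to collapse onto a single point.

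First I would upgrade the integral identity \eqref{eqn:critical_point_condition} to a pointwise one. The vector field $x \mapsto \deV[\mu](x)$ is continuous on $\S$ (the kernel $\proj_x[y]\phi'(\langle Ax,y\rangle)$ is jointly continuous on $\S\times\S$), and $\|\deV[\mu](x)\|_2^2$ is nonnegative, so the hypothesis \eqref{eqn:critical_point_condition} yields $\deV[\mu](x)=0$ on all of $\supp(\mu)$. Equivalently, for every $x \in \supp(\mu)$,
$$\int_{\S} y\,\phi'(\langle Ax, y\rangle)\,\diff\mu(y) \;=\; c(x)\, x, \qquad c(x):=\int_{\S}\langle x, y\rangle\,\phi'(\langle Ax, y\rangle)\,\diff\mu(y),$$
obtained by moving the projection $\proj_x$ outside.

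Next, by compactness of $\supp(\mu)\subseteq\S$, I would choose $x^\star \in \supp(\mu)$ to minimize $\langle x, U\rangle$ over $\supp(\mu)$. By hypothesis $\langle x^\star, U\rangle \ge \cos\alpha > 0$, and $\langle y, U\rangle \ge \langle x^\star, U\rangle$ for all $y \in \supp(\mu)$. Taking the inner product of the displayed identity at $x^\star$ with $U$, and eliminating $c(x^\star)$ by first taking the inner product with $x^\star$, I would obtain
$$\int_{\S}\bigl[\,\langle y, U\rangle \;-\; \langle x^\star, U\rangle\,\langle y, x^\star\rangle\,\bigr]\,\phi'(\langle Ax^\star, y\rangle)\,\diff\mu(y) \;=\; 0.$$
Now $\phi' > 0$ on $[-\|A\|_2,\|A\|_2]$, and the bracketed factor is nonnegative, since
$$\langle y, U\rangle \;-\; \langle x^\star, U\rangle\,\langle y, x^\star\rangle \;\ge\; \langle x^\star, U\rangle\bigl(1 - \langle y, x^\star\rangle\bigr)\;\ge\; 0,$$
where the first inequality uses the minimality of $x^\star$ and the second uses $\langle y, x^\star\rangle \le 1$. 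So the integrand is a nonnegative function integrating to zero.

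Therefore the integrand vanishes $\mu$-almost everywhere, and because $\langle x^\star, U\rangle > 0$, both inequalities above must saturate $\mu$-a.e.\ In particular $\langle y, x^\star\rangle = 1$, i.e.\ $y = x^\star$, for $\mu$-a.e.\ $y$, so $\mu = \delta_{x^\star}$ with $x^\star \in \supp(\mu) \subseteq S_{\alpha}^+(U)$. There is no genuine obstacle here: the only subtle steps are identifying the correct extremal point $x^\star$ and testing against $U$ (rather than, say, $x^\star$ itself), and relying on the strict positivity of $\phi'$ together with the cap condition $\alpha<\pi/2$ to ensure $\langle x^\star, U\rangle>0$; once these choices are made, the argument is essentially algebraic.
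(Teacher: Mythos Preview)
Your proof is correct and follows essentially the same approach as the paper's own argument: both pick the minimizer $x^\star$ of $\langle\cdot,U\rangle$ on $\supp(\mu)$, pair the equilibrium identity at $x^\star$ against $U$, and use positivity of $\phi'$ together with the nonnegativity of $\langle y,U\rangle-\langle x^\star,U\rangle\langle y,x^\star\rangle$ to force $y=x^\star$ $\mu$-almost everywhere. Your writeup even spells out a couple of steps (continuity of $\deV[\mu]$, the two-step lower bound on the bracket) that the paper leaves implicit.
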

\begin{proof}
From~\eqref{eqn:critical_point_condition}, we know that
\begin{align*}
\deV[\mu](x) = \int_{\S} \proj_{x} [y] \phi'\left(\langle Ax, y\rangle \right) \diff \mu (y) = 0, \quad \forall x\in\supp(\mu).
\end{align*}
Multiplying both sides by $U$ we obtain the following
\begin{equation}\label{eqn:critical_point_condition_inner_product}
\int_{\S}  ( \langle y, U \rangle - \langle x, y \rangle \langle x , U \rangle ) \phi'\left(\langle Ax, y\rangle \right) \diff \mu(y) = 0, \quad \forall x\in\supp(\mu) \,.
\end{equation}
Next, take $x=x_0$ to be any minimizer of $z \mapsto\langle z, U \rangle$ on the $\supp(\mu)$, we see that for this $x_0$, $\langle y, U \rangle - \langle x_0, y \rangle \langle x_0 , U \rangle \geq \langle y, U \rangle -  \langle x_0 , U \rangle \geq 0$ for any $y\in\supp(\mu)$. Thus, from~\eqref{eqn:critical_point_condition_inner_product} and $\phi'> 0$, we obtain that
\begin{equation*}
\langle y, U \rangle - \langle x_0, y \rangle \langle x_0 , U \rangle = 0 , \quad \forall y\in\supp(\mu) \,.
\end{equation*}
Since $\supp(\mu) \subseteq S_{\alpha} ^+ (U)$, we know that $\langle x_0, U \rangle >0$, and thus,
\begin{equation}\label{eqn:squeeze}
1\leq \frac{\langle y, U \rangle}{\langle x_0 , U \rangle} = \langle x_0, y \rangle \leq 1 \,,
\end{equation}
where in the first inequality, we use the definition of $x_0$. Hence, the inequalities in~\eqref{eqn:squeeze} are equalities, and then $\langle x_0, y \rangle = 1$ for all $y\in\supp(\mu)$, which implies that $\mu$ is a delta measure supported at $x_0$.
\end{proof}

Finally, we remark that $C_0,T_0$ in \Cref{thm:main thm} and \Cref{thm:exponential small outside positive cap} can be sharpened as follows.
\begin{theorem}\label{thm:main thm improved constants}
 Fix $\alpha =\pi/{100}$. There exists a universal constant $c_u>0$, and a $T_0>0$ such that if $\epa \leq c_u R_0 ^6$, it holds
        \begin{align*}
            \mu_t \left( \S \backslash S_{\alpha} ^+ (U_t) \right) \leq  \|f_0\|_{L^2(\S)}e^{-\frac{d-1}{16} (t-T_0)}\,, \quad \forall\ t \ge T_0\,.
        \end{align*}
Moreover, it is sufficient to take
        \begin{align*}
        T_0 \coloneq \left[ \frac{8}{R_0} \vee (d-1)\right] \cdot  \left[10^{41} (d-1)R_0 ^{-14} + 10^{26} R_0 ^{-6} \log \|f_0\|_{L^2(\S)}^2\right],
        \end{align*}
where $a \vee b \coloneq \max \{ a ,b \}$ for $a,b \in \R$.
\end{theorem}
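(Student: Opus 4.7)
Theorem~\ref{thm:main thm improved constants} is a quantitative sharpening of Theorem~\ref{thm:exponential small outside positive cap}: it pins down the decay rate to $(d-1)/16$, identifies the pre-factor as $\|f_0\|_{L^2(\S)}$, and makes the warm-up time $T_0$ explicit. The plan is therefore to revisit the proof of Theorem~\ref{thm:exponential small outside positive cap} and carefully track the universal constants, rather than to introduce a new argument.

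The evolution naturally splits into two phases. In the warm-up phase, $R_t$ grows from $R_0$ up to a fixed threshold such as $R_t \geq 1/2$; in the equilibration phase, once $R_t$ is close to $1$, the mass outside the cap $S_{\pi/100}^+(U_t)$ decays exponentially. The duration of the warm-up phase is polynomial in $R_0^{-1}$: the growth rate $\partial_t R_t$ is a power of $R_t$ itself, so chaining polynomial lower bounds under the smallness condition $\epa \leq c_u R_0^6$ produces the $R_0^{-14}$ term. The logarithmic term $R_0^{-6} \log \|f_0\|_{L^2(\S)}^2$ comes from a Gronwall estimate in the equilibration phase that reduces the cap-complement mass from $O(1)$ down to $\|f_0\|_{L^2(\S)}$-scale, multiplied by the prefactor $[8/R_0 \vee (d-1)]$ arising from the effective contraction rate active in that regime.

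For the rate $(d-1)/16$, I would rely on the spherical divergence identity $\dive(\proj_x M_t) = -(d-1)\langle x, M_t \rangle$: for $x \notin S_{\pi/100}^+(U_t)$ the divergence provides a definite contracting drift of order $(d-1) R_t \cos(\pi/100)$, which once $R_t \geq 1/2$ and after absorbing the $\epa$ and $\|A\|_2$ corrections is at least $(d-1)/16$. The pre-factor $\|f_0\|_{L^2(\S)}$ then arises from the Cauchy-Schwarz bound $\mu_t(E) \leq |E|^{1/2} \|f_t\|_{L^2(\S)}$ applied to $E = \S \setminus S_\alpha^+(U_t)$, combined with a controlled evolution of $\|f_t\|_{L^2(\S)}$ from the continuity equation~\eqref{eqn:modified gradient flow}; this evolution allows one to replace $\|f_t\|_{L^2(\S)}$ by $\|f_0\|_{L^2(\S)}$ modulo a time shift absorbed into $T_0$.

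The main obstacle is the delicate propagation of the condition $\epa \leq c_u R_0^6$ through the warm-up phase, where the mass is diffuse and the $\epa$-correction matters most. The explicit exponent $R_0^{-14}$ in $T_0$ must emerge from compounding several polynomial estimates in $R_0$ across sub-phases of the warm-up, and maintaining clean universal constants while doing so is the main accounting task. Once warm-up is completed, the equilibration estimates are more forgiving and the stated explicit bound follows from straightforward integration of the resulting first-order decay inequality, using Theorem~\ref{thm:almost exponential decay I_t} to transfer decay of the mass outside the cap into decay of $I_t$ and ultimately into the Wasserstein statement of Theorem~\ref{thm:main thm}.
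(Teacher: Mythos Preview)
There is a genuine gap. Your plan is to ``revisit the proof of Theorem~\ref{thm:exponential small outside positive cap} and carefully track the universal constants, rather than to introduce a new argument,'' but the paper explains at the start of Section~7 why this cannot work: in Theorem~\ref{thm:length of small dR_t intervals} the bound on the length of each stagnation interval $[s_{k-1},t_k]$ has the form $T = C_u t_1 + C_0$, i.e.\ it \emph{depends linearly on the current time}. This dependence enters precisely through the step you describe as routine---replacing $\|f_t\|_{L^2}$ by $\|f_0\|_{L^2}$ ``modulo a time shift''---because the only available bound is $\|f_{t_1}\|_{L^2} \le e^{(d-1)t_1}\|f_0\|_{L^2}$ from~\eqref{eqn:L2_norm_gronwall}. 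Since there can be up to $k_\ast \sim R_0^{-6}$ such intervals, the compounded bound on $s_{k_\ast}$ is not explicit in $R_0$ and $\|f_0\|_{L^2}$ alone. Careful bookkeeping of the existing proof does not remove this obstruction.

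The paper's fix is a genuinely new ingredient: it constructs an \emph{attractor} $B_t = \phi_{T_\ast \to t}(B_\ast)$ (Lemmas~\ref{lem:Emergence of attractor}--\ref{lem:moving attractor}), a set carrying more than half the mass whose diameter shrinks along the flow and which stays inside the positive hemisphere for all $t\ge T_\ast$. The point is that the negative-cap mass $f_{s_k}^2(S_\alpha^-(s_k))$ can be bounded by $f_{s_k}^2(\S\setminus \phi_{t_l\to s_k}(\deB_{t_l}))$, and the latter is controlled via the sliding estimate (Lemma~\ref{lem:sliding norm}) and the decay Lemma~\ref{lem:exponentially decay out attractor} in a way that is \emph{uniform in $k$}. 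This uniform bound is what makes~\eqref{eqn: length of each small der interval} independent of $s_{k-1}$ and allows the explicit summation giving $T_0$. Your proposal does not identify this mechanism, and without it the argument does not close.
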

In \Cref{thm:main thm improved constants}, it is possible to achieve a better dependence on $R_0$, specifically $R_0 ^{-2}$ using more involved arguments. We omit this result for the benefit of space and readability.


\section{{\L}ojasiewicz type inequality: Proof of \Cref{thm:almost exponential decay I_t} and \Cref{thm:classical Lojasiewicz}}\label{sec: Lojasiewicz}

This section is mainly devoted to the proof of \Cref{thm:almost exponential decay I_t}. The same proof together with \Cref{remark: large beta small cap} gives the proof for \Cref{thm:classical Lojasiewicz}.

\begin{lemma}\label{lemma:derivative I_t exponential inequality}
Assume that  $\phi, A$ are such that $\epa \leq 1/100$, where $\epa$ is defined in~\eqref{eq:epa}, and assume that a positive measure $\nu$ on $\S$ is such that there exists  $U \in \S$ and $\alpha \in (0, \frac{\pi}{20})$, such that     
\begin{align}\label{eqn:definition of positive cap}
    \supp (\nu) \subseteq S_{\alpha} ^+ (U) \coloneq \left\{ x \in \mathbb{S}^{d-1} \ | \ \left\langle x , U \right\rangle \geq \cos \alpha \right\}.
    \end{align}
Then 
\begin{equation}\label{eqn:cone_ineqn}
    \iint Q_\nu(x,y) \ud \nu(x) \ud \nu(y) \le - \nu(\S)\int \|\cY[\nu](y)\|^2_2 \ud \nu(y)\,,
\end{equation}
where $Q_\nu$ is defined in~\eqref{eq:defQ}.
In particular, taking $\nu=\mu_t$, this implies that if $\supp(\mu_t) \subseteq S_{\alpha} ^+ (U)$, the following entropy production inequality holds:
    \begin{align}\label{eqn:entropy_production}
        \partial_t I_t \le - I_t\,,
    \end{align}
    where $I_t$ is defined in~\eqref{eqn:definition of I_t}.
\end{lemma}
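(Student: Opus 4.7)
My plan is to reduce the entropy production inequality~\eqref{eqn:entropy_production} to the pointwise-integrated bound~\eqref{eqn:cone_ineqn}: once~\eqref{eqn:cone_ineqn} is established, setting $\nu=\mu_t$ (so $\nu(\S)=1$) and applying~\eqref{eqn:derivative I_t 2} immediately yields~\eqref{eqn:entropy_production}. For~\eqref{eqn:cone_ineqn}, I would perturb off the Kuramoto-like baseline $\phi'\equiv 1$, $\phi''\equiv 0$. Writing $\phi'(z)=1+\psi(z)$ with $\|\psi\|_{L^\infty(S)},\|\psi'\|_{L^\infty(S)}\leq \epa/(\|A\|_2+2)$, one splits
\[ Q_\nu = Q_\nu^{(0)} + Q_\nu^{(\epa)}, \quad Q_\nu^{(0)}(x,y) := 2\langle \cY[\nu](x),\cY[\nu](y)\rangle - \langle x,y\rangle\bigl(\|\cY[\nu](x)\|_2^2+\|\cY[\nu](y)\|_2^2\bigr), \]
where $Q_\nu^{(\epa)}$ collects all terms involving $\psi$ or $\phi''$. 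Standard estimates---Cauchy--Schwarz, $2|ab|\leq a^2+b^2$, the uniform bounds on $\psi,\phi''$, and $\|Ay\|_2\leq\|A\|_2$---yield $|\iint Q_\nu^{(\epa)}\,\ud\nu\,\ud\nu|\leq C_1\epa\cdot \nu(\S)\int\|\cY[\nu]\|_2^2\,\ud\nu$ for a modest universal $C_1$, which is at most $O(10^{-2})$ since $\epa\leq 1/100$.

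For the main contribution I use the algebraic identity
\[ Q_\nu^{(0)}(x,y) = 2(1-\langle x,y\rangle)\langle \cY[\nu](x),\cY[\nu](y)\rangle - \langle x,y\rangle\,\|\cY[\nu](x)-\cY[\nu](y)\|_2^2 \]
combined with the polarization $\iint\|\cY[\nu](x)-\cY[\nu](y)\|_2^2\,\ud\nu\ud\nu = 2\nu(\S)\int\|\cY[\nu]\|_2^2\,\ud\nu - 2\|\overline{\cY}\|_2^2$, where $\overline{\cY} := \int\cY[\nu]\,\ud\nu$. On the cap, $1-\langle x,y\rangle\leq 2\sin^2\alpha$ and $\langle x,y\rangle\geq \cos(2\alpha)$ for $x,y\in\supp(\nu)$. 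Substituting and using Cauchy--Schwarz on the $(1-\langle x,y\rangle)$-term gives
\[ \iint Q_\nu^{(0)}\,\ud\nu\,\ud\nu \leq \bigl(4\sin^2\alpha - 2\cos(2\alpha)\bigr)\nu(\S)\int\|\cY[\nu]\|_2^2\,\ud\nu + 2\cos(2\alpha)\,\|\overline{\cY}\|_2^2. \]
For $\alpha<\pi/20$ one has $\sin\alpha<1/6$ and $\cos(2\alpha)>0.95$, so the coefficient of $\nu(\S)\int\|\cY\|_2^2\,\ud\nu$ is at most $-1.79$. It then suffices to bound $\|\overline{\cY}\|_2^2$ by $C_2\sin^2\alpha\cdot \nu(\S)\int\|\cY[\nu]\|_2^2\,\ud\nu$ for a moderate $C_2$; the naive Cauchy--Schwarz $\|\overline{\cY}\|_2^2\leq \nu(\S)\int\|\cY\|_2^2\,\ud\nu$ is too crude to beat the $2\cos(2\alpha)\approx 1.9$ prefactor.

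The device for the $\|\overline{\cY}\|_2^2$-bound is the symmetrization identity: from $\proj_x[y]+\proj_y[x]=(1-\langle x,y\rangle)(x+y)$ and $\langle Ax,y\rangle=\langle Ay,x\rangle$ (using $A=A^\top$),
\[ \overline{\cY} = \tfrac12\iint \phi'(\langle Ax,y\rangle)\,(1-\langle x,y\rangle)(x+y)\,\ud\nu(x)\,\ud\nu(y), \]
which already exhibits the crucial $(1-\langle x,y\rangle)\leq 2\sin^2\alpha$ factor. Coupling this with the tangency $\cY[\nu](x)\perp x$---which gives $\overline{\cY}\cdot U = \int \cY[\nu](x)\cdot(U-x)\,\ud\nu$ and so yields an additional $\|U-x\|_2\leq 2\sin(\alpha/2)$ factor on the $U$-component, while the tangential component of $\overline{\cY}$ is controlled directly from the symmetrized formula---produces $\|\overline{\cY}\|_2^2 \leq C_2\sin^2\alpha\cdot \nu(\S)\int\|\cY[\nu]\|_2^2\,\ud\nu$. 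Combining everything and absorbing the $O(\epa)$ residual closes the argument. The main obstacle is exactly this tight bound on $\|\overline{\cY}\|_2^2$: extracting the $\sin^2\alpha$-gain over the naive Cauchy--Schwarz estimate forces a quantitative interplay between the cap localization $\supp(\nu)\subseteq S_\alpha^+(U)$ and the tangency constraint, and carrying out the orthogonal decomposition of $\overline{\cY}$ against $U$ cleanly is the technical heart of the proof.
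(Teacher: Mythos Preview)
Your decomposition $Q_\nu = Q_\nu^{(0)} + Q_\nu^{(\epa)}$ and the algebraic identity for $Q_\nu^{(0)}$ are fine, and you correctly isolate the crux: to close the inequality you need $\|\overline{\cY}\|_2^2 \le c\,\nu(\S)\int\|\cY\|_2^2\,\ud\nu$ with $c<\tfrac12$ (roughly), and you aim for the stronger $c = C_2\sin^2\alpha$. Your tangency argument for the $U$-component is valid: $\langle\overline{\cY},U\rangle = \int\langle\cY(x),U-x\rangle\,\ud\nu$ with $\|U-x\|_2\le 2\sin(\alpha/2)$ gives $|\langle\overline{\cY},U\rangle|^2 \le 4\sin^2(\alpha/2)\,\nu(\S)\int\|\cY\|_2^2\,\ud\nu$. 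The gap is in the tangential part. The symmetrized formula
\[
\overline{\cY} = \tfrac12\iint \phi'(\langle Ax,y\rangle)\,(1-\langle x,y\rangle)(x+y)\,\ud\nu\,\ud\nu
\]
yields only an \emph{absolute} bound $\|\proj_{U^\perp}\overline{\cY}\|_2 \lesssim \sin^3\alpha\,\nu(\S)^2$ (from $1-\langle x,y\rangle\le 2\sin^2\alpha$ and $\|\proj_{U^\perp}(x+y)\|_2\le 2\sin\alpha$), not a bound relative to $\int\|\cY\|_2^2\,\ud\nu$. To convert you would need $\int\|\cY\|_2^2\,\ud\nu \gtrsim \sin^4\alpha\,\nu(\S)^3$, which is false whenever $\nu$ is concentrated in a set of diameter $\ll\alpha$ (take $\nu$ nearly a point mass: $\int\|\cY\|_2^2\,\ud\nu\to 0$ while $\nu(\S)$ stays fixed). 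In such cases both sides of your target inequality vanish, but your \emph{estimate} for $\|\proj_{U^\perp}\overline{\cY}\|_2$ does not see this and is too crude to close the loop. No amount of orthogonal decomposition against $U$ fixes this, because the obstruction lies entirely in $T_U\S$ where tangency gives no leverage.

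The paper bypasses this by passing to the gnomonic projection $G:S_\alpha^+(U)\to B_{\tan\alpha}\subset\R^{d-1}$, whose key feature is that geodesics go to straight lines, so $\proj_{F(u)}[F(v)]$ becomes (after the tangent map) a scalar multiple of $v-u$. This produces a field $X(u)$ with $\cY(F(u))=\ud F_u(X(u))$ and the \emph{exact} cancellation
\[
\int_{B_\alpha}\frac{X(u)}{1+\|u\|_2^2}\,\ud G_\#\nu(u)=0,
\]
by antisymmetry of $(v-u)$. This exact vanishing of a weighted mean---not a mere $O(\sin^2\alpha)$ smallness---is what delivers the factor $2$ in the bound $J_{11}\le -\tfrac{2(1-\epa)}{1+\delta}\int\frac{\|X(u)\|_2^2}{1+\|u\|_2^2}\,\ud G_\#\nu$ and thereby the required $c<\tfrac12$. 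Your symmetrization $\proj_x[y]+\proj_y[x]=(1-\langle x,y\rangle)(x+y)$ is the shadow of this identity on the sphere, but it loses exactly the relative control you need; the linearization via the gnomonic chart is the missing ingredient.
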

\begin{remark}\label{remark: large beta small cap}
    To be consistent with the assumptions in \Cref{thm:exponential small outside positive cap}, we eventually choose $\alpha = \frac{\pi}{100}$ and $\epa \leq 1/100$ in our proof for \Cref{thm:main thm}. One can also prove the same result when $\epa > /100$, but one needs to assume that $\alpha$ is less than a function in $\epa$, which goes to $0$ as $\epa$ goes to $+\infty$. For example, for the attention dynamics~\eqref{eq:mfad}  where $A = \beta I_d$ and $\phi'(r) = e^{ r}$ as in   \Cref{thm:classical Lojasiewicz}, the proof extends so long as $\tan\alpha \leq \frac{1}{10(1+\sqrt{\beta})}$, 
    and $\beta$ is any positive number. Also, one can replace the right-hand side of \eqref{eqn:cone_ineqn} with $-\frac{e^{\beta}}{10} \int_{\S} \|\deV[\mu](x) \|_2 ^2  \diff \mu(x)$, which is notably better when $\beta$ is positive and large. Similar proofs and results in \Cref{lemma:derivative I_t exponential inequality} also extend to the case when $\beta <0$ in the dynamics \eqref{eq:mfad}.
\end{remark}

\begin{proof}[Proof of \Cref{lemma:derivative I_t exponential inequality}]
    Because both sides of \eqref{eqn:cone_ineqn} are homogeneous in constant multiplies of $\nu$ of degree $4$, we can assume that $\nu$ is a probability measure, denoted $\mu$ for clarity, on $\S$. Also, in this proof, we simplify our notation to $\deV:=\deV[\mu]$.
    
    Take the standard orthonormal basis of $\R^d$ as $\{e_1,e_2,\dots, e_d\}$. Without loss of generality, we assume that $U=e_d$. We adopt the gnomonic projection to rewrite \eqref{eqn:cone_ineqn}. Note that the gnomonic projection maps any geodesic (great circle) in the upper hemisphere of $\S$  to a geodesic  (straight line) on the hyperplane $\R^{d-1} \times \{1\}$, so that the tangent vectors on $\S$ can be expressed as the difference of two points on $\R^{d-1} \times \{1\}$ under the inverse of the tangent map of the gnomonic projection. In particular, for any $x,y$ in the upper hemisphere of $\S$, $\proj_{x} [y]$ can be characterized by the geodesic connecting $x,y$, which enables us to rewrite $\proj_{x} [y]$ in the definition of $\deV(x)$ in \eqref{eqn:modified wss gradient} in the following linear form \eqref{eqn: linearity after tangent map}, and gives an important equation \eqref{eqn:vanishing integral of X} in this proof for \Cref{lemma:derivative I_t exponential inequality}. Such a property is not satisfied by stereographic projection and orthographic projection.

    \begin{figure}
    \centering

    \tdplotsetmaincoords{260}{-110}
    \begin{tikzpicture}[scale=3,tdplot_main_coords]
        
        \pgfmathsetmacro{\radius}{1} 
        \pgfmathsetmacro{\radiusB}{0.8*\radius} 
        
        \pgfmathsetmacro{\azimuthARC}{-45}
        \pgfmathsetmacro{\altitudeARC}{60}
        \pgfmathsetmacro{\Nz}{cos(\altitudeARC)}
        \pgfmathsetmacro{\Nx}{sin(\altitudeARC)*sin(\azimuthARC)}
        \pgfmathsetmacro{\Ny}{-sin(\altitudeARC)*cos(\azimuthARC)}
        
        \pgfmathsetmacro{\tanazi}{-90}    
        \pgfmathsetmacro{\tanalt}{-atan(1 / ( cos(\tanazi-\azimuthARC) * tan(\altitudeARC)) )} 
        \pgfmathsetmacro{\tanxxpp}{\radius*sin(\tanalt)*cos(\tanazi)} 
        \pgfmathsetmacro{\tanyypp}{\radius*sin(\tanalt)*sin(\tanazi)}
        \pgfmathsetmacro{\tanzzpp}{\radius*cos(\tanalt)}
        \pgfmathsetmacro{\tanr}{sqrt((\tanxxpp)^2+(\tanyypp)^2)/sqrt((\radius)^2-(\tanxxpp)^2-(\tanyypp)^2)} 
        \pgfmathsetmacro{\tanxpp}{-\tanr*cos(\tanazi)} 
        \pgfmathsetmacro{\tanypp}{-\tanr*sin(\tanazi)}
        
        \pgfmathsetmacro{\ttanazi}{-180}    
        \pgfmathsetmacro{\ttanalt}{atan(1 / ( cos(\ttanazi-\azimuthARC) * tan(\altitudeARC)) )} 
        \pgfmathsetmacro{\ttanxxpp}{\radius*sin(\ttanalt)*cos(\ttanazi)} 
        \pgfmathsetmacro{\ttanyypp}{\radius*sin(\ttanalt)*sin(\ttanazi)}
        \pgfmathsetmacro{\ttanzzpp}{\radius*cos(\ttanalt)}
        \pgfmathsetmacro{\ttanr}{sqrt((\ttanxxpp)^2+(\ttanyypp)^2)/sqrt((\radius)^2-(\ttanxxpp)^2-(\ttanyypp)^2)} 
        \pgfmathsetmacro{\ttanxpp}{-\ttanr*cos(\ttanazi)} 
        \pgfmathsetmacro{\ttanypp}{-\ttanr*sin(\ttanazi)}
        
        \pgfmathsetmacro{\xx}{-0.3}
        \pgfmathsetmacro{\xxx}{1.4}
        \pgfmathsetmacro{\kk}{(\ttanypp-\tanypp)/(\ttanxpp-\tanxpp)} 
        \pgfmathsetmacro{\yy}{\kk*(\xx-\tanxpp) + \tanypp} 
        \pgfmathsetmacro{\yyy}{\kk*(\xxx-\tanxpp) + \tanypp} 
        
        \pgfmathsetmacro{\Cx}{\Ny*\tanzzpp-\Nz*\tanyypp}
        \pgfmathsetmacro{\Cy}{\Nz*\tanxxpp-\Nx*\tanzzpp}
        \pgfmathsetmacro{\Cz}{\Nx*\tanyypp-\Ny*\tanxxpp}
        
        \pgfmathsetmacro{\CCx}{\Ny*\ttanzzpp-\Nz*\ttanyypp}
        \pgfmathsetmacro{\CCy}{\Nz*\ttanxxpp-\Nx*\ttanzzpp}
        \pgfmathsetmacro{\CCz}{\Nx*\ttanyypp-\Ny*\ttanxxpp}
    
        \draw[thick,-] (1,0,0) -- (-1,0,0) node[anchor=south]{};
        \draw[thick,-] (0,1,0) -- (0,-1,0) node[anchor=north west]{};
        \draw[thick,-] (0,0,-0.5) -- (0,0,1) node[anchor=south]{};
    
        \draw[thick,dashed] (0,0,0) -- (\tanxpp,\tanypp,1) node[anchor=south]{};
        \draw[thick,dashed] (0,0,0) -- (\ttanxpp,\ttanypp,1) node[anchor=south]{};
        
        \begin{scope}[shift={(0.5,0,1)}, opacity=0.4]
            \filldraw[fill=orange!50] (-1.5,-1.5,0) -- (1.5,-1.5,0) -- (1.5,1.5,0) -- (-1.5,1.5,0) -- cycle;
        \end{scope}
        
        \filldraw[black] (\tanxxpp,\tanyypp,\tanzzpp) circle (0.3pt) node[anchor=north west] {};
        \filldraw[black] (\ttanxxpp,\ttanyypp,\ttanzzpp) circle (0.3pt) node[anchor=south east] {};
        
        \draw[thick,dashed] (\radius,0,0) arc (0:360:\radius);
        \shade[ball color=blue!10!white,opacity=0.5] (1cm,0) arc (0:-180:1cm and 1cm) arc (180:0:1cm and 1.8mm);
        
        \begin{scope}[rotate around z=\azimuthARC, rotate around x=\altitudeARC]
            \draw[thick,dashed,blue] (\radius,0,0) arc (0:180:\radius);
        \end{scope}
        
        \draw[thick,blue,dashed] (\xx,\yy,1) -- (\xxx,\yyy,1);
        
        \filldraw[black] (\tanxpp,\tanypp,1) circle (0.3pt) node[anchor=north west] {};
        \filldraw[black] (\ttanxpp,\ttanypp,1) circle (0.3pt) node[anchor=north east] {};
        
        \draw[thick,->,black] 
            (\tanxpp, \tanypp, 1) -- 
            (\tanxpp + \Cx/2, \tanypp + \kk * \Cx/2, 1) node[anchor=north east]{};
        
        \draw[thick,->,black] 
            (\ttanxpp, \ttanypp, 1) -- 
            (\ttanxpp - \CCx/1.5, \ttanypp - \kk * \CCx/1.5, 1) node[anchor=north west]{};
        
        \filldraw[red] (0,0,1) circle (0.3pt) node[anchor=north west] {$e_d$};
        
        \draw[thick,->,black] 
            (\tanxxpp, \tanyypp, \tanzzpp) -- 
            (\tanxxpp + \Cx/3, \tanyypp + \Cy/3, \tanzzpp + \Cz/3) node[anchor=south west] {};
        
        \draw[thick,->,black] 
            (\ttanxxpp, \ttanyypp, \ttanzzpp) -- 
            (\ttanxxpp - \CCx/3, \ttanyypp - \CCy/3, \ttanzzpp - \CCz/3) node[anchor=north west] {};
        
        \end{tikzpicture}

    \caption{Illustration of gnomonic projection. }
      \label{fig:gnomonic_proj}
    \end{figure}
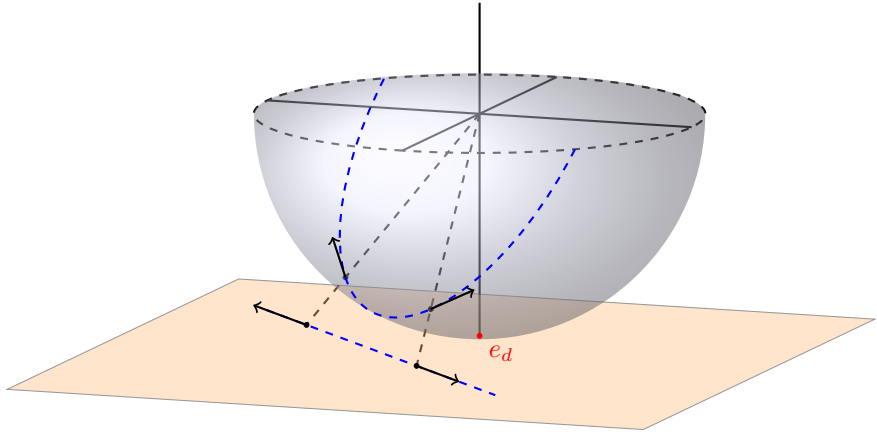

    For an $x=(x_1,\dots,x_d)^\top\in S_{\alpha} ^+ (U) \subseteq \S$, we define 
        \begin{align}
            G(x) \coloneq\left( \frac{x_1}{x_d},\dots,\frac{x_{d-1}}{x_d} \right)^\top .
        \end{align}
    This map $G(x)$ (or the map $G(x)+e_d$, so that its image is in the hyperplane $\R^{d-1} \times \{1\}$), is called the gnomonic projection. $G$ gives a diffeomorphism from $S_{\alpha} ^+ (U) \subseteq \S$ to the Euclidean ball $B_{\alpha} \subseteq \R^{d-1}$ centered at the origin and with radius $\tan \alpha$. Its inverse $F$ is given by
        \begin{align}\label{eqn:inverse gnomonic map}
            F(u) \coloneq \frac{1}{\sqrt{1+\|u\|_2 ^2}} (u+e_d), \quad \forall u \in B_\alpha .
        \end{align}
    Here we identify $u$ with a vector in $\R ^{d-1}\subseteq \R ^{d}$. A direct computation shows that, the tangent map of $F$ at $u$ is given by
        \begin{align}\label{eqn:tangent map}
            \diff F_u(X) = \frac{(1+ \|u\|_2 ^2  )X - \langle X ,u \rangle u -  \langle X ,u \rangle e_d }{\sqrt{( 1 + \|u\|_2^2)^{3}}}  , \  \forall X \in T_u \R^{d-1} \cong \R^{d-1} .
        \end{align}

    For a $u \in B_{\alpha}$, we first find the preimage of $\deV(F(u))$ under $\diff F_u$. By \eqref{eqn:inverse gnomonic map}, one can first verify that, for any $v \in B_{\alpha}$, 
        \begin{align}
            \langle F(u) , F(v) \rangle = \frac{\langle u, v\rangle +1}{\sqrt{1+\|u\|_2 ^2}\sqrt{1+\|v\|_2 ^2}}, 
        \end{align}
    and then by \eqref{eqn:tangent map}
        \begin{align}\label{eqn: linearity after tangent map}
            \proj_{F(u)} [F(v)] = F(v) - \langle F(v) , F(u) \rangle F(u) = \frac{\sqrt{1+\|u\|_2 ^2}}{\sqrt{1+\|v\|_2 ^2}} \diff F_u (v-u) .
        \end{align}
    Hence,
        \begin{align}\label{eqn:pull back deV}
            \deV(F(u)) = \diff F_u (X(u)),
        \end{align}
    with
        \begin{align}\label{eqn:def of X(u)}
            X(u) \coloneq \int_{B_{\alpha}} (v-u) \frac{\sqrt{1+\|u\|_2 ^2}}{\sqrt{1+\|v\|_2 ^2}}  \phi'\left(\langle AF(u), F(v)\rangle \right) \diff G_{\#}\mu(v),
        \end{align}
    a vector in $T_u \R^{d-1} \cong \R^{d-1}$. 
    Here, $G_{\#}\mu$ is the pushforward measure of $\mu$ induced by the gnomonic projection $G$. By symmetry of $u,v \in B_{\alpha}$ and because $A$ is a symmetric matrix, we readily obtain the following important observation:
        \begin{align}\label{eqn:vanishing integral of X}
            \begin{split}
                &\int_{B_\alpha} \frac{X(u)}{1+\|u\|_2 ^2} \diff G_{\#}\mu(u) \\ &= \int_{B_{\alpha}} \int_{B_{\alpha}} (v-u) \frac{\phi'\left(\langle AF(u), F(v)\rangle \right)}{\sqrt{(1+\|u\|_2 ^2)(1+\|v\|_2 ^2)}}  \diff G_{\#}\mu(v) \diff G_{\#}\mu(u) \\    &= 0.
            \end{split} 
        \end{align}

    Next, we rewrite the left hand side of \eqref{eqn:cone_ineqn} (or $Q_\nu(x,y)$) in terms of $X(u)$'s by replacing  $x,y \in S_{\alpha} ^+ (U)$ with $F(u),F(v)$ for $u,v \in B_\alpha$. By \eqref{eqn:tangent map} and \eqref{eqn:pull back deV}, we  obtain the following identities:
        \begin{align}\label{eqn:norm after tangent map}
            \|\deV(F(u))\|_2 ^2 = \frac{\|X(u)\|_2 ^2}{1+ \|u\|_2 ^2 } - \frac{\langle X(u) , u \rangle^2}{(1+\|u\|_2 ^2 )^2},
        \end{align}
    and 
        \begin{align}
            \begin{split}
                \left\langle \deV(F(u)) , \deV(F(v)) \right\rangle &= \frac{\left\langle X(u),X(v) \right \rangle}{\sqrt{(1+ \|u\|_2 ^2)(1+ \|v\|_2 ^2) }} 
                - \frac{\left\langle X(u),u \right \rangle\left\langle X(v),u \right \rangle}{\sqrt{(1+\|u\|_2 ^2)^{3}(1+\|v\|_2 ^2)}}
                \\  &- \frac{\left\langle X(v),v \right \rangle\left\langle X(u),v \right \rangle}{\sqrt{(1+\|v\|_2 ^2)^{3}(1+\|u\|_2 ^2)}} 
                + \frac{\left\langle X(u),u \right \rangle \left\langle X(v),v \right \rangle \left(\langle u,v  \rangle +1\right)}{\sqrt{(1+\|v\|_2 ^2)^3(1+\|u\|_2 ^2)^{3}}}.
            \end{split}
        \end{align}
Before we proceed, let us first explain our main ideas. Recall that $\alpha$ and $\epa$ are small parameters ($\alpha < \pi/20$, $\epa < 1/100$) so terms of the form $\langle X(u) ,v \rangle$ are small when $v \in B_\alpha$. Hence, after the change of variables $(x,y)\mapsto (F(u),F(v))$, the leading term on the left hand side of \eqref{eqn:cone_ineqn} becomes
    \begin{align*}
        \begin{split}
        J_1 \coloneq &\int_{B_{\alpha}} \int_{B_{\alpha}} \phi'\left(\langle AF(u), F(v)\rangle \right) \bigg[ 2\frac{\left\langle X(u),X(v) \right \rangle}{\sqrt{(1+ \|u\|_2 ^2)(1+ \|v\|_2 ^2) }} \\
&- \langle F(u), F(v) \rangle \left(\frac{\|X(u)\|_2 ^2}{1+ \|u\|_2 ^2 } 
 + \frac{\|X(v)\|_2 ^2}{1+ \|v\|_2 ^2 } \right)\bigg]\diff G_{\#}\mu(u) \diff G_{\#}\mu(v) .
        \end{split}
    \end{align*}
We also notice that, when $\alpha$ is suitably small, $\|\deV(F(u))\|_2 ^2 \sim \frac{\|X(u)\|_2 ^2}{1+ \|u\|_2 ^2 }$. 
To simplify the notations in the followings, we assume that $\tan\alpha = \sqrt{\delta}$ for some $\delta\in(0,1)$ to be determined later. In the following estimates, we frequently use the fact that $\|u\|_2^2 \leq \tan^2\alpha =\delta $. We see that $J_1=J_{11} + J_{12}$, 
where
    \begin{align*}
        \begin{split}
    J_{11} &\coloneq -\iint \phi'\left(\langle AF(u), F(v)\rangle \right)  \sqrt{(1+\|u\|_2^2)(1+\|v\|_2^2)} \left\| \frac{X(u)}{1+\|u\|_2^2} -  \frac{X(v)}{1+\|v\|_2^2}\right\|_2^2 \\
    J_{12} &\coloneq  2\iint \phi'\left(\langle AF(u), F(v)\rangle \right)  \frac{\|u\|_2^2-\langle u, v \rangle}{\sqrt{(1+\|u\|_2^2)(1+\|v\|_2^2)}} \frac{\|X(v)\|_2^2}{1+\|v\|_2^2}
        \end{split}
    \end{align*}
    and the double integrals above are over $B_\alpha\times B_\alpha$ and with respect to  $G_{\#}\mu \otimes  G_{\#}\mu$.
Clearly, 
    \begin{align*}
        \begin{split}
            J_{11} &\leq -(1-\epa) \int_{B_\alpha}\int_{B_\alpha}  \left\| \frac{X(u)}{1+\|u\|_2^2} -  \frac{X(v)}{1+\|v\|_2^2}\right\|_2^2 \diff G_{\#}\mu(u) \diff G_{\#}\mu(v)
        \\  &= -2(1-\epa) \int_{B_\alpha}  \frac{\|X(u)\|_2 ^2}{(1+\|u\|_2^2)^2}  \diff G_{\#}\mu(u) 
        \\  &\leq -\frac{2(1-\epa) }{1+\delta}\int_{B_\alpha}  \frac{\|X(u)\|_2 ^2}{1+\|u\|_2 ^2 }  \diff G_{\#}\mu(u) ,
        \end{split}
    \end{align*}
where the equality is by \eqref{eqn:vanishing integral of X}. Also,
    \begin{align*}
        J_{12} \leq 4\delta (1+\epa) \int_{B_\alpha}  \frac{\|X(u)\|_2 ^2}{1+\|u\|_2 ^2 }  \diff G_{\#}\mu(u).
    \end{align*}
By setting $\alpha\in(0,\frac{\pi}{20})$, so that $\delta\in(0,\tan^2\frac{\pi}{20})$, the above two displays imply that 
\begin{equation}
    \label{eq:J1}
    J_1 \le (-1.5 +2.5 \epa)\int \|\cY[\mu]\|^2 \ud \mu
\end{equation}
which gives us a buffer to handle the remaining terms when establishing~\eqref{eqn:cone_ineqn}.

To control these terms, observe that
    \begin{align*}
        \iint Q_{\mu}(x,y) \diff \mu(x) \diff \mu(y) - J_1 = J_2+J_3+J_4,
    \end{align*}
where
    \begin{align*}
        \begin{split}
            J_2 & \coloneq   \int_{\S} \int_{\S} 2\left( \langle \deV(x),Ay \rangle + \langle \deV(y), Ax \rangle \right) \langle \deV(x),y \rangle \cdot \phi'' \left(\langle Ax, y\rangle \right) \diff \mu(x) \diff \mu(y)
            \\ &\leq \|A\|_2 \cdot  \|\phi'-1\|_{C^1(S)} \int_{\S}\int_{\S} 2 (\|\deV(x) \|_2 ^2 + \|\deV(x) \|_2\|\deV(y) \|_2 ) \diff \mu(x) \diff \mu(y)
            \\  &\leq 4\epa \int_{\S} \|\deV(x) \|_2 ^2 \diff \mu(x) , 
        \end{split}
    \end{align*}
and
    \begin{align*}
        \begin{split}
            J_3 &\coloneq \int_{B_{\alpha}} \int_{B_{\alpha}} \phi'\left(\langle AF(u), F(v)\rangle \right) \cdot \langle F(u), F(v) \rangle \\
& \cdot \left( \frac{\langle X(u) , u \rangle^2}{(1+\|u\|_2 ^2 )^2} + \frac{\langle X(v) , v \rangle^2}{(1+\|v\|_2 ^2 )^2} \right) \diff G_{\#}\mu(u) \diff G_{\#}\mu(v) 
        \\  &\leq 2\delta (1+\epa) \int_{B_\alpha}  \frac{\|X(u)\|_2 ^2}{1+\|u\|_2 ^2 }  \diff G_{\#}\mu(u),
        \end{split}
    \end{align*}
and 
    \begin{align*}
        \begin{split}
            J_4 &\coloneq 2 \int_{B_{\alpha}} \int_{B_{\alpha}} \phi'\left(\langle AF(u), F(v)\rangle \right) \bigg( - \frac{\left\langle X(u),u \right \rangle\left\langle X(v),u \right \rangle}{\sqrt{(1+\|u\|_2 ^2)^{3} (1+\|v\|_2 ^2)}}
            \\ & - \frac{\left\langle X(v),v \right \rangle\left\langle X(u),v \right \rangle}{\sqrt{(1+\|v\|_2 ^2)^{3}(1+\|u\|_2 ^2)}} + \frac{\left\langle X(u),u \right \rangle \left\langle X(v),v \right \rangle \left(\langle u,v  \rangle +1\right)}{\sqrt{(1+\|v\|_2 ^2)^{3}(1+\|u\|_2 ^2)^{3}}} \bigg) \diff G_{\#}\mu(u) \diff G_{\#}\mu(v) 
            \\  &\leq 6 \delta (1+\epa) \int_{B_\alpha}  \frac{\|X(u)\|_2 ^2}{1+\|u\|_2 ^2 }  \diff G_{\#}\mu(u).
        \end{split}
    \end{align*}
Together with $\epa < \frac{1}{100},\delta \leq \tan^2 (\frac{\pi}{20})$ and  $\frac{\|X(u)\|_2 ^2}{1+\|u\|_2 ^2 } \geq \|\deV(F(u))\|_2 ^2$ in \eqref{eqn:norm after tangent map}, we get
    \begin{align*}
        \begin{split}
            &\int_{\S} \int_{\S} Q_{\mu}(x,y) \diff \mu(x) \diff \mu(y) 
            \\  &\leq
        -\left( 1.5 - 2.5 \epa - 4\epa - 8 \delta (1+\epa) \right) \int_{B_\alpha}  \frac{\|X(u)\|_2 ^2}{1+\|u\|_2 ^2 }  \diff G_{\#}\mu(u)
        \\ &\leq - \int_{B_\alpha}  \|\deV(F(u))\|_2 ^2  \diff G_{\#}\mu(u)
        = - \int_{\S} \|\deV(x) \|_2 ^2  \diff \mu(x) .
        \end{split}
    \end{align*}
This completes the proof of \eqref{eqn:cone_ineqn}.
\end{proof}

\begin{proof}[Proof of \Cref{thm:almost exponential decay I_t}]
   Fix $t>0$ and define the positive measures $\nu_1, \nu_2$ on $\S$ by
        \begin{align*}
            \nu_1 (\cdot ) = \mu_t(\cdot \cap S_{\alpha} ^+ (t)), \quad \nu_2 (\cdot) = \mu_t (\cdot \, \setminus S_{\alpha} ^+ (t))
        \end{align*}
    and let
        \begin{align}\label{eqn:split mu_t}
            \baV_1(x) = \int_{\S} \proj_{x} [y] \phi'\left(\langle Ax, y\rangle \right) \diff \nu_1 (y), \quad  \baV_2(x) = \int_{\S} \proj_{x} [y] \phi'\left(\langle Ax, y\rangle \right) \diff \nu_2 (y).
        \end{align}
    We see that
        \begin{align*}
            \deV_t(x) = \baV_1(x) + \baV_2(x).
        \end{align*}
    By the explicit formula \eqref{eqn:split mu_t}, we have the estimates that $\|\deV_t(x)\|_2\leq (1+\epa)$, $\|\baV_1(x)\|_2\leq (1+\epa)$, and $\|\deV_t(x)-\baV_1(x)\|_2=\|\baV_2(x)\|_2 \leq  (1+\epa) \mu_t \left( \S \backslash S_{\alpha} ^+ (t)\right)$ for any $x \in \S$. These bounds imply that
        \begin{align}\label{eqn:Yt V1 difference}
             \begin{split}
                 &\left| \langle \deV_t(x) , \deV_t(y) 
 \rangle -  \langle \baV_1(x) , \baV_1(y) \rangle \right| 
 \\ &= \left| \langle \baV_1(x), \baV_2(y) \rangle + \langle \baV_2(x), \baV_1(y) \rangle + \langle \baV_2(x), \baV_2(y) \rangle\right|
 \\ &\leq 3 (1+\epa)^2 \mu_t \left( \S \backslash S_{\alpha} ^+ (t)\right).
             \end{split}
        \end{align}
Hence, by  \eqref{eqn:derivative I_t 2}, we have 
        \begin{align*}
            \begin{split}
                \partial_t I_t &\leq \iint  \bigg[ 2\left( \langle \baV_1(x),Ay \rangle + \langle \baV_1(y), Ax \rangle \right) \langle \baV_1(x),y \rangle \cdot \phi'' \left(\langle Ax, y\rangle \right) \\
        & +  \left( 
            2\langle \baV_1(x), \baV_1(y) \rangle -   \langle x, y \rangle ( \|\baV_1(x)\|_2 ^2 + 
            \|\baV_1(y)\|_2 ^2 )
            \right) \cdot \phi'\left(\langle Ax, y\rangle \right) \bigg]  \diff \mu_t(x) \diff \mu_t(y)
            \\  &+ 24(1+\epa)^3 \mu_t \left( \S \backslash S_{\alpha} ^+ (t)\right).
            \end{split}
        \end{align*}
We can further split the above integral over $\S \times \S$ into integrals over $S_{\alpha} ^+ (t) \times S_{\alpha} ^+ (t)$ and $(\S \times \S) \backslash (S_{\alpha} ^+ (t) \times S_{\alpha} ^+ (t))$, and obtain that
            \begin{align*}
            \begin{split}
                \partial_t I_t &\leq \int_{S_{\alpha} ^+ (t)} \int_{S_{\alpha} ^+ (t)}   \bigg[ 2\left( \langle \baV_1(x),Ay \rangle + \langle \baV_1(y), Ax \rangle \right) \langle \baV_1(x),y \rangle \cdot \phi'' \left(\langle Ax, y\rangle \right) \\
        & +  \left( 
            2\langle \baV_1(x), \baV_1(y) \rangle -   \langle x, y \rangle ( \|\baV_1(x)\|_2 ^2 + 
            \|\baV_1(y)\|_2 ^2 )
            \right) \cdot \phi'\left(\langle Ax, y\rangle \right) \bigg] \diff \mu_t(x) \diff \mu_t(y)
            \\  &+ 48(1+\epa)^3 \mu_t \left( \S \backslash S_{\alpha} ^+ (t)\right).
            \end{split}
        \end{align*}
Together  with \Cref{lemma:derivative I_t exponential inequality}, the above inequality yields
    \begin{align*}
        \begin{split}
            \partial_t I_t &\leq  - \nu_1\left(\S\right)\int_{\S} \|\baV_1(x) \|_2 ^2  \diff \nu_1 (x)  + 48(1+\epa)^3 \mu_t \left( \S \backslash S_{\alpha} ^+ (t)\right)
            \\  &  =  - \left(1-\nu_2\left(\S\right) \right) \int_{\S} \|\baV_1(x) \|_2 ^2  \diff \nu_1 (x)  + 48(1+\epa)^3 \mu_t \left( \S \backslash S_{\alpha} ^+ (t)\right)
            \\ &\leq  - \int_{\S} \|\baV_1(x) \|_2 ^2  \diff \nu_1 (x)  + 49(1+\epa)^3 \mu_t \left( \S \backslash S_{\alpha} ^+ (t)\right).
        \end{split}
    \end{align*}
Note that by \eqref{eqn:Yt V1 difference}, and the estimates that $\|\baV_1(x)\|_2\leq (1+\epa)$ and $\nu_2\left(\S\right) = \mu_t \left( \S \backslash S_{\alpha} ^+ (t)\right)$, 
    \begin{align*}
        \begin{split}
            &\int_{\S} \|\baV_1(x) \|_2 ^2  \diff \nu_1 (x) = \int_{\S} \|\baV_1(x) \|_2 ^2   \diff \mu_t(x)  - \int_{\S} \|\baV_1(x) \|_2 ^2  \diff \nu_2 (x)
            \\  &\geq \int_{\S} \|\deV_t(x)  \|_2 ^2   \diff \mu_t(x)  - 3 (1+\epa)^2 \mu_t \left( \S \backslash S_{\alpha} ^+ (t)\right) - (1+\epa)^2 \mu_t \left( \S \backslash S_{\alpha} ^+ (t)\right).
        \end{split}
    \end{align*}
Hence, because $49(1+\epa)^3 + 4(1+\epa)^2 \leq 100$, we obtain that 
    \begin{align*}
        \partial_t I_t \leq -\int_{\S} \|\deV_t(x)  \|_2 ^2   \diff \mu_t(x)  +100\mu_t \left( \S \backslash S_{\alpha} ^+ (t)\right).
    \end{align*}
This completes the proof for \Cref{thm:almost exponential decay I_t}.

\end{proof}

To conclude this section, we complete the proof of \Cref{thm:classical Lojasiewicz} as a corollary of \Cref{lemma:derivative I_t exponential inequality} and \Cref{remark: large beta small cap}.

\begin{proof}[Proof of \Cref{thm:classical Lojasiewicz}]
  Recall first that $\mathsf{E}_{\beta}[\delta_{u}] = \max_{\mu \in \mathcal{P} (\S)} \mathsf{E}_{\beta}[\mu]$ by \Cref{thm:simple local max} (or \Cref{thm:global_max}). Let $\mu_t$ be the Wasserstein gradient flow initialized at $\mu_0 = \mu$. For any $t_1 \geq 0$, we define the diffeomorphisms $\{\phi_{t_1\to t}(x)\}_{t \geq t_1}$ on $\mathbb{S}^{d-1}$ by solving the ODE
    \begin{align*}
        \partial_t \phi_{t_1\to t}(x) = \mathcal{X}_{\mu_t,\beta}(\phi_{t_1\to t}(x)), \text{ with }  \phi_{t_1\to t_1}(x) = x, \ \forall x \in \mathbb{S}^{d-1}.
    \end{align*}
    We first show that $\supp (\mu_t) \subseteq S_{\alpha} ^+ (u)$ for any $t \geq 0$. Fix an arbitrary $t_1 \geq 0$, and we assume that $\supp (\mu_{t_1}) \subseteq S_{\alpha} ^+ (u)$. Let $x_{t_1} \in \supp (\mu_{t_1})$ achieve $\min_{x \in \supp (\mu_{t_1})} \langle x , u \rangle$, then
    \begin{align*}
         \begin{split}
             &\frac{\de}{\de t} \bigg|_{t=t_1} \langle \phi_{t_1\to t}(x_1) , u \rangle = \langle \mathcal{X}_{\mu_{t_1},\beta}(x_{t_1}) , u \rangle  = \int_{\S} \langle\proj_{x_{t_1}} [y] , u \rangle e^{\beta \langle  x_{t_1} ,  y\rangle}  \diff \mu_{t_1}(y)
             \\ &= \int_{\S} \left( \langle y  , u \right\rangle - \langle x_{t_1} , y\rangle \langle x_{t_1} , u\rangle )
 e^{\beta \langle  x_{t_1} ,  y\rangle}  \diff \mu_{t_1}(y) 
 \\ &\geq \int_{\S} \left(\langle x_{t_1}  , u \rangle - \langle x_{t_1} , y\rangle \langle x_{t_1} , u\rangle \right) 
 e^{\beta \langle  x_{t_1} ,  y\rangle}  \diff \mu_{t_1}(y) \geq 0,
         \end{split}
    \end{align*}
where the last inequality is because $\langle x_{t_1} , u\rangle >0 $ and $1 \geq \langle x_{t_1} , y\rangle >0$ when $y \in \supp (\mu_{t_1}) \subseteq S_{\alpha} ^+ (u)$. Hence, $\min_{x \in \supp (\mu_{t})} \langle x , u \rangle$ is nondecreasing in $t$, and then $\supp (\mu_t) \subseteq S_{\alpha} ^+ (u)$ for any $t \geq 0$. 

Define $I_t = \int_{\S} \| \mathcal{X}_{\mu_t,\beta}(x)\|_2 ^2 \diff \mu_t(x)$ so that  $I_t = \partial_t \mathsf{E}_{\beta}[\mu_t]$. Then, combine \eqref{eqn:derivative I_t 2},  \Cref{lemma:derivative I_t exponential inequality}, and \Cref{remark: large beta small cap}, together giving that $\partial_t I_t \leq - \frac{e^{\beta}}{10}I_t$. As a consequence, we see that $I_t \leq e^{-\frac{e^{\beta}}{10} t} I_0$. Using similar arguments as in the proof of \Cref{thm:main thm} in \Cref{sec:long time synchronization} we can show that there exists $x_{\infty} \in S_{\alpha} ^+ (u)$, such that $W_2( \mu_t,\delta_{x_{\infty}}) \leq \int_{t} ^{+\infty} I_r ^{\frac{1}{2}} \diff r \leq  20e^{-\beta} e^{-\frac{e^{\beta}}{20} t} I_0^{\frac{1}{2}}$, which goes to $0$ exponentially fast. Then, we integrate $\partial_t I_t \leq - \frac{e^{\beta}}{10}I_t$ from $0$ to $+\infty$, and find that
    \begin{align*}
        - I_{0}= I_{\infty} - I_{0} \leq \frac{e^{\beta}}{10} \left( - \mathsf{E}_{\beta}[\delta_{x_{\infty}}] + \mathsf{E}_{\beta}[\mu] \right).
    \end{align*}
 
\end{proof}

\section{Some Basic Derivatives and Estimates for the Proof of \Cref{thm:exponential small outside positive cap}}
If $\phi'\equiv 1$ in \eqref{eqn:modified wss gradient} so that $ \deV_t=V_t$ from~\eqref{eqn:Kuramoto direction}, then~\eqref{eqn:modified gradient flow} coincides with the classical Kuramoto model. Our main strategy is to study $f_t$ as a perturbation of the Wasserstein gradient flow driven by $V_t(x)$. In this section, we gather various perturbative results in this direction. We first define the perturbation
    \begin{align*}
        W_t(x) \coloneq \deV_t(x) - V_t(x).
    \end{align*}
    Recall that the size of this perturbation is controlled by the parameter $\epa$ defined as

\begin{equation}
    \label{eq:def_epa2}
    \epa = (\|A\|_2 +2) \cdot \|\phi'-1\|_{C^1(S)}\,.
\end{equation}
    
Observe that $\deV_t(x), V_t(x)$ can be viewed as vector fields defined on $\R ^d$ although we mainly care about $x \in \S$. The following three kinds of terms appear in our arguments. 

\begin{lemma}\label{lem:Kuramoto perturbation}
    For any $x \in \S$, we have that
    \begin{align*}
        \|W_t(x) \|_2 \leq \epa, \quad \| \nabla W_t(x)\|_2 \leq \epa
    \end{align*}
    where $\nabla$ is the standard gradient on $\R^d$.
    Also, 
    \begin{align*}
        \left | \int_{\S} \langle \partial_t W_t(x) , \deV_t(x) \rangle \diff \mu_t(x) \right | \leq \epa \cdot I_t.
    \end{align*}
\end{lemma}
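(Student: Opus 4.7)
The plan is to prove the three bounds in sequence, with the first two being direct computations and the third requiring the continuity equation for $\mu_t$.

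\textbf{First bound.} Writing
$$W_t(x) = \int_{\S} \proj_x[y]\bigl(\phi'(\langle Ax, y\rangle) - 1\bigr) \diff \mu_t(y),$$
the estimate $\|W_t(x)\|_2 \le \epa$ follows immediately from $\|\proj_x[y]\|_2 \le \|y\|_2 = 1$, together with $|\phi'(\langle Ax, y\rangle) - 1| \le \|\phi'-1\|_{L^{\infty}(S)} \le \|\phi'-1\|_{C^1(S)} \le \epa$ (since $\|A\|_2+2\ge 1$), integrated against the probability measure $\mu_t$.

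\textbf{Second bound.} I would differentiate under the integral sign. The Euclidean Jacobian of $x\mapsto \proj_x[y] = y - \langle x,y\rangle x$ is $-\bigl(xy^\top + \langle x,y\rangle I_d\bigr)$, whose operator norm is at most $2$ for $x,y\in\S$. The gradient of $x\mapsto \phi'(\langle Ax, y\rangle)-1$ is $\phi''(\langle Ax, y\rangle)\,Ay$, of Euclidean norm at most $\|A\|_2\|\phi''\|_{L^{\infty}(S)}$. By the product rule, the Jacobian of the integrand is bounded in operator norm by
$$2\,|\phi'(\langle Ax,y\rangle)-1| + \|\proj_x[y]\|_2\cdot\|Ay\|_2\,|\phi''(\langle Ax, y\rangle)|\le 2\,\|\phi'-1\|_{L^{\infty}(S)} + \|A\|_2\,\|\phi''\|_{L^{\infty}(S)},$$
which is at most $(\|A\|_2+2)(\|\phi'-1\|_{L^{\infty}(S)}+\|\phi''\|_{L^{\infty}(S)}) = \epa$. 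Integration against $\mu_t$ preserves the bound.

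\textbf{Third bound.} Here I would use the distributional form of the continuity equation~\eqref{eqn:modified gradient flow} applied componentwise to the $\R^d$-valued test map $h_x(y) \coloneq \proj_x[y]\bigl(\phi'(\langle Ax, y\rangle)-1\bigr)$. Since $\deV_t(y)\in T_y\S$, the Riemannian and Euclidean gradients of each component of $h_x$ give the same inner product with $\deV_t(y)$, so
$$\partial_t W_t(x) = \int_{\S} [\nabla_y h_x(y)]\,\deV_t(y) \diff \mu_t(y).$$
Direct computation of the Euclidean Jacobian splits this into two terms, one proportional to $(I-xx^\top)\deV_t(y)(\phi'(\langle Ax,y\rangle)-1)$ and one equal to $\proj_x[y]\,\langle Ax,\deV_t(y)\rangle\,\phi''(\langle Ax,y\rangle)$. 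Taking the inner product with $\deV_t(x)$ and using $\langle x,\deV_t(x)\rangle=0$ collapses $I-xx^\top$ to $I$ in the first term, so after integrating against $\mu_t(x)$ one obtains a sum of two double integrals. Bounding each integrand by Cauchy-Schwarz in $(x,y)$, and then applying Jensen's inequality to pass from $\bigl(\int\|\deV_t\|_2\diff\mu_t\bigr)^2$ to $\int\|\deV_t\|_2^2\diff\mu_t = I_t$, yields a total bound of $\bigl(\|\phi'-1\|_{L^{\infty}(S)} + \|A\|_2\,\|\phi''\|_{L^{\infty}(S)}\bigr)\,I_t \le \epa\,I_t$.

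The only delicate point is the third step, where one must rigorously justify using the continuity equation against the vector-valued, $x$-parameterized test function $h_x$; once this is in place, the remaining estimates are routine bookkeeping using the definition of $\epa$ in~\eqref{eq:def_epa2}.
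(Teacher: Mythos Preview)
Your proposal is correct and follows essentially the same approach as the paper's proof: direct estimation of $W_t$ and $\nabla W_t$ via the product rule, and computation of $\partial_t W_t(x)$ by differentiating under the integral using the continuity equation, followed by the same pairing with $\deV_t(x)$. The only cosmetic difference is that in the third bound the paper uses $\|\deV_t(x)\|\|\deV_t(y)\|\le \tfrac12(\|\deV_t(x)\|^2+\|\deV_t(y)\|^2)$ and symmetry in $(x,y)$, whereas you use Jensen on $\bigl(\int\|\deV_t\|\,\diff\mu_t\bigr)^2$; both yield $I_t$.
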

\begin{proof}
    Because $x,y \in \S$ in \eqref{eqn:wss gradient}, we see that
    \begin{align*}
        \begin{split}
            & \|W_t(x) \|_2  = \big\|\deV_t(x) - V_t(x) \|_2 = \| \int_{\S} \proj_{x} [y] (\phi'\left(\langle Ax, y\rangle \right)-1) \diff \mu_t(y) \big\|_2
            \\  &\leq \|\phi'-1\|_{C^1({S)}}\int_{\S} \diff \mu_t(y) \leq \epa.
        \end{split}
    \end{align*}
    Similarly, we see that
    \begin{align*}
            \begin{split}
                &\|\nabla W_t(x)\|_2 
                \\  &= \big\| \int_{\S} \big[(\phi'\left(\langle Ax, y\rangle \right)-1)  (-y \otimes x - \langle x ,y \rangle \Id) 
                \\  &+ \phi'' \left(\langle Ax, y\rangle \right) \left(A y \otimes \proj_{x} [y]\right) \big] \diff \mu_t(y) \big\|_2
                \\  &\leq \|\phi'-1\|_{C^1({S)}} \left(\|A\|_2 +2  \right) \int_{\S}  \diff \mu_t(y) = \epa.
            \end{split}
        \end{align*}
    Finally, direct computations show that 
        \begin{align*}
            \begin{split}
                \partial_t W_t(x) &= \int_{\S} \nabla_{\deV_t(y)}[\proj_{x} [y] (\phi'\left(\langle Ax, y\rangle \right)-1)] \diff \mu_t(y) 
                \\  &=  \int_{\S} \proj_{x}\left[\left(\phi'\left(\langle Ax, y\rangle \right)-1 \right) \deV_t(y)+ \phi''\left(\langle Ax, y\rangle \right) \langle Ax,\deV_t(y) \rangle y \right] \diff \mu_t(y).
            \end{split}
        \end{align*}
    Hence,
        \begin{align*}
              \begin{split}
                  &\big | \int_{\S} \langle \partial_t W_t(x) , \deV_t(x) \rangle \diff \mu_t(x) \big | 
                  \\    &= \big | \iint \big [ (\phi'\left(\langle Ax, y\rangle \right)-1)  \langle \deV_t(y) , \deV_t(x) \rangle 
                  \\    &+  \phi''\left(\langle Ax, y\rangle \right) \langle Ax,\deV_t(y) \rangle \langle y,\deV_t(x) \rangle \big ] \diff \mu_t(x) \diff \mu_t(y) \big | 
                  \\    &\leq \frac{1}{2} \iint (1+\|A\|_2)\|\phi'-1\|_{C^1({S)}} \left( \|\deV_t(y)\|_2 ^2 + \|\deV_t(x)\|_2 ^2\right) \diff \mu_t(x) \diff \mu_t(y) 
                  \\    &=\epa \int_{\S} \| \deV_t(x) \|_2 ^2 \diff \mu_t(x).
              \end{split}
        \end{align*}
\end{proof}

\begin{lemma}\label{lemma:derivative M_t R_t}
    For the derivatives of $M_t$ and $R_t$, we have the following formulas:
        \begin{align*}
            \partial_t M_t = \int_{\mathbb{S}^{d-1}} \deV_t(y) \diff \mu_t(y) ,
        \end{align*}
    and 
        \begin{align*}
            \partial_t (R_t ^2) = 2 \int_{\mathbb{S}^{d-1}} \left( \| \deV_t (y)\|_2 ^2 - \left\langle \deV_t (y), W_t(y) \right\rangle \right) \diff \mu_t(y).
        \end{align*}
As a corollary, for any $\epa >0$ we see that
    \begin{align*}
        I_t - \epa ^2 \leq \partial_t (R_t ^2) \leq 3 I_t + \epa ^2 .
    \end{align*}
    
\end{lemma}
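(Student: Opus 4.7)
My plan is to read off the first identity directly from the weak form of the continuity equation~\eqref{eqn:measure continuity}, derive the second identity by differentiating $\|M_t\|_2^2$ and using the tangency $\langle y,\deV_t(y)\rangle=0$, and finally get the two-sided estimate for $\partial_t(R_t^2)$ by an elementary Young inequality combined with Lemma~\ref{lem:Kuramoto perturbation}. There is no real obstacle here; the only thing to keep straight is the interplay between the ambient gradient on $\R^d$ and the spherical gradient.

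\textbf{Step 1: Formula for $\partial_t M_t$.} Apply the weak form of the continuity equation to each coordinate function $h_i(x)=x_i$. Since $h_i$ extends to $\R^d$ with $\nabla h_i=e_i$, we have $\grade h_i(x)=\proj_x e_i$. Because $\deV_t(x)\in T_x\S$, one has $\langle \proj_x e_i,\deV_t(x)\rangle=\langle e_i,\deV_t(x)\rangle$. Summing over $i$,
\begin{equation*}
\partial_t M_t \;=\; \partial_t \int_\S x\,\diff\mu_t(x) \;=\; \int_\S \deV_t(x)\,\diff\mu_t(x).
\end{equation*}

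\textbf{Step 2: Formula for $\partial_t(R_t^2)$.} Write $R_t^2=\langle M_t,M_t\rangle$ and use Step 1 to obtain $\partial_t(R_t^2)=2\langle M_t,\int \deV_t(y)\,\diff\mu_t(y)\rangle$. For each $y\in\S$, decompose $M_t=\proj_y M_t+\langle y,M_t\rangle y=V_t(y)+\langle y,M_t\rangle y$. Because $\deV_t(y)\in T_y\S$, the $y$-component drops and $\langle M_t,\deV_t(y)\rangle=\langle V_t(y),\deV_t(y)\rangle$. Substituting $V_t=\deV_t-W_t$ gives
\begin{equation*}
\partial_t(R_t^2)\;=\;2\!\int\!\langle \deV_t(y)-W_t(y),\deV_t(y)\rangle\,\diff\mu_t(y)\;=\;2\!\int\!\bigl(\|\deV_t(y)\|_2^2-\langle \deV_t(y),W_t(y)\rangle\bigr)\diff\mu_t(y),
\end{equation*}
which is the claimed identity.

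\textbf{Step 3: Two-sided estimate.} By Cauchy--Schwarz and $\|W_t(y)\|_2\le\epa$ from Lemma~\ref{lem:Kuramoto perturbation}, together with the Young inequality $ab\le\tfrac12(a^2+b^2)$,
\begin{equation*}
\left|\int\langle \deV_t(y),W_t(y)\rangle\,\diff\mu_t(y)\right|\;\le\;\epa\!\int\!\|\deV_t(y)\|_2\,\diff\mu_t(y)\;\le\;\tfrac12 I_t+\tfrac12\epa^2.
\end{equation*}
Plugging this into the identity of Step 2 yields $I_t-\epa^2\le\partial_t(R_t^2)\le 3I_t+\epa^2$, completing the proof. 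The whole argument is essentially a one-page computation, so I do not anticipate any genuine difficulty beyond bookkeeping with the projection $\proj_y$.
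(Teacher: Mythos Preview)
Your proposal is correct and is precisely the ``direct computations'' that the paper alludes to; the paper's own proof simply states that the two identities follow from direct computation and that the inequalities follow from Lemma~\ref{lem:Kuramoto perturbation}, which is exactly what you carry out in Steps~1--3.
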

\begin{proof}
    The equations for $\partial_t M_t$ and $\partial_t (R_t ^2)$ follow from direct computations, and we can apply \Cref{lem:Kuramoto perturbation} to obtain the inequalities for $\partial_t (R_t ^2)$.
\end{proof}

\begin{lemma}\label{lemma:derivative I_t}
    For the derivative of $I_t$, we also have the following formula:
        \begin{align}\label{eqn:derivative I_t}
            \begin{split}
                \partial_t I_t &= \iint 
                \big[ 2 \left \langle \deV_t(x), \deV_t(y) \right \rangle - \left \langle x, y \right \rangle \left( \| \deV_t (x)\|_2 ^2 + \| \deV_t (y)\|_2 ^2 \right)  
            \\  & \quad + 2 \left \langle \deV_t(x), \partial_t W_t (x) \right \rangle + \nabla W_t(x) \left(\deV_t(x),\deV_t(x)\right) \big] \diff \mu_t(x) \diff \mu_t(y) .
            \end{split}
        \end{align}
As a corollary, if $\epa \in \left(0,\frac{1}{10} \right)$, we have that
    \begin{align*}
        \partial_t I_t \geq - 3 I_t.
    \end{align*}
In particular, for any $t_2 \geq t_1 \geq 0$,
    \begin{align*}
        I_{t_2} \geq I_{t_1} e^{-3(t_2-t_1)} .
    \end{align*}
\end{lemma}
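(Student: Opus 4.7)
The plan is a direct computation via the continuity equation, followed by the decomposition $\deV_t = V_t + W_t$ to handle the $V_t$-portion in closed form. Viewing $I_t = \int g(y,t)\,\ud \mu_t(y)$ with $g(y,t) = \|\deV_t(y)\|_2^2$, the general chain rule combined with $\partial_t \mu_t = -\dive(\mu_t \deV_t)$ gives
\begin{equation*}
\partial_t I_t = 2\!\int\! \langle \deV_t(y), \partial_t \deV_t(y)\rangle\,\ud\mu_t(y) + 2\!\int\! \nabla_y \deV_t(y)(\deV_t(y), \deV_t(y))\,\ud\mu_t(y),
\end{equation*}
where $\partial_t \deV_t(y)$ is taken at fixed $y$ and can itself be computed by pushing the continuity equation through the definition \eqref{eqn:modified wss gradient}; the $W_t$ piece of this is already recorded in \Cref{lem:Kuramoto perturbation}.

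For the $V_t$-contributions I would use $V_t(y) = \proj_y M_t$ to obtain two closed-form identities. First, $\partial_t V_t(y)\big|_{y \text{ fixed}} = \proj_y \partial_t M_t$, and combining $\partial_t M_t = \int \deV_t\,\ud\mu_t$ (from \Cref{lemma:derivative M_t R_t}) with the tangency $\deV_t(y) \in T_y \S$ yields $2\int \langle \deV_t, \partial_t V_t\rangle\,\ud\mu_t = 2\|\partial_t M_t\|_2^2 = 2\iint \langle \deV_t(x), \deV_t(y)\rangle\,\ud\mu_t(x)\,\ud\mu_t(y)$, which is the first term of the stated formula. Second, differentiating $V_t(y) = M_t - \langle y, M_t\rangle y$ component-wise and using $\langle y, \deV_t(y)\rangle = 0$ gives $\nabla V_t(y)(\deV_t, \deV_t) = -\langle y, M_t\rangle \|\deV_t(y)\|_2^2$; symmetrizing in $x,y$ produces the $-\iint \langle x,y\rangle(\|\deV_t(x)\|^2 + \|\deV_t(y)\|^2)\,\ud\mu_t\,\ud\mu_t$ term. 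The $W_t$-portions from $\partial_t \deV_t$ and $\nabla \deV_t$ then yield the last two summands of the identity essentially by definition.

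The corollary follows by bounding each term. The first summand equals $2\|\int \deV_t\,\ud\mu_t\|_2^2 \ge 0$. Since $|\langle x,y\rangle| \le 1$ on $\S\times\S$, the second is bounded below by $-2 I_t$. The two $W_t$-contributions are controlled using \Cref{lem:Kuramoto perturbation}: $\|\nabla W_t\|_2 \le \epa$ gives $|\int \nabla W_t(\deV_t, \deV_t)\,\ud\mu_t| \le \epa I_t$, and the perturbation estimate directly gives $|\int \langle \partial_t W_t, \deV_t\rangle\,\ud\mu_t| \le \epa I_t$. Together these contribute at worst $-O(\epa) I_t$, so that for $\epa < 1/10$ the total satisfies $\partial_t I_t \ge -(2 + O(\epa)) I_t \ge -3 I_t$, and a Gr\"onwall-type argument then delivers the exponential lower bound $I_{t_2} \ge I_{t_1} e^{-3(t_2 - t_1)}$.

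The main bookkeeping obstacle is the careful tracking of which time- and spatial-derivatives are taken at which fixed argument (the $y$ in $\partial_t \deV_t(y)|_y$ versus the divergence term from $\partial_t \mu_t$), and the repeated exploitation of the tangency $\deV_t(y) \perp y$ to cancel cross terms --- in particular, this cancellation in $\nabla V_t(\deV_t, \deV_t)$ is precisely what produces the clean $\langle x, y\rangle$ factor in the second term of the stated identity, and missing it would give a formula that does not match.
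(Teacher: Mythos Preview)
Your proposal is correct and matches the paper's approach exactly: the paper's proof is simply ``direct computations (similar to the second variation formula in \Cref{lem:second derivative of J}) and then apply \Cref{lem:Kuramoto perturbation}'', and your outline fills in precisely those computations via the decomposition $\deV_t = V_t + W_t$, handling the $V_t$-part in closed form using $V_t(y) = \proj_y M_t$ and the tangency $\deV_t(y)\perp y$. Your bounds for the corollary (first term nonnegative, second bounded below by $-2I_t$, the two $W_t$-terms each of size $O(\epa)I_t$) are exactly what the paper uses.
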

\begin{proof}
    \eqref{eqn:derivative I_t} follows from direct computations. Similar computations  also appear in the proof of \Cref{lem:second derivative of J}, so we omit the details here. We then apply \Cref{lem:Kuramoto perturbation} and obtain that $\partial_t I_t \geq - 3 I_t$.
\end{proof}

\section{Almost Kuramoto Model: Proof of \Cref{thm:exponential small outside positive cap}}\label{sec:proof of exponential small outside positive cap thm}

In this section, we analyze the dynamics \eqref{eqn:modified gradient flow} under the assumptions of \Cref{thm:main thm}. Before moving on to the proofs, we first explain some basic schemes. For the Kuramoto model, that is, $\phi'\equiv 1$, an important fact is that $R_t$ is nondecreasing. We can also deduce that $\partial_t R_t \geq 0$ in the Kuramoto
setting directly from \Cref{lemma:derivative M_t R_t}. On the other hand, the form of \Cref{lemma:derivative M_t R_t} cannot give $\partial_t R_t \geq 0$ for our more general transformer dynamics. In fact, one does not expect that $\partial_t R_t \geq 0$ in general, as illustrated in \Cref{Fig: Rt not monotone}. For this reason, we treat the case where $\partial_t R_t$ is large and the case where $\partial_t R_t$ is small (and even negative) separately. In \Cref{sec: Rt almost increasing}, we show that $R_t$ is almost increasing, in the sense that it cannot decrease by more than a factor of $R_0$; in \Cref{sec: antipodal unstable}, we show that when $\partial_t R_t$ is small, $U_t$ is almost static, and the density around the antipodal point $-U_t$  decreases exponentially fast.

\begin{figure}
\centering
\begin{subfigure}{.5\textwidth}
  \centering
  \includegraphics[width=50mm]{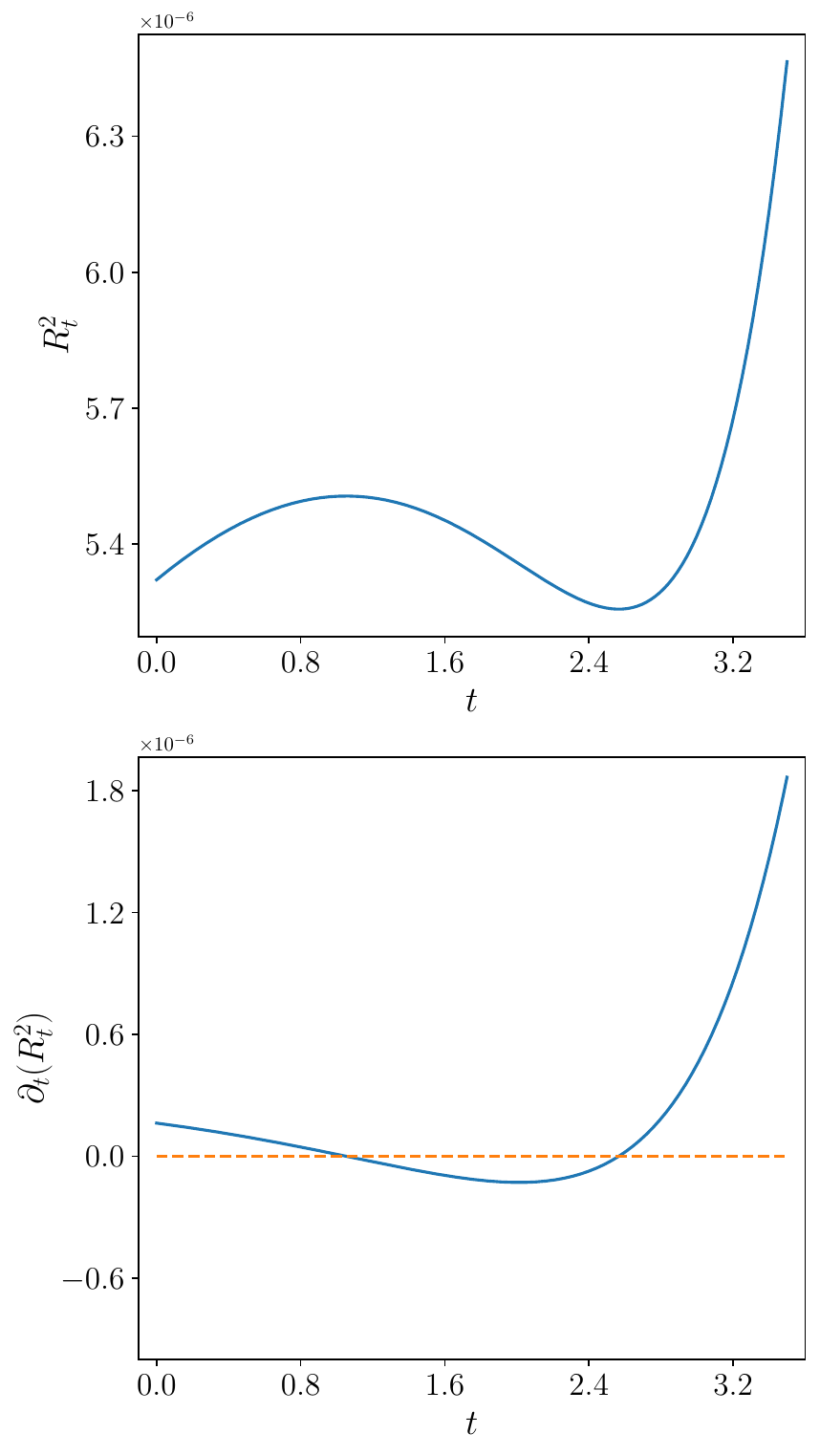}
\end{subfigure}%
\begin{subfigure}{.5\textwidth}
  \centering
  \includegraphics[width=50mm]{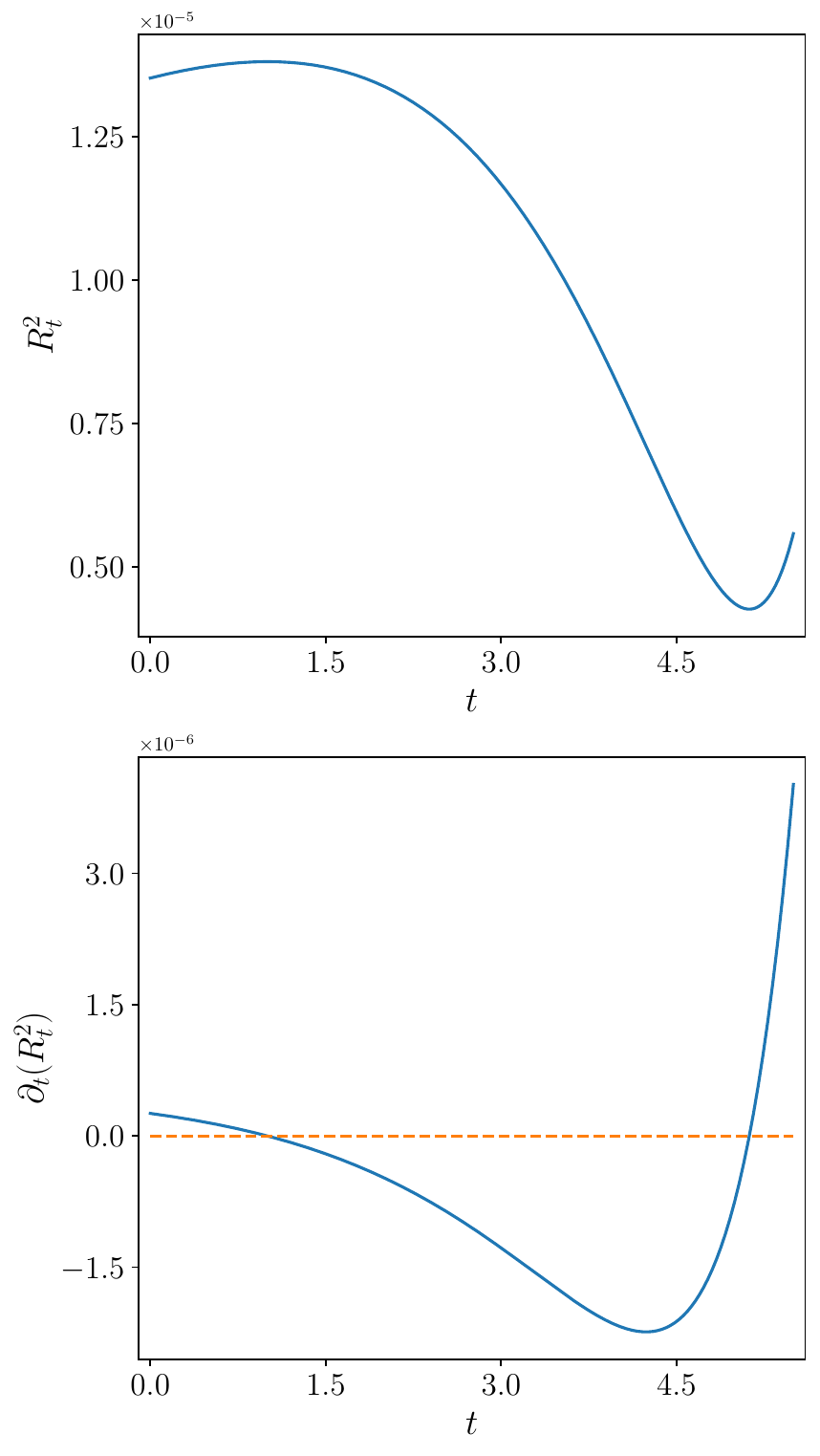}
\end{subfigure}
\caption{Examples of non-monotonic evolution of $R_t$. The top row shows $R_t^2$, and the bottom row shows $\partial_t(R_t^2)$, for two different initial profiles. Each column corresponds to a different initial profile. These plots illustrate that $R_t^2$ can exhibit non-monotonic behavior, with $\partial_t(R_t^2)$ taking both positive and negative values over time. $\phi(\langle A x, y\rangle ) = e^{0.1\langle x, y \rangle }$ in the plots.}
\label{Fig: Rt not monotone}
\end{figure}

\subsection{\texorpdfstring{$R_t$}{TEXT} is almost increasing}\label{sec: Rt almost increasing}

\begin{lemma}\label{lemma:increasing R_t}
    Fix a constant $\lambda \in (0,1)$ and an angle $\alpha \in (0, \pi/2)$. Assume that at some time $t_1 \geq 0$, 
        \begin{align*}
            R_{t_1} \geq \lambda R_0, \quad  \partial_t R_t \bigg|_{t= t_1} \geq \frac{1}{8} (\sin ^4 {\alpha})  \lambda^3 R_0 ^3 .
        \end{align*}
    If $\epa \leq 10^{-3} \sin^2 {\alpha} \lambda^2 R_0 ^2$, then $\partial_t (R_t ^2) > 0$ on $[t_1,t_1 +1]$, and
        \begin{align*}
            R_{t_1 + 1} ^2 - R_{t_1} ^2 \geq  \frac{\sin ^4 {\alpha}}{100} \lambda^4 R_0 ^4 .
        \end{align*}
\end{lemma}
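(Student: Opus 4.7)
The plan is to use the two perturbative estimates from Section~4 to bootstrap the initial lower bound on $\partial_t R_t$ at $t_1$ into a definite growth of $R_t^2$ over the unit interval $[t_1, t_1+1]$. The key observation is that \Cref{lemma:derivative M_t R_t} sandwiches $\partial_t(R_t^2)$ between affine functions of $I_t$, so a lower bound on $\partial_t R_t|_{t_1}$ yields a lower bound on $I_{t_1}$; this lower bound then propagates forward via \Cref{lemma:derivative I_t}.

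\textbf{Step 1: extract a lower bound on $I_{t_1}$.} From the hypothesis, $\partial_t(R_t^2)|_{t=t_1} = 2R_{t_1}\partial_t R_t|_{t_1} \ge \tfrac{1}{4}(\sin^4\alpha)\lambda^4 R_0^4$. Combining this with the upper estimate $\partial_t(R_t^2) \le 3 I_t + \epa^2$ from \Cref{lemma:derivative M_t R_t} and the smallness assumption $\epa^2 \le 10^{-6}(\sin^4\alpha)\lambda^4 R_0^4$ yields
\begin{equation*}
I_{t_1} \ge \tfrac{1}{3}\bigl(\tfrac{1}{4}(\sin^4\alpha)\lambda^4 R_0^4 - \epa^2\bigr) \ge c_1 (\sin^4\alpha)\lambda^4 R_0^4
\end{equation*}
for an explicit constant $c_1$ (a little less than $1/12$).

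\textbf{Step 2: propagate the lower bound forward.} By \Cref{lemma:derivative I_t}, $I_t \ge I_{t_1} e^{-3(t-t_1)}$ for $t\ge t_1$, so on $[t_1,t_1+1]$,
\begin{equation*}
I_t \ge c_1 e^{-3}(\sin^4\alpha)\lambda^4 R_0^4 \gg \epa^2.
\end{equation*}
Plugging this into the lower estimate $\partial_t(R_t^2) \ge I_t - \epa^2$ from \Cref{lemma:derivative M_t R_t} gives $\partial_t(R_t^2) > 0$ on the whole interval.

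\textbf{Step 3: integrate.} Integrating $\partial_t(R_t^2) \ge I_t - \epa^2$ on $[t_1,t_1+1]$ and using the exponential lower bound on $I_t$ yields
\begin{equation*}
R_{t_1+1}^2 - R_{t_1}^2 \ge c_1 (\sin^4\alpha)\lambda^4 R_0^4 \cdot \tfrac{1-e^{-3}}{3} - \epa^2.
\end{equation*}
Since $\tfrac{1-e^{-3}}{3} > 0.3$ and $\epa^2$ is dominated by $10^{-6}(\sin^4\alpha)\lambda^4 R_0^4$, the right-hand side is at least $\tfrac{1}{100}(\sin^4\alpha)\lambda^4 R_0^4$, which is the desired conclusion.

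The proof is essentially a constant-chasing argument; there is no conceptual obstacle, since both sides of the \Cref{lemma:derivative M_t R_t} sandwich are tight up to the $O(\epa^2)$ perturbation. The only mild care needed is to choose the implicit constant $c_1$ in Step~1 and to verify that the assumption $\epa \le 10^{-3}(\sin^2\alpha)\lambda^2 R_0^2$ is strong enough to absorb the $\epa^2$ terms in Steps~2 and~3, which it is by a comfortable margin.
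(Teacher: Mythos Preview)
Your proposal is correct and follows essentially the same argument as the paper: both use the upper bound $\partial_t(R_t^2)\le 3I_t+\epa^2$ from \Cref{lemma:derivative M_t R_t} to extract $I_{t_1}\gtrsim(\sin^4\alpha)\lambda^4 R_0^4$, propagate this via $I_t\ge I_{t_1}e^{-3(t-t_1)}$ from \Cref{lemma:derivative I_t}, and feed the result back into the lower bound $\partial_t(R_t^2)\ge I_t-\epa^2$. The paper chains these inequalities in a single display rather than breaking them into three labeled steps, but the logic and constants are the same.
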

\begin{proof}
    Combine \Cref{lemma:derivative M_t R_t} and \Cref{lemma:derivative I_t}, we see that for any $t \geq t_1$,
        \begin{align*}
            \begin{split}
                \partial_t (R_t ^2) \geq I_t - \epa ^2 \geq I_{t_1} e ^{-3(t-t_1)}- \epa^2 \geq \frac{1}{3} e ^{-3(t-t_1)}( \partial_t (R_t ^2)|_{t=t_1})-2\epa^2.
            \end{split}
        \end{align*}
    So, combining this inequality and the assumptions on $R_{t_1}, \partial_t R_t |_{t=t_1}, \epa$, we see that
    \begin{align*}
        \begin{split}
            \partial_t (R_t ^2) \geq \left(\frac{1}{12} e ^{-3(t-t_1)} -5 \cdot  10^{-5}\right)  (\sin ^4 {\alpha})  \lambda^4 R_0 ^4, 
        \end{split}
    \end{align*}
    which is positive when $t \in [t_1,t_1 +1]$. In particular, integrate the above inequality on $[t_1,t_1 +1]$, we see that
        \begin{align*}
            \begin{split}
                R_{t_1 + 1} ^2 - R_{t_1} ^2  \geq \frac{\sin ^4 {\alpha}}{100} \lambda^4 R_0 ^4 .
            \end{split}
        \end{align*}
\end{proof}

\begin{lemma}\label{lemma:derivative U_t}
    For the derivative of $U_t$, we have the following formula:
        \begin{align*}
            \partial_t U_t = \frac{1}{R_t} \proj_{U_t} \left[ \partial_t M_t \right] .
        \end{align*}
    As a corollary,
        \begin{align*}
            \|\partial_t U_t\|_2 \leq \frac{1}{R_t} I_t ^{\frac{1}{2}} \leq \frac{1}{R_t} \sqrt{\partial_t (R_t ^2) + \epa^2} .
        \end{align*}
\end{lemma}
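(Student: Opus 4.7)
The plan is essentially to differentiate the definition $U_t = M_t/R_t$ and then bound the resulting projected quantity by Cauchy--Schwarz together with the formula for $\partial_t M_t$ provided by \Cref{lemma:derivative M_t R_t}.

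For the identity, I would start from $U_t = M_t/R_t$ and apply the product/quotient rule, yielding $\partial_t U_t = R_t^{-1} \partial_t M_t - R_t^{-2} (\partial_t R_t) M_t$. Since $R_t^2 = \langle M_t, M_t\rangle$, differentiating gives $\partial_t R_t = \langle U_t, \partial_t M_t\rangle$. Substituting and using $M_t = R_t U_t$ produces $\partial_t U_t = R_t^{-1}(\partial_t M_t - \langle U_t, \partial_t M_t\rangle U_t) = R_t^{-1}\proj_{U_t}[\partial_t M_t]$, which is exactly the claimed formula. This step is purely algebraic and involves no estimation.

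For the first inequality in the corollary, note that $\proj_{U_t}$ is an orthogonal projection, hence norm--contractive, so $\|\partial_t U_t\|_2 \le R_t^{-1}\|\partial_t M_t\|_2$. Using the formula $\partial_t M_t = \int_{\S} \deV_t(y)\,\de\mu_t(y)$ from \Cref{lemma:derivative M_t R_t}, Jensen's inequality (or Cauchy--Schwarz, since $\mu_t$ is a probability measure) gives
\begin{align*}
\|\partial_t M_t\|_2 \;\le\; \int_{\S} \|\deV_t(y)\|_2 \,\de \mu_t(y) \;\le\; \Bigl(\int_{\S} \|\deV_t(y)\|_2^{2}\,\de \mu_t(y)\Bigr)^{1/2} = I_t^{1/2}.
\end{align*}

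The second inequality in the corollary is then obtained for free: the bound $I_t \le \partial_t(R_t^2) + \epa^2$ is precisely the lower estimate on $\partial_t(R_t^2)$ already recorded in the corollary of \Cref{lemma:derivative M_t R_t}, so taking square roots completes the proof. There is no serious obstacle here, since everything reduces to the two preceding lemmas; the only point requiring any care is the step $\|\partial_t M_t\|_2 \le I_t^{1/2}$, but this is an immediate consequence of Jensen's inequality on a probability measure.
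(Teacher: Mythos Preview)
Your proposal is correct and matches the paper's approach, which simply records ``Direct computations.'' You have written out precisely those computations: the quotient-rule derivation of the projection formula, the norm contraction under $\proj_{U_t}$ together with Jensen/Cauchy--Schwarz for $\|\partial_t M_t\|_2 \le I_t^{1/2}$, and the appeal to the lower bound $I_t - \epa^2 \le \partial_t(R_t^2)$ from \Cref{lemma:derivative M_t R_t}.
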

\begin{proof}
    Direct computations.
\end{proof}

We then define a smooth auxiliary function $\xi_{\alpha_1,\alpha_2}(a)$ on $\mathbb{R}$, such that $\xi_{\alpha_1,\alpha_2}(a) = 1$ when $a \geq \cos \alpha_1$ and $\xi_{\alpha_1,\alpha_2}(a) = 0$ when $a \leq \cos \alpha_2$, where $0 \leq \alpha_1 < \alpha_2 \leq \pi$.  It is possible to construct such a cutoff function by mollifying indicator functions on $\R$. We denote the derivative of $\xi_{\alpha_1,\alpha_2}(a)$ with respect to $a$ as $\xi_{\alpha_1,\alpha_2} ' (a) $. A trivial fact is that we can also assume that $0 \leq \xi_{\alpha_1,\alpha_2} ' (a) \leq 2/(\cos \alpha_1 - \cos \alpha_2)$. 

\begin{lemma}\label{lemma:Negative Cap Measure}
    For the derivative of the measure on the negative spherical cap, we have the following formula:
    \begin{align}\label{eqn:derivative negative spherical cap 1}
        \begin{split}
            &\frac{\de}{\de t} \int_{\mathbb{S}^{d-1}} \xi_{\alpha_1,\alpha_2}\left(-\left\langle y, U_t \right\rangle \right) \diff \mu_t(y) 
            \\  &=  \int_{\mathbb{S}^{d-1}} \xi_{\alpha_1,\alpha_2} '   \left(-\left\langle y, U_t \right\rangle \right) \left(-\left\langle y, \partial_t U_t \right\rangle 
 - \left\langle U_t, \deV_t(y) \right\rangle \right) \diff \mu_t(y)  
            \\  &\leq \frac{2}{\cos \alpha_1 - \cos \alpha_2} \left( \|\partial_t U_t\|_2 - R_t \sin ^2 \alpha_1 + \epa \right)_+ 
            \\  &\leq \frac{2}{\cos \alpha_1 - \cos \alpha_2} \left(\frac{1}{R_t} \sqrt{\partial_t (R_t ^2) + \epa ^2} - R_t \sin ^2 \alpha_1 + \epa \right)_+ ,
        \end{split}
    \end{align}
where $u_+ \coloneq \max \{u,0 \}$ for $u \in \mathbb{R}$.
\end{lemma}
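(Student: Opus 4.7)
My plan is to view $h_t(y)\coloneq\xi_{\alpha_1,\alpha_2}(-\langle y, U_t\rangle)$ as a time-dependent test function against the continuity equation~\eqref{eqn:modified gradient flow}, and then to bound the resulting integrand pointwise on the support of $\xi_{\alpha_1,\alpha_2}'$. The equality in~\eqref{eqn:derivative negative spherical cap 1} is essentially a product rule, while the two inequalities reduce to bookkeeping with \Cref{lem:Kuramoto perturbation} and \Cref{lemma:derivative U_t}.

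First I would differentiate under the integral. Since $h_t$ depends on $t$ both explicitly through $\xi$ and through $U_t$, the product rule combined with \eqref{eqn:modified gradient flow} yields
$$
\frac{\de}{\de t}\int_{\S} h_t\,\diff\mu_t \;=\; \int_{\S}\partial_t h_t\,\diff\mu_t + \int_{\S}\langle\grade_y h_t,\deV_t(y)\rangle\,\diff\mu_t(y).
$$
A direct computation gives $\partial_t h_t(y)=-\xi_{\alpha_1,\alpha_2}'(-\langle y, U_t\rangle)\,\langle y,\partial_t U_t\rangle$. For the spatial gradient, extending $h_t$ to $\R^d$ by the same formula yields $\nabla_y h_t(y)=-\xi_{\alpha_1,\alpha_2}'(-\langle y, U_t\rangle)\,U_t$; since $\deV_t(y)\in T_y\S$ we have $\langle y,\deV_t(y)\rangle=0$ and hence $\langle\grade_y h_t,\deV_t(y)\rangle=\langle\nabla_y h_t,\deV_t(y)\rangle=-\xi_{\alpha_1,\alpha_2}'(-\langle y, U_t\rangle)\,\langle U_t,\deV_t(y)\rangle$. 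Assembling the two pieces reproduces the identity in \eqref{eqn:derivative negative spherical cap 1}.

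Next I would establish the first inequality by a pointwise bound of the parenthesized factor on the support of $\xi_{\alpha_1,\alpha_2}'$. I would decompose $\deV_t=V_t+W_t$ with $V_t(y)=\proj_y M_t$ as in \Cref{sec:long time synchronization}. A short computation gives $\langle U_t, V_t(y)\rangle=\langle U_t, M_t\rangle-\langle y, M_t\rangle\langle y, U_t\rangle=R_t(1-\langle y, U_t\rangle^2)$. On the support of $\xi_{\alpha_1,\alpha_2}'(-\langle y, U_t\rangle)$, i.e.\ where $\langle y, U_t\rangle\in[-\cos\alpha_1,-\cos\alpha_2]$, we have $\langle y, U_t\rangle^2\leq\cos^2\alpha_1$, whence $\langle U_t, V_t(y)\rangle\geq R_t\sin^2\alpha_1$. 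Combined with $|\langle U_t,W_t(y)\rangle|\leq\|W_t(y)\|_2\leq\epa$ from \Cref{lem:Kuramoto perturbation} and $-\langle y,\partial_t U_t\rangle\leq\|\partial_t U_t\|_2$, the bracketed factor is bounded above by $\|\partial_t U_t\|_2-R_t\sin^2\alpha_1+\epa$ throughout the support. Since $\xi_{\alpha_1,\alpha_2}'\geq 0$ is bounded by $2/(\cos\alpha_1-\cos\alpha_2)$ and $\mu_t$ is a probability measure, this gives the first inequality, where the positive part accounts for the fact that if the bracket is negative then the integrand is itself nonpositive and may be replaced by zero. The second inequality follows immediately by substituting the bound $\|\partial_t U_t\|_2\leq R_t^{-1}\sqrt{\partial_t(R_t^2)+\epa^2}$ from~\Cref{lemma:derivative U_t}.

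The main obstacle, such as it is, is really only bookkeeping: one must carefully track the support of $\xi_{\alpha_1,\alpha_2}'$, identify that the leading positive contribution to $\langle U_t,\deV_t(y)\rangle$ comes from the Kuramoto piece $V_t$ (which by itself already pushes antipodal mass away), and validate the passage from the pointwise upper bound on the bracket to its positive part in the final estimate.
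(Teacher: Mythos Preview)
Your proposal is correct and follows essentially the same approach as the paper: you derive the identity via the continuity equation and the chain rule, then bound the bracketed factor pointwise on the support of $\xi_{\alpha_1,\alpha_2}'$ by decomposing $\deV_t=V_t+W_t$, using $\langle U_t,V_t(y)\rangle=R_t(1-\langle y,U_t\rangle^2)\ge R_t\sin^2\alpha_1$ together with \Cref{lem:Kuramoto perturbation}, and finally invoke \Cref{lemma:derivative U_t}. Your write-up is in fact more explicit than the paper's (which simply says ``direct computations'' for the equality and sketches the pointwise bound), and your justification of the positive part is a nice touch.
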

\begin{proof}
    The derivative equation in \eqref{eqn:derivative negative spherical cap 1} is by direct computations. For the first inequality, we notice that for those $y \in \S$ such that $-\cos \alpha_1 \leq \langle y,U_t\rangle \leq -\cos \alpha_2 $, we have that 
        \begin{align*}
            \begin{split}
                -\langle U_t , \deV_t(y) \rangle &= -\langle U_t,V_t(y)\rangle - \langle U_t,W_t(y)\rangle = -R_t \| \proj_{y} [U_t] \|_2 ^2 - \langle U_t,W_t(y)\rangle
                \\  &\leq - R_t \sin ^2 \alpha_1 + \epa .
            \end{split}
        \end{align*}
    The second inequality follows from \Cref{lemma:derivative U_t}.
\end{proof}

\begin{lemma}\label{lemma:almost increasing R_t derivative}
    For any $\beta \in (0, \pi/2)$, we have that
        \begin{align*}
            \partial_t (R_t ^2) \geq \frac{\sin ^2 \beta}{2} R_t ^2 \left(1- \frac{R_t + (1+ \cos \beta) \mu_{t}\left(S_{\beta} ^- (t) \right)}{\cos \beta}\right) - \epa^2 .
        \end{align*}
\end{lemma}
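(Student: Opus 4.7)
The plan is to start from the derivative formula in Lemma~\ref{lemma:derivative M_t R_t}, decompose the integrand into the ``mean-field'' part $V_t$ and the perturbation $W_t$, and then lower bound the resulting mean-field integral by partitioning $\S$ into three pieces determined by the angle $\beta$.

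First, since $\deV_t = V_t + W_t$,
\begin{align*}
    \|\deV_t(y)\|_2^2 - \langle \deV_t(y), W_t(y) \rangle = \langle \deV_t(y), V_t(y)\rangle = \|V_t(y)\|_2^2 + \langle W_t(y), V_t(y)\rangle,
\end{align*}
so Lemma~\ref{lemma:derivative M_t R_t} yields $\partial_t R_t^2 = 2\int \|V_t\|_2^2 \, \ud \mu_t + 2\int \langle W_t, V_t\rangle \, \ud \mu_t$. Using the identity $\|V_t(y)\|_2^2 = R_t^2 (1 - \langle y, U_t\rangle^2)$, the bound $\|W_t(y)\|_2 \le \epa$ from Lemma~\ref{lem:Kuramoto perturbation}, and the AM--GM inequality $2\epa R_t \sqrt{1-\langle y,U_t\rangle^2} \le \epa^2 + R_t^2(1-\langle y,U_t\rangle^2)$ followed by integration against the probability measure $\mu_t$, I would obtain the clean intermediate inequality
\begin{align*}
    \partial_t R_t^2 \;\ge\; R_t^2 \int_{\S} \bigl(1 - \langle y, U_t\rangle^2\bigr) \ud \mu_t(y) - \epa^2.
\end{align*}

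Second, I would lower bound this integral by restricting to the equatorial band $B \coloneq \{y\in\S : |\langle y, U_t\rangle| < \cos\beta\} = \S\setminus (S_\beta^+(t)\cup S_\beta^-(t))$, on which $1 - \langle y, U_t\rangle^2 \ge \sin^2\beta$, so that $\int_{\S}(1-\langle y, U_t\rangle^2) \ud \mu_t \ge \sin^2\beta \cdot \mu_t(B)$. The key step is then to lower bound $\mu_t(B) = 1 - p_+ - p_-$, where $p_\pm \coloneq \mu_t(S_\beta^\pm(t))$, in terms of $R_t$ and $p_-$. Writing $R_t = \int \langle y, U_t\rangle \ud \mu_t$ and inserting the pointwise estimates $\langle y, U_t\rangle \geq \cos\beta$ on $S_\beta^+(t)$, $\langle y, U_t\rangle > -\cos\beta$ on $B$, and $\langle y, U_t\rangle \geq -1$ on $S_\beta^-(t)$, one obtains
\begin{align*}
    R_t \;\ge\; \cos\beta \cdot p_+ - \cos\beta(1 - p_+ - p_-) - p_-,
\end{align*}
which rearranges to $p_+ \le \bigl(R_t + \cos\beta + (1-\cos\beta) p_-\bigr)/(2\cos\beta)$. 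Substituting back into $\mu_t(B) = 1 - p_+ - p_-$ gives exactly
\begin{align*}
    \mu_t(B) \;\ge\; \tfrac{1}{2}\Bigl(1 - \tfrac{R_t + (1+\cos\beta) p_-}{\cos\beta}\Bigr),
\end{align*}
and chaining the three estimates yields the claim. If the right-hand side above is negative, the target inequality holds trivially because the integral $\int(1-\langle y,U_t\rangle^2) \ud \mu_t$ is nonnegative.

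There is no real obstacle here: the only point of cleverness is recognizing that one should decompose $\S$ into the three natural $\beta$-regions and extract only the equatorial band, which decouples the quantity to be lower bounded ($\mu_t(B)$) from the quantities controlled by the mean ($R_t$) and the antipodal mass ($p_-$). The AM--GM step is what produces the exact constant $1$ in front of $R_t^2\int\sin^2\theta\,\ud\mu_t$ and is responsible for the $\epa^2$ remainder matching the statement.
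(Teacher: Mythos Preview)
Your proof is correct and follows essentially the same route as the paper: both reduce to the intermediate bound $\partial_t(R_t^2)\ge R_t^2\sin^2\beta\cdot\mu_t\bigl(\S\setminus(S_\beta^+(t)\cup S_\beta^-(t))\bigr)-\epa^2$ via the same AM--GM/Cauchy--Schwarz on $\langle V_t,W_t\rangle$, and then lower bound the equatorial mass using the identical splitting of $R_t=\int\langle y,U_t\rangle\,\ud\mu_t$ into the three $\beta$-regions. The only difference is cosmetic algebra (you solve for $p_+$ first, the paper substitutes $p_+=1-\mu_t(B)-p_-$ directly).
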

\begin{proof}
    According to \Cref{lemma:derivative M_t R_t}, we see that
        \begin{align*}
            \begin{split}
                \frac{1}{2} \partial_t R_t ^2 &= \int_{\S} \left( \| V_t(y) + W_t (y)\|_2 ^2 - \left\langle V_t(y) + W_t (y), W_t(y) \right\rangle \right) \diff \mu_t(y) 
                \\  &= \int_{\S} \left( \| V_t(y)\|_2 ^2 + \left\langle V_t(y), W_t(y) \right\rangle \right) \diff \mu_t(y)  
                \\  &\geq \frac{1}{2}\int_{\S} \left( \| V_t(y)\|_2 ^2 - \|W_t(y) \|_2 ^2 \right) \diff \mu_t(y)  
            \end{split}
        \end{align*}
    We notice that for $y \in \S \backslash \left(S_{\beta} ^+ (t) \cup S_{\beta} ^- (t) \right)$, we have that $\| V_t(y)\|_2 ^2 \geq R_t ^2 \sin^2 \beta$. Hence,
        \begin{align}\label{eqn:upper bound equatorial 0}
             \partial_t R_t ^2 \geq   R_t ^2 \sin^2 \beta \ \mu_{t}\left(\mathbb{S}^{d-1} \backslash (S_{\beta} ^+ (t) \cup S_{\beta} ^- (t)) \right) - \epa^2 .
        \end{align}
    On the other hand, by the definition of $R_t$, we see that
        \begin{align*}
            \begin{split}
                R_t &= \int_{\S} \langle y , U_t \rangle \diff \mu_t(y) 
                \\  
                &\geq \cos \beta \mu_t \left(S_{\beta} ^+ (t) \right) - \cos \beta \mu_{t}\left(\mathbb{S}^{d-1} \backslash (S_{\beta} ^+ (t) \cup S_{\beta} ^- (t)) \right) - \mu_t \left(S_{\beta} ^- (t) 
                \right)
                \\  &= \cos \beta  - 2\cos \beta \mu_{t}\left(\mathbb{S}^{d-1} \backslash (S_{\beta} ^+ (t) \cup S_{\beta} ^- (t)) \right) - (1+\cos \beta)\mu_t \left(S_{\beta} ^- (t) 
                \right),
            \end{split}
        \end{align*}
    and so, 
        \begin{align}\label{eqn:lower bound equatorial}
            \mu_{t}\left(\mathbb{S}^{d-1} \backslash (S_{\beta} ^+ (t) \cup S_{\beta} ^- (t)) \right) \geq \frac{1}{2} - \frac{R_t + (1+\cos \beta)\mu_t \left(S_{\beta} ^- (t) 
                \right)}{2 \cos \beta}.
        \end{align}
    Combine the above inequality and \eqref{eqn:upper bound equatorial 0}, we can obtain the formula in \Cref{lemma:almost increasing R_t derivative}. 
\end{proof}

\begin{lemma}\label{lemma:almost increasing R_t}
    Fix a constant $\lambda \in (1-10^{-3} ,1)$ and an angle $\alpha_1 \in [\pi/100, \pi/2)$. Assume that at time $t_1$, we have that 
    $R_{t_1} \geq R_0$, and there is a time window $[t_1,t_2]$, such that when $t \in [t_1,t_2]$,
        \begin{align*}
            \partial_t R_t \leq \frac{1}{8} (\sin ^4 {\alpha_1})  \lambda^3 R_0 ^3.
        \end{align*}
    Then, if
        \begin{align*}
            \epa \leq 10^{-3} (1-\lambda) \lambda^2 R_0 ^2,
        \end{align*}
    we have that for any $t \in [t_1,t_2]$,
        \begin{align}\label{eqn:R_t almost lower bound 0}
            \partial_t (R_t ^2) \geq -\frac{\sin ^2 {\beta}}{2 \cos {\beta}} \left( R_t - \lambda R_0\right) \left(R_t ^2 - \frac{3}{5}(1-\lambda)R_0(R_t+\lambda R_0)\right),
        \end{align}
    where $\beta$ is an angle in $(0,\pi/2)$ such that $\sin^2 \beta = \frac{1-\lambda}{5}R_0$. As a corollary, we have that for any $t \in [t_1,t_2]$, 
        \begin{align*}
            R_t \geq \lambda R_0 .
        \end{align*}
\end{lemma}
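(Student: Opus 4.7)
My plan is to first establish the differential inequality~\eqref{eqn:R_t almost lower bound 0} under the running assumption $R_\tau \geq \lambda R_0$ on $[t_1,t]$, and then bootstrap to the corollary via a continuity argument. Define $t^* := \sup\{s \in [t_1,t_2] : R_\tau \geq \lambda R_0 \text{ for all }\tau\in[t_1,s]\}$; on $[t_1,t^*]$ the goal is to bound the antipodal mass $p_-(t) := \mu_t(S_\beta^-(t))$ sharply enough to plug into \Cref{lemma:almost increasing R_t derivative}.

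\emph{Antipodal mass control.} On $[t_1,t^*]$, the hypothesis $\partial_t R_t \leq \tfrac{1}{8}(\sin^4\alpha_1)\lambda^3 R_0^3$ together with $R_t \leq 1$ and the smallness $\epa \leq 10^{-3}(1-\lambda)\lambda^2 R_0^2$ imply, via \Cref{lemma:derivative U_t}, that $\|\partial_t U_t\|_2 + \epa \leq R_t\sin^2\alpha_1$. Applying \Cref{lemma:Negative Cap Measure} with arbitrary $\alpha_2>\alpha_1$ then shows $t \mapsto \int_{\S}\xi_{\alpha_1,\alpha_2}(-\langle y,U_t\rangle)\,\ud\mu_t(y)$ is non-increasing; letting $\alpha_2 \to \alpha_1^+$ and using that $\mu_t$ is absolutely continuous (since $f_0 \in L^2(\S)$) gives $\mu_t(S_{\alpha_1}^-(t)) \leq \mu_{t_1}(S_{\alpha_1}^-(t_1))$. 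Splitting $R_{t_1} = \int_{\S}\langle y,U_{t_1}\rangle\,\ud\mu_{t_1}(y)$ over $S_{\alpha_1}^-(t_1)$ and its complement, and using $\langle y,U_{t_1}\rangle \leq -\cos\alpha_1$ on the cap and $\leq 1$ off it, yields $\mu_{t_1}(S_{\alpha_1}^-(t_1)) \leq (1-R_{t_1})/(1+\cos\alpha_1) \leq (1-R_0)/(1+\cos\alpha_1)$. Since $\sin^2\beta = (1-\lambda)R_0/5 < \sin^2(\pi/100) \leq \sin^2\alpha_1$, we have $\beta < \alpha_1$, so $S_\beta^-(t) \subseteq S_{\alpha_1}^-(t)$ and hence $p_-(t) \leq (1-R_0)/(1+\cos\alpha_1)$ on $[t_1,t^*]$.

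\emph{Derivation of~\eqref{eqn:R_t almost lower bound 0}.} Substituting into \Cref{lemma:almost increasing R_t derivative} gives
\[
    \partial_t(R_t^2) \geq \tfrac{\sin^2\beta}{2\cos\beta}R_t^2\!\left(\cos\beta - R_t - (1+\cos\beta)\tfrac{1-R_0}{1+\cos\alpha_1}\right) - \epa^2.
\]
Expanding the target polynomial $(R_t-\lambda R_0)(R_t^2-\tfrac{3}{5}(1-\lambda)R_0(R_t+\lambda R_0))$ and comparing, \eqref{eqn:R_t almost lower bound 0} reduces to an elementary polynomial inequality in $R_t \in [\lambda R_0,1]$. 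Using the $p_-$ bound above, the identity $\sin^2\beta = (1-\lambda)R_0/5$, the estimate $\cos\beta \geq 1-\sin^2\beta$, and the smallness of $\epa^2 \leq 10^{-6}(1-\lambda)^2\lambda^4 R_0^4$, this polynomial inequality holds throughout $[\lambda R_0,1]$; in particular at the critical value $R_t = \lambda R_0$ the right-hand side of~\eqref{eqn:R_t almost lower bound 0} vanishes while a direct computation shows $\partial_t(R_t^2) \geq \tfrac{(1-\lambda)^2\lambda^2 R_0^4}{20} > 0$.

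\emph{Corollary by continuity.} If $R_\tau$ were to hit $\lambda R_0$ at some first time $s^* \in (t_1,t_2]$, continuity of $R_t^2$ and strict positivity of $\partial_t(R_t^2)|_{R_t=\lambda R_0}$ would force $R_{s^*+\varepsilon} > \lambda R_0$ for all small $\varepsilon>0$, contradicting the choice of $s^*$. Hence $t^* = t_2$, so both \eqref{eqn:R_t almost lower bound 0} and $R_t \geq \lambda R_0$ hold on all of $[t_1,t_2]$.

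\emph{Main obstacle.} The crux is sharpness of the antipodal-mass bound. The naive Markov-type estimate $p_- \leq (1-R_t)/(1+\cos\beta)$, obtained directly from $R_t = \int\langle y,U_t\rangle\,\ud\mu_t$, produces a slightly \emph{negative} right-hand side in \Cref{lemma:almost increasing R_t derivative} at $R_t = \lambda R_0$, which cannot meet the zero target in~\eqref{eqn:R_t almost lower bound 0} there. The improvement comes from coupling \Cref{lemma:Negative Cap Measure} (the monotonicity of $\mu_t(S_{\alpha_1}^-(t))$, powered by the hypothesis on $\partial_t R_t$) with the strict initial condition $R_{t_1} \geq R_0$ rather than only $\lambda R_0$: this replaces $1-R_t$ by $1-R_0$ in the numerator, and the resulting gap $(1-\lambda)R_0$ is exactly the buffer needed. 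The calibration $\sin^2\beta = (1-\lambda)R_0/5$ is precisely what balances $\sin^2\beta$ against this buffer so that the polynomial inequality in the last step closes.
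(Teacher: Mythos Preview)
Your overall architecture (bootstrap on $R_t\ge\lambda R_0$, monotonicity of the antipodal cap mass via \Cref{lemma:Negative Cap Measure}, then plug into \Cref{lemma:almost increasing R_t derivative}) matches the paper's. The gap is in how you estimate the antipodal mass at the initial time.

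Your chain gives $p_-(t)\le \mu_{t_1}(S_{\alpha_1}^-(t_1))\le \dfrac{1-R_0}{1+\cos\alpha_1}$, and the quantity that enters \Cref{lemma:almost increasing R_t derivative} is $(1+\cos\beta)\,p_-(t)\le (1-R_0)\dfrac{1+\cos\beta}{1+\cos\alpha_1}$. Since $\beta$ is tiny ($\cos\beta\approx 1$) while $\alpha_1$ is allowed to range over $[\pi/100,\pi/2)$, this ratio runs from $\approx 1$ up to $\approx 2$. For $\alpha_1$ not small your ``direct computation'' at $R_t=\lambda R_0$ is false: with $\alpha_1=\pi/3$ and, say, $R_0=\tfrac12$, your lower bound on $\partial_t(R_t^2)$ becomes
\[
\frac{\sin^2\beta}{2\cos\beta}\,\lambda^2R_0^2\Bigl(\cos\beta-\lambda R_0-\tfrac{4}{3}(1-R_0)\Bigr)-\epa^2
\;\approx\;\frac{(1-\lambda)R_0}{10}\,\lambda^2R_0^2\cdot\frac{R_0-1}{3}\;<\;0,
\]
so neither \eqref{eqn:R_t almost lower bound 0} nor the bounce-back at $\lambda R_0$ follows from your estimate. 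The ``elementary polynomial inequality'' you assert does not close uniformly in $\alpha_1$; your argument only works at the bottom of the range $\alpha_1\approx\pi/100$.

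The paper removes this $\alpha_1$-dependence by a different bound on the initial mass: it loosens one step to $\mu_t(S_\beta^-(t))\le \mu_{t_1}(\S\setminus S_\beta^+(t_1))$ and then splits $\S\setminus S_\beta^+(t_1)=S_\beta^-(t_1)\cup\bigl(\S\setminus(S_\beta^+\cup S_\beta^-)\bigr)$. The first piece gets the Markov bound \emph{at angle $\beta$}, i.e.\ denominator $1+\cos\beta$ (uniformly $\approx 2$, independent of $\alpha_1$); the second piece---the $\beta$-equatorial band---is controlled separately via \eqref{eqn:upper bound equatorial 0} using the smallness of $\partial_t(R_t^2)$ at $t_1$. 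This produces the cubic $-R_t^3+b(t_1)R_t^2+c(t_1)$ with $b(t_1)\ge \lambda R_0+\tfrac{3}{5}(1-\lambda)R_0$ free of $\alpha_1$, which is exactly what factors as $-(R_t-\lambda R_0)(R_t^2-\tfrac{3}{5}(1-\lambda)R_0(R_t+\lambda R_0))$. To repair your argument you need this decomposition (or an equivalent device) rather than the single Markov bound at $\alpha_1$.
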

\begin{proof}
We use \Cref{lemma:almost increasing R_t derivative} to give a lower bound for $\partial_t R_t ^2$ on $[t_1,t_2]$, for which we actually need an upper bound for $\mu_{t}\left(S_{\beta} ^- (t) \right)$ when $t \in [t_1,t_2]$. 

First, take $\delta>0$ such that for $t \in [t_1 ,t_1 + \delta]$, we have that $R_t \geq \lambda R_0$. This is possible for some small $\delta>0$ first by the fact that $R_{t_1} \geq R_0$ and the continuity of the ODE solution $R_t$. We show that we can extend the interval $[t_1 ,t_1 + \delta]$ a little bit longer to an interval $[t_1 ,t_1 + \delta+\delta']$ for some $\delta'>0$ small, such that $R_t \geq \lambda R_0$ on $[t_1 ,t_1 + \delta+\delta']$. 

Because $\epa \leq 10^{-3} (1-\lambda) \lambda^2 R_0 ^2$ and $\partial_t R_t \leq \frac{1}{8} (\sin ^4 {\alpha_1})  \lambda^3 R_0 ^3$, we find that the right hand side of \eqref{eqn:derivative negative spherical cap 1} for any $t \in [t_1,t_1 +\delta]$ satisfies that,
    \begin{align*}
        \begin{split}
            &\frac{1}{R_t} \sqrt{\partial_t (R_{t} ^2)  + \epa ^2} - R_t \sin ^2 \alpha_1 + \epa \leq \sqrt{2 \frac{\partial_t R_t}{R_t}}  - R_t \sin ^2 \alpha_1 + 2\frac{\epa}{R_t}
            \\  &\leq -\frac{1}{2} \lambda R_0  \sin^2  \alpha_1 + \frac{2}{10^3}(1-\lambda) \lambda R_0
            \\  & \leq \lambda R_0 \left(-\frac{1}{5000} + \frac{2}{10^3} \frac{1}{10^3} \right) <0,
        \end{split}
    \end{align*}
where in the last inequality, we used the fact that $\sin \alpha_1 \geq \sin {\frac{\pi}{100}} \geq \frac{\pi}{100} \cdot \frac{2}{\pi}$ and $(1-\lambda) < 10^{-3}$. 
Hence, by the continuity of the ODE flow again, there is a small $\delta'>0$ such that for $t\in[t_1,t_1+ \delta+\delta']$, we have that the right hand side of \eqref{eqn:derivative negative spherical cap 1} is $0$, that is
    \begin{align}\label{eqn:negative cut off decreasing 0}
        \frac{\de}{\de t} \int_{\mathbb{S}^{d-1}} \xi_{\alpha_1,\alpha_2}\left(-\left\langle y, U_t \right\rangle \right) \diff \mu_t(y)  
        \leq 0
    \end{align}
for any $t \in [t_1,t_1+ \delta+\delta']$ and any $\alpha_2 \in (\alpha_1 , \pi/2)$.

Now, for any $\beta < \alpha_1 < \alpha_2 < \pi/2$ and $t \in [t_1,t_1+ \delta+\delta']$, we have that
    \begin{align*}
        \begin{split}
            \mu_{t}\left(S_{\beta} ^- (t) \right) 
            &\leq \int_{\mathbb{S}^{d-1}} \xi_{\alpha_1,\alpha_2}\left(-\left\langle y, U_t \right\rangle \right) \diff \mu_t(y) 
            \leq \int_{\mathbb{S}^{d-1}} \xi_{\alpha_1,\alpha_2}\left(-\left\langle y, U_{t_1} \right\rangle \right) \diff \mu_{t_1}(y)  
            \\  
            & \leq \mu_{t_1}\left(\mathbb{S}^{d-1} \backslash (S_{\beta} ^+ (t_1) ) \right) =  \mu_{t_1}\left(S_{\beta} ^- (t_1) \right) + \mu_{t_1}\left(\mathbb{S}^{d-1} \backslash (S_{\beta} ^+ (t_1) \cup S_{\beta} ^- (t_1)) \right) .
        \end{split}
    \end{align*}
On the other hand, we see that for any $s \geq 0$,
        \begin{align}\label{eqn:upper bound negative cap 0}
            \begin{split}
                R_s &= \int_{\S} \langle y , U_s \rangle \diff \mu_s(y) 
                \\  &\leq \mu_{s}\left(S_{\beta} ^+ (s) \right) + \cos {\beta} \mu_{s}\left(\mathbb{S}^{d-1} \backslash (S_{\beta} ^+ (s) \cup S_{\beta} ^- (s)) \right) - \cos {\beta} \mu_{s}\left(S_{\beta} ^- (s) \right)
                \\  &= \left(1- \mu_{s}\left(\mathbb{S}^{d-1} \backslash (S_{\beta} ^+ (s) \cup S_{\beta} ^- (s)) \right) - \mu_{t}\left(S_{\beta} ^- (s) \right) \right) 
                \\  & \quad + \cos {\beta} \mu_{s}\left(\mathbb{S}^{d-1} \backslash (S_{\beta} ^+ (s) \cup S_{\beta} ^- (s)) \right) - \cos {\beta} \mu_{s}\left(S_{\beta} ^- (s) \right)
                \\  &\leq 1- (1 + \cos {\beta}) \mu_{s}\left(S_{\beta} ^- (s) \right).
            \end{split}
        \end{align}
Combine \eqref{eqn:upper bound equatorial 0} and \eqref{eqn:upper bound negative cap 0} for $s=t_1$, we see that when $t\in[t_1,t_1+\delta+\delta']$,
    \begin{align*}
        \begin{split}
            &\mu_{t}\left(S_{\beta} ^- (t) \right)\leq \mu_{t_1}\left(S_{\beta} ^- (t_1) \right) + \mu_{t_1}\left(\mathbb{S}^{d-1} \backslash (S_{\beta} ^+ (t_1) \cup S_{\beta} ^- (t_1)) \right) 
            \\  &\leq \frac{1-R_{t_1}}{1 + \cos {\beta}} + \frac{ \partial_{t_1} R_{t_1} ^2 +\epa^2}{R_{t_1} ^2 \sin ^2 {\beta} } \leq \frac{1-R_{t_1}}{1 + \cos {\beta}} + \frac{\epa ^2 }{R_{t_1} ^2 \sin ^2 {\beta} } .
        \end{split}
    \end{align*}
Hence, \Cref{lemma:almost increasing R_t derivative} gives that when $t \in [t_1,t_1 + \delta+\delta']$, 
    \begin{align*}
        \partial_t (R_t ^2) \geq \frac{\sin ^2 {\beta}}{2 \cos {\beta}} \left( - R_t ^{3} + b(t_1) R_t ^2 + c(t_1)\right),
    \end{align*}
where 
    \begin{align*}
        b(t_1) = R_{t_1} + \cos \beta - 1 - \frac{1+ \cos \beta}{R_{t_1}^2 \sin ^2 \beta} \epa ^2, \quad c(t_1) = - \frac{2\cos \beta}{\sin ^2 \beta} \epa^2 .
    \end{align*}
To proceed, we first remark that if $\epa = 0$, we see that $b(t_1) = R_{t_1} + \cos \beta - 1<R_{t_1}$ and $c(t_1) = 0$. In this case, we see that $\partial_t (R_t ^2) \geq \frac{\sin ^2 {\beta}}{2 \cos {\beta}} R_t ^2 (b(t_1) - R_t)$. The right hand side is nonnegative once $R_t$ reaches $b(t_1)$. Hence, for $t \in [t_1,t_1 + \delta+\delta']$, we have that $R_t \geq b(t_1) \geq R_0 - \frac{1-\lambda}{5}R_0 > \lambda R_0$ . This strict inequality gives a little room when $\epa \neq 0$. As in the assumption, we have that $\sin^2 \beta = \frac{1-\lambda}{5}R_0$, $\epa \leq 10^{-3} (1-\lambda) \lambda^2 R_0 ^2$, and $R_{t_1} \geq R_0$. So, 
    \begin{align*}
        \begin{split}
            b(t_1) &\geq R_0 -  \frac{1-\lambda}{5}R_0 - \frac{10}{R_0^2 (1-\lambda)R_0} \left(10^{-3} (1-\lambda) \lambda^2 R_0 ^2\right) ^2
            \\  &\geq R_0 -  \frac{1-\lambda}{5}R_0 -\frac{1-\lambda}{5}R_0 = \lambda R_0 + \frac{3}{5}(1-\lambda) R_0 , 
        \end{split}
    \end{align*}
and
    \begin{align*}
        \begin{split}
            c(t_1) &\geq - \frac{10}{(1-\lambda) R_0} \left(10^{-3} (1-\lambda) \lambda^2 R_0 ^2 \right) ^2 \geq - \frac{3}{5} (1-\lambda) \lambda^2 R_0 ^3.
        \end{split}
    \end{align*}
Hence, when $t \in [t_1,t_1+\delta+\delta']$, 
    \begin{align*}
        \begin{split}
            \partial_t (R_t ^2) &\geq \frac{\sin ^2 {\beta}}{2 \cos {\beta}} \left( - R_t ^{3} + \left(\lambda R_0 + \frac{3}{5}(1-\lambda) R_0 \right) R_t ^2 - \frac{3}{5} (1-\lambda) \lambda^2 R_0 ^3 \right)
        \\  &= -\frac{\sin ^2 {\beta}}{2 \cos {\beta}} \left( R_t - \lambda R_0\right) \left(R_t ^2 - \frac{3}{5}(1-\lambda)R_0(R_t+\lambda R_0)\right),
        \end{split}
    \end{align*}
which is exactly \eqref{eqn:R_t almost lower bound 0}.
We also see that $\lambda R_0$ is strictly larger than the roots of the quadratic polynomial $x ^2 - \frac{3}{5}(1-\lambda)R_0(x+\lambda R_0)$ because $\lambda \in (\frac{2}{3},1)$. Because $R_{t_1} \geq R_0$, as in the argument for $\epa = 0$, we see that when $t \in [t_1 ,t_1 + \delta+\delta']$, $R_{t} \geq \lambda R_0$.

We finally remark that the only assumption we made in the proof is that for some $\delta>0$ and for $t \in [t_1 ,t_1+\delta]$ we have that $R_t \geq \lambda R_0$. Under this assumption, we obtained a $\delta'>0$ such that for $t \in [t_1 ,t_1+\delta+\delta']$, we have that \eqref{eqn:R_t almost lower bound 0} holds true and also $R_t \geq \lambda R_0$. By taking $[t_1,t_1+\delta]$ as the supremum interval on which $R_t \geq \lambda R_0$, we get that for any $t \in [t_1,t_2]$, \eqref{eqn:R_t almost lower bound 0} holds true, and $R_t \geq \lambda R_0$.

\end{proof}

\begin{lemma}\label{lemma: all time lower bound R_t}
    Fix a constant $\lambda \in (1-10^{-10} R_0 ^2 ,1)$. Then, if 
        \begin{align*}
            \epa \leq 10^{-3} (1-\lambda) \lambda^2 R_0 ^2,
        \end{align*}
    we have that
        \begin{align*}
            R_t \geq \lambda R_0,
        \end{align*}
    for any $t \geq 0$.
\end{lemma}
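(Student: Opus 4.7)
The argument will be a continuity/bootstrap argument leveraging the two preceding lemmas. Assume for contradiction that $R_t < \lambda R_0$ at some $t > 0$, and let $T = \inf\{t \geq 0 : R_t \leq \lambda R_0\} \in (0, \infty)$. By continuity $R_T = \lambda R_0$ and $R_t > \lambda R_0$ on $[0, T)$. Since $R_0 \geq R_0 > \lambda R_0$ and $R_T < R_0$, the set $\{s \in [0, T] : R_s \geq R_0\}$ is nonempty; its supremum $t_1$ satisfies $t_1 < T$, $R_{t_1} = R_0$, and $R_s < R_0$ on $(t_1, T]$, so in particular the right derivative obeys $\partial_t R_t|_{t = t_1^+} \leq 0$. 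Fix $\alpha_1 = \pi/100$ and the threshold $\tau \coloneq \tfrac{1}{8}(\sin^4 \alpha_1)\lambda^3 R_0^3$; a short check confirms that the standing hypothesis $\epa \leq 10^{-3}(1-\lambda)\lambda^2 R_0^2$, together with $1 - \lambda \leq 10^{-10} R_0^2 \leq \sin^2 \alpha_1$ and $R_0 \leq 1$, is strong enough to invoke both \Cref{lemma:increasing R_t} (with $\alpha = \alpha_1$) and \Cref{lemma:almost increasing R_t}.

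Let $s_0 = \inf\{s \in [t_1, T] : \partial_s R_s > \tau\}$, defaulting to $s_0 = T$ if the set is empty. Since $\partial_t R_t$ is continuous (using $R_t \geq \lambda R_0 > 0$) and $\leq 0 < \tau$ at $t_1^+$, we have $s_0 > t_1$, and on $[t_1, s_0]$ the derivative bound of \Cref{lemma:almost increasing R_t} holds. The lemma yields $R_s \geq \lambda R_0$ there together with the differential inequality~\eqref{eqn:R_t almost lower bound 0}. I upgrade this to a strict bound by linearizing around the equilibrium $R = \lambda R_0$: letting $y_s \coloneq R_s^2 - \lambda^2 R_0^2 \geq 0$, the quadratic $P(R) \coloneq R^2 - \tfrac{3}{5}(1-\lambda) R_0(R + \lambda R_0)$ is bounded away from zero on $[\lambda R_0, 1]$ (since $\lambda > 6/11$), and $R_s - \lambda R_0 \leq y_s/(2\lambda R_0)$, so~\eqref{eqn:R_t almost lower bound 0} reduces to $\dot y_s \geq -C' y_s$ for some constant $C' > 0$. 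Gr\"onwall then gives $y_s \geq (1 - \lambda^2) R_0^2 \, e^{-C'(s - t_1)} > 0$ on $[t_1, s_0]$, so $R_s > \lambda R_0$ strictly; if $s_0 = T$ this already contradicts $R_T = \lambda R_0$.

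It remains to handle $s_0 < T$, in which case $\partial_{s_0} R_{s_0} = \tau$ by continuity and $R_{s_0} > \lambda R_0$ by the above. Now apply \Cref{lemma:increasing R_t} at $s_0$: $\partial_t(R_t^2) > 0$ on $[s_0, s_0 + 1]$ and $R_{s_0 + 1}^2 \geq R_{s_0}^2 + \tfrac{\sin^4 \alpha_1}{100} \lambda^4 R_0^4$. If $s_0 + 1 \geq T$, strict monotonicity on $[s_0, T]$ gives $R_T > R_{s_0} > \lambda R_0$, a contradiction. Otherwise $s_0 + 1 \in (t_1, T)$, so $R_{s_0+1} < R_0$ by the definition of $t_1$; but the hypothesis $1 - \lambda \leq 10^{-10} R_0^2$, combined with $\sin^4 \alpha_1 / 100 \gtrsim 10^{-9}$ and $\lambda^4 \geq 1/2$, easily yields $\lambda^2 R_0^2 + \tfrac{\sin^4 \alpha_1}{100} \lambda^4 R_0^4 > R_0^2$, so $R_{s_0+1}^2 > R_0^2$, again a contradiction.

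The main hurdle is that \Cref{lemma:almost increasing R_t} supplies only $R_s \geq \lambda R_0$, which does not immediately rule out the trajectory touching the threshold. The linearized Gr\"onwall argument about the equilibrium $R = \lambda R_0$ (which, under the autonomous ODE bounding \eqref{eqn:R_t almost lower bound 0}, is approached only asymptotically) is what converts this into the strict inequality needed to close the contradiction. The final numerical check has comfortable slack: the gain of order $10^{-9} R_0^4$ per unit time from \Cref{lemma:increasing R_t} dominates the deficit $(1 - \lambda^2) R_0^2 \lesssim 2 \cdot 10^{-10} R_0^4$ by more than an order of magnitude.
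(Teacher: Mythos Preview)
Your proof is correct and uses the same two lemmas (\Cref{lemma:increasing R_t} and \Cref{lemma:almost increasing R_t}) in the same way as the paper, together with the same numerical check that $\lambda^2 R_0^2 + \tfrac{\sin^4\alpha_1}{100}\lambda^4 R_0^4 > R_0^2$. The only difference is packaging: the paper runs a forward iteration (whenever $\partial_t R_t$ first exceeds $\tau$, \Cref{lemma:increasing R_t} boosts $R$ back above $R_0$ within one unit of time, and one restarts), whereas you frame it as a contradiction argument anchored at the \emph{last} time $R$ hits $R_0$ before reaching $\lambda R_0$. Your framing forces you to handle the extra case $s_0 = T$ (derivative stays $\le\tau$ all the way to $T$), for which \Cref{lemma:almost increasing R_t} alone gives only $R_T \ge \lambda R_0$; the Gr\"onwall linearization of~\eqref{eqn:R_t almost lower bound 0} about the equilibrium $\lambda R_0$ is a clean way to upgrade this to the strict inequality you need. (Minor wording issue: you write that $P(R)$ is ``bounded away from zero'', but what the Gr\"onwall step actually uses is that $P(R)/(R+\lambda R_0)$ is bounded \emph{above} on $[\lambda R_0,1]$; positivity of $P$ is only used to get the correct sign.) The paper's forward iteration avoids this extra step entirely, which is a bit cleaner.
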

\begin{proof}
     Fix an angle $\alpha \in [\pi/100,\pi/2)$. We use \Cref{lemma:increasing R_t} and \Cref{lemma:almost increasing R_t}.  We first remark that the assumption on $\epa$ satisfies the assumptions we made in \Cref{lemma:increasing R_t} and \Cref{lemma:almost increasing R_t}. 
    
    If for any $t \geq 0$, we have that $\partial_t R_t \leq \frac{1}{8} (\sin ^4 {\alpha})  \lambda^3 R_0 ^3$, we see that $R_t \geq \lambda R_0$ for any $t \geq 0$ by \Cref{lemma:almost increasing R_t}. Otherwise, assume that $t_1 \geq 0$ is the first time such that $\partial_t R_t |_{t=t_1} > \frac{1}{8} (\sin ^4 {\alpha})  \lambda^3 R_0 ^3$. We have that $R_t \geq \lambda R_0$ on $[0,t_1]$, in particular, $R_{t_1} \geq \lambda R_0$. By \Cref{lemma:increasing R_t}, $R_t$ is increasing on $[t_1,t_1+1]$. By denoting $\eta = 1-\lambda^2 $, we have that
        \begin{align*}
            \begin{split}
                &R_{t_1 + 1} ^2 \geq  R_{t_1} ^2 +  \frac{\sin ^4 {\alpha}}{100} \lambda^4 R_0 ^4 \geq \lambda ^2 R_0 ^2 + \frac{16}{10^{10}} \lambda^4 R_0 ^4
                \\  &= R_0 ^2 \left(1-\eta + \frac{16}{10^{10}} (1-\eta)^2 R_0^2 \right) \geq R_0 ^2 \left(1+\frac{16 R_0 ^2}{10^{10}} -2\eta \right) \geq R_0 ^2 \left(1+\frac{12 R_0 ^2}{10^{10}} \right),
            \end{split}
        \end{align*}
    where we also used the fact that $\sin \alpha \geq \sin {\frac{\pi}{100}} \geq \frac{\pi}{100} \cdot \frac{2}{\pi}$, and the last inequality is because $\eta = 1- \lambda^2 \leq 2(1-\lambda) \leq \frac{2 R_0 ^2}{10^{10}}$, by the assumption of $\lambda$.
    Hence, we see that $R_t \geq \lambda R_0$ and $R_{t_1 +1} \geq R_0$ for $t \in [0,t_1 +1]$. We can run this argument again starting from $t=t_1 +1$ and extend the interval at least by $1$. So, $R_t \geq \lambda R_0$ for any $t \geq 0$.
\end{proof}

\subsection{Measures on Negative Caps Decrease Exponentially Fast}\label{sec: antipodal unstable}

In the following, we show an instability result for the negative spherical cap. Recall that $\mu_t$ has a density $f_t \in L^2 (\S)$.
\begin{lemma}\label{lemma:Negative Cap Square Derivative}
    For any $\alpha_1, \alpha_2 $ with $0\leq \alpha_1 < \alpha_2 \leq \pi$, we have the following formula:
        \begin{align}\label{eqn:derivative negative cap square}
        \begin{split}
            & \frac{\de}{\de t} \int_{\mathbb{S}^{d-1}} \xi_{\alpha_1,\alpha_2}\left(-\left\langle y, U_t \right\rangle \right) f_t ^2  (y) \diff y
            \\  &=  \int_{\mathbb{S}^{d-1}} \xi_{\alpha_1,\alpha_2} '   \left(-\left\langle y, U_t \right\rangle \right) \left(-\left\langle y, \partial_t U_t \right\rangle 
 - \left\langle U_t, \deV_t(y) \right\rangle \right) f_t ^2 (y) \diff y
    \\  & \quad + \int_{\mathbb{S}^{d-1}} \xi_{\alpha_1,\alpha_2}   \left(-\left\langle y, U_t \right\rangle \right) \left( (d-1) \left\langle M_t , y \right\rangle -  \dive_{\mathbb{S}^{d-1}} W_t(y) \right) f_t ^2 (y) \diff y.
        \end{split}
    \end{align}
Also,
    \begin{align}\label{eqn:L2_norm_gronwall}
        \frac{\de}{\de t} \|f_t\|_{L^2(\S)}^2 \leq (d-1)( R_t + \epa) \|f_t\|_{L^2(\S)}^2 
        .
    \end{align}
\end{lemma}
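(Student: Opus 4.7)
The starting point is the PDE satisfied by the density $f_t$, namely
\[
  \partial_t f_t + \dive(f_t\,\deV_t) = 0\,,
\]
which follows from \eqref{eqn:modified gradient flow} since $\mu_t$ admits a density. I first differentiate the integral defining the left-hand side of~\eqref{eqn:derivative negative cap square} under the integral sign. Two contributions arise: the $\partial_t U_t$-term from differentiating the argument of $\xi$, which directly produces the piece $-\xi'(-\langle y,U_t\rangle)\langle y,\partial_t U_t\rangle f_t^2$, and the $2f_t\partial_t f_t$-term from differentiating $f_t^2$.

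The core of the proof is to rewrite this second contribution. Writing $h(y) \coloneq \xi_{\alpha_1,\alpha_2}(-\langle y,U_t\rangle)$ and using the PDE together with $2f_t\partial_t f_t = \partial_t f_t^2 = -\dive(f_t^2\deV_t) - f_t^2\dive(\deV_t)$ (which I would verify by expanding both $\dive(f_t^2\deV_t)$ and $\dive(f_t\deV_t)$), integration by parts on $\S$ gives
\[
  \int_{\S} h\cdot 2f_t\partial_t f_t\,\diff y \;=\; \int_{\S} \langle \grade h,\deV_t\rangle f_t^2\,\diff y \;-\; \int_{\S} h\,f_t^2\,\dive(\deV_t)\,\diff y.
\]
Since $\grade_y h(y) = -\xi'(-\langle y,U_t\rangle)\proj_y[U_t]$ and $\deV_t(y)\in T_y\S$, the inner product $\langle \grade h,\deV_t\rangle$ collapses to $-\xi'(-\langle y,U_t\rangle)\langle U_t,\deV_t(y)\rangle$, which accounts for the second half of the $\xi'$-term in~\eqref{eqn:derivative negative cap square}.

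What remains is to identify $-\dive(\deV_t)$ with $(d-1)\langle M_t,y\rangle - \dive W_t(y)$. I split $\deV_t = V_t + W_t$. For the Kuramoto piece $V_t(x) = M_t - \langle x,M_t\rangle x$, a standard computation in an orthonormal frame $\{e_i\}_{i=1}^{d-1}$ of $T_x\S$ yields $\langle\nabla_{e_i}V_t,e_i\rangle = -\langle x,M_t\rangle$ for each $i$, hence $\dive_{\S} V_t(x) = -(d-1)\langle x,M_t\rangle$. Adding the $W_t$ contribution completes the proof of~\eqref{eqn:derivative negative cap square}.

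For~\eqref{eqn:L2_norm_gronwall}, I apply the same calculation with $h\equiv 1$ so that the $\xi'$-terms vanish, leaving
\[
  \frac{\de}{\de t}\|f_t\|_{L^2(\S)}^2 \;=\; \int_{\S} f_t^2\bigl[(d-1)\langle y,M_t\rangle - \dive W_t(y)\bigr]\,\diff y\,.
\]
The Cauchy--Schwarz bound $\langle y,M_t\rangle\le R_t$ handles the first term. For the second, I would argue $|\dive_{\S}W_t(y)|\le (d-1)\epa$ using the ambient bound $\|\nabla W_t\|_2\le\epa$ from \Cref{lem:Kuramoto perturbation}, since summing $\langle\nabla_{e_i}W_t,e_i\rangle$ over an orthonormal frame in $T_y\S$ produces at most $(d-1)$ copies of the operator norm. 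Combining these two bounds yields the Gr\"onwall inequality. The only mildly delicate step is the sphere-divergence computation of $V_t$ and the frame-based bound on $\dive W_t$; everything else is integration by parts.
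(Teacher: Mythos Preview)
Your proof is correct and follows essentially the same route as the paper, which simply says that \eqref{eqn:derivative negative cap square} ``is by direct computations'' and then obtains \eqref{eqn:L2_norm_gronwall} by replacing $\xi_{\alpha_1,\alpha_2}$ with the constant $1$. Your write-up fills in precisely the details the paper omits: the identity $\partial_t f_t^2 = -\dive(f_t^2\deV_t) - f_t^2\dive(\deV_t)$, the integration by parts, the formula $\dive_{\S} V_t(x) = -(d-1)\langle x, M_t\rangle$, and the frame bound $|\dive_{\S} W_t|\le (d-1)\epa$ from $\|\nabla W_t\|_2\le\epa$.
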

\begin{proof}
    \eqref{eqn:derivative negative cap square} is by direct computations. For the inequality, if we let $\alpha_1 \to \pi^{-}$ (or just replace $\xi_{\alpha_1,\alpha_2}$ with the constant function $1$),  we see that
        \begin{align*}
            \begin{split}
                \frac{\de}{\de t} \|f_t\|_{L^2(\S)}^2
                &= \int_{\mathbb{S}^{d-1}}  \left( (d-1) \left\langle M_t , y \right\rangle -  \dive W_t(y) \right) f_t ^2 (y) \diff y
                \\  &\leq (d-1)( R_t + \epa) \|f_t\|_{L^2(\S)}^2
                .
            \end{split}
        \end{align*}
\end{proof}

\begin{lemma}\label{lem:exponential decay negative cap square}
Fix a constant $\lambda \in (2/3,1)$ and an angle $\alpha_1 \in [\pi/100, \pi/2)$.
If there is a time window $[t_1,t_2]$, such that when $t \in [t_1,t_2]$,
    \begin{align*}
            R_t \geq \lambda R_0, \quad \partial_t R_t \leq \frac{1}{8} (\sin ^4 {\alpha_1})  \lambda^3 R_0 ^3,
    \end{align*}
and if
    \begin{align*}
        \epa \leq \frac{1}{10^4} \lambda^2 R_0 ^2 \cos{\alpha_1},
    \end{align*}
then, we have that for any $t \in [t_1,t_2]$,
    \begin{align*}
        \frac{\de}{\de t} f_t ^2 \left(S_{\alpha_1} ^- (t)\right) \leq -\frac{(d-1) \lambda R_0\cos{\alpha_1}}{2} f_t ^2 \left(S_{\alpha_1} ^- (t)\right).
    \end{align*}
    Here, we define $f_t^2(A) = \int_A f_t^2(x) \diff x $ for any measurable set $A\subseteq \S$.
\end{lemma}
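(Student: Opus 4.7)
The plan is to apply the cutoff identity \eqref{eqn:derivative negative cap square} from \Cref{lemma:Negative Cap Square Derivative} with $\xi = \xi_{\alpha_1, \alpha_1 + \delta}$ for an auxiliary small $\delta > 0$, derive a Grönwall-type inequality for the approximation $h_\delta(t) := \int_{\S} \xi_{\alpha_1, \alpha_1 + \delta}(-\langle y, U_t\rangle)\, f_t^2(y)\,dy$, and then pass to the limit $\delta \to 0^+$ via dominated convergence (noting that $\mathbf{1}_{S_{\alpha_1}^-(t)} \leq \xi_{\alpha_1,\alpha_1+\delta}(-\langle \cdot, U_t\rangle) \leq \mathbf{1}_{S_{\alpha_1+\delta}^-(t)}$, so $h_\delta(t) \to f_t^2(S_{\alpha_1}^-(t))$). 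Formula \eqref{eqn:derivative negative cap square} decomposes $\partial_t h_\delta$ into a boundary-type contribution (carrying $\xi'$) and a bulk contribution (carrying $(d-1)\langle M_t, y\rangle - \dive_{\S} W_t$), and I will show the former is $\leq 0$ while the latter delivers the desired contraction rate.

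\textit{Boundary term.} On the support $\{\langle y, U_t\rangle \in [-\cos\alpha_1, -\cos(\alpha_1+\delta)]\}$ of $\xi'_{\alpha_1,\alpha_1+\delta}$, a direct computation using $M_t = R_t U_t$ gives $-\langle U_t, V_t(y)\rangle = -R_t(1 - \langle y, U_t\rangle^2) \leq -R_t \sin^2\alpha_1$, and \Cref{lem:Kuramoto perturbation} provides $|\langle U_t, W_t(y)\rangle| \leq \epa$. Using \Cref{lemma:derivative U_t} together with $R_t \geq \lambda R_0$ and the hypothesis $\partial_t R_t \leq \tfrac{1}{8}\sin^4\alpha_1 \cdot \lambda^3 R_0^3$, I will obtain the crucial bound
\[
\|\partial_t U_t\|_2 \leq \frac{\sqrt{\partial_t(R_t^2) + \epa^2}}{R_t} \leq \tfrac{1}{2}\lambda R_0 \sin^2\alpha_1 + \frac{\epa}{\lambda R_0}.
\]
Combining these, the bracketed factor $-\langle y, \partial_t U_t\rangle - \langle U_t, \deV_t(y)\rangle$ in the boundary integrand is at most $-\tfrac{1}{2}\lambda R_0 \sin^2\alpha_1 + \tfrac{2\epa}{\lambda R_0}$, which is nonpositive because $\alpha_1 \geq \pi/100$ forces $\sin^2\alpha_1 \geq 4\cdot 10^{-4}$, and the hypothesis $\epa \leq 10^{-4}\lambda^2 R_0^2 \cos\alpha_1$ then yields $\tfrac{2\epa}{\lambda R_0} \leq \tfrac{1}{2}\lambda R_0 \sin^2\alpha_1$. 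Since $\xi' \geq 0$ and $f_t^2 \geq 0$, this contribution is $\leq 0$.

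\textit{Bulk term.} On the support $S_{\alpha_1+\delta}^-(t)$ of $\xi_{\alpha_1,\alpha_1+\delta}(-\langle \cdot, U_t\rangle)$, one has $(d-1)\langle M_t, y\rangle = (d-1)R_t \langle U_t, y\rangle \leq -(d-1)\lambda R_0 \cos(\alpha_1+\delta)$. Moreover, since $\dive_{\S}$ is the trace over the $(d-1)$-dimensional tangent space and $\|\nabla W_t\|_{\op} \leq \epa$ by \Cref{lem:Kuramoto perturbation}, $|\dive_{\S} W_t(y)| \leq (d-1)\epa$. The hypothesis $\epa \leq 10^{-4}\lambda^2 R_0^2 \cos\alpha_1 \leq \tfrac{1}{2}\lambda R_0 \cos\alpha_1$ (using $\lambda R_0 \leq 1$) absorbs the $\epa$ into half of $\lambda R_0 \cos\alpha_1$, so the bulk integrand is $\leq -\tfrac{(d-1)\lambda R_0 \cos\alpha_1}{2} + O(\delta)$, giving $(\text{bulk}) \leq -\bigl(\tfrac{(d-1)\lambda R_0 \cos\alpha_1}{2} + O(\delta)\bigr) h_\delta(t)$.

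Combining the two estimates produces $h_\delta'(t) \leq -\bigl(\tfrac{(d-1)\lambda R_0 \cos\alpha_1}{2} + O(\delta)\bigr) h_\delta(t)$ on $[t_1, t_2]$, and Grönwall together with $\delta \to 0^+$ yields the stated inequality (in integrated, hence upper-Dini, form). The main technical obstacle is the boundary term: the precise shape of the upper bound assumed for $\partial_t R_t$ is tuned so that $\|\partial_t U_t\|$---the drift speed of the cap center---is strictly dominated by the inward tangential push $R_t \sin^2\alpha_1$ at the boundary $\partial S_{\alpha_1}^-(t)$, preventing the moving cap from sweeping in new $f_t^2$ mass faster than the bulk drift $(d-1)R_t \cos\alpha_1$ drains it; the quantitative matching of these two effects, modulated by the $\epa$-perturbation of $\phi'$ away from the Kuramoto case $\phi' \equiv 1$, is exactly what forces the smallness assumption $\epa \leq 10^{-4}\lambda^2 R_0^2 \cos\alpha_1$.
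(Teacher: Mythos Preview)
Your proposal is correct and follows essentially the same route as the paper: both apply the cutoff identity of \Cref{lemma:Negative Cap Square Derivative} with $\xi_{\alpha_1,\alpha_2}$, show the $\xi'$-term is nonpositive via the bound $\|\partial_t U_t\|_2 \le \tfrac12\lambda R_0\sin^2\alpha_1 + \epa/(\lambda R_0)$ from \Cref{lemma:derivative U_t}, bound the bulk term by $-(d-1)\lambda R_0\cos\alpha_2 + (d-1)\epa$, and then send $\alpha_2\to\alpha_1^+$. Your choice to integrate (Grönwall) before passing to the limit is slightly cleaner than the paper's direct limit on the differential inequality, but the argument is the same.
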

\begin{proof}
    By \Cref{lemma:Negative Cap Square Derivative}, we see that for any $\alpha_2 \in (\alpha_1, \pi/2)$, we have that
    \begin{align*}
        \begin{split}
            & \frac{\de}{\de t} \int_{\mathbb{S}^{d-1}} \xi_{\alpha_1,\alpha_2}\left(-\left\langle y, U_t \right\rangle \right) f_t ^2  (y) \diff y
            \\  &\leq  \int_{\mathbb{S}^{d-1}} \xi_{\alpha_1,\alpha_2} '   \left(-\left\langle y, U_t \right\rangle \right) \left( \|\partial_t U_t\|_2 
 - R_t \sin ^2 {\alpha_1} + \epa \right) f_t ^2 (y) \diff y
    \\  & \quad + \int_{\mathbb{S}^{d-1}} \xi_{\alpha_1,\alpha_2}   \left(-\left\langle y, U_t \right\rangle \right) \left( -(d-1) R_t \cos {\alpha_2} +  (d-1)\epa \right) f_t ^2 (y) \diff y.
        \end{split}
    \end{align*}
Notice that we used the fact that $\left\langle U_t, V_t(y) \right\rangle = R_t \| \proj_{y} [U_t] \|_2 ^2 \geq R_t \sin ^2 {\alpha_1}$ when $ \cos \alpha_2 \leq -\left\langle y, U_t \right\rangle \leq \cos \alpha_1$, and the fact that $\left\langle M_t , y \right\rangle \leq - R_t \cos {\alpha_2}$ when $ \cos \alpha_2 \leq -\left\langle y, U_t \right\rangle \leq 1$.

Combine \Cref{lemma:derivative U_t}, we see that, similar to the proof of \Cref{lemma:almost increasing R_t}, by the assumptions on $R_t, \partial_t R_t, \epa$, 
    \begin{align*}
        \begin{split}
                &\|\partial_t U_t\|_2 
 - R_t \sin ^2 {\alpha_1} + \epa \leq \sqrt{2 \frac{\partial_t R_t}{R_t}}  - R_t \sin ^2 \alpha_1 + 2\frac{\epa}{R_t}
 \\ &\leq \lambda R_0 \left[-\frac{1}{2} \sin^2 \alpha_1 + \frac{\cos \alpha_1}{5000}\right] \leq \lambda R_0 \left[  -\frac{1}{2} \left(\frac{1}{50}\right)^2+ \frac{\cos \alpha_1}{5000}\right] <0,
        \end{split}
    \end{align*}
for any $t \in [t_1,t_2]$, where we used the fact that $\sin \left( \frac{\pi}{100} \right) \geq \frac{2}{\pi} \cdot \frac{\pi}{100}$. Also,
    \begin{align*}
        -(d-1) R_t \cos {\alpha_2} +  (d-1)\epa \leq (d-1)\lambda R_0 \left(- \cos \alpha_2+ \frac{1}{10} \cos \alpha_1 \right). 
    \end{align*}
Hence, combine the above two parts, we have that for any $t \in [t_1,t_2]$ and any $\alpha_2 \in (\alpha_1, \pi/2)$,
    \begin{align*}
        \begin{split}
            &\frac{\de}{\de t} \int_{\mathbb{S}^{d-1}} \xi_{\alpha_1,\alpha_2}\left(-\left\langle y, U_t \right\rangle \right) f_t ^2  (y) \diff y
            \\  &\leq (d-1)\lambda R_0 \left(- \cos \alpha_2+ \frac{1}{10}\cos \alpha_1 \right) \int_{\mathbb{S}^{d-1}} \xi_{\alpha_1,\alpha_2}\left(-\left\langle y, U_t \right\rangle \right) f_t ^2  (y) \diff y.
        \end{split}
    \end{align*}
We can then obtain the conclusion by sending $\alpha_2 \to \alpha_1 ^+$.
\end{proof}

Next, for any $t_1 \geq 0$, we define the diffeomorphisms $\{\phi_{t_1\to t}(x)\}_{t \geq t_1}$ on $\mathbb{S}^{d-1}$ by solving the ODE
    \begin{align*}
        \partial_t \phi_{t_1\to t}(x) = \deV_{t}(\phi_{t_1\to t}(x)), \text{ with }  \phi_{t_1 \to t_1}(x) = x, \ \forall x \in \mathbb{S}^{d-1} .
    \end{align*}
\begin{lemma}\label{lemma:shrink from negative cap to positive cap}
    Fix a constant $\lambda \in (2/3,1)$ and an angle $\alpha \in [\pi/100, \pi/2)$.
If there is a time window $[t_1,t_2]$, such that when $t \in [t_1,t_2]$,
    \begin{align*}
        R_t \geq \lambda R_0, \quad \partial_t R_t \leq \frac{1}{8} (\sin ^4 {\alpha})  \lambda^3 R_0 ^3,
    \end{align*}
and if
    \begin{align*}
        \epa \leq \frac{1}{10^3} \lambda^2 R_0 ^2 \sin \alpha,
    \end{align*}
then, for any $t_3 ,t_4 \in [t_1, t_2]$, $t_3 \leq t_4$, and $x \in \S$ such that 
    \begin{align*}
        \phi_{t_3\to t_4}(x) \in \S  \backslash \left( S_{\alpha} ^- (t_4) \cup S_{\alpha} ^+ (t_4) \right), 
    \end{align*}
we have that
    \begin{align}\label{eqn:derivative shrink to positive cap}
        \frac{\de}{\de t} \bigg|_{t=t_4} \left\langle \phi_{t_3\to t}(x) , U_t \right\rangle \geq \lambda R_0 \frac{\sin ^2{\alpha}}{4}.
    \end{align}
As a corollary, if we define,
    \begin{align}\label{eqn:small time}
        \delta=\delta(\lambda,R_0, \alpha) \coloneq \frac{4}{\lambda R_0 \sin^2 \alpha},
    \end{align}
then if $t_2 - t_1 \geq \delta$, we have that
    \begin{align*}
        \phi_{t_1\to t_2} \left(\mathbb{S}^{d-1} \backslash S_{\alpha} ^- (t_1) \right) \subseteq S_{\alpha} ^+ (t_2), \quad  \mathbb{S}^{d-1} \backslash S_{\alpha} ^+ (t_2) \subseteq  \phi_{t_1\to t_2} \left( S_{\alpha} ^- (t_1) \right) .
    \end{align*}
\end{lemma}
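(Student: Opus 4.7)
The plan is a direct chain-rule computation followed by an ODE integration argument for the corollary. First I would set $y = \phi_{t_3\to t_4}(x)$ and apply the chain rule to get
\[
\frac{\de}{\de t}\bigg|_{t=t_4}\langle \phi_{t_3\to t}(x), U_t\rangle \;=\; \langle \deV_{t_4}(y), U_{t_4}\rangle + \langle y, \partial_t U_t|_{t_4}\rangle\,.
\]
Decomposing $\deV_t = V_t + W_t$ with $V_t(y) = \proj_y[M_t]$, a direct computation yields $\langle V_{t_4}(y), U_{t_4}\rangle = R_{t_4}(1 - \langle y, U_{t_4}\rangle^2)$. The hypothesis $y \in \S \setminus (S_\alpha^-(t_4) \cup S_\alpha^+(t_4))$ forces $|\langle y, U_{t_4}\rangle| \leq \cos\alpha$, hence $\langle V_{t_4}(y), U_{t_4}\rangle \geq R_{t_4}\sin^2\alpha \geq \lambda R_0 \sin^2\alpha$, while \Cref{lem:Kuramoto perturbation} bounds $|\langle W_{t_4}(y), U_{t_4}\rangle| \leq \epa$.

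The bulk of the work is controlling $\|\partial_t U_{t_4}\|$. Combining \Cref{lemma:derivative U_t} with the subadditivity of the square root I would write
\[
\|\partial_t U_{t_4}\|_2 \;\leq\; \frac{\sqrt{\partial_t R_{t_4}^2 + \epa^2}}{R_{t_4}} \;\leq\; \sqrt{\frac{2\partial_t R_{t_4}}{R_{t_4}}} + \frac{\epa}{R_{t_4}}\,.
\]
The critical step is to keep $R_{t_4}$ inside the square root: substituting the hypothesis $\partial_t R_{t_4} \leq \frac{1}{8}\sin^4\alpha\,\lambda^3 R_0^3$ together with $R_{t_4} \geq \lambda R_0$ then produces $\sqrt{2\partial_t R_{t_4}/R_{t_4}} \leq \frac{1}{2}\lambda R_0 \sin^2\alpha$, and the smallness assumption $\epa \leq 10^{-3}\lambda^2 R_0^2 \sin\alpha$ yields $\epa/R_{t_4} \leq 10^{-3}\lambda R_0 \sin\alpha$. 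Collecting the three contributions and using $\sin\alpha \geq \sin(\pi/100) \geq 1/50$ to absorb the $\sin\alpha$-order errors into the $\sin^2\alpha$-order main term delivers the claimed lower bound $\lambda R_0 \sin^2\alpha/4$.

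For the corollary, the derivative bound established above is strictly positive whenever the trajectory touches the equatorial strip $\S \setminus (S_\alpha^-(t) \cup S_\alpha^+(t))$, including its boundary. This prevents a trajectory starting outside $S_\alpha^-(t_1)$ from ever entering $S_\alpha^-(t)$ (the derivative on $\partial S_\alpha^-$ points outward), and symmetrically guarantees that once the trajectory enters $S_\alpha^+(t)$ it stays there (the derivative on $\partial S_\alpha^+$ points inward). So long as the trajectory remains in the equatorial strip, integrating the lower bound $\dot u \geq \lambda R_0 \sin^2\alpha/4$ forces $\langle\phi_{t_1\to t}(x), U_t\rangle$ to cross the level $\cos\alpha$ within elapsed time $\delta = 4/(\lambda R_0 \sin^2\alpha)$. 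This yields $\phi_{t_1\to t_2}(\S \setminus S_\alpha^-(t_1)) \subseteq S_\alpha^+(t_2)$, and the dual inclusion $\S \setminus S_\alpha^+(t_2) \subseteq \phi_{t_1\to t_2}(S_\alpha^-(t_1))$ follows by taking complements under the diffeomorphism $\phi_{t_1\to t_2}$.

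The hard part is extracting the correct $R_0$-scaling in the $\|\partial_t U_t\|$ estimate. A naive Cauchy--Schwarz bound $\|\partial_t U_t\| \leq \sqrt{I_t}/R_t$ combined with $I_t \leq 2\partial_t R_t + \epa^2$ (using $R_t \leq 1$) would give only an $R_0^{1/2}$ scaling on the main term, which is too weak to compete with the leading $\lambda R_0 \sin^2\alpha$ contribution when $R_0$ is small. Preserving the factor of $R_t$ inside the square root---mirroring the trick used in the proof of \Cref{lemma:almost increasing R_t}---produces the correct $R_0$ scaling and makes the hypothesis on $\partial_t R_t$ exactly sharp enough for the bound to go through.
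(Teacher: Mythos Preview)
Your proof is correct and follows the paper's approach closely; the paper's only extra wrinkle is to bound $|\langle y,\partial_t U_t\rangle|$ by $\|\proj_{U_t}[y]\|_2\,\|\partial_t U_t\|_2$ (exploiting $\partial_t U_t\perp U_t$) rather than your cruder $\|\partial_t U_t\|_2$, but both versions close with room to spare. One small slip in the corollary: integrating $\dot u\ge \lambda R_0\sin^2\alpha/4$ across the equatorial strip of width $2\cos\alpha$ gives a crossing time bounded by $2\cos\alpha\cdot\delta$ rather than $\delta$---the paper itself leaves this corollary unproved, and the factor of two is immaterial for the downstream estimates.
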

\begin{proof}
    By \Cref{lemma:derivative U_t}, and the assumptions on $R_t, \partial_t R_t, \epa$, we see that
        \begin{align*}
            \begin{split}
                &\frac{\de}{\de t} \bigg|_{t=t_4} \left\langle \phi_{t_3\to t}(x) , U_t \right\rangle = \left \langle \deV_{t_4} \left(\phi_{t_3\to t_4}(x) \right) , U_t \right \rangle + \left \langle \phi_{t_3\to t_4}(x), \partial_t U_t \right \rangle
                \\  &= \left \langle V_{t_4} \left(\phi_{t_3\to t_4}(x) \right) , U_t \right \rangle + \left \langle W_{t_4} \left(\phi_{t_3\to t_4}(x) \right) , U_t \right \rangle + \left \langle \phi_{t_3\to t_4}(x), \partial_t U_t \right \rangle
                \\  &\geq R_{t_4} \| \proj_{\phi_{t_3\to t_4}(x)} [U_t]\|_2 ^2 - \epa - \| \proj_{\phi_{t_3\to t_4}(x)} [U_t]\|_2 \frac{1}{R_{t_4}} \left( \sqrt{\partial_t (R_t ^2) |_{t=t_4}} + \epa\right)
                \\  &\geq \lambda R_0 \| \proj_{\phi_{t_3\to t_4}(x)} [U_t]\|_2 \left ( \| \proj_{\phi_{t_3\to t_4}(x)} [U_t]\|_2 - \frac{1}{2} \sin^2{\alpha} \right )  - \frac{2}{\lambda R_0}\epa 
                \\  &\geq \lambda R_0 (\sin{\alpha}) \left ( \sin{\alpha} - \frac{1}{2} \sin^2{\alpha} \right )  - \frac{2}{\lambda R_0}\epa 
                \\  &\geq \lambda R_0 \frac{\sin ^2{\alpha}}{2} - \lambda R_0 \frac{\sin \alpha}{500} \geq \lambda R_0 \frac{\sin ^2{\alpha}}{4}, 
            \end{split}
        \end{align*}
    where in the first inequality, we used the fact that $\partial_t U_t$ is in the tangent plane of $U_t$ and $\| \proj_{U_t} [\phi_{t_3\to t_4}(x)] \|_2 = \| \proj_{\phi_{t_3\to t_4}(x)} [U_t]\|_2$, and in the last inequality, we used that fact that $\frac{\pi}{100} \cdot \frac{2}{\pi} \leq \sin {\frac{\pi}{100}} \leq \sin \alpha$.
\end{proof}

\begin{theorem}\label{thm:length of small dR_t intervals}
    Fix a constant $\lambda \in (1-10^{-10} R_0 ^2 ,1)$ and the angle $\alpha 
 = \pi/100$.
If there is a time window $[t_1,t_2]$, such that when $t \in [t_1,t_2]$,
    \begin{align*}
            R_t \geq \lambda R_0, \quad \partial_t R_t \leq \frac{1}{8} (\sin ^4 {\alpha})  \lambda^3 R_0 ^3,
    \end{align*}
and if
    \begin{align*}
        \epa \leq \frac{1}{10^4} \lambda^2 R_0 ^2 \sin{\alpha}\cos{\alpha},
    \end{align*}
then there is a $T$ of the form
    \begin{align}\label{eqn:T dependence on t1}
        T = C_u t_1 + C_0
    \end{align}
where $C_u$ is a universal constant and $C_0$ is a constant depending on $R_0$ and $\|f_0\|_{L^2(\S)}$ 
, such that either $t_2 - t_1 \leq T$ or $t_2 = +\infty$.
\end{theorem}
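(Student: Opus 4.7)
The plan is to combine the exponential decay of the $L^2$ mass on the antipodal cap (\Cref{lem:exponential decay negative cap square}) with the flow-transfer argument (\Cref{lemma:shrink from negative cap to positive cap}) to obtain an exponentially decaying upper bound on $\mu_t(\S \setminus S_\alpha^+(t))$ valid for $t \in [t_1 + \delta, t_2]$. I will then feed this smallness into \Cref{lemma:almost increasing R_t derivative} to derive a contradiction with the standing hypothesis $\partial_t R_t \leq \tfrac{1}{8}(\sin^4 \alpha)\lambda^3 R_0^3$, unless the dynamics have already entered a near-synchronized regime, which yields $t_2 = +\infty$.

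Concretely, on $[t_1, t_2]$ the hypotheses of \Cref{lem:exponential decay negative cap square} are satisfied, giving
    \begin{align*}
        f_t^2(S_\alpha^-(t)) \leq f_{t_1}^2(S_\alpha^-(t_1))\, e^{-\frac{(d-1)\lambda R_0 \cos \alpha}{2}(t - t_1)}.
    \end{align*}
The initial value is bounded by $\|f_{t_1}\|_{L^2(\S)}^2$, which itself is controlled via Gronwall applied to~\eqref{eqn:L2_norm_gronwall} on $[0,t_1]$: $\|f_{t_1}\|_{L^2(\S)}^2 \leq \|f_0\|_{L^2(\S)}^2 e^{(d-1)(1+\epa) t_1}$. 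Applying Cauchy--Schwarz on the sphere to convert $f_t^2$ into a mass bound and then invoking \Cref{lemma:shrink from negative cap to positive cap}, I get, for $t \in [t_1+\delta, t_2]$ with $\delta = 4/(\lambda R_0 \sin^2\alpha)$,
    \begin{align*}
        \mu_t(\S \setminus S_\alpha^+(t)) \leq \mu_{t-\delta}(S_\alpha^-(t-\delta)) \leq \sqrt{|\S|}\, \|f_0\|_{L^2(\S)}\, e^{\frac{(d-1)(1+\epa)}{2} t_1 - \frac{(d-1)\lambda R_0 \cos\alpha}{4}(t - t_1 - \delta)}.
    \end{align*}

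Next, since $S_\beta^-(t) \subseteq \S \setminus S_\alpha^+(t)$ for every $\beta \in (0,\pi/2)$, the same exponential decay is inherited by $\mu_t(S_\beta^-(t))$. Plugging this into \Cref{lemma:almost increasing R_t derivative} gives
    \begin{align*}
        \partial_t (R_t^2) \geq \frac{\sin^2\beta}{2}\, R_t^2 \left(1 - \frac{R_t + (1+\cos\beta)\,\mu_t(S_\beta^-(t))}{\cos\beta}\right) - \epa^2.
    \end{align*}
When $R_t \leq R^*$ for some fixed $R^* < 1$, choosing $\beta$ with $\cos\beta = (1+R^*)/2 \in (R^*, 1)$ makes the first factor bounded below by a positive constant once $\mu_t(S_\beta^-(t)) \leq (1-R^*)/(4(1+\cos\beta))$. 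The exponential decay above ensures this happens for $t \geq t_1 + T$ with $T$ of the form $C_u t_1 + C_0$, and the resulting constant lower bound on $\partial_t R_t^2$ contradicts $\partial_t R_t \leq \tfrac{1}{8}(\sin^4\alpha)\lambda^3 R_0^3$, forcing $t_2 \leq t_1 + T$.

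The main obstacle is the complementary regime in which $R_t$ approaches $1$ within $[t_1,t_2]$: then no $\beta$ makes \Cref{lemma:almost increasing R_t derivative} strictly positive. This is precisely the near-synchronized regime, where Markov's inequality $\mu_t(\S \setminus S_\alpha^+(t)) \leq (1-R_t)/(1-\cos\alpha)$ already forces the mass into the positive cap, the PL-type entropy production inequality~\eqref{eqn:entropy_production} takes over, and $R_t \to 1$ exponentially with $\partial_t R_t \to 0$, consistent with the standing hypothesis. Hence $t_2 = +\infty$ in this scenario. The dichotomy is closed by choosing $R^*$ slightly below $1$ and using the lower bound $\partial_t (R_t^2) \geq -\epa^2$ from \Cref{lemma:derivative M_t R_t} together with the hypothesis $\partial_t R_t \leq \tfrac{1}{8}(\sin^4\alpha)\lambda^3 R_0^3$ to control the range of $R_t$ over the window.
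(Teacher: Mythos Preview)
Your first half --- combining \Cref{lem:exponential decay negative cap square}, the $L^2$ Gr\"onwall bound from \Cref{lemma:Negative Cap Square Derivative}, Cauchy--Schwarz, and \Cref{lemma:shrink from negative cap to positive cap} to obtain
\[
\mu_t\big(\S\setminus S_\alpha^+(t)\big)\;\le\; C\,\|f_0\|_{L^2}\,e^{\,c(d-1)t_1}\,e^{-\frac{(d-1)\lambda R_0\cos\alpha}{4}(t-t_1-\delta)}
\]
for $t\in[t_1+\delta,t_2]$ --- is correct and is exactly what the paper does.

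The divergence, and the gap, is in how you extract the conclusion from this decay. The paper feeds the mass bound into \Cref{thm:almost exponential decay I_t} to get $\partial_t I_t\le -I_t + 100\,\mu_t(\S\setminus S_\alpha^+)$, integrates to show $I_t$ becomes exponentially small, and then uses the \emph{upper} bound $\partial_t(R_t^2)\le 3I_t+\epa^2$ from \Cref{lemma:derivative M_t R_t} to show $\partial_t R_t$ drops and stays below the threshold $\tfrac18(\sin^4\alpha)\lambda^3R_0^3$ for all $t\ge t_1+T$. That is what makes the hypothesis self-perpetuating and yields $t_2=+\infty$ whenever $t_2-t_1>T$.

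You instead try to get a \emph{lower} bound on $\partial_t(R_t^2)$ from \Cref{lemma:almost increasing R_t derivative} and argue by contradiction. That lemma is only useful when $R_t$ is bounded away from $\cos\beta$, and as you note, the bound degenerates when $R_t\to 1$. Your patch for that regime does not work: the inequality \eqref{eqn:entropy_production} you invoke requires $\supp(\mu_t)\subseteq S_\alpha^+$, which is never satisfied for a measure with density; and the Markov-type bound $(1-R_t)/(1-\cos\alpha)$ is far too weak here --- for the contradiction branch to fire you need $R^*$ bounded away from $1$ by roughly $c\sqrt{R_0}$, at which point $(1-R^*)/(1-\cos\alpha)$ is order $1$, not small. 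There is no single choice of $R^*$ that makes both branches of your dichotomy close, and the final sentence about ``controlling the range of $R_t$'' does not resolve this: over a window of length $T$ growing with $t_1$, the range of $R_t$ is not a priori controlled.

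The fix is to abandon the contradiction route and follow the paper's line: once you have the mass decay, go through $I_t$ and the upper bound on $\partial_t(R_t^2)$, which works uniformly in $R_t$.
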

\begin{proof}
    Assume that $t_2 - t_1 > T > \delta(\lambda,R_0,\alpha)$, where $\delta(\lambda,R_0,\alpha)$ is defined in \eqref{eqn:small time}. For any $r \in [t_1+\delta,t_2]$, by \Cref{lemma:shrink from negative cap to positive cap}, we see that
        \begin{align*}
            \mu_{r}\left(\S \backslash S_{\alpha} ^+ (r)  \right) \leq \mu_{r}\left( \phi_{r - \delta,r} \left( S_{\alpha} ^- (r - \delta) \right) \right) = \mu_{r - \delta} \left( S_{\alpha} ^- (r - \delta) \right).
        \end{align*}
    By \Cref{lem:exponential decay negative cap square}, we see that
        \begin{align*}
            f_{r - \delta} ^2 \left( S_{\alpha} ^- (r - \delta) \right) \leq e^{-\frac{(d-1) \lambda R_0\cos{\alpha}}{2} (r-\delta-t_1)} f_{t_1} ^2 \left( S_{\alpha} ^- (t_1) \right).
        \end{align*}
    Hence, by H\"older's inequality, we have that
        \begin{align}\label{eqn:exponential decay on long interval 2}
            \mu_{r}\left(\S \backslash S_{\alpha} ^+ (r)  \right) \leq C_d \cdot e^{-\frac{(d-1) \lambda R_0\cos{\alpha}}{4} (r-\delta-t_1)} \left[ f_{t_1} ^2 \left( S_{\alpha} ^- (t_1) \right) \right]^{\frac{1}{2}},
        \end{align}
    where $C_d>0$ is a constant depending on $d$ (actually the square root of the surface measure of $\S$, which is less than $10$ for any $d$). By \Cref{lemma:Negative Cap Square Derivative}, we see that 
        \begin{align*}
            \left[ f_{t_1} ^2 \left( S_{\alpha} ^- (t_1) \right) \right]^{\frac{1}{2}} 
            \leq \|f_{t_1}\|_{L^2(\S)} 
            \leq e^{(d-1)t_1} \|f_{0}\|_{L^2(\S)} 
        \end{align*}
Also, by the choice of $\delta$ in \eqref{eqn:small time}, and $\alpha \geq \frac{\pi}{100}$ we see that 
    \begin{align*}
        \frac{(d-1) \lambda R_0\cos{\alpha}}{4} \delta \leq 10^4 (d-1).
    \end{align*}
Hence,
    \begin{align}\label{eqn:exponential decay outside positive cap}
        \mu_{r}\left(\S \backslash S_{\alpha} ^+ (r)  \right) \leq 10  \cdot e^{-\frac{(d-1) \lambda R_0\cos{\alpha}}{4} r} \cdot e^{(d-1)(2t_1 + 10 ^4)} \cdot \|f_0\|_{L^2(\S)} 
        .
    \end{align}
On the other hand, by \Cref{thm:almost exponential decay I_t}, we have that $I_t+\partial_t I_t \leq 10^2 \mu_t \left( \S \backslash S_{\alpha} ^+ (t) \right)$. Multiply $e^t$ on both sides and integrate from $t_1$ to $r$, we obtain that
    \begin{align*}
        I_r \leq I_{t_1}e^{-r+t_1}+ \frac{10^3  \cdot e^{(d-1)(3t_1 + 10 ^4)} \cdot \|f_0\|_{L^2(\S)}}{1-\frac{(d-1) \lambda R_0\cos{\alpha}}{4}} e^{-\frac{(d-1) \lambda R_0\cos{\alpha}}{4} r}.
    \end{align*}
We notice that, if $\frac{\lambda R_0\cos{\alpha}}{4} r$ is much larger than $3t_1 + 10 ^4$, the right hand side of the above inequality can be very small, and hence $I_r$ is quantitatively small. Also, notice that, by \Cref{lemma:derivative M_t R_t} and the assumption that $R_t \geq \lambda R_0$, we have that for any $r \in [t_1 + \delta,t_2]$,
    \begin{align*}
        \partial_t R_t \big|_{t=r}\leq \frac{3I_r}{2\lambda R_0}+ \frac{\epa^2}{2\lambda R_0} \leq \frac{3I_r}{2\lambda R_0}+ 10^{-8} \lambda^3 R_0 ^3 \sin^2 {\alpha}.
    \end{align*}
Combine the above two inequalities, and the fact that $I_t \leq 2$ by its definition, we can simplify the above expression by writing
    \begin{align*}
        \partial_t R_t \big|_{t=r} \leq \frac{C_0}{\lambda R_0} e^{-\frac{(d-1)}{8}\left(r-30 t_1 - 10^5 \right)} + 10^{-8} \lambda^3 R_0 ^3 \sin^2 {\alpha},
    \end{align*}
where $C_0$ is a constant depending on $R_0$ and $\|f_0\|_{L^2(\S)}$ 
. Hence, for 
    \begin{align*}
        T= \frac{8}{d-1}\log \left( \frac{C_1}{R_0 ^4}  \right) + 30 t_1 + 10 ^5,
    \end{align*}
where $C_1$ is a constant depending on $R_0$ and $\|f_0\|_{L^2(\S)}$, we see that if $t_2 - t_1 \geq T$, then  $\partial_t R_t$ is upper-bounded by a positive function which is smaller than $ \frac{1}{8} (\sin ^4 {\alpha})  \lambda^3 R_0 ^3$ for any $t \geq t_1+T$, and we can repeat the above arguments until $t = +\infty$.
\end{proof}

\begin{proof}[Proof of \Cref{thm:exponential small outside positive cap}]

    Recall that $\alpha = \frac{\pi}{100}$ now by our assumption. Fix the $\lambda = 1-10^{-10} R_0 ^4 \geq 1-10^{-10} R_0 ^2$. We divide $\R_{\geq 0}$ into pieces $0=s_{-1} \leq t_0 < s_0 \leq t_1 < s_1 \leq t_2 < s_2 \cdots $, where for any $k \geq 0$
    \begin{align*}
        t_k \coloneq \inf \left\{ t \geq s_{k-1} \ \bigg| \  \partial_t R_t \geq \frac{1}{8} (\sin ^4 {\alpha})  \lambda^3 R_0 ^3 \right\} , \quad s_k \coloneq t_k +1 .
    \end{align*} 
    We first show that this construction must stop at some $k_{\ast}$-th step. By \Cref{lemma: all time lower bound R_t}, we have that $R_t \geq \lambda R_0$ for all $t \geq 0$. Actually, by \Cref{lemma:increasing R_t} and the proof of \Cref{lemma: all time lower bound R_t}, we see that for any $k >0$, 
        \begin{align*}
            R_{s_k} ^2 \geq R_{t_k}^2 + \frac{\sin ^4 {\alpha}}{100} \lambda^4 R_0 ^4 \geq \lambda^2 R_0 ^2 + \frac{\sin ^4 {\alpha}}{100} \lambda^4 R_0 ^4 \geq R_0 ^2 \left(1+\frac{12 R_0 ^2}{10^{10}} \right) .
        \end{align*}
    Hence, in \Cref{lemma:almost increasing R_t}, if we replace $R_0$ with $R_{s_{k-1}}$, we see that on $[s_{k-1} , t_k]$, $ \partial_t R_t \leq \frac{1}{8} (\sin ^4 {\alpha})  \lambda^3 R_0 ^3 \leq \frac{1}{8} (\sin ^4 {\alpha})  \lambda^3 R_{s_{k-1}} ^3$, and the assumption of \Cref{lemma:almost increasing R_t} is satisfied, and hence $R_{t_k} \geq \lambda R_{s_{k-1}}$. Hence, we can use \Cref{lemma:increasing R_t} and the proof of \Cref{lemma: all time lower bound R_t} again, and see that for any $k>0$, 
        \begin{align*}
            \begin{split}
                R_{s_k} ^2 &\geq R_{t_k}^2 + \frac{\sin ^4 {\alpha}}{100} \lambda^4 R_0 ^4 \geq \lambda^2 R_{s_{k-1}} ^2 + \frac{16}{10^{10}}\lambda^4 R_0 ^4
                \\  &\geq R_{s_{k-1}} ^2 + \lambda^2 -1 + \frac{16}{10^{10}}\lambda^4 R_0 ^4 \geq R_{s_{k-1}} ^2+ \frac{12}{10^{10}}R_0 ^{6}.
            \end{split}
        \end{align*}
Because $R_{s_k} ^2 \leq 1$, we must have that $k_{\ast} \leq 10^{9} R_0 ^{-6}$. Together with \Cref{thm:length of small dR_t intervals}, we see that for any $k \in \llbracket 0, k_{\ast}\rrbracket$, $t_k - s_{k-1} \leq C_u s_{k-1}+C_0$, with $C_u$ and $C_0$ obtained in \Cref{thm:length of small dR_t intervals}, and $t_{k_{\ast}+1}= +\infty$. 
Hence, we may set $\widetilde{T}_0=s_{k_{\ast}}$, which depends on $R_0$ and $\|f_0\|_{L^2(\S)}$. By \eqref{eqn:exponential decay outside positive cap} and its proof, the following inequality holds true
    \begin{equation}\label{eqn:R0_dependent_rate}
        \mu_t \left( \S \backslash S_{\alpha} ^+ (U_t) \right) \leq \widetilde{C}_0 e^{-(d-1)\widetilde{c}_1 R_0 t}, \quad \forall t \geq \widetilde{T}_0 \,,
    \end{equation}
    where $\widetilde{C}_0$ is a constant depending on $R_0$ and $\|f_0\|_{L^2(\S)}$, and $\widetilde{c}_1$ is a universal constant. If $R_0>\frac{1}{2}$, the desired form of result in \Cref{thm:exponential small outside positive cap} follows directly by identifying $T_0=\widetilde{T}_0$, $C_0=\widetilde{C}_0$ and $c_1=\widetilde{c}_1$. Otherwise, we define $\widetilde{T}_1=\widetilde{T}_0\vee\frac{\log(10\widetilde{C}_0)}{(d-1)\widetilde{c}_1 R_0}$ such that the mass outside the cap $S_{\alpha}^+(U_{\widetilde{T}_1})$ is small: $\mu_{\widetilde{T}_1}(\S \backslash S_{\alpha}^+(U_{\widetilde{T}_1})) \leq \frac{1}{10}$. At time $\widetilde{T}_1$, we estimate the lower bound for  $R_{\widetilde{T}_1}$ as follows:
    \begin{align}\label{eqn:large R0 time}
        \begin{split}
            R_{\widetilde{T}_1} &= \int_{\S} \langle y , U_{\widetilde{T}_1} \rangle \diff \mu_{\widetilde{T}_1}(y) 
            \\  
            &\geq \cos \alpha \ \mu_{\widetilde{T}_1} \left(S_{\alpha} ^+ (\widetilde{T}_1) \right) - \mu_{\widetilde{T}_1}\left(\mathbb{S}^{d-1} \backslash S_{\alpha} ^+ (\widetilde{T}_1)  \right)
            \\  &= \cos \alpha  - (1+\cos \alpha) \ \mu_{\widetilde{T}_1}\left(\mathbb{S}^{d-1} \backslash S_{\alpha} ^+ (\widetilde{T}_1) \right) 
            \\  &\geq \frac{9}{10} \cos\alpha - \frac{1}{10} \geq \frac{1}{2} .
        \end{split}
    \end{align}
    Thus, we can reset the starting time of the estimate \eqref{eqn:R0_dependent_rate} to $\widetilde{T}_1$. There exists $\widetilde{T}_2\geq 0$, depending on $\|f_{\widetilde{T}_1}\|_{L^2(\S)}$, and hence on $\|f_0\|_{L^2(\S)}$ and $R_0$ via \eqref{eqn:L2_norm_gronwall}, such that
    \begin{equation*}
        \mu_{t} \left( \S \backslash S_{\alpha} ^+ (U_{t}) \right) \leq \widetilde{C}_0 e^{-\frac{1}{2}(d-1)\widetilde{c}_1 (t-\widetilde{T}_1)}, \quad \forall t \geq \widetilde{T}_2 + \widetilde{T}_1 \,.
    \end{equation*}
    The desired result in \Cref{thm:exponential small outside positive cap} then follows by identifying $T_0=\widetilde{T}_2 + \widetilde{T}_1$, $C_0=\widetilde{C}_0 e^{\frac{1}{2}(d-1)\widetilde{c}_1 \widetilde{T}_1}$ and $c_1=\widetilde{c}_1/2$.
\end{proof}

\section{More accurate \texorpdfstring{$C_0,T_0$}{TEXT} in \Cref{thm:main thm}: Proof of \Cref{thm:main thm improved constants}}
The assumptions in this section are the same as in \Cref{thm:main thm}, that is, we have a family of $L^2(\S)$ probability densities, $\{f_t(x)\}_{t \geq 0}$ satisfying \eqref{eqn:modified gradient flow}, and $\epa \leq c_u R_0 ^6$. We notice that $C_0,T_0$ in \Cref{thm:main thm} (and \Cref{thm:exponential small outside positive cap}) comes from the proof for \Cref{thm:length of small dR_t intervals}. In particular, the term $T$ in \eqref{eqn:T dependence on t1} of \Cref{thm:length of small dR_t intervals} depends on $t_1$. Because of this dependence, when $t$ is in the interval where $\partial_t R_t \geq \frac{1}{8} (\sin ^4 {\alpha})  \lambda^3 R_0 ^3$, we lose control of the growth of $ f_{t} ^2\left(\S \backslash S_{\alpha} ^+ (t) \right)$. The following arguments are mainly for fixing this issue. For this purpose, we carefully investigate the characteristic flow associated with the dynamics \eqref{eqn:modified gradient flow}. Our analysis is inspired by the problems for the Kuramoto model considered in \cite{ha2020emergence, morales2022trend} where $d=2$ and $\beta=0$, but our more general dynamics \eqref{eqn:modified gradient flow} and the geometry of $\S$ make the arguments more involved than the circle case. 

We adopt the diffeomorphism  notation we used in proving \Cref{lemma:shrink from negative cap to positive cap}. That is, for any $t_1 \geq 0$, we define the diffeomorphisms $\{\phi_{t_1\to t}(x)\}_{t \geq t_1}$ on $\mathbb{S}^{d-1}$ by solving the ODE
    \begin{align*}
        \partial_t \phi_{t_1\to t}(x) = \deV_{t}(\phi_{t_1\to t}(x)), \text{ with }  \phi_{t_1\to t_1}(x) = x, \ \forall x \in \mathbb{S}^{d-1} .
    \end{align*}
After exploiting more properties of $\phi_{t_1\to t}$, we will be able to modify our \Cref{thm:length of small dR_t intervals}.

\begin{lemma}\label{lem:sliding norm}
    For any $t\geq t_1\geq 0$, and any measurable set $B \subseteq \S$, we have that
        \begin{align*}
            \frac{\de}{\de t} f_t ^2 \left( \phi_{t_1\to t}(B)\right)\leq 2(d-1)\cdot  f_t ^2 \left( \phi_{t_1\to t}(B)\right).
        \end{align*}
\end{lemma}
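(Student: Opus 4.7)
The plan is to exploit the fact that $\phi_{t_1\to t}$ is the characteristic flow of the continuity equation~\eqref{eqn:modified gradient flow}, so that mass is transported along it: $(\phi_{t_1\to t})_\# \mu_{t_1} = \mu_t$. Writing $J_{t_1\to t}(x)$ for the Jacobian of $\phi_{t_1\to t}$ with respect to the uniform surface measure on $\S$, this transport identity reads $f_{t_1}(x) = f_t(\phi_{t_1\to t}(x))\,J_{t_1\to t}(x)$, and a change of variables yields
\[
f_t^2\bigl(\phi_{t_1\to t}(B)\bigr) = \int_B \frac{f_{t_1}(x)^2}{J_{t_1\to t}(x)}\,\de x.
\]
Differentiating in $t$ via the standard ODE $\partial_t J_{t_1\to t}(x) = (\dive\,\deV_t)(\phi_{t_1\to t}(x))\,J_{t_1\to t}(x)$ for the Jacobian of the flow of $\deV_t$ and then changing variables back, I arrive at
\[
\frac{\de}{\de t}\,f_t^2\bigl(\phi_{t_1\to t}(B)\bigr) = -\int_{\phi_{t_1\to t}(B)} f_t(y)^2\,\dive\,\deV_t(y)\,\de y.
\]

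It is therefore enough to establish the pointwise upper bound $-\dive\,\deV_t(y) \le 2(d-1)$ on $\S$. I will use the decomposition $\deV_t = V_t + W_t$ from the previous section and compute the two divergences separately. Since $V_t(x) = \proj_x[M_t]$ is the Riemannian gradient on $\S$ of the linear map $x \mapsto \langle M_t, x\rangle$, a degree-one spherical harmonic, and such functions are eigenfunctions of $\Delta_\S$ with eigenvalue $-(d-1)$, one obtains $\dive\,V_t(x) = -(d-1)\langle M_t, x\rangle$. For the perturbation $W_t$, \Cref{lem:Kuramoto perturbation} provides the operator-norm bound $\|\nabla W_t(x)\|_2 \le \epa$; since the spherical divergence is the trace of $\nabla W_t(x)$ over a $(d-1)$-dimensional orthonormal frame of $T_x\S$, this gives $|\dive\,W_t(x)| \le (d-1)\epa$. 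Combining the two ingredients and using $R_t \le 1$ together with the standing smallness assumption $\epa \le c_u R_0^6 \le 1$, I get
\[
-\dive\,\deV_t(x) = (d-1)\langle M_t, x\rangle - \dive\,W_t(x) \le (d-1)(R_t + \epa) \le 2(d-1),
\]
which, inserted into the integral expression above, completes the proof.

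I do not expect any substantial obstacle: the transport-Jacobian calculation is classical once mass conservation along the characteristics of~\eqref{eqn:modified gradient flow} is recorded, and the only genuinely nontrivial step is the explicit value of $\dive\,V_t$, which relies solely on $\S$ being a round sphere and on $\langle M_t, \cdot\rangle$ being a first-order spherical harmonic. The control of $\dive\,W_t$ is immediate from the $C^1$-perturbation estimate already proved in \Cref{lem:Kuramoto perturbation}, which mirrors exactly how $\|f_t\|_{L^2(\S)}^2$ is handled in the proof of \Cref{lemma:Negative Cap Square Derivative}; the present lemma is, in essence, the localized version of that global estimate.
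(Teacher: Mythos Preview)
Your proof is correct and follows essentially the same approach as the paper: both derive the identity $\frac{\de}{\de t}\,f_t^2(\phi_{t_1\to t}(B)) = -\int_{\phi_{t_1\to t}(B)} f_t^2\,\dive\,\deV_t$ and then bound $-\dive\,\deV_t \le (d-1)(R_t+\epa)\le 2(d-1)$ via $\dive\,V_t(x)=-(d-1)\langle M_t,x\rangle$ and the perturbation estimate on $W_t$. The only minor difference is in deriving that identity---the paper uses a Reynolds-transport/divergence-theorem argument for smooth-boundary sets and defers the general case to the area formula, whereas you go directly through the Jacobian/change-of-variables route, which is precisely the general-case method the paper alludes to.
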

\begin{proof}
    For simplicity, we prove the case where $\phi_{t_1\to t}(B)$ has a smooth topological boundary $\partial \phi_{t_1\to t}(B)$ in $\S$. The general cases for $B$ can be done using the area formula (change of variables), and similar computations were used to prove Lemma A.1 in \cite{huang2024convergence}.

    When $\partial \phi_{t_1\to t}(B)$ is smooth, we notice that 
        \begin{align*}
            \begin{split}
                &\frac{\de}{\de t} f_t ^2 \left( \phi_{t_1\to t}(B)\right) = \frac{\de}{\de t} \int_{\phi_{t_1\to t}(B)} f_t ^2 (x) \diff x 
                \\  &= \int_{\partial \phi_{t_1\to t}(B)} f_t ^2 (x) \langle \nu(x) , \partial_t \phi_{t_1\to t} (\phi_{t_1\to t}^{-1}(x))\rangle \diff \mathcal{H}^{n-2}(x) + \int_{\phi_{t_1\to t}(B)} \partial_t (f_t ^2 (x) ) \diff x
                \\  &= \int_{ \phi_{t_1\to t}(B)} \dive \left(f_t ^2 (x) \deV_t(x)\right) \diff x + \int_{\phi_{t_1\to t}(B)} \partial_t (f_t ^2 (x) ) \diff x.
            \end{split}
        \end{align*}
    where $n(x)$ is the outer unit normal vector of $\partial \phi_{t_1\to t}(B)$ in $\S$, and $\mathcal{H}^{n-2}(x)$ is the Hausdorff measure. In the last line, we used the divergence theorem and $\partial_t \phi_{t_1\to t} (\phi_{t_1\to t}^{-1}(x)) = \deV_t(x)$. Because $f_t$ satisfies \eqref{eqn:modified gradient flow}, we have that 
        \begin{align*}
            \frac{\de}{\de t} f_t ^2 \left( \phi_{t_1\to t}(B)\right) = - \int_{ \phi_{t_1\to t}(B)} f_t ^2 (x) \dive \left(\deV_t(x) \right) \diff x.
        \end{align*}
    By \eqref{eqn:modified wss gradient}, we see that
        \begin{align*}
            \begin{split}
                -\dive\left(\deV_t(x) \right) &= \int_{\S} \left[ (d-1)\langle x , y \rangle  - \dive \left(W_t(x) \right) \right] f_t(y)\diff y
                \\  &\leq (d-1)(\langle x, M_t\rangle + \epa).
            \end{split}
        \end{align*}
    Combine the above inequalities and the fact that $\langle x, M_t\rangle \leq 1$, we get \Cref{lem:sliding norm}.
\end{proof}

In the following, we use $\Cx[S]$ to denote the geodesically convex hull of a set $S \subseteq \S$, that is, the intersection of all those closed geodesically convex subsets of $\S$ that contain $S$. The definition implies that $\Cx(S)$ is unique and closed. We first need the following geometric fact:
\begin{lemma}\label{lem:fact of geodesically convex sets}
    Let $S \subseteq \S$ be a closed subset. If $\inf_{x,y \in S} \langle x ,y \rangle > \frac{\sqrt{2}}{2}$, then, $\inf_{x,y \in S} \langle x ,y \rangle = \inf_{x,y \in \Cx [S]} \langle x ,y \rangle$.
\end{lemma}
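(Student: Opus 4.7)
The plan is to prove the lemma by reducing to geodesic convexity of spherical caps. First I would establish the following cap convexity fact: for any $q \in \S$ and any $c > 0$, the spherical cap $C_q^c := \{p \in \S : \langle p, q\rangle \geq c\}$ is geodesically convex. Given $x, y \in C_q^c$, since $\langle x, q\rangle, \langle y, q\rangle \geq c > 0$ we cannot have $x = -y$, so the angle $\theta$ between them satisfies $\theta < \pi$ and the minimizing geodesic is unique. Working in the $2$-plane $P = \spann\{x, y\}$ and letting $q_P$ denote the orthogonal projection of $q$ onto $P$, we can write $\langle w(s), q\rangle = \|q_P\|\cos(s - \psi)$, where $s \in [0, \theta]$ is arc length along the geodesic starting at $x$, and $\psi$ is the angle that $q_P$ makes with $x$ inside $P$. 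The endpoint hypotheses read $|\psi|, |\theta - \psi| \leq \arccos(c/\|q_P\|)$; since $s \mapsto |s - \psi|$ attains its maximum on $[0, \theta]$ at an endpoint, one gets $|s - \psi| \leq \arccos(c/\|q_P\|)$ throughout, i.e.\ $\langle w(s), q\rangle \geq c$ for all $s \in [0, \theta]$. This proves cap convexity.

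Equipped with this, set $c_0 := \inf_{x, y \in S}\langle x, y\rangle > \sqrt{2}/2 > 0$ and define
\begin{align*}
T := \bigcap_{q \in S} C_q^{c_0} = \{p \in \S : \langle p, q\rangle \geq c_0 \ \text{for every} \ q \in S\}.
\end{align*}
Any two points of $T$ both lie in the cap of angular radius $\arccos(c_0) < \pi/2$ around any fixed $q \in S$, so they are not antipodal and their minimizing geodesic is unique. Cap convexity forces this geodesic to remain in every $C_q^{c_0}$, hence in $T$, so $T$ is geodesically convex. Since $T$ is closed and contains $S$, the defining intersection of $\Cx[S]$ yields $\Cx[S] \subseteq T$, so that $\langle w, q\rangle \geq c_0$ for every $w \in \Cx[S]$ and every $q \in S$.

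A symmetric iteration finishes the proof. Define
\begin{align*}
T' := \bigcap_{w \in \Cx[S]} C_w^{c_0}.
\end{align*}
The previous step is exactly the statement that $S \subseteq T'$, and the same reasoning shows $T'$ is closed and geodesically convex, so $\Cx[S] \subseteq T'$ as well. This yields $\langle w_1, w_2\rangle \geq c_0$ for every pair $w_1, w_2 \in \Cx[S]$, i.e.\ $\inf_{w_1, w_2 \in \Cx[S]}\langle w_1, w_2\rangle \geq c_0$. The opposite inequality is immediate from $S \subseteq \Cx[S]$, so we obtain the claimed equality.

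The only substantive step is the cap-convexity computation, which is a short trigonometric verification; everything else is a formal two-step application of the ``intersection of (geodesically) convex sets is convex'' paradigm. I note that the hypothesis $c_0 > \sqrt{2}/2$ is slightly stronger than what the argument strictly requires (any $c_0 > 0$ suffices), but this stronger form presumably matches the needs of the subsequent application in the paper.
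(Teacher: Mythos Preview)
Your proof is correct and takes a genuinely different route from the paper's. The paper argues by contradiction: assuming the infimum over $\Cx[S]$ is achieved at some $Z_1,Z_2$ with $Z_1\notin S$, it extends the geodesic through $Z_1,Z_2$ to a point $Z_3$ with $\langle Z_1,Z_3\rangle=0$, invokes the spherical triangle inequality to show $S_\theta^+(Z_2)\subseteq S_{\pi/2}^+(Z_3)$ with tangency only at $Z_1$, and then shrinks the half-sphere slightly to separate $Z_1$ from $\Cx[S]$, a contradiction. Your argument is instead a direct two-step closure: intersect caps $C_q^{c_0}$ over $q\in S$ to trap $\Cx[S]$, then intersect caps over $\Cx[S]$ to bound all pairwise inner products from below. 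This is cleaner and more conceptual, relying only on the standard fact that intersections of geodesically convex sets (here, caps with $c>0$) remain geodesically convex once antipodality is excluded. You are also right that your argument needs only $c_0>0$; the paper's construction genuinely uses $c_0>\sqrt{2}/2$ to ensure the angle $\theta$ between $Z_1,Z_2$ stays below $\pi/2$, which is needed for the triangle-inequality step to place $Z_3$ in the open hemisphere around $Z_2$.
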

\begin{proof}
    Because $S \subseteq \Cx[S]$, we have that $\inf_{x,y \in S} \langle x ,y \rangle \geq \inf_{x,y \in \Cx [S]} \langle x ,y \rangle$. Assume that $\inf_{x,y \in S} \langle x ,y \rangle > \inf_{x,y \in \Cx [S]} \langle x ,y \rangle$, and $\inf_{x,y \in \Cx [S]} \langle x ,y \rangle$ is achieved by $Z_1,Z_2 \in \Cx [S]$, and we use $\theta$ to denote the angle between $Z_1,Z_2$, that is, $\cos \theta = \langle Z_1, Z_2 \rangle$ and $\theta \in [0, \frac{\pi}{2})$. Notice that we can get $\theta < \frac{\pi}{2} $, because the maximal angle of points in $S$ does not exceed $\frac{\pi}{4}$. Assume that $Z_1 \notin S$, then we consider the spherical cap $ S_{\theta} ^+ (Z_2)$, where we used the definition \eqref{eqn:definition of positive cap}. Because $\cos \theta = \inf_{x,y \in \Cx [S]} \langle x ,y \rangle$, we have that $\Cx [S] \subseteq S_{\theta} ^+ (Z_2)$. Hence, $S \subseteq S_{\theta} ^+ (Z_2)$. We extend the geodesic from $Z_1$ to $Z_2$ further to a point $Z_3$, such that $\langle Z_1 ,Z_3 \rangle =0$. Recall the triangle inequality on sphere: for any $X_1,X_2,X_3$ such that $X_1,X_3 \in S_{\frac{\pi}{2}} ^+ (X_2)$, we have that $\theta_{21}+\theta_{23} \geq \theta_{13}$, where $\theta_{21}$ is the angle between $X_2$ and $X_1$, $\theta_{23}$ is the angle between $X_2$ and $X_3$, and $\theta_{13}$ is the angle between $X_1$ and $X_3$. Hence, $S_{\theta} ^+ (Z_2) \subseteq S_{\frac{\pi}{2}} ^+ (Z_3)$, and the boundaries of these two sets are only tangent at $Z_1$. Apparently, $S \subseteq \Cx [S] \subseteq S_{\theta} ^+ (Z_2) \subseteq S_{\frac{\pi}{2}} ^+ (Z_3)$. Because $Z_1 \notin S$, and the boundaries of $S_{\theta} ^+ (Z_2)$ and $S_{\frac{\pi}{2}} ^+ (Z_3)$ only intersect at $Z_1$, we see that the boundary of $S_{\frac{\pi}{2}} ^+ (Z_3)$ does not contain any point in $S$. Hence, we can take a very $\epsilon >0$, such that $S \subseteq S_{\frac{\pi}{2}-\epsilon} ^+ (Z_3)$. Because $S_{\frac{\pi}{2}-\epsilon} ^+ (Z_3)$ is also a closed geodesically convex set, by the definition of $\Cx[S]$, we must have that $\Cx[S] \subseteq S_{\frac{\pi}{2}-\epsilon} ^+ (Z_3)$. This is a contradiction, because $Z_1 \in \Cx[S]$ but $Z_1 \notin S_{\frac{\pi}{2}-\epsilon} ^+ (Z_3)$.
\end{proof}

\begin{lemma}\label{lem:Emergence of attractor}
    Fix a time $t_1 \geq 0$ and a closed subset $B \subseteq \S$ which is properly contained in a hemisphere of $\S$. Define 
        \begin{align*}
            D_t(B) \coloneq \inf_{x,y \in \Cx[\phi_{t_1\to t}(B)] } \langle x , y \rangle,
        \end{align*}
    and $\Gamma(B) \coloneq \mu_{t_1}(B) (1+D_{t_1}(B))-1$. If $\Gamma(B)>0$, $D_{t_1}(B) >0$, and if we have that $\epa^2 \leq \frac{1}{4} (1-D_{t_1}(B))  \Gamma(B) ^2$, then for any $t \geq t_1$, we have that $D_t(B) \geq D_{t_1}(B)$, and 
        \begin{align}\label{eqn:first in emergence of attractor}
            \inf_{x \in \Cx [\phi_{t_1\to t}(B)]} \langle x ,M_t \rangle \geq \mu_{t_1}(B)  \left( 1+D_t(B)\right)-1 \geq \Gamma(B), 
        \end{align}
    and
    \begin{align}\label{eqn:second in emergence of attractor}
            1- D_t(B) \leq \max \left \{ (1-D_{t_1}(B)) e^{\frac{-\Gamma(B)}{4}(t-t_1)} , \ \frac{4}{\Gamma(B)^2} \epa^2 \right\}.
        \end{align}
\end{lemma}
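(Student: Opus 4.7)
The plan is to split the statement into three parts: (i) prove the first inequality in~\eqref{eqn:first in emergence of attractor} via mass conservation along the characteristic flow, (ii) derive a pointwise Gronwall contraction for pairwise distances of points in $\phi_{t_1\to t}(B)$ using the decomposition $\deV_t = V_t + W_t$, and (iii) bootstrap these two ingredients to show simultaneously that $D_t(B)\ge D_{t_1}(B)$ and the quantitative bound~\eqref{eqn:second in emergence of attractor}.

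For (i), mass is preserved along the characteristic flow, so $\mu_t(\phi_{t_1\to t}(B))=\mu_{t_1}(B)$. For any $x\in\Cx[\phi_{t_1\to t}(B)]$, splitting the integral defining $\langle x, M_t\rangle$ across $\phi_{t_1\to t}(B)$ and its complement and using $\langle x, y\rangle \geq D_t(B)$ on the former (by definition of $D_t(B)$ and $\phi_{t_1\to t}(B)\subseteq\Cx[\phi_{t_1\to t}(B)]$) and $\langle x, y\rangle\ge -1$ on the latter yields $\langle x, M_t\rangle \ge \mu_{t_1}(B)(1+D_t(B))-1$, which is exactly the first inequality in~\eqref{eqn:first in emergence of attractor}. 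The lower bound $\Gamma(B)$ then follows from the monotonicity $D_t(B)\ge D_{t_1}(B)$ proved in step (iii).

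For (ii), fix $x_0, y_0\in B$ and set $\tilde x_t=\phi_{t_1\to t}(x_0)$, $\tilde y_t=\phi_{t_1\to t}(y_0)$. Writing $\deV_t(x) = \proj_x[M_t] + W_t(x)$ with $\|W_t\|_2\le \epa$ from Lemma~\ref{lem:Kuramoto perturbation}, a direct computation yields
\begin{align*}
\partial_t(1-\langle \tilde x_t,\tilde y_t\rangle) = -\bigl(\langle M_t,\tilde x_t\rangle+\langle M_t,\tilde y_t\rangle\bigr)(1-\langle \tilde x_t,\tilde y_t\rangle) - \langle W_t(\tilde x_t),\tilde y_t\rangle - \langle W_t(\tilde y_t),\tilde x_t\rangle.
\end{align*}
The sign of the main term is negative whenever $M_t$ lies on the same hemisphere as $\tilde x_t,\tilde y_t$, which step (i) guarantees.

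For (iii), set $T_\star\coloneq\sup\{T\ge t_1: D_s(B)\ge D_{t_1}(B)\text{ for all }s\in[t_1,T]\}$. On $[t_1,T_\star)$, step (i) gives $\langle M_s,\tilde x_s\rangle,\langle M_s,\tilde y_s\rangle \ge \Gamma(B)$, so the ODE from step (ii) reads $\partial_s(1-\langle \tilde x_s,\tilde y_s\rangle)\le -2\Gamma(B)(1-\langle \tilde x_s,\tilde y_s\rangle)+2\epa$, and Gronwall yields a bound of the form $(1-\langle x_0,y_0\rangle)e^{-2\Gamma(B)(s-t_1)}+\epa/\Gamma(B)$. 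Taking the supremum over $x_0,y_0\in B$ and invoking \Cref{lem:fact of geodesically convex sets} to pass from pairwise distances in $\phi_{t_1\to s}(B)$ to the quantity $D_s(B)$ produces a bound on $1-D_s(B)$. The hypothesis $\epa^2\le (1-D_{t_1}(B))\Gamma(B)^2/4$ then prevents the bootstrap from breaking, so $T_\star=+\infty$; finally, splitting into the regimes where the exponential term dominates versus the $\epa$-term delivers the max-form~\eqref{eqn:second in emergence of attractor}, at the cost of weakening the naive Gronwall rate $2\Gamma(B)$ down to $\Gamma(B)/4$ and the saturation to $4\epa^2/\Gamma(B)^2$.

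The main obstacle is the interplay between the flowed set $\phi_{t_1\to t}(B)$ (where Gronwall applies directly) and its geodesically convex hull $\Cx[\phi_{t_1\to t}(B)]$ (which defines $D_t(B)$ but is not itself transported by the flow). Bridging them via \Cref{lem:fact of geodesically convex sets} requires the inner products to stay above $\sqrt{2}/2$, which is what forces the bootstrap structure and the precise smallness assumption on $\epa$; additionally, converting the crude Gronwall bound $\epa/\Gamma(B)$ into the sharper saturation $4\epa^2/\Gamma(B)^2$ appearing in~\eqref{eqn:second in emergence of attractor} requires iterating the estimate once the exponential term has decayed past the $\epa$-scale.
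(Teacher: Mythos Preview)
Your overall plan matches the paper's: step (i) is exactly the paper's mass-conservation argument, and your pointwise ODE for $1-\langle\tilde x_t,\tilde y_t\rangle$ is a slightly cleaner version of the paper's computation (the paper routes through the geodesic midpoint $Z$, but the content is the same). The bootstrap structure in step (iii) is also the same.

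There is, however, a real gap in step (ii)--(iii). You bound the perturbation terms by $|\langle W_t(\tilde x_t),\tilde y_t\rangle|\le\epa$ and run Gronwall on $\partial_s u\le -2\Gamma u+2\epa$ with $u=1-\langle\tilde x_s,\tilde y_s\rangle$. The equilibrium of this ODE is $u=\epa/\Gamma$, and no amount of ``iterating once the exponential term has decayed'' will push you below that: re-running the same inequality from a new starting point reproduces the same floor. So your scheme cannot reach the saturation $4\epa^2/\Gamma^2$ claimed in~\eqref{eqn:second in emergence of attractor}; under the hypothesis $\epa^2\le\frac14(1-D_{t_1})\Gamma^2$ your floor $\epa/\Gamma$ can be as large as $\tfrac12\sqrt{1-D_{t_1}}$, which is far too big.

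The fix is not iteration but a sharper bound on the error term. Since $W_t(\tilde x_t)\in T_{\tilde x_t}\S$ is orthogonal to $\tilde x_t$, you have
\[
|\langle W_t(\tilde x_t),\tilde y_t\rangle|
=|\langle W_t(\tilde x_t),\proj_{\tilde x_t}[\tilde y_t]\rangle|
\le \epa\,\|\proj_{\tilde x_t}[\tilde y_t]\|_2
=\epa\sqrt{1-\langle\tilde x_t,\tilde y_t\rangle^2}\le \sqrt{2}\,\epa\sqrt{u}\,.
\]
This is precisely the $2\epa\sin(2\theta)$ factor in the paper's inequality~\eqref{eqn:third in emergence of attractor}. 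The resulting ODE $\partial_s u\le -2\Gamma u+2\sqrt{2}\,\epa\sqrt{u}$ has equilibrium at $u=2\epa^2/\Gamma^2$, and a direct comparison argument (as in the paper's proof) then delivers the max-form~\eqref{eqn:second in emergence of attractor} with the stated constants.

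A smaller point: your invocation of \Cref{lem:fact of geodesically convex sets} to pass from $\inf_{x,y\in\phi_{t_1\to s}(B)}\langle x,y\rangle$ to $D_s(B)$ requires the former to exceed $\sqrt 2/2$, whereas the lemma only assumes $D_{t_1}(B)>0$. In every application in the paper $D_{t_1}(B)\ge 1-R_0/10$, so this is harmless in practice; the paper sidesteps the issue by working directly with pairs in $\Cx[\phi_{t_1\to t}(B)]$ and using that the geodesic midpoint $Z$ stays in the hull, so step (i) applies to $Z$.
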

\begin{proof}
    The first inequality \eqref{eqn:first in emergence of attractor} basically follows from the fact that for any set $A\subset\S$, $\mu_t(\phi_{t_1\to t}(A))$ is a constant because $f_t$ satisfies \eqref{eqn:modified gradient flow} and $\phi_{t_1\to t}$ is its characteristic flow. More precisely,
        \begin{align*}
            \begin{split}
                &\inf_{x \in \Cx[\phi_{t_1\to t}(B)]} \langle x ,M_t \rangle = \inf_{x \in \Cx[\phi_{t_1\to t}(B)]} \int_{\S} \langle x, y  \rangle f_t(y) \diff y
                \\  &\geq  \inf_{x \in \Cx[\phi_{t_1\to t}(B)]} \int_{\Cx[\phi_{t_1\to t}(B)]} \langle x, y  \rangle f_t(y) \diff y - \int_{\S \backslash \Cx[\phi_{t_1,t}(B)] } f_t(y) \diff y
                \\  &\geq D_t(B) \mu_t\left(\Cx[\phi_{t_1\to t}(B)] \right) - (1-f_t\left(\Cx[\phi_{t_1\to t}(B)] \right))
                \\  &=f_t\left(\Cx[\phi_{t_1\to t}(B)] \right)  \left( 1+D_t(B)\right)-1 
                \\  & \geq \mu_t\left(\phi_{t_1\to t}(B) \right)  \left( 1+D_t(B)\right)-1 = \mu_{t_1}\left( B\right)  \left( 1+D_t(B)\right)-1.
            \end{split}
        \end{align*}

    To prove the second inequality \eqref{eqn:second in emergence of attractor}, we need to compute the derivatives of $D_t(B)$ in $t$. Let $\phi_{t_1\to t}(x),\phi_{t_1\to t}(y)$ be two points in $\Cx[\phi_{t_1\to t}(B)]$, we have that
        \begin{align}\label{eqn:derivative of two char lines}
            \begin{split}
                &\frac{\de}{\de t} \left \langle \phi_{t_1\to t}(x), \phi_{t_1\to t}(y) \right \rangle = \left \langle \deV_t(\phi_{t_1\to t}(x)), \phi_{t_1\to t}(y) \right \rangle + \left \langle  \phi_{t_1\to t}(x) , \deV_t(\phi_{t_1\to t}(y)) \right \rangle
                \\ &= \left \langle V_t(\phi_{t_1\to t}(x)), \phi_{t_1\to t}(y) \right \rangle + \left \langle  \phi_{t_1\to t}(x) , V_t(\phi_{t_1\to t}(y)) \right \rangle 
                \\  & \quad + \left \langle W_t(\phi_{t_1\to t}(x)), \phi_{t_1\to t}(y) \right \rangle + \left \langle  \phi_{t_1\to t}(x) , W_t(\phi_{t_1\to t}(y)) \right \rangle
                \\ &= \left \langle M_t, \proj_{\phi_{t_1\to t}(x)} [\phi_{t_1\to t}(y)] \right \rangle + \left \langle M_t, \proj_{\phi_{t_1\to t}(y)} [\phi_{t_1\to t}(x)] \right \rangle 
                \\  & \quad + \left \langle W_t(\phi_{t_1\to t}(x)), \proj_{\phi_{t_1\to t}(x)} [\phi_{t_1\to t}(y)] \right \rangle + \left \langle  \proj_{\phi_{t_1\to t}(y)} [\phi_{t_1\to t}(x)], W_t(\phi_{t_1\to t}(y)) \right \rangle.
            \end{split}
        \end{align}
If we let $\theta \in [0,\frac{\pi}{2}]$ such that $\cos (2\theta) = \langle \phi_{t_1\to t}(x) , \phi_{t_1\to t}(y) \rangle$, then we see that $\|\proj_{\phi_{t_1\to t}(x)} [\phi_{t_1\to t}(y)]\|_2 = \sin (2\theta)$. Also, there is a $Z \in \Cx[\phi_{t_1\to t}(B)] \subseteq \S$ such that $\proj_{\phi_{t_1\to t}(x)} [\phi_{t_1\to t}(y)] + \proj_{\phi_{t_1\to t}(y)} [\phi_{t_1\to t}(x)] = 2 \sin(\theta) \sin(2\theta) \cdot Z $. This $Z$ is actually the middle point on the shortest great circle connecting $\phi_{t_1\to t}(x),\phi_{t_1\to t}(y)$. Hence,
    \begin{align*}
        \frac{\de}{\de t} \left \langle \phi_{t_1\to t}(x), \phi_{t_1\to t}(y) \right \rangle \geq   \inf_{z \in \Cx[\phi_{t_1\to t}(B)]} \langle z ,M_t \rangle - 2\epa \sin(2\theta),
    \end{align*}
where we also used \Cref{lem:Kuramoto perturbation}. Because $\phi_{t_1\to t}(x),\phi_{t_1\to t}(y)$ were chosen arbitrarily, by writing $\sin (2 \theta) = \sqrt{ 1- \cos^2 (2\theta)}$ and $\sin (\theta) = \sqrt{\frac{1-\cos (2\theta)}{2}}$, we obtain that
    \begin{align}\label{eqn:third in emergence of attractor}
        \begin{split}
            &\frac{\de}{\de t} D_t(B) \geq 2\sqrt{1- D_t(B) ^2 } \left( \sqrt{\frac{1-D_t(B)}{2}} \inf_{z \in \Cx[\phi_{t_1\to t}(B)]} \langle z ,M_t \rangle - \epa \right)
            \\  &\geq 2\sqrt{1- D_t(B) ^2 } \left( \sqrt{\frac{1-D_t(B)}{2}} \left[ \mu_{t_1}\left( B\right)  \left( 1+D_t(B)\right)-1 \right] - \epa \right),
        \end{split}
    \end{align}
where in the last step, we used the first inequality \eqref{eqn:first in emergence of attractor} which we just proved. Assume that $[t_1,t_2]$ is the maximal interval such that \eqref{eqn:second in emergence of attractor} holds true for any $t \in [t_1,t_2]$, then we want to show that $t_2 = +\infty$. First, because we have \eqref{eqn:second in emergence of attractor} on $[t_1,t_2]$, we obtain that $D_t(B) \geq D_{t_1}(B)>0$ for any $t \in [t_1,t_2]$, where we also used the assumption that $\epa^2 \leq \frac{1}{4} (1-D_{t_1}(B))  \Gamma(B) ^2$. If $t_2$ is a finite number, then we let $t=t_2$ in \eqref{eqn:third in emergence of attractor}, and obtain that
    \begin{align*}
        \begin{split}
            &\frac{\de}{\de t} \bigg|_{t=t_2} (1- D_t(B)) \leq -2\sqrt{1- D_{t_2}(B) ^2 } \left( \sqrt{\frac{1-D_{t_2}(B)}{2}} \left[ \mu_{t_1}\left( B\right)  \left( 1+D_{t_2}(B)\right)-1 \right] - \epa \right)
            \\  &\leq -2\sqrt{1- D_{t_2}(B) ^2 } \left( \sqrt{\frac{1-D_{t_2}(B)}{2}} \left[ \mu_{t_1}\left( B\right)  \left( 1+D_{t_1}(B)\right)-1 \right] - \epa \right)
            \\  &= -2\sqrt{1- D_{t_2}(B) ^2 } \left( \sqrt{\frac{1-D_{t_2}(B)}{2}} \Gamma(B)  - \epa \right).
        \end{split}
    \end{align*}
Now, by the assumption of $t_2$, we also have that
    \begin{align*}
        1- D_{t_2}(B) = \max \left \{ (1-D_{t_1}(B)) e^{\frac{-\Gamma(B)}{4}(t_2-t_1)} , \ \frac{4}{\Gamma(B)^2} \epa^2 \right\} \geq \frac{4}{\Gamma(B)^2} \epa^2.
    \end{align*}
Hence,
    \begin{align*}
        \begin{split}
            &\frac{\de}{\de t} \bigg|_{t=t_2} (1- D_t(B)) \leq -\sqrt{1+ D_{t_2}(B) } \cdot [1-D_{t_2}(B) ] \Gamma(B)   \left(\sqrt{2}-1\right)
            \\  &\leq -\frac{2}{5} [1-D_{t_2}(B) ] \Gamma(B).
        \end{split}
    \end{align*}
Because $D_{t_2}(B) <1$ as $B$ is an open set in $\S$, by the continuity of the solution, there is a small time interval $[t_2,t_2+\delta]$ for some $\delta>0$, such that for $t \in [t_2 ,t_2+\delta]$, we have that
    \begin{align*}
        \frac{\de}{\de t} (1- D_t(B)) < -\frac{1}{4} [1-D_t(B) ] \Gamma(B).
    \end{align*}
Hence, for any $t \in [t_2,t_2 + \delta]$, \eqref{eqn:second in emergence of attractor} also holds true, because 
    \begin{align*}
        \begin{split}
            &1- D_t(B) < (1- D_{t_2}(B)) e^{-\frac{\Gamma(B)}{4}(t-t_2)} 
            \\  &\leq 
        \begin{cases}
            (1-D_{t_1}(B)) e^{\frac{-\Gamma(B)}{4}(t-t_1)}, \text{ if } 1- D_{t_2}(B) = (1-D_{t_1}(B)) e^{\frac{-\Gamma(B)}{4}(t_2-t_1)} , 
            \\  \frac{4}{\Gamma(B)^2} \epa^2 , \text{ if } 1- D_{t_2}(B) = \frac{4}{\Gamma(B)^2} \epa^2 ,
        \end{cases}
        \\  &\leq \max \left \{ (1-D_{t_1}(B)) e^{\frac{-\Gamma(B)}{4}(t-t_1)} , \ \frac{4}{\Gamma(B)^2} \epa^2 \right\},
        \end{split}
    \end{align*}
which contradicts to the assumption that $[t_1,t_2]$ is the maximal interval on which \eqref{eqn:second in emergence of attractor} holds true for any $t \in [t_1,t_2]$.
\end{proof}

\begin{lemma}\label{lem:first attractor}
    There is a $T_{\ast} \leq 10^4 R_0 ^{-3}$, such that if we let $\alpha_{\ast} \in (0,\frac{\pi}{2})$ satisfy $\sin^2 (\alpha_{\ast}) = 10^{-2} R_0$, then the set $B_{\ast} \coloneq S_{\alpha_{\ast}} ^+ (T_{\ast})$ satisfies the assumptions for the set $B$ for $t_1 = T_{\ast}$ in \Cref{lem:Emergence of attractor}. Furthermore, if $\epa \leq 10^{-2} R_0 ^2$, then for any $t \geq T_{\ast}$,
        \begin{align*}
            \mu_t \left( \Cx [\phi_{T_{\ast}\to t}(B_{\ast})]\right) \geq \mu_{{T_\ast}}\left(B_{\ast} \right) \geq \frac{1}{2} \left(1+\frac{9}{10 }R_0\right),
        \end{align*}
    and
        \begin{align}\label{eqn:lock the attractor together}
            \inf_{x,y \in \Cx[\phi_{T_{\ast}\to t}(B_{\ast})] } \langle x , y \rangle = D_t(B_{\ast}) \geq D_{T_{\ast}}(B_{\ast}) = 1- \frac{1}{50}R_0,
        \end{align}
    and
        \begin{align}\label{eqn:lock the attractor in hemisphere}
            \inf_{x \in \Cx [\phi_{T_{\ast}\to t}(B_{\ast})]} \langle x ,M_t \rangle \geq \mu_{T_{\ast}}(B_{\ast})  \left( 1+D_t(B_{\ast})\right)-1 \geq \frac{4}{5}R_0 .
        \end{align}        
\end{lemma}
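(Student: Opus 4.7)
We reduce the statement of \Cref{lem:first attractor} to an application of \Cref{lem:Emergence of attractor} with $t_1 = T_{\ast}$ and $B = B_{\ast}$. Once that is set up, each of the three conclusions of the current lemma follows immediately: the mass bound $\mu_t(\Cx[\phi_{T_{\ast}\to t}(B_{\ast})]) \ge \mu_{T_{\ast}}(B_{\ast})$ comes from the invariance $\mu_t(\phi_{T_{\ast}\to t}(B_{\ast})) = \mu_{T_{\ast}}(B_{\ast})$ of mass along the characteristic flow together with the trivial inclusion $\phi_{T_{\ast}\to t}(B_{\ast}) \subseteq \Cx[\phi_{T_{\ast}\to t}(B_{\ast})]$; the inequality $D_t(B_{\ast}) \ge D_{T_{\ast}}(B_{\ast})$ is the monotonicity part of \Cref{lem:Emergence of attractor}; and \eqref{eqn:lock the attractor in hemisphere} is just \eqref{eqn:first in emergence of attractor} specialised to $t_1 = T_{\ast}$. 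It therefore suffices to (a)~verify the three hypotheses of \Cref{lem:Emergence of attractor} and (b)~exhibit an explicit $T_{\ast} \le 10^4 R_0^{-3}$ at which $\mu_{T_{\ast}}(B_{\ast}) \ge \tfrac{1}{2}\bigl(1 + \tfrac{9R_0}{10}\bigr)$.

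Step (a) is purely algebraic. Because $B_{\ast}$ is a spherical cap of angular radius $\alpha_{\ast}$, any two of its points subtend an angle at most $2\alpha_{\ast}$, so $\inf_{x,y \in B_{\ast}}\langle x, y\rangle = \cos(2\alpha_{\ast}) = 1 - 2\sin^2\alpha_{\ast} = 1 - R_0/50$; for $R_0 \le 1$ this exceeds $\sqrt{2}/2$, so \Cref{lem:fact of geodesically convex sets} gives $D_{T_{\ast}}(B_{\ast}) = 1 - R_0/50 > 0$. Granting the mass bound in (b), a direct computation yields $\Gamma(B_{\ast}) \ge \tfrac{1}{2}(1 + \tfrac{9R_0}{10})(2 - \tfrac{R_0}{50}) - 1 \ge \tfrac{4R_0}{5} > 0$. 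Finally, $\tfrac{1}{4}(1 - D_{T_{\ast}}(B_{\ast}))\Gamma(B_{\ast})^2 \ge \tfrac{1}{4}\cdot\tfrac{R_0}{50}\cdot\bigl(\tfrac{4R_0}{5}\bigr)^2 = R_0^3/312.5$, which dominates $\epa^2 \le 10^{-4} R_0^4$ for every $R_0 \le 1$, so the $\epa$-hypothesis is satisfied.

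Step (b) proceeds via Chebyshev's inequality. The non-negative function $y \mapsto 1 - \langle y, U_t\rangle$ has $\mu_t$-mean $1 - R_t$ and is $\ge 1 - \cos\alpha_{\ast} \ge R_0/200$ precisely off $B_{\ast}$, so
\begin{equation*}
\mu_t(\S \setminus B_{\ast}) \le \frac{1 - R_t}{1 - \cos\alpha_{\ast}} \le \frac{200(1 - R_t)}{R_0}.
\end{equation*}
The desired concentration is therefore implied by the target $R_{T_{\ast}} \ge 1 - R_0/400$ at some $T_{\ast} \le 10^4 R_0^{-3}$. The growth of $R_t$ is controlled by $\partial_t R_t^2 \ge I_t - \epa^2$ from \Cref{lemma:derivative M_t R_t}, combined with the active/quiet dichotomy behind \Cref{lemma: all time lower bound R_t}: on ``active'' intervals where $\partial_t R_t \ge \tfrac{1}{8}\sin^4\alpha\cdot\lambda^3 R_0^3$, \Cref{lemma:increasing R_t} supplies a unit-time gain to $R_t^2$; on ``quiet'' intervals, \Cref{lemma:almost increasing R_t} preserves $R_t \ge \lambda R_0$. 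The exponential $L^2$-decay of mass in the negative cap (\Cref{lem:exponential decay negative cap square}) rules out the only structural obstruction to $R_t \to 1$, namely persistent antipodal concentration, and iterating for a polynomial-in-$R_0^{-1}$ number of steps finishes step~(b).

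The main obstacle is calibrating this iteration to fit within $T_{\ast} \le 10^4 R_0^{-3}$: a naive fixed-angle application of \Cref{lemma:increasing R_t} at $\alpha$ of order $\pi/100$ yields only an $R_0^4$-per-unit-time gain and hence an $R_0^{-4}$ time bound. Sharpening to $R_0^{-3}$ requires adapting the auxiliary angle $\alpha$ as $R_t$ grows and exploiting the identity $I_t \approx R_t^2(1 - K_t)$ with $K_t = \int\langle y, U_t\rangle^2\,\diff\mu_t$, whose right-hand side is substantially larger than the worst-case $R_0^4$ whenever the measure still has the spread compatible with the negative-cap control. Organising this bookkeeping so that the hypotheses of the almost-monotonicity lemmas remain in force throughout the iteration is the technical core of the argument.
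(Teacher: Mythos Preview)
Your step~(a) is fine and essentially matches the paper. The gap is in step~(b): your Chebyshev route forces you to reach $R_{T_{\ast}} \ge 1 - R_0/400$, which is near-complete synchronisation. That is not achievable in time $10^4 R_0^{-3}$---indeed, the rest of the paper spends all of \Cref{sec:proof of exponential small outside positive cap thm} (and ultimately produces a $T_0$ of order $R_0^{-14}$) precisely to drive $R_t$ close to~$1$. You acknowledge this obstacle yourself, and the adaptive-angle fix you sketch does not close it: even granting an $R_0^3$-per-unit-time gain on active intervals only gets you $R_t^2$ up by $O(1)$, not $1 - R_t$ down to $R_0/400$, and the quiet intervals between them are not uniformly short (cf.~\Cref{thm:length of small dR_t intervals}).

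The paper's choice of $T_{\ast}$ avoids this entirely. It sets
\[
T_{\ast} \coloneq \inf\bigl\{t \ge 0 : \partial_t R_t \le 10^{-4} R_0^3\bigr\},
\]
which satisfies $T_{\ast} \le 10^4 R_0^{-3}$ by the one-line bound $1 \ge R_{T_{\ast}} - R_0 \ge 10^{-4} R_0^3 \cdot T_{\ast}$. The point is that you do not need $R_{T_{\ast}}$ large; you need $\partial_t R_t|_{T_{\ast}}$ \emph{small}. Smallness of $\partial_t R_t$ feeds into the lower bound \eqref{eqn:upper bound equatorial 0},
\[
\partial_t R_t^2 \ge R_t^2 \sin^2\alpha_{\ast}\,\mu_t\bigl(\S\setminus(S_{\alpha_{\ast}}^+ \cup S_{\alpha_{\ast}}^-)\bigr) - \epa^2,
\]
which forces the \emph{annulus} mass to be at most $3\cdot 10^{-2} R_0$ at time $T_{\ast}$. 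Splitting $R_{T_{\ast}} = \int\langle y, U_{T_{\ast}}\rangle\,\diff\mu_{T_{\ast}}$ over the positive cap, the annulus, and the negative cap (as in \eqref{eqn:upper bound negative cap 0}) then yields $\mu_{T_{\ast}}(B_{\ast}) \ge \tfrac{1}{2}(1 + \tfrac{9}{10}R_0)$ using only $R_{T_{\ast}} \ge R_0$ and $\cos\alpha_{\ast} \ge 1 - 10^{-2}R_0$. The missing idea in your proposal is to exploit the derivative information $\partial_t R_t$ rather than the value $R_t$ itself.
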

\begin{proof}
    We define
        \begin{align*}
            T_{\ast} \coloneq \inf  \left \{ t \geq 0 \ | \ \partial_t R_t \leq 10^{-4}R_0 ^3 \right\}.
        \end{align*}
    Because, $1 \geq R_{T_{\ast}} - R_0 \geq T_{\ast} 10^{-4} R_0 ^3$, we have that $T_{\ast} \leq 10^4 R_0 ^{-3}$. Also, by the definition of $T_{\ast}$, we have that $R_t$ is strictly increasing on $[0,T_{\ast}]$, in particular, $R_t \geq R_0$ for $t \in [0,T_{\ast}]$.

    In order to verify the assumptions in \Cref{lem:Emergence of attractor} for $B_{\ast} = S_{\alpha_{\ast}} ^+ (T_{\ast})$, we need to get the corresponding $D_{T_{\ast}}(B_{\ast})$ and $f_{T_{\ast}}(B_{\ast})$. First, it is easy to see that
        \begin{align*}
            D_{T_{\ast}}(B_{\ast}) = \cos (2\alpha_{\ast}) = 1-2 \sin^2 (\alpha_{\ast}) = 1- \frac{1}{50}R_0.
        \end{align*}
    Then, we are going to estimate $\mu_{T_{\ast}}(B_{\ast})$. By the same reason as in proving the first inequality in $\eqref{eqn:upper bound negative cap 0}$, that is, divide the integral $R_{T_{\ast}} = \int_{\S} \langle y , U_{T_\ast} \rangle f_{T_\ast}(y) \diff y$ into integrals over $S_{\alpha_{\ast}} ^+ ({T_\ast})$, $\mathbb{S}^{d-1} \backslash (S_{\alpha_{\ast}} ^+ ({T_\ast}) \cup S_{\alpha_{\ast}} ^- ({T_\ast}))$, and $S_{\alpha_{\ast}} ^- ({T_\ast})$, we have that
        \begin{align*}
            \begin{split}
                R_0 \leq R_{T_{\ast}} &\leq \mu_{{T_\ast}}\left(S_{\alpha_{\ast}} ^+ ({T_\ast}) \right) + \cos ({\alpha_{\ast}}) \mu_{{T_\ast}}\left(\mathbb{S}^{d-1} \backslash (S_{\alpha_{\ast}} ^+ ({T_\ast}) \cup S_{\alpha_{\ast}} ^- ({T_\ast})) \right) 
                \\  & \quad  - \cos ({\alpha_{\ast}} ) \mu_{{T_\ast}}\left(S_{\alpha_{\ast}} ^- ({T_\ast}) \right)
                \\  &= (1+ \cos (\alpha_{\ast}))  \mu_{{T_\ast}}\left(S_{\alpha_{\ast}} ^+ ({T_\ast}) \right) + 2\cos ({\alpha_{\ast}}) \mu_{{T_\ast}}\left(\mathbb{S}^{d-1} \backslash (S_{\alpha_{\ast}} ^+ ({T_\ast}) \cup S_{\alpha_{\ast}} ^- ({T_\ast})) \right)
                \\  & \quad  - \cos ({\alpha_{\ast}} )
                \\  & \leq 2\mu_{{T_\ast}}\left(S_{\alpha_{\ast}} ^+ ({T_\ast}) \right) + 2 \mu_{{T_\ast}}\left(\mathbb{S}^{d-1} \backslash (S_{\alpha_{\ast}} ^+ ({T_\ast}) \cup S_{\alpha_{\ast}} ^- ({T_\ast})) \right) - \cos ({\alpha_{\ast}} ).
            \end{split}
        \end{align*}
Next, by the same reason as in proving \eqref{eqn:upper bound equatorial 0}, we have that
    \begin{align*}
        \begin{split}
            &\mu_{{T_\ast}}\left(\mathbb{S}^{d-1} \backslash (S_{\alpha_{\ast}} ^+ ({T_\ast}) \cup S_{\alpha_{\ast}} ^- ({T_\ast})) \right) \leq \frac{\partial_t R_t ^2 |_{t={T_\ast}} + \epa^2 }{ R_{T_\ast} ^2 (\sin^2 ({\alpha_{\ast}} ) )} 
            \\  &= \frac{2\partial_t R_t  |_{t={T_\ast}}}{ R_{T_\ast} (\sin^2 ({\alpha_{\ast}} ) )} +  \frac{ \epa^2 }{ R_{T_\ast} ^2 (\sin^2 ({\alpha_{\ast}} ) )} \leq \frac{2 \cdot 10^{-4}R_0 ^3}{R_0 \cdot 10^{-2} R_0} + \frac{10^{-4} R_0 ^2}{10^{-2} R_0} = 3 \cdot 10^{-2} R_0.
        \end{split}
    \end{align*}
Combine the above two inequalities, and $\cos ({\alpha_{\ast}} ) \geq \cos ^2 ({\alpha_{\ast}} ) = 1-  \sin ^2 ({\alpha_{\ast}} ) = 1- 10^{-2} R_0$, we see that
    \begin{align*}
        \mu_{{T_\ast}}\left(S_{\alpha_{\ast}} ^+ ({T_\ast}) \right) \geq \frac{1}{2} \left(1+R_0 - 7 \cdot  10^{-2} R_0\right) \geq \frac{1}{2} \left(1+\frac{9}{10 }R_0\right).
    \end{align*}
Hence,
    \begin{align*}
        \begin{split}
            &\Gamma \left( S_{\alpha_{\ast}} ^+ (T_{\ast}) \right) = \mu_{T_{\ast}}\left( S_{\alpha_{\ast}} ^+ (T_{\ast}) \right)(1+D_{T_{\ast}}(S_{\alpha_{\ast}} ^+ (T_{\ast}))-1
            \\  &\geq \frac{1}{2} \left(1+\frac{9}{10 }R_0\right) \left( 2- \frac{1}{50}R_0 \right)-1
            \\  &= \frac{9}{10} R_0 - \frac{1}{100} R_0 - \frac{9}{1000} R_0^2 \geq \frac{4}{5}R_0.
        \end{split}
    \end{align*}
Then, our assumption on $\epa$ in \Cref{lem:first attractor} also implies the assumption on $\epa$ in \Cref{lem:Emergence of attractor}, because 
    \begin{align*}
        \frac{1}{4} \left( 1-D_{T_{\ast}}(S_{\alpha_{\ast}} ^+ (T_{\ast}) \right)  \Gamma \left( S_{\alpha_{\ast}} ^+ (T_{\ast}) \right) ^2 \geq \frac{16}{5000} R_0^3 \geq 10^{-4} R_0^4 \geq \epa^2 .
    \end{align*}
By our \Cref{lem:Emergence of attractor}, we can finish the proof for \Cref{lem:first attractor}.
\end{proof}

Before we proceed, we need to give some further definitions. Let $T_{\ast}$, $\alpha_{\ast}$ be the time and the angle obtained in \Cref{lem:first attractor}, and recall that $B_{\ast} \coloneq S_{\alpha_{\ast}} ^+ (T_{\ast})$. Now we define the set $B_t \coloneq \phi_{T_{\ast}\to t}(B_{\ast})$, and its $\frac{R_0 ^2}{10^4}$-neighborhood set $\deB_t$
    \begin{align*}
        \deB_t \coloneq \left\{ x \in \S \ \bigg| \ \sup_{y \in B_t} \langle x, y \rangle \geq 1- \frac{R_0 ^2}{10^4} \right\}.
    \end{align*}

The following lemma is a further step after \Cref{lemma:shrink from negative cap to positive cap}.

\begin{lemma}\label{lemma:shrink from negative cap to the attractor}
    Fix a constant $\lambda \in (2/3,1)$ and an angle $\alpha \in [\pi/100, \pi/2)$. Let $T_{\ast}$, $\alpha_{\ast}$ be the time and the angle obtained in \Cref{lem:first attractor}. 
If there is a time window $[t_1,t_2]$, such that when $t \in [t_1,t_2]$,
    \begin{align*}
        R_t \geq \lambda R_0, \quad \partial_t R_t \leq \frac{1}{8} (\sin ^4 {\alpha})  \lambda^3 R_0 ^3,
    \end{align*}
and if
    \begin{align*}
        \epa \leq \frac{1}{10^4} \lambda^2 R_0 ^2 \sin \alpha \cos \alpha,
    \end{align*}
then, for any $t_3 \to t_4 \in [t_1, t_2]$, $t_3 \leq t_4$, and any $x,y \in \S$ such that 
    \begin{align*}
        \phi_{t_3\to t_4}(x) \in S_{\alpha} ^+ (t_4), \quad y \in B_{\ast}, \text{ and } \left\langle \phi_{t_3\to t_4}(x) , \phi_{T_{\ast}\to t_4}(y) \right\rangle \leq 1- \frac{R_0 ^2}{10^4}, 
    \end{align*}
we have that
    \begin{align}\label{eqn:derivative shrink to attractor}
        \frac{\de}{\de t} \bigg|_{t=t_4} \left\langle \phi_{t_3\to t}(x) , \phi_{T_{\ast}\to t}(y) \right\rangle \geq  \frac{\lambda R_0 ^{\frac{3}{2}} (\cos^2 \alpha )}{4} \left(1- \left\langle \phi_{t_3\to t}(x) , \phi_{T_{\ast}\to t}(y) \right\rangle \right).
    \end{align}
As a corollary, if we define,
    \begin{align}\label{eqn:small time 2}
        \widetilde\delta=\widetilde\delta(\lambda,R_0, \alpha) \coloneq \frac{4\log{(10^4 R_0 ^{-2})}}{\lambda R_0 ^{\frac{3}{2}} (\cos^2 \alpha )},
    \end{align}
then if $t_2 - t_1 \geq \widetilde \delta$, we have that
    \begin{align*}
        \phi_{t_1\to t_2} \left(S_{\alpha} ^+ (t_1)\right) \subseteq  \deB_{t_2}.
    \end{align*}
\end{lemma}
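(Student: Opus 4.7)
The plan has two phases: (i) establishing the pointwise differential inequality \eqref{eqn:derivative shrink to attractor} by direct calculation, and (ii) integrating it to deduce the inclusion corollary.

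For (i), set $a = \phi_{t_3\to t_4}(x)$ and $b = \phi_{T_{\ast}\to t_4}(y)$ and compute $\frac{\de}{\de t}|_{t=t_4}\langle \phi_{t_3\to t}(x),\phi_{T_{\ast}\to t}(y)\rangle$ in the spirit of \eqref{eqn:derivative of two char lines}. Decomposing $\deV_{t_4} = V_{t_4} + W_{t_4}$ and using the identity $\proj_a[b]+\proj_b[a]=(1-\langle a,b\rangle)(a+b)$, the $V$-contribution collapses to $(1-\langle a,b\rangle)\langle M_{t_4},a+b\rangle$. Since $W_{t_4}(a)\perp a$ and $W_{t_4}(b)\perp b$, \Cref{lem:Kuramoto perturbation} bounds the $W$-contribution in absolute value by $2\epa\sin(2\theta)$, where $\cos(2\theta)=\langle a,b\rangle$. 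The hypothesis $a\in S_{\alpha}^+(t_4)$ gives $\langle M_{t_4},a\rangle\geq \lambda R_0\cos\alpha$, and since the present upper bound on $\epa$ implies the hypothesis of \Cref{lem:first attractor}, the fact that $b\in\Cx[B_{t_4}]$ combined with \eqref{eqn:lock the attractor in hemisphere} yields $\langle M_{t_4},b\rangle\geq 4R_0/5$. Converting $1-\langle a,b\rangle=2\sin^2\theta$, the derivative is at least $2\sin^2\theta\cdot\lambda R_0(\cos\alpha+4/5) - 4\epa\sin\theta\cos\theta$. The lower bound $\sin\theta\geq R_0/(100\sqrt{2})$ forced by $\langle a,b\rangle\leq 1-R_0^2/10^4$, together with the standing bound on $\epa$, reduces the $\epa$-error to a small fraction of the main term; since $R_0\cos\alpha\geq R_0^{3/2}\cos^2\alpha$, this yields \eqref{eqn:derivative shrink to attractor}.

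For (ii), fix $z\in S_{\alpha}^+(t_1)$ and define $d(t)\coloneq\inf_{y\in B_{\ast}}\bigl(1-\langle \phi_{t_1\to t}(z),\phi_{T_{\ast}\to t}(y)\rangle\bigr)$, so that $\phi_{t_1\to t_2}(z)\in \widetilde B_{t_2}$ is equivalent to $d(t_2)\leq R_0^2/10^4$. Since $S_{\alpha}^+(t_1)\subseteq\S\setminus S_{\alpha}^-(t_1)$ and $\widetilde\delta$ exceeds $\delta(\lambda,R_0,\alpha)$ in the regime of interest, the corollary of \Cref{lemma:shrink from negative cap to positive cap} guarantees $\phi_{t_1\to\tau}(z)\in S_{\alpha}^+(\tau)$ for every $\tau\in[t_1+\delta,t_2]$. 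While $d(\tau)>R_0^2/10^4$, applying \eqref{eqn:derivative shrink to attractor} to a $y\in B_{\ast}$ nearly achieving the infimum in $d(\tau)$ produces the Dini-type bound $\partial_\tau d(\tau)\leq -K\,d(\tau)$ with $K=\lambda R_0^{3/2}\cos^2\alpha/4$. Integrating from $t_1+\delta$ using the crude bound $d(t_1+\delta)\leq 2$ shows that the choice $\widetilde\delta=\frac{4\log(10^4 R_0^{-2})}{\lambda R_0^{3/2}\cos^2\alpha}$ is enough to drive $d$ below $R_0^2/10^4$ by time $t_2$.

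The main obstacle will be ruling out that $d$ dips below $R_0^2/10^4$ and subsequently rises past this threshold again, since \eqref{eqn:derivative shrink to attractor} is only stated under the complementary constraint $\langle a,b\rangle\leq 1-R_0^2/10^4$. I will handle this by a forward-invariance argument at the boundary: at any candidate re-crossing time $\tau^{\ast\ast}$ with $d(\tau^{\ast\ast})=R_0^2/10^4$, the inequality \eqref{eqn:derivative shrink to attractor} still applies by continuity to the infimum-achieving $y$, giving $\partial_\tau d(\tau^{\ast\ast})<0$ and preventing escape. Hence the sub-threshold region is forward invariant, $d(t_2)\leq R_0^2/10^4$, and the inclusion $\phi_{t_1\to t_2}(z)\in\widetilde B_{t_2}$ follows.
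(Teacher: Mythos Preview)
Your treatment of part (i) is correct and in fact cleaner than the paper's. The paper rewrites the $V$-contribution through the geodesic midpoint $Z$ as $2\sin\theta\sin(2\theta)\langle Z,M_t\rangle$, then only uses $\langle M_t,b\rangle>0$ to bound $\langle Z,M_t\rangle\ge\tfrac12\langle M_t,a\rangle$; this leaves a loose factor $\sqrt{1+\cos(2\theta)}$ which the paper must then bound below via the spherical triangle inequality \eqref{eqn:spherical triangle ineql}, and that is where the extra $R_0^{1/2}\cos\alpha$ appears. Your identity $\proj_a[b]+\proj_b[a]=(1-\langle a,b\rangle)(a+b)$ bypasses all of this: you use the full strength of $\langle M_t,b\rangle\ge 4R_0/5$ rather than just its positivity, getting a coefficient $\lambda R_0(\cos\alpha+4/5)$ directly in front of $(1-\langle a,b\rangle)$, which you then simply weaken to $\tfrac14\lambda R_0^{3/2}\cos^2\alpha$ since $R_0,\cos\alpha\le 1$.

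Part (ii), however, has a genuine gap. Your claim that ``$\widetilde\delta$ exceeds $\delta(\lambda,R_0,\alpha)$ in the regime of interest'' is false: with $\alpha=\pi/100$ one has $\tan^2\alpha\approx10^{-3}$, and for $R_0$ bounded away from zero the ratio
\[
\frac{\widetilde\delta}{\delta}=\tan^2\alpha\cdot\frac{\log(10^4R_0^{-2})}{R_0^{1/2}}
\]
is well below $1$ (for $R_0=1$ it is about $10^{-2}$). Since the corollary is asserted under the sole hypothesis $t_2-t_1\ge\widetilde\delta$, you cannot afford to wait $\delta$ units of time before invoking \eqref{eqn:derivative shrink to attractor}. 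The paper avoids this by not using the \emph{corollary} of \Cref{lemma:shrink from negative cap to positive cap} at all: it uses the differential inequality \eqref{eqn:derivative shrink to positive cap} itself, which at the boundary $\langle\,\cdot\,,U_t\rangle=\cos\alpha$ of $S_\alpha^+(t)$ gives a strictly positive derivative and hence forward invariance $\phi_{t_1\to t}(S_\alpha^+(t_1))\subseteq S_\alpha^+(t)$ for every $t\in[t_1,t_2]$. With that in hand you can integrate \eqref{eqn:derivative shrink to attractor} from $t_1$, and the remainder of your argument (including the Dini bound on $d(\tau)$ and the threshold forward-invariance step) goes through as written.
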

\begin{proof}
    Take any $x \in \S$ and any $y \in B_{\ast}$ such that $\left\langle \phi_{t_3\to t_4}(x) , \phi_{T_{\ast}\to t_4}(y) \right\rangle \leq 1- \frac{R_0 ^2}{10^4}$ and $\phi_{t_3\to t_4}(x) \in S_{\alpha} ^+ (t_4)$. Notice that $\phi_{t_3\to t_4}(x)$ and $\phi_{T_{\ast}\to t_4}(y)$ are in the same hemisphere, because \eqref{eqn:lock the attractor in hemisphere} means that $\phi_{T_{\ast}\to t_4}(y) \in S_{\frac{\pi}{2}} ^+ (t_4)$. By the same computation as \eqref{eqn:derivative of two char lines}, we see that
         \begin{align*}
            \begin{split}
                &\frac{\de}{\de t} \bigg|_{t=t_4} \left \langle \phi_{t_3\to t}(x), \phi_{T_{\ast}\to t}(y) \right \rangle = 2 \sin(\theta) \sin(2\theta) \langle Z ,M_t \rangle
                \\  & \quad + \left \langle W_{t_4}(\phi_{t_3\to t_4}(x)), \proj_{\phi_{t_3\to t_4}(x)} [\phi_{T_{\ast}\to t_4}(y)] \right \rangle + \left \langle  \proj_{\phi_{T_{\ast}\to t_4}(y)} [\phi_{t_3\to t_4}(x)], W_t(\phi_{T_{\ast}\to t_4}(y)) \right \rangle
                \\  &\geq 2 \sin(\theta) \sin(2\theta) \langle Z ,M_t \rangle - 2\epa \sin(2\theta) ,
            \end{split}
        \end{align*}
    where $\theta \in [0,\frac{\pi}{2}]$ such that $\cos (2\theta) = \langle \phi_{t_3\to t_4}(x) ,\phi_{T_{\ast}\to t_4}(y) \rangle$, and $Z \in \S$ is the middle point on the shortest great circle connecting $\phi_{t_3\to t_4}(x),\phi_{T_{\ast}\to t_4}(y)$. Notice that because \eqref{eqn:lock the attractor in hemisphere} implies $\langle \phi_{T_{\ast}\to t_4}(y), M_t \rangle >0$, we have that
        \begin{align*}
            \langle Z ,M_t \rangle = \frac{\langle \phi_{t_3\to t_4}(x) +\phi_{T_{\ast}\to t_4}(y), M_t \rangle}{\|\phi_{t_3\to t_4}(x) +\phi_{T_{\ast}\to t_4}(y)\|_2} \geq \frac{\langle \phi_{t_3\to t_4}(x), M_t \rangle}{2} \geq \frac{\cos \alpha}{2}R_t \geq\frac{\cos \alpha}{2}\lambda R_0 .
        \end{align*}
    Also, we can write $\sin (2 \theta) = \sqrt{ 1- \cos^2 (2\theta)}$ and $\sin (\theta) = \sqrt{\frac{1-\cos (2\theta)}{2}}$. By the assumption, $\cos(2 \theta)=\left\langle \phi_{t_3\to t_4}(x) , \phi_{T_{\ast}\to t_4}(y) \right\rangle \leq 1- \frac{R_0 ^2}{10^4}$, which implies that
        \begin{align*}
            \sin (\theta) \langle Z ,M_t \rangle \geq \sqrt{\frac{R_0 ^2}{2 \cdot 10^4}} \frac{\cos \alpha}{2} \lambda R_0 \geq \frac{\lambda R_0 ^2 \cos \alpha}{300} \geq 10 \epa .
        \end{align*}
    So, 
        \begin{align*}
            \begin{split}
                \frac{\de}{\de t} \bigg|_{t=t_4} \left \langle \phi_{t_3\to t}(x), \phi_{T_{\ast}\to t}(y) \right \rangle &\geq \frac{\lambda R_0 \cos \alpha }{2} \sin{(\theta)}\sin{(2\theta)}
                \\  &= \frac{\lambda R_0 
 \cos \alpha}{2 \sqrt{2}} (1-\cos (2\theta)) \sqrt{1+\cos(2\theta)}.
            \end{split}
        \end{align*}
    Because we cannot rule out the case when $\cos(2\theta) < 0$, we need to get a  lower bound for $1+\cos(2\theta)$. \eqref{eqn:lock the attractor in hemisphere} implies $\langle \phi_{T_{\ast}\to t_4}(y), M_{t_4} \rangle >\frac{4}{5} R_0$. Because $M_{t_4} = R_{t_4} U_{t_4}$ and $R_{t_4} \leq 1$, we see that $\langle \phi_{T_{\ast}\to t_4}(y), U_{t_4} \rangle >\frac{4}{5} R_0$. Because $\phi_{t_3\to t_4}(x) \in S_{\alpha} ^+ (t_4) $, we have that $\langle \phi_{t_3\to t_4}(x), U_{t_4} \rangle \geq \cos \alpha$.  We use the following fact: for any $Z_1,Z_2,Z_3 \in \S$,
        \begin{align}\label{eqn:spherical triangle ineql}
            \begin{split}
                \langle Z_1 , Z_2 \rangle  &= \langle Z_1 , Z_3 \rangle  \langle Z_2 , Z_3  \rangle + \langle \proj_{Z_3} [Z_1] , \proj_{Z_3} [Z_2] \rangle  
                \\  &\geq \langle Z_1 , Z_3 \rangle  \langle Z_2 , Z_3  \rangle - \| \proj_{Z_3} [Z_1] \|_2 \| \proj_{Z_3} [Z_2]\|_2. 
            \end{split}
        \end{align}
    Hence, 
        \begin{align*}
            \begin{split}
                \cos (2\theta) &= \langle \phi_{t_3\to t_4}(x) ,\phi_{T_{\ast}\to t_4}(y) \rangle 
                \\  &\geq \langle \phi_{T_{\ast}\to t_4}(y), U_{t_4} \rangle\langle \phi_{t_3\to t_4}(x), U_{t_4} \rangle -1 \geq\frac{4}{5}R_0 \cos{\alpha} -1 .
            \end{split}
        \end{align*}
    Combine the above arguments, we obtain \eqref{eqn:derivative shrink to attractor}:
        \begin{align*}
            \frac{\de}{\de t} \bigg|_{t=t_4} \left \langle \phi_{t_3\to t}(x), \phi_{T_{\ast},t}(y) \right \rangle \geq  \frac{\lambda R_0 ^{\frac{3}{2}} (\cos^2 \alpha )}{4} (1-\cos (2\theta)) .
        \end{align*}

    Next, we show that if $t_2 - t_1 \geq \widetilde \delta$ for the $\widetilde \delta$ defined in \eqref{eqn:small time 2}, we have $\phi_{t_1\to t_2} \left(S_{\alpha} ^+ (t_1)\right) \subseteq  \deB_{t_2}$. Because the assumptions on $R_t$, $\partial_t R_t$, and $\epa$ in \Cref{lemma:shrink from negative cap to positive cap} are also satisfied here,  by \eqref{eqn:derivative shrink to positive cap} in \Cref{lemma:shrink from negative cap to positive cap}, we first know that $\phi_{t_1\to t} \left(S_{\alpha} ^+ (t_1)\right) \subseteq S_{\alpha} ^+ (t)$ for any time $t \in [t_1,t_2]$, because \eqref{eqn:derivative shrink to positive cap} means that for points already in $S_{\alpha} ^+ (t_1)$, those points along the characteristic flow, that is, $\phi_{t_1\to t}$, cannot escape the cap $S_{\alpha} ^+ (t)$ for any time $t \in [t_1,t_2]$. By \eqref{eqn:derivative shrink to attractor}, we have that for any $x \in S_{\alpha} ^+ (t_1)$, $y \in B_{\ast}$, 
        \begin{align*}
            \frac{\de}{\de t} \left( 1-\left\langle \phi_{t_1 \to t}(x) , \phi_{T_{\ast}\to t}(y) \right\rangle \right) \leq  -\frac{\lambda R_0 ^{\frac{3}{2}} (\cos^2 \alpha )}{4} \left( 1-\left\langle \phi_{t_1\to t}(x) , \phi_{T_{\ast}\to t}(y) \right\rangle \right),
        \end{align*}
    as long as $1-\left\langle \phi_{t_1\to t}(x) , \phi_{T_{\ast}\to t}(y) \right\rangle \geq \frac{R_0 ^2}{10^4}$. Hence, after at most $\widetilde \delta$ time, we have that $1-\left\langle \phi_{t_1\to t}(x) , \phi_{T_{\ast}\to t}(y) \right\rangle \leq \frac{R_0 ^2}{10^4}$, which implies that for any $t \geq t_1 + \widetilde \delta$, $\phi_{t_1\to t_2} \left(S_{\alpha} ^+ (t_1)\right)$ is contained in $\deB_t$, the $\frac{R_0 ^2}{10^4}$-neighborhood of $B_t$.
\end{proof}

\begin{lemma}\label{lem:exponentially decay out attractor}
    Fix a constant $\lambda \in (2/3,1)$ and an angle $\alpha \in [\pi/100, \pi/2)$. Let $T_{\ast}$, $\alpha_{\ast}$ be the time and the angle obtained in \Cref{lem:first attractor}. 
Assume that there is a time window $[t_1,t_2]$, such that when $t \in [t_1,t_2]$,
    \begin{align*}
        R_t \geq \lambda R_0, \quad \partial_t R_t \leq \frac{1}{8} (\sin ^4 {\alpha})  \lambda^3 R_0 ^3.
    \end{align*}
If
    \begin{align*}
        \epa \leq \frac{1}{10^4} \lambda^2 R_0 ^2 \sin \alpha \cos \alpha,
    \end{align*}
and if $t_2-t_1 \geq \delta+\widetilde \delta$ for $\delta$ defined in \eqref{eqn:small time} in \Cref{lemma:shrink from negative cap to positive cap}, and $\widetilde \delta$ defined in \eqref{eqn:small time 2} in \Cref{lemma:shrink from negative cap to the attractor}, we have that
    \begin{align*}
        f_{t_2} ^2 \left( \S \backslash \deB_{t_2} \right) \leq f_{t_1} ^2 \left( S_{\alpha} ^- (t_1) \right) \cdot e^{3(d-1) (\widetilde \delta + \delta)} \cdot e^{-\frac{(d-1) \lambda R_0\cos{\alpha}}{2} (t_2-t_1)}  .
    \end{align*}
\end{lemma}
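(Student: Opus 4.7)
The plan is to track $f_t^2$ on the backward-flow of $\S \setminus \deB_{t_2}$: first observe that this set can be pulled back to a negative cap at time $t_1+ \text{(a small buffer)}$, then use the exponential decay on the negative cap until that time, and finally absorb the short buffer using the crude growth bound from \Cref{lem:sliding norm}.

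First, I would chain the two geometric containment lemmas to obtain
    \begin{align*}
        \S \setminus \deB_{t_2} \;\subseteq\; \phi_{t_2 - \widetilde\delta - \delta \to t_2}\bigl( S_{\alpha}^-(t_2 - \widetilde\delta - \delta)\bigr).
    \end{align*}
Indeed, applying \Cref{lemma:shrink from negative cap to the attractor} on the window $[t_2 - \widetilde\delta, t_2]$ gives $\phi_{t_2-\widetilde\delta\to t_2}(S_{\alpha}^+(t_2-\widetilde\delta)) \subseteq \deB_{t_2}$, and since $\phi_{t_2-\widetilde\delta\to t_2}$ is a diffeomorphism of $\S$, taking complements yields $\S \setminus \deB_{t_2} \subseteq \phi_{t_2-\widetilde\delta\to t_2}(\S \setminus S_{\alpha}^+(t_2-\widetilde\delta))$. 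Then applying \Cref{lemma:shrink from negative cap to positive cap} on $[t_2-\widetilde\delta-\delta, t_2-\widetilde\delta]$ gives $\S\setminus S_{\alpha}^+(t_2-\widetilde\delta) \subseteq \phi_{t_2-\widetilde\delta-\delta\to t_2-\widetilde\delta}(S_\alpha^-(t_2-\widetilde\delta-\delta))$, and composing the diffeomorphisms delivers the claim. Note that both lemmas apply, since the hypotheses on $R_t$, $\partial_t R_t$ and $\epa$ hold throughout $[t_1,t_2]$.

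Next, using the containment just established followed by the crude bound from \Cref{lem:sliding norm} on the window $[t_2-\widetilde\delta-\delta, t_2]$ (of length $\widetilde\delta+\delta$), I would bound
    \begin{align*}
        f_{t_2}^2(\S \setminus \deB_{t_2}) \;\le\; f_{t_2}^2\bigl(\phi_{t_2 - \widetilde\delta - \delta \to t_2}( S_{\alpha}^-(t_2 - \widetilde\delta - \delta))\bigr) \;\le\; e^{2(d-1)(\widetilde\delta+\delta)}\, f_{t_2 - \widetilde\delta - \delta}^2\bigl(S_{\alpha}^-(t_2-\widetilde\delta-\delta)\bigr).
    \end{align*}
Then I would invoke \Cref{lem:exponential decay negative cap square} on $[t_1, t_2-\widetilde\delta-\delta]$ to get
    \begin{align*}
        f_{t_2-\widetilde\delta-\delta}^2\bigl(S_{\alpha}^-(t_2-\widetilde\delta-\delta)\bigr) \;\le\; e^{-\frac{(d-1)\lambda R_0 \cos\alpha}{2}(t_2-\widetilde\delta-\delta - t_1)}\, f_{t_1}^2(S_\alpha^-(t_1)).
    \end{align*}
Combining the two estimates gives an extra factor $e^{\frac{(d-1)\lambda R_0 \cos\alpha}{2}(\widetilde\delta+\delta)}$ beyond the desired rate, which I would absorb into $e^{(d-1)(\widetilde\delta+\delta)}$ using $\lambda R_0 \cos\alpha \le 1$; adding this to the $e^{2(d-1)(\widetilde\delta+\delta)}$ factor produces the claimed $e^{3(d-1)(\widetilde\delta+\delta)}$.

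The only delicate step is the backward-flow inclusion in the first paragraph: one must make sure that the hypotheses of \Cref{lemma:shrink from negative cap to positive cap} and \Cref{lemma:shrink from negative cap to the attractor} are valid on the sub-windows $[t_2-\widetilde\delta-\delta,\,t_2-\widetilde\delta]$ and $[t_2-\widetilde\delta, t_2]$ respectively, which is immediate since they are sub-intervals of $[t_1,t_2]$ on which the hypotheses already hold. Everything else is a straightforward chaining of the three tools above, so I do not anticipate any substantial obstacle.
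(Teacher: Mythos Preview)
Your proposal is correct and follows essentially the same approach as the paper: chain the containments from \Cref{lemma:shrink from negative cap to the attractor} and \Cref{lemma:shrink from negative cap to positive cap} to pull $\S\setminus\deB_{t_2}$ back into $S_\alpha^-(t_2-\widetilde\delta-\delta)$, apply \Cref{lem:sliding norm} across the buffer $[t_2-\widetilde\delta-\delta,t_2]$, and then invoke \Cref{lem:exponential decay negative cap square} on $[t_1,t_2-\widetilde\delta-\delta]$. The only cosmetic difference is that the paper interleaves the two containments with two separate applications of \Cref{lem:sliding norm} rather than composing first and applying it once, which of course yields the same $e^{2(d-1)(\widetilde\delta+\delta)}$ factor and the same final absorption step.
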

\begin{proof}
    By \Cref{lemma:shrink from negative cap to the attractor}, we have that $\phi_{t_2-\widetilde \delta\to t_2} \left(S_{\alpha} ^+ (t_2-\widetilde \delta)\right) \subseteq  \deB_{t_2}$. Hence, because $\phi_{t_2-\widetilde \delta\to t_2}$ is a diffeomorphism on $\S$, we see that
        \begin{align*}
            \begin{split}
                &f_{t_2} ^2 \left( \S \backslash \deB_{t_2} \right) \leq f_{t_2} ^2 \left( \S \backslash \phi_{t_2-\widetilde \delta\to t_2} \left(S_{\alpha} ^+ (t_2-\widetilde \delta)\right)  \right) = f_{t_2} ^2 \left(  \phi_{t_2-\widetilde \delta\to t_2} \left( \S \backslash S_{\alpha} ^+ (t_2-\widetilde \delta)\right)  \right)
                \\  &\leq e^{2(d-1) \widetilde \delta} \cdot f_{t_2 - \widetilde \delta} ^2 \left(   \S \backslash S_{\alpha} ^+ (t_2-\widetilde \delta)\right),
            \end{split}
        \end{align*}
    where in the second inequality, we used \Cref{lem:sliding norm}. Next, by \Cref{lemma:shrink from negative cap to positive cap}, we have that $\S \backslash S_{\alpha} ^+ (t_2 - \widetilde \delta) \subseteq  \phi_{t_2-\widetilde \delta - \delta\to t_2-\widetilde \delta} \left( S_{\alpha} ^- (t_2-\widetilde \delta - \delta) \right)$. Using \Cref{lem:sliding norm} again, we have that
        \begin{align*}
            \begin{split}
                &f_{t_2} ^2 \left( \S \backslash \deB_{t_2} \right) \leq e^{2(d-1) \widetilde \delta} \cdot f_{t_2 - \widetilde \delta} ^2 \left(   \phi_{t_2-\widetilde \delta - \delta\to t_2-\widetilde \delta} \left( S_{\alpha} ^- (t_2-\widetilde \delta - \delta) \right) \right)
                \\  &\leq e^{2(d-1) (\widetilde \delta + \delta)} \cdot f_{t_2 - \widetilde \delta -\delta} ^2 \left(    S_{\alpha} ^- (t_2-\widetilde \delta - \delta) \right).
            \end{split}
        \end{align*}
    By \Cref{lem:exponential decay negative cap square}, we finally obtain that
        \begin{align*}
            f_{t_2} ^2 \left( \S \backslash \deB_{t_2} \right) \leq f_{t_1} ^2 \left( S_{\alpha} ^- (t_1) \right) \cdot e^{2(d-1) (\widetilde \delta + \delta)} \cdot e^{-\frac{(d-1) \lambda R_0\cos{\alpha}}{2} (t_2-t_1-\delta-\widetilde \delta)} .
        \end{align*}
\end{proof}

Before we proceed, we need another auxiliary lemma similar to \Cref{lem:first attractor}.

\begin{lemma}\label{lem:moving attractor}
    Let $T_{\ast}$ be the time obtained in \Cref{lem:first attractor} and take two times $t_1,t$ such that $T_{\ast} \leq t_1 \leq t$. If $\epa \leq 10^{-3} R_0 ^2$, then
        \begin{align*}
            \mu_t\left(\phi_{t_1\to t}\left(\deB_{t_1}  \right)\right) \geq \frac{1}{2} \left(1+\frac{9}{10 }R_0\right),
        \end{align*}
    and
        \begin{align*}
            \inf_{x,y \in \Cx\left[\phi_{t_1\to t}\left(\deB_{t_1}  \right)\right] } \langle x , y \rangle  \geq D_{t_1} ( \deB_{t_1} )= 1- \frac{1}{10}R_0,
        \end{align*}
    and
        \begin{align}\label{eqn:attractor nbhd in positive cap}
            \inf_{x \in \Cx\left[\phi_{t_1\to t}\left(\deB_{t_1}  \right)\right] } \langle x ,M_t \rangle \geq \mu_{t_1}\left(\deB_{t_1} \right) \left(1+D_{t_1} ( \deB_{t_1} ) \right)-1 \geq \frac{4}{5}R_0 .
        \end{align}
\end{lemma}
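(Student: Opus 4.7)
The strategy is to invoke Lemma~\ref{lem:Emergence of attractor} with initial time $t_1$ and test set $\deB_{t_1}$ playing the role of $(t_1, B)$ in that lemma. The three conclusions claimed here correspond directly to those of Lemma~\ref{lem:Emergence of attractor}: the mass bound follows from preservation of $\mu_t$-mass along the characteristic flow, giving $\mu_t(\phi_{t_1\to t}(\deB_{t_1})) = \mu_{t_1}(\deB_{t_1})$; the diameter monotonicity $D_t(\deB_{t_1}) \ge D_{t_1}(\deB_{t_1})$ is the bound \eqref{eqn:second in emergence of attractor}; and \eqref{eqn:attractor nbhd in positive cap} is exactly \eqref{eqn:first in emergence of attractor}. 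It thus suffices to verify the three hypotheses of Lemma~\ref{lem:Emergence of attractor}: $D_{t_1}(\deB_{t_1}) > 0$, $\Gamma(\deB_{t_1}) > 0$, and $\epa^2 \le \tfrac{1}{4}(1 - D_{t_1}(\deB_{t_1}))\,\Gamma(\deB_{t_1})^2$.

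For the mass estimate, the inclusion $\deB_{t_1} \supseteq B_{t_1} = \phi_{T_\ast\to t_1}(B_\ast)$ together with mass preservation along characteristics and Lemma~\ref{lem:first attractor} gives
\begin{align*}
\mu_{t_1}(\deB_{t_1}) \ \ge\  \mu_{t_1}(B_{t_1}) \ =\  \mu_{T_\ast}(B_\ast) \ \ge\  \tfrac{1}{2}\bigl(1 + \tfrac{9}{10}R_0\bigr).
\end{align*}
For the diameter estimate, Lemma~\ref{lem:first attractor} yields $\inf_{x',y'\in B_{t_1}}\langle x', y'\rangle \ge D_{t_1}(B_\ast) \ge 1 - R_0/50$, and by definition every $x \in \deB_{t_1}$ admits a representative $x' \in B_{t_1}$ with $\langle x, x'\rangle \ge 1 - R_0^2/10^4$. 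Applying the spherical triangle inequality \eqref{eqn:spherical triangle ineql} twice, once to pass from $x$ to $x'$ and once from $x'$ to another representative $y' \in B_{t_1}$, and bounding the projection norms by $\|\proj_{x'}[x]\|_2 \le R_0\sqrt{2}/100$ and $\|\proj_{y'}[x']\|_2 \le \sqrt{R_0}/5$, I get $\inf_{x, y \in \deB_{t_1}}\langle x, y\rangle \ge 1 - R_0/10$ for all $R_0 \le 1$. Since this quantity exceeds $\sqrt{2}/2$, Lemma~\ref{lem:fact of geodesically convex sets} lifts the bound to the geodesic convex hull, giving $D_{t_1}(\deB_{t_1}) \ge 1 - R_0/10$. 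Combining this with the mass estimate, a direct computation yields $\Gamma(\deB_{t_1}) \ge \tfrac{1}{2}(1+\tfrac{9}{10}R_0)(2 - R_0/10) - 1 \ge \tfrac{4}{5}R_0$.

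Finally, the perturbation assumption reduces to a routine check: since $\deB_{t_1}$ contains a geodesic ball of angular radius $\arccos(1 - R_0^2/10^4)$ around any point of $B_{t_1}$, one has $1 - D_{t_1}(\deB_{t_1}) \gtrsim R_0^2$, so $\tfrac{1}{4}(1 - D_{t_1}(\deB_{t_1}))\,\Gamma(\deB_{t_1})^2 \gtrsim R_0^4$; the standing hypothesis $\epa \le 10^{-3}R_0^2$ then gives $\epa^2 \le 10^{-6}R_0^4$, which is amply sufficient. The main technical obstacle is the sharp bookkeeping in the two applications of \eqref{eqn:spherical triangle ineql} used to establish $D_{t_1}(\deB_{t_1}) \ge 1 - R_0/10$: the constants must close so that the $R_0^2/10^4$-enlargement contributes only a lower-order perturbation to the baseline $1 - R_0/50$. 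Once this is in hand, invoking Lemma~\ref{lem:Emergence of attractor} with $B = \deB_{t_1}$ delivers all three conclusions.
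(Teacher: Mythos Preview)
Your proposal is correct and follows essentially the same route as the paper: verify the hypotheses of \Cref{lem:Emergence of attractor} for $B=\deB_{t_1}$ by (i) lower-bounding $\mu_{t_1}(\deB_{t_1})$ via the inclusion $B_{t_1}\subseteq\deB_{t_1}$ and \Cref{lem:first attractor}, (ii) lower-bounding $D_{t_1}(\deB_{t_1})$ by two applications of \eqref{eqn:spherical triangle ineql} plus \Cref{lem:fact of geodesically convex sets}, and (iii) checking the $\epa$ condition via the upper bound $D_{t_1}(\deB_{t_1})\le 1-R_0^2/10^4$ coming from the definition of the neighborhood. The paper carries out exactly these steps with the same constants; your summary of the bookkeeping in step (ii) is slightly terser but captures the identical estimate.
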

\begin{proof}
    The proof is similar to the proof for \Cref{lem:first attractor}. We need to verify the assumptions in \Cref{lem:Emergence of attractor} for the set $\deB_{t_1}$, for which we need to get the corresponding $D_{t_1} ( \deB_{t_1} )$ and $f_{t_1}(\deB_{t_1})$. First, take any  $x,y \in \deB_{t_1}$, then by the definition of $\deB_{t_1}$, there are corresponding $x',y' \in B_{t_1} = \phi_{T_{\ast},t_1} (B_{\ast})$, such that $\langle x, x' \rangle \geq 1- \frac{R_0 ^2}{10^4}$ and $\langle y, y' \rangle \geq 1- \frac{R_0 ^2}{10^4}$. Using the inequality  \eqref{eqn:spherical triangle ineql}, we have that 
        \begin{align*}
            \begin{split}
                 \langle x ,y \rangle &\geq \langle x ,y' \rangle \langle y ,y' \rangle - \sqrt{1-\langle y ,y' \rangle^2} 
            \\  &\geq \langle x' ,y' \rangle \langle x ,x' \rangle \langle y ,y' \rangle - \sqrt{1-\langle y ,y' \rangle^2} - \sqrt{1-\langle x ,x' \rangle^2}.
            \end{split}
        \end{align*}
    By \eqref{eqn:lock the attractor together} in \Cref{lem:first attractor}, we have that $\langle x' ,y' \rangle \geq 1-\frac{R_0}{50}$. Hence, we have that
        \begin{align*}
            \begin{split}
                \langle x ,y \rangle &\geq \left(1- \frac{R_0}{50} \right)\left(1- \frac{R_0 ^2}{10^4}\right)\left(1- \frac{R_0 ^2}{10^4}\right) -2 \sqrt{\frac{2R_0 ^2}{10^4}-\frac{R_0 ^4}{10^8}}
                \\  &\geq 1- \frac{3 R_0}{100}-\frac{3R_0}{100}\geq 1- \frac{ R_0}{10} > \frac{\sqrt{2}}{2}.
            \end{split}
        \end{align*}
    Because $x,y \in \deB_{t_1}$ are arbitrary, we have that $\inf_{x,y \in \deB_{t_1}} \langle x ,y \rangle \geq 1- \frac{ R_0}{10}> \frac{\sqrt{2}}{2}$. By \Cref{lem:fact of geodesically convex sets}, we have that $D_{t_1} ( \deB_{t_1} )= \inf_{x,y \in \Cx [\deB_{t_1}]} \langle x ,y \rangle \geq 1- \frac{ R_0}{10} $. Also, by \Cref{lem:first attractor}, we have that 
        \begin{align*}
            \mu_{t_1}\left(\deB_{t_1} \right) \geq \mu_{t_1}(B_{t_1}) = \mu_{T_{\ast}}(B_{\ast}) \geq \frac{1}{2} \left(1+\frac{9}{10 }R_0\right).
        \end{align*}
    Hence, to check the assumptions in \Cref{lem:Emergence of attractor}, we see that
        \begin{align*}
            \begin{split}
            &\Gamma\left(\deB_{t_1}\right) = \mu_{t_1}\left(\deB_{t_1} \right) \left(1+D_{t_1} ( \deB_{t_1} ) \right)-1
            \\  &\geq \frac{1}{2} \left(1+\frac{9}{10 }R_0\right) \left(2-\frac{R_0}{10 }\right) - 1
            \\  &= \frac{18}{20 }R_0 - \frac{1}{20 }R_0 - \frac{9}{200 } R_0 ^2 \geq \frac{4}{5}R_0.
            \end{split}
        \end{align*}
    Also, by the definition of $\deB_{t_1}$, we see that $D_{t_1} ( \deB_{t_1} )= \inf_{x,y \in \Cx [\deB_{t_1}]} \langle x ,y \rangle \leq 1- \frac{ R_0 ^2}{10 ^4} $. We get that
        \begin{align*}
            \frac{1}{4} \left(1- D_{t_1} ( \deB_{t_1} )\right) \Gamma\left(\deB_{t_1}\right) ^2 \geq \frac{1}{4} \cdot  \frac{ R_0 ^2}{10 ^4} \cdot \frac{16 R_0^2}{25} \geq \epa^2.
        \end{align*}
    We then finish the proof by \Cref{lem:Emergence of attractor}.
\end{proof}

\begin{proof}[Proof of \Cref{thm:main thm improved constants}]
    Similar to the proof for \Cref{thm:exponential small outside positive cap}, we fix the $\lambda = 1-10^{-10} R_0 ^4 \geq 1-10^{-10} R_0 ^2$, $\alpha = \frac{\pi}{100}$, and divide $\R_{\geq 0}$ into pieces $0=s_{-1} \leq t_0 < s_0 \leq t_1 < s_1 \leq t_2 < s_2 \cdots $, where for any $k \geq 0$
    \begin{align*}
        t_k \coloneq \inf \left\{ t \geq s_{k-1} \ \bigg| \  \partial_t R_t \geq \frac{1}{8} (\sin ^4 {\alpha})  \lambda^3 R_0 ^3 \right\} , \quad s_k \coloneq t_k +1 .
    \end{align*} 
    As in the proof for \Cref{thm:exponential small outside positive cap}, we showed that this construction must stop at some $k_{\ast}$-th step and $k_{\ast} \leq 10^{9} R_0 ^{-6}$. After the time $s_{k_{\ast}}$, we already saw in the proof of \Cref{thm:exponential small outside positive cap} that $f_t \left( \S \backslash S_{\alpha} ^+ (t) \right)$ starts to decay exponentially fast. To go further, we estimate how large $s_{k_{\ast}}$ can be without using \Cref{thm:length of small dR_t intervals} directly.

    We first prove that there is an upper bound $C_{\ast}$ depending on $\|f_0\|_{L^2(\S)}$ and $R_0$, such that for any $k \in [-1,k_{\ast}]$, $f_{s_k} ^2 \left( S_{\alpha} ^- (s_k) \right) \leq C_{\ast}$. We fix a $k \in [-1,k_{\ast}]$.  First, if $s_k \leq T_{\ast}$ for the $T_{\ast}$ obtained in \Cref{lem:first attractor}, by \Cref{lem:sliding norm}, we have that
        \begin{align*}
            f_{s_k} ^2 \left( S_{\alpha} ^- (s_k) \right) 
            \leq f_{s_k} ^2 \left( \S  \right) 
            \leq \|f_0\|_{L^2(\S)}^2 \cdot e^{2(d-1)s_k} 
            \leq \|f_0\|_{L^2(\S)}^2 \cdot e^{2(d-1)T_{\ast}}.
        \end{align*}
    Now, if $s_k >T_{\ast}$, take the $l$ such that the following two conditions are satisfied:
        \begin{itemize}
            \item [(1)] For any $p \in [l+1,k]$ ($\emptyset$ if $l=k$), $t_p -s_{p-1} \leq \delta + \widetilde \delta$ for $\delta$ defined in \eqref{eqn:small time} in \Cref{lemma:shrink from negative cap to positive cap}, and $\widetilde \delta$ defined in \eqref{eqn:small time 2} in \Cref{lemma:shrink from negative cap to the attractor}.
        \item[(2)] $t_l - (\delta + \widetilde \delta) >s_{l-1} > T_{\ast} $ or $s_{l-1} \leq T_{\ast} \leq s_l$.
        \end{itemize}
If the first case in the condition (2) above holds true, we apply the \eqref{eqn:attractor nbhd in positive cap} in \Cref{lem:moving attractor}, which implies that $\phi_{t_l\to s_k}\left(\deB_{t_l} \right) \subseteq S_{\frac{\pi}{2}} ^+ (s_k)$. Hence, because in this case, $s_k - t_l \leq (k-l)(\delta+\widetilde \delta +1)+1$, by \Cref{lem:sliding norm}, we get that
    \begin{align*}
        f_{s_k} ^2 \left( S_{\alpha} ^- (s_k) \right) \leq  f_{s_k} ^2 \left(\S \backslash \phi_{t_l\to s_k}\left(\deB_{t_l} \right) \right) \leq e^{2(d-1)[(k-l)(\delta+\widetilde \delta +1)+1]} f_{t_l} ^2 \left( \S \backslash \deB_{t_l} \right). 
    \end{align*}
    Now, combine this inequality with  \Cref{lem:exponentially decay out attractor} for the interval $[s_{l-1},t_l]$, we get that
        \begin{align*}
            f_{s_k} ^2 \left( S_{\alpha} ^- (s_k) \right) \leq e^{3(d-1)[(k-l+1)(\delta+\widetilde \delta +1)]} f_{s_{l-1}} ^2 \left( S_{\alpha} ^- (s_{l-1}) \right).
        \end{align*}
    Using this inequality, we can iteratively pull $s_k$ back to the time when the second case in the condition (2) happens, and this iteration does not exceed $k_{\ast}$-times. In this case, we have that $s_{k-1} \leq T_{\ast} \leq s_k$. If $T_{\ast} < s_k - \delta-\widetilde \delta -1$, we have that
    \begin{align*}
        \begin{split}
            &f_{s_k} ^2 \left( S_{\alpha} ^- (s_k) \right) 
            \leq  f_{s_k} ^2 \left(\S \backslash \phi_{t_k\to s_k}\left(\deB_{t_k} \right) \right) 
            \leq e^{2(d-1)} f_{t_k} ^2 \left( \S \backslash \deB_{t_k} \right) 
            \\  
            &\leq e^{3(d-1)(\delta+\widetilde \delta+1)} f_{T_{\ast}} ^2 \left( S_{\alpha} ^- (T_{\ast}) \right) 
            \leq e^{3(d-1)(\delta+\widetilde \delta+1+T_{\ast})} \|f_0\|_{L^2(\S)}^2
            ,
        \end{split}
    \end{align*}
where the last inequality follows from \Cref{lem:sliding norm} on $[0,T_{\ast}]$.
If $T_{\ast} \geq s_k - \delta-\widetilde \delta -1$, then $s_k - T_{\ast} \leq \delta+\widetilde \delta+1$. Using \Cref{lem:sliding norm} again, we have that
    \begin{align*}
        \begin{split}
            &f_{s_k} ^2 \left( S_{\alpha} ^- (s_k) \right) \leq f_{s_k} ^2 \left( \S \backslash B_{s_k} \right) \leq e^{2(d-1)(s_k-T_{\ast})} f_{T_{\ast}} ^2 \left( \S \backslash B_{\ast} \right)
            \\  &\leq  e^{2(d-1)(\delta+\widetilde \delta+1+T_{\ast})} \|f_0\|_{L^2(\S)}^2
            .
        \end{split}
    \end{align*}
Combine the above arguments in all possibilities, we obtain that for any $k \in [-1,k_{\ast}]$, we have that
    \begin{align*}
        f_{s_k} ^2 \left( S_{\alpha} ^- (s_k) \right) \leq e^{4(d-1)[k_{\ast}(\delta+\widetilde \delta)+T_{\ast}]} \|f_0\|_{L^2(\S)}^2 
        .
    \end{align*}

    Next, we need to estimate each $t_k - s_{k-1}$ for $k \in [0,k_{\ast}]$. Assume that $t_k - s_{k-1}> \delta$ for $\delta$ defined in \eqref{eqn:small time} in \Cref{lemma:shrink from negative cap to positive cap}. Because for $t \in [s_{k-1} ,t_k]$, by definition of $t_k$, we have that $\partial_t R_t \leq \frac{1}{8} (\sin ^4 {\alpha})  \lambda^3 R_0 ^3$, we can then apply \Cref{lemma:shrink from negative cap to positive cap} to obtain that for any $r \in [s_{k-1}+\delta,t_k]$, $\S \backslash S_{\alpha} ^+ (r) \subseteq \phi_{r - \delta\to r} \left( S_{\alpha} ^- (r - \delta) \right)$. By the same reason we obtained \eqref{eqn:exponential decay on long interval 2} in \Cref{thm:length of small dR_t intervals}, we can obtain that
        \begin{align*}
            \mu_{r}\left(\S \backslash S_{\alpha} ^+ (r)  \right) \leq 10 \cdot e^{-\frac{(d-1) \lambda R_0\cos{\alpha}}{4} (r-\delta-s_{k-1})} \left[ f_{s_{k-1}} ^2 \left( S_{\alpha} ^- (s_{k-1}) \right) \right]^{\frac{1}{2}}.
        \end{align*}
    Applying the upper bound for $f_{s_{k-1}} ^2 \left( S_{\alpha} ^- (s_{k-1}) \right) $ we got earlier, we see that
        \begin{align}\label{eqn:exponential decay with uniform parameter}
            \mu_{r}\left(\S \backslash S_{\alpha} ^+ (r)  \right) \leq  e^{-\frac{(d-1) \lambda R_0\cos{\alpha}}{4} (r-s_{k-1})} \cdot e^{3(d-1)[k_{\ast}(\delta+\widetilde \delta)+T_{\ast}]} \|f_0\|_{L^2(\S)}
            ,
        \end{align}
    for any $r \in [s_{k-1}+\delta,t_k]$. To simplify the notation in the proof, we let $\eta \coloneq \frac{(d-1) \lambda R_0\cos{\alpha}}{4}$ and $A \coloneq e^{3(d-1)[k_{\ast}(\delta+\widetilde \delta)+T_{\ast}]} \left[\|f_{0}\|_{L^2(\left(\S\right))}\right]^{\frac{1}{2}}$. By \Cref{thm:almost exponential decay I_t}, we have that $I_t+\partial_t I_t \leq 10^2 \mu_t \left( \S \backslash S_{\alpha} ^+ (t) \right) \leq 10^2 A e^{-\eta (t-s_{k-1})}$. Multiply $e^t$ on both sides and integrate from $s_{k-1}$ to $r$, we obtain that
    \begin{align*}
        I_r \leq I_{s_{k-1}}e^{-r+s_{k-1}}+ 10^3 A e^{-\xi (t-s_{k-1})},
    \end{align*}
    where $\xi = \eta$ if $\eta <1$ and $\xi=\frac{1}{2}$ if $\eta \geq 1$. Also, by the fact that $I_t \leq 2$ using its definition directly, we can simplify the above inequality and obtain that for any $r \in [s_{k-1}+\delta,t_k]$, 
        \begin{align*}
            I_r \leq  10^4 A e^{-\xi (r-s_{k-1})}.
        \end{align*}
      By \Cref{lemma: all time lower bound R_t}, we have that $R_t \geq \lambda R_0$. Using \Cref{lemma:derivative M_t R_t}, we have that for any $r \in [s_{k-1}+\delta,t_k]$,
    \begin{align*}
        \partial_t R_t \big|_{t=r}\leq \frac{3I_r}{2\lambda R_0}+ \frac{\epa^2}{2\lambda R_0} \leq \frac{3I_r}{2\lambda R_0}+ 10^{-8} \lambda^3 R_0 ^3 \sin^2 {\alpha}.
    \end{align*}
    In particular, we can pick $r= t_k$, and by the construction of $t_k$, we must have that 
        \begin{align*}
            \frac{1}{8} (\sin ^4 {\alpha})  \lambda^3 R_0 ^3 \leq \partial_t R_t \big|_{t=t_k}.
        \end{align*}
    Recall that we already fixed $\alpha = \frac{\pi}{100}$ in the assumption, and $\lambda$ is very close to $1$ by our choice at the beginning. So, combine the two inequalities for $\partial_t R_t \big|_{t=t_k}$ and $I_{t_k}$, we have that 
        \begin{align*}
            \frac{\sin ^4 \alpha}{24} \lambda^4 R_0 ^4 \leq I_{t_k} \leq 10^4 A e^{-\xi (t_k-s_{k-1})}.
        \end{align*}
    Hence,
        \begin{align}\label{eqn: length of each small der interval}
            \xi(t_k - s_{k-1}) \leq 3(d-1)[k_{\ast}(\delta+\widetilde \delta)+T_{\ast}] + \log{ \left[10^{17} R_0 ^{-4} \|f_{0}\|_{L^2(\left(\S\right))}
            \right]}.
        \end{align}

    Using \eqref{eqn: length of each small der interval}, we sum both sides from $k=1$ to $k=k_{\ast}$, and obtain that 
        \begin{align*}
            \begin{split}
                &\xi s_{\ast} \leq 1+3(d-1)k_{\ast}[k_{\ast}(\delta+\widetilde \delta)+T_{\ast}] + k_{\ast}\log{ \left[10^{17} R_0 ^{-4} \|f_{0}\|_{L^2(\left(\S\right))}
                \right]}.
                \\  &\leq 10^{23} (d-1)R_0 ^{-14} + 10^{9} R_0 ^{-6} \log {\left[ \|f_{0}\|_{L^2(\left(\S\right))}  \right]},
            \end{split}
        \end{align*}
    where we also used the fact that $k_{\ast} \leq 10^9 R_0^{-6}$, $T_{\ast} \leq 10^4 R_0 ^{-3}$, $\delta \leq 10^4 R_0 ^{-1}$, and  $\widetilde\delta \leq 10^2 R_0 ^{-2}$. Hence, apply \eqref{eqn:exponential decay with uniform parameter} for $r \in [s_{k_{\ast}}+\delta , +\infty]$ we see that 
        \begin{align*}
            \mu_{r}\left(\S \backslash S_{\alpha} ^+ (r)  \right) \leq  e^{-\frac{(d-1) \lambda R_0\cos{\alpha}}{4} (r-s_{k_{\ast}})} \cdot e^{3(d-1)[k_{\ast}(\delta+\widetilde \delta)+T_{\ast}]} \|f_{0}\|_{L^2(\left(\S\right))} 
            .
        \end{align*}
    Hence, if we set 
        \begin{align*}
            S_0 \coloneq \xi^{-1}\left[10^{24} (d-1)R_0 ^{-14} + 10^{9} R_0 ^{-6} \log {\left( \|f_{0}\|_{L^2(\S)}  \right)} \right], 
        \end{align*}
    we have that when $r \geq S_0$,
        \begin{align*}
            \mu_{r}\left(\S \backslash S_{\alpha} ^+ (r)  \right) \leq  e^{-\frac{(d-1)R_0}{8} (r-S_0)} \cdot \|f_0\|_{L^2(\S)} 
            .
        \end{align*}

 We now eliminate the dependence on the initial radius $ R_0 $ in the exponent by further evolving the flow. Specifically, we define 
    \begin{align*}
    S_1 = 
    \begin{cases}
    S_0, & \text{if } \mu_{S_0}\left(\mathbb{S} \setminus S_{\alpha}^+(S_0) \right) \leq 0.1, \\
    S_0 + \dfrac{8}{R_0} \log\left(10\|f_0\|_{L^2(\mathbb{S})}\right), & \text{otherwise}.
    \end{cases}       
    \end{align*}
    Then, as established in \eqref{eqn:large R0 time}, we have $ R_{S_1} \geq \frac{1}{2} $. Restart the flow at time $ S_1 $, and we define
    \begin{align*}
            S_2 \coloneq \xi_1^{-1}\left[10^{24}2^{14} (d-1) + 10^{9} 2^{6} \log {\left( \|f_{S_1}\|_{L^2(\S)}  \right)} \right], 
        \end{align*}
    with $\xi_1^{-1} = \frac{8}{(d-1)\lambda_1 \cos\alpha} \vee 1$ and $\lambda_1 = 1-10^{-10} 2^{-4}$. For all $ r \geq S_1 + S_2 $, we then have the estimate
    \begin{align*}
            &\mu_{r}\left(\S \backslash S_{\alpha} ^+ (r)  \right) \leq  e^{-\frac{(d-1)}{16} (r-S_1 - S_2)} \cdot \|f_{S_1+S_2}\|_{L^2(\S)} 
            \\
            &\leq e^{-\frac{(d-1)}{16} (r-33(S_1 + S_2))} \cdot \|f_{0}\|_{L^2(\S)} 
            \,,
        \end{align*}
    where we used the bound $\|f_{S_1+S_2}\|_{L^2(\S)} \leq e^{2(d-1)(S_1+S_2)}\|f_{0}\|_{L^2(\S)} $ from \Cref{lem:sliding norm}.
    To estimate $ S_1 + S_2 $, note from \Cref{lem:sliding norm} that $\log {\left( \|f_{S_1}\|_{L^2(\S)}  \right)} \leq 2(d-1)S_1 + \log {\left( \|f_0\|_{L^2(\S)}  \right)} $, which yields the upper bound
    \begin{align*}
        S_2 \leq 10^{31} (d-1) + 10^{14} (d-1)S_1 + 10^{13}\log {\left( \|f_0\|_{L^2(\S)}  \right)}
    \end{align*}
    Combining with the definition of $ S_1 $ and noting that $R_0\leq1$, we arrive at
    \begin{align*}
        S_1+S_2 \leq \left[ \frac{16}{(d-1)R_0} \vee 1\right](d-1)[10^{39}(d-1)R_0^{-14} + 10^{24}R_0^{-6}\log{\left( \|f_0\|_{L^2(\S)}  \right)}] \,.
    \end{align*}
    Finally, if we set
    \begin{align*}
    T_0 \coloneq \left[ \frac{16}{(d-1)R_0} \vee 1\right](d-1)[10^{41}(d-1)R_0^{-14} + 10^{26}R_0^{-6}\log{\left( \|f_0\|_{L^2(\S)}  \right)}] \,,
    \end{align*}
    we have that when $r\geq T_0$
    \begin{align*}
            &\mu_{r}\left(\S \backslash S_{\alpha} ^+ (r)  \right) \leq   e^{-\frac{(d-1)}{16} (r-T_0)} \cdot \|f_{0}\|_{L^2(\S)} 
            \,,
        \end{align*}
    and the result stated in \Cref{thm:main thm improved constants} then follows.
\end{proof}

\vspace{0.5in}
\noindent {\bf Acknowledgments} 
Y.P. is supported
in part by NSF under CCF-2131115. P.R. is supported by NSF grants DMS-2022448 and CCF2106377. The authors thank Tim Roith for carefully reading our preprint and pointing out a helpful example in Remark 3.5 of \cite{burger2025analysis}, which complements our \Cref{thm:global_max}.


\bibliographystyle{alpha}
\bibliography{references}

\end{document}